\newtheorem{thm}{Theorem}
\newtheorem{lemma}{Lemma}
\newtheorem{defn}{Definition}
\newtheorem{prop}{Proposition}
\newtheorem{cor}{Corollary}
\newtheorem{ass}{Assumption}
\newtheorem{rmk}[thm]{Remark}
\renewenvironment{proof}[1][Proof]{\textbf{#1.}\itshape}{\qed}
\theoremstyle{remark}
\def\eqref#1{equation~\ref{#1}}
\def\1{\bm{1}}
\DeclareMathAlphabet{\mathsfit}{\encodingdefault}{\sfdefault}{m}{sl}
\SetMathAlphabet{\mathsfit}{bold}{\encodingdefault}{\sfdefault}{bx}{n}
\newcommand{\EOS}{\mathrm{EOS}}
\DeclareMathOperator*{\argmin}{arg\,min}
\icmltitlerunning{Provable Sample Efficiency of Curriculum Post-Training for Transformer Reasoning}
\begin{document}

\twocolumn[
  \icmltitle{Provable Sample Efficiency of Curriculum Post-Training for Transformer Reasoning}

  % It is OKAY to include author information, even for blind submissions: the
  % style file will automatically remove it for you unless you've provided
  % the [accepted] option to the icml2026 package.

  % List of affiliations: The first argument should be a (short) identifier you
  % will use later to specify author affiliations Academic affiliations
  % should list Department, University, City, Region, Country Industry
  % affiliations should list Company, City, Region, Country

  % You can specify symbols, otherwise they are numbered in order. Ideally, you
  % should not use this facility. Affiliations will be numbered in order of
  % appearance and this is the preferred way.
  \icmlsetsymbol{equal}{*}

\begin{icmlauthorlist}
  \icmlauthor{Dake Bu}{cuhk,riken,astar}
  \icmlauthor{Wei Huang}{riken,ism}
  \icmlauthor{Andi Han}{usyd}
  \icmlauthor{Atsushi Nitanda}{astar,ntu}
  \icmlauthor{Hau-San Wong}{cuhk}
  \icmlauthor{Qingfu Zhang}{cuhk}
  \icmlauthor{Taiji Suzuki}{utokyo,riken}
\end{icmlauthorlist}

\icmlaffiliation{cuhk}{City University of Hong Kong}
\icmlaffiliation{riken}{Center for Advanced Intelligence Project, RIKEN}
\icmlaffiliation{ism}{The Institute of Statistical Mathematics}
\icmlaffiliation{usyd}{University of Sydney}
\icmlaffiliation{astar}{CFAR and IHPC, Agency for Science, Technology and Research (A*STAR)}
\icmlaffiliation{ntu}{Nanyang Technological University}
\icmlaffiliation{utokyo}{The University of Tokyo}
% \icmlcorrespondingauthor{Taiji Suzuki}{taiji@mist.i.u-tokyo.ac.jp}
\icmlcorrespondingauthor{Atsushi Nitanda}{atsushi\_nitanda@a-star.edu.sg}
\icmlcorrespondingauthor{Hau-San Wong}{cshswong@cityu.edu.hk}

  % You may provide any keywords that you find helpful for describing your
  % paper; these are used to populate the "keywords" metadata in the PDF but
  % will not be shown in the document
  \icmlkeywords{Machine Learning, ICML}

  \vskip 0.3in
]

% this must go after the closing bracket ] following \twocolumn[ ...

% This command actually creates the footnote in the first column listing the
% affiliations and the copyright notice. The command takes one argument, which
% is text to display at the start of the footnote. The \icmlEqualContribution
% command is standard text for equal contribution. Remove it (just {}) if you
% do not need this facility.

% Use ONE of the following lines. DO NOT remove the command.
% If you have no special notice, KEEP empty braces:
\printAffiliationsAndNotice{}  % no special notice (required even if empty)
% Or, if applicable, use the standard equal contribution text:
% \printAffiliationsAndNotice{\icmlEqualContribution}

\begin{abstract}
Recent curriculum techniques in the post-training stage of LLMs have been empirically observed to outperform non-curriculum approaches in improving reasoning performance, yet a principled understanding of their effectiveness and limitations remains incomplete. To bridge this gap, we develop an abstract theoretical framework and identify sufficient conditions under which curriculum post-training yields exponential improvements in sample complexity. To substantiate this framework, we model the base model’s Chain-of-Thought generation as a state-conditioned autoregressive reasoning tree, and formalize curriculum subtasks as either depth-increasing curricula that progressively extend reasoning horizons or hint-decreasing curricula that gradually remove partial hints. Our analysis shows that reinforcement learning finetuning with both curriculum strategies achieves high accuracy with polynomial sample complexity, whereas non-curriculum counterpart encounters an exponential complexity bottleneck. We further establish analogous guarantees for test-time scaling. Empirical simulations support our theoretical findings. Code is available at \href{https://github.com/DakeBU/Curriculum-Post-training}{\nolinkurl{https://github.com/DakeBU/Curriculum-Post-training}}.
\end{abstract}

\section{Introduction}

Transformer-based Large Language Models (LLMs) have recently demonstrated strong emergent capabilities in reasoning and general problem solving. To further improve base models in mathematical domains, various Reinforcement Learning (RL) fine-tuning techniques~\cite{xin2024deepseek,guo2025deepseek} and test-time scaling methods~\cite{lightman2023letsverifystepstep,snell2024scalingllmtesttimecompute} with a binary ($0/1$) outcome verifier have attracted great attention for their effectiveness. Among these approaches, \emph{curriculum-style post-training}~\citep{parashar2025curriculum,liu2025EvoCoT, lee2025SelfImprovingTransformers, bae2025OnlineDifficultyFiltering}---which encourages Chain-of-Thought (CoT) generation to evolve from easy to hard---has shown strong empirical promise for improving sample efficiency and learning outcomes in reasoning~\citep{meng2025MMEureka, wen2025LightR1, zhang2025speed, zhou2025evolvinglanguagemodelslabels, zhang2025bread}.

Despite this progress, the value of curricula in \emph{post-training} remains largely intuitive and lacks rigorous theoretical grounding. Curriculum learning has been extensively studied in \emph{pre-training} or train-from-scratch settings~\citep{bengio2009curriculum, graves2017automated, narvekar2020curriculum, wang2021survey, soviany2022curriculum}, with theoretical guarantees established for specialized function classes such as convex regression and binary classification~\citep{weinshall2018curriculum, weinshall2020theory}, parity~\citep{abbe2023provable, panigrahi2024progressive}, teacher--student perceptrons with sparse features~\citep{saglietti2022analytical}, and $k$-fold function composition~\citep{Wang2025}. However, these results are inherently problem-specific: their notions of ``difficulty'' and ``performance,'' algorithmic constructions, and proof techniques are tightly tailored to train-from-scratch regimes. Consequently, they do \emph{not} directly explain transformer-based post-training for reasoning, which starts from a strong pretrained base model and targets explicit CoT generalization. This motivates the central question:
\begin{quote}
\emph{When, why, how, and in what sense can curriculum strategies in post-training theoretically improve performance compared to direct post-training without curricula?}
\end{quote}

Our analysis begins with an intrinsic property of post-training for reasoning. ``Performance'' (pass@K) is measured by the probability of generating a correct CoT, while ``difficulty'' is commonly quantified by the \emph{success rate}, i.e., the probability that the base model generates a correct CoT~\citep{tong2024dartmathdifficultyawarerejectiontuning, parashar2025curriculum}. Equivalently, task difficulty is governed by the \emph{rarity} of correct reasoning trajectories under the base model. In sparse-reward post-training, this rarity directly drives the amount of sampling required to obtain informative learning signals. Correspondingly, a curriculum can be viewed as learning a sequence of subtasks whose correct CoTs evolve from common to rare.

To formalize this notion, we adopt the \emph{coverage coefficient}~\citep{foster2025goodfoundationnecessaryefficient}, which measures how rarer a correct CoT is under the base policy than under an optimal target policy, and hence inversely controls the success rate. Standard rejection sampling makes this connection precise by linking rarity to the expected number of trials needed to observe a correct CoT ~\citep{block2024samplecomplexityapproximaterejection}. This yields a sharp necessary-and-sufficient characterization: curriculum post-training improves sample efficiency \emph{if and only if} the cumulative difficulty of all curriculum stages is no larger, up to constants, than the difficulty of the final target relative to the base policy.

This characterization provides a unifying lens for understanding existing curriculum strategies. In practice, \emph{hint-decreasing curricula}~\citep{liu2025UFT,amani2025rl} define subtasks by completing the remaining CoT from increasingly shorter correct prefixes, while \emph{depth-increasing curricula} arise naturally in tasks such as parity or countdown by extending short-to-long reasoning steps. More broadly, many graph-structured reasoning problems admit shorter correct CoTs that can be composed into longer ones for harder tasks~\citep{huang2025CoT, ran2026outcome}. While such empirical curricula do not necessarily guarantee an exact relation between stage-wise and final difficulty, the subtask difficulty typically increases in a controlled and moderate manner, with success rates degrading \emph{gradually rather than catastrophically}.

The characterization also highlights a particularly transparent \emph{sufficient} regime for an \emph{exponential} improvement: the existence of a \emph{$L/p$-th-root curriculum}, where $L$ denotes the number of curriculum stages and $0 < p \ll L$ is a constant. In this regime, the difficulty at each stage is bounded by a $L/p$-th-root level of the target task’s difficulty. As a consequence, the total cost of curriculum post-training scales linearly with the number of stages, while direct post-training suffers from the full difficulty of the target, whose correct CoT is very rare under the base policy.

To substantiate this sufficient condition within transformer architectures, we study a state-conditioned autoregressive reasoning process (Def.~\ref{def:2SART-model}) that subsumes a broad class of reasoning tasks~\citep{kim2025metastabledynamicschainofthoughtreasoning,nichani2024understandingfactualrecalltransformers,nichani2024transformerslearncausalstructure,gandhi2024stream}. We show that $K$-th-root curricula arise naturally when the pretrained base model assigns comparable probabilities to child nodes in a reasoning tree. Under this condition, we establish exponential-to-polynomial reductions in sample complexity for RL fine-tuning (Thm.~\ref{thm:curriculum_finetune_bottleneck}) and in reward-oracle or computational complexity for test-time scaling (Thm.~\ref{thm:testtime_oracle_complexity}).

\textbf{Contributions.} Our main contributions are:
\begin{itemize}[topsep=.5pt,itemsep=1pt,parsep=1pt,partopsep=1pt]
    \item We formalize curriculum post-training and establish a general bottleneck corollary (Cor.~\ref{cor:curriculum-bottleneck}) characterizing sufficient conditions under which stepwise curricula yield exponential improvements over direct post-training.
    \item We instantiate the theory using an autoregressive reasoning tree (2S-ART) and its transformer realization, covering a broad class of graph-based and compositional reasoning tasks.
    \item We prove that curriculum strategies in RL fine-tuning and test-time scaling yield exponential-to-polynomial reductions in sample complexity and oracle-query complexity separately, and corroborate the theory with empirical simulations.
\end{itemize}

We defer additional related work to App.~\ref{app:add_related_work}.

 % We further discuss limitations, extensions, and broader applicability in Sec.~\ref{sec:conclusion}.

% % \vspace{-.5\baselineskip}
\section{Theoretical Framework for Curriculum Post-training on reasoning trees}\label{sec:fundamental_thm}

% \vspace{-.4\baselineskip}

\textbf{Preliminaries and Notations}. For each prompt \(x\), a policy \(\pi\) is a conditional probability measure \(\pi(\cdot\mid x)\) over the output space \(\mathcal O\), where \(o\in\mathcal O\) denotes a Chain-of-Thought (CoT) trajectory. For two policies \(\pi\) and \(\pi'\), if \(\pi'(\cdot\mid x)\ll \pi(\cdot\mid x)\) for all \(x\), we write \(\|\tfrac{\pi'}{\pi}\|_{\infty}:=\sup_{x\in\mathcal X}\|\tfrac{d\pi'(\cdot\mid x)}{d\pi(\cdot\mid x)}\|_{L^{\infty}(\pi(\cdot\mid x))}=\sup_{x\in\mathcal X}\operatorname*{ess\,sup}_{o\sim\pi(\cdot\mid x)}\tfrac{d\pi'(\cdot\mid x)}{d\pi(\cdot\mid x)}(o)\).
Landau symbols $\Omega(\cdot),\Theta(\cdot),O(\cdot)$ follow standard conventions, and variants such as $\widetilde\Theta(\cdot)$ hide logarithmic factors. For \(\mathbf{a}\in\mathbb{R}^{m}\), \(\operatorname{softmax}(\mathbf{a})_i:=\exp(a_i)/\sum_{j}\exp(a_j)\), and with temperature \(\beta>0\) we write \(\operatorname{softmax}(\mathbf{a}/\beta)\).
% Finally, \(\operatorname{ReLU}(t):=\max\{0,t\}\) elementwise.

% % \vspace{-.4\baselineskip}

\begin{figure*}[t!]
\centering
    \resizebox{0.8\linewidth}{!}{$\centering
    \begin{forest}
    for tree={
      draw=none,
      edge={-},
      s sep=10pt,
      l sep=12pt,
      anchor=west,
      parent anchor=south,
      child anchor=north,
      align=left
    }
    [{$3,5,7,13 \stackrel{?}{=} 24$}
      [{$5\times 13=65$}, edge={very thick, draw=green!70!black}
        [{$7-3=4$}, 
          [{$4+65=69$}]
          [{$65-4=61$}]
        ]
        [{$7+65=72$}, edge={very thick, draw=green!70!black}
          [{$72\div 3=\textcolor{red}{24}$}, edge={very thick, draw=green!70!black}]
          [{$72\times 3=216$}]
        ]
      ]
      [{$3+5=8$}
        [{$8-7=1$}
          [{$13+1=14$}]
          [{$13\div 1=13$}]
        ]
        [{$8+13=21$}
          [{$21+7=28$}]
          [{$21\div 7=3$}]
        ]
      ]
    ]
    \end{forest}
    $}\caption{An illustration in \citet{liu2025UFT} of the Chain-of-Thought for Countdown game, where the goal is to obtain 24 by applying basic arithmetic operations $(+, -, \times, \div)$ ($\Phi_\ell(\cdot,\cdot)$ in our Def.~\ref{def:2SART-model}) between the current step's number (e.g., $13$, $65$ or $72$) and some unused number (e.g., $5$, $7$ or $3$) in $\{3,5,7,13\}$, targeting $24$ as the final outcome. Per \cite{parashar2025curriculum}, the difficulty measure of Countdown is the number of arithmetic operations required to solve an instance.}
    \label{fig:UFT_Count_Game}
\end{figure*}

For any $\varepsilon>0$, let $N_{\varepsilon}(\pi_{2}\mid \pi_{1})$ denote the $\varepsilon$-precision sample complexity\footnote{The concrete notion of $\varepsilon$-precision sample complexity depends on the learning setting. For example, it may correspond to the sample complexity of RL fine-tuning algorithms (Thm.~\ref{thm:curriculum_finetune_bottleneck}), or to the reward-oracle query complexity in test-time scaling (Thm.~\ref{thm:testtime_oracle_complexity}).} required to reach the optimal policy $\pi^{2}$ starting from a policy $\pi_{1}$. Consider learning the optimal policy $\pi^\star$ from a base policy $\pi_{\mathrm{ref}}$ via a step-wise curriculum consisting of $L$ intermediate  subtask policies
\[
\pi_0^\star = \pi_{\mathrm{ref}},\quad \pi_1^\star,\quad \ldots,\quad \pi_K^\star := \pi^\star.
\]
By definition, curriculum-style post-training is more sample-efficient than direct post-training if and only if the cumulative sample complexity of the intermediate stages is smaller than that of directly learning $\pi^\star$ from $\pi_{\mathrm{ref}}$, namely,
\[
\sum_{\ell \in[L]} N_{\varepsilon}(\pi_\ell^\star \mid \pi_{\ell-1}^\star)
\;<\;
N_{\varepsilon}(\pi^\star \mid \pi_{\mathrm{ref}}).
\]

This condition is exact but abstract. Notably, a transparent sufficient regime under which curriculum post-training yields an \emph{exponential} improvement could be directly derived under the $K$-th root stage-wise sample complexity assumption, per summarized below.
    
\begin{cor}[Sufficient Condition for Curriculum Exponential Improvement]
\label{cor:curriculum-bottleneck}
For any $\varepsilon>0$, consider a base policy $\pi_{\mathrm{ref}}=\pi^{\star}_{0}$ and a target optimal policy $\pi^\star \neq \pi_{\mathrm{ref}}$. Suppose there exists a curriculum of $L$ subtask optimal policies $\pi_1^\star,\ldots,\pi_L^\star:=\pi^\star$ such that
\begin{align}
& N_{\varepsilon}(\pi^{\star}_{\ell}\mid \pi^{\star}_{\ell-1})
= \Theta\!\big( \sqrt[K]{N_{\varepsilon}(\pi^{\star}\mid \pi_{\mathrm{ref}})}\big),
\label{eq:complexity_ass}
\end{align}
for all $\ell\in[L]$.
Then, the ratio between the sample complexity of direct estimation $N_{\varepsilon}^{\mathrm{direct}}$ and that of step-wise curriculum estimation $N_{\varepsilon}^{\mathrm{curriculum}}$ satisfies
\[
\frac{N_{\varepsilon}^{\mathrm{direct}}}{N_{\varepsilon}^{\mathrm{curriculum}}}
=
\frac{N_{\varepsilon}(\pi^{\star}\mid \pi_{\mathrm{ref}})}
{\sum_{\ell\in[L]}N_{\varepsilon}(\pi^{\star}_{\ell}\mid \pi^{\star}_{\ell-1})}
=
\Theta\!\Big(\frac{(C^\star)^{L}}{L C^\star}\Big),
\]
where $C^\star := \sqrt[L]{N_{\varepsilon}(\pi^{\star}\mid \pi_{\mathrm{ref}})}>1$.
\end{cor}

\textbf{Exponential--Polynomial Gaps under Relaxed Conditions.}
Even if the exact equalities above are relaxed to admit polynomial exponents,
\begin{equation}\label{eq:relaxed_comp}
N_{\varepsilon}(\pi^{\star}_{\ell+1}\mid \pi^{\star}_{\ell})=\Theta\!\big((C^{\star})^{p_\ell}\big),\qquad 1\le p_\ell\ll L,
\end{equation}
the overall curriculum complexity remains
\begin{equation}\label{eq:relaxed_version}
\resizebox{0.9\linewidth}{!}{$N_{\varepsilon}^{\mathrm{curriculum}}
=\Theta\!\big(L(C^{\star})^{p_{\max}}\big)
\;\ll\;
N_{\varepsilon}^{\mathrm{direct}}
=(C^\star)^{L},$}
\end{equation}
where $p_{\max}:=\max_\ell p_\ell$.

\textbf{Success Rate and Sample Complexity.}
As discussed in the introduction, a natural formalization of success rate is the \emph{coverage coefficient}\footnote{i.e., the $\ell_\infty$-norm of the Radon--Nikodym derivative of $\pi^\star$ with respect to $\pi_{\text{ref}}$. If $\mathcal{X}\times\mathcal{Y}$ is discrete, then $\|\tfrac{\pi^\star}{\pi_{\text{ref}}}\|_\infty = \sup_{x\in\mathcal{X},y\in\mathcal{Y}} \tfrac{\pi^\star(y|x)}{\pi_{\text{ref}}(y|x)}>1$.}
$\|\tfrac{\pi^\star}{\pi_{\text{ref}}}\|_{\infty}$%
~\citep{foster2025goodfoundationnecessaryefficient},
which inversely controls the probability of generating a correct CoT and therefore serves as a natural measure of difficulty. Standard rejection sampling links coverage directly to sampling cost: sampling \emph{one} correct CoT from $\pi_\ell^\star$ using $\pi_{\text{ref}}$ requires $\Theta\!\big(\|\tfrac{\pi_\ell^\star}{\pi_{\text{ref}}}\|_\infty \log(\delta^{-1})\big)$ trials with confidence $1-\delta$~\citep{block2024samplecomplexityapproximaterejection}. More generally, distinguishing policies at a $\|\tfrac{\pi^\star}{\pi_{\text{ref}}}\|_{\infty}$-scale margin typically incurs complexity $\widetilde{\Theta}\!\big(\|\tfrac{\pi^\star}{\pi_{\text{ref}}}\|_{\infty}^2\big)$, linking success rate, difficulty, and learning cost in a unified manner.

\textbf{Related Theoretical Context.}
The assumptions underlying Cor.~\ref{cor:curriculum-bottleneck} closely mirror those error accumulation assumptions of approximate policy iteration configurations in \citet{parashar2025curriculum}; a detailed comparison is provided in App.~\ref{app:cases}. Moreover, in linearly realizable Markov Decision Processes, by leveraging the spanner-sampling framework of \citet{foster2025goodfoundationnecessaryefficient}, we establish an analogous exponential improvement in inference-time computational complexity (Thm.~\ref{thm:curriculum-spanner}); see App.~\ref{app:cases} for details. Besides, the analysis of graph traversal tasks in \citet{ran2026outcome} can be viewed as a concrete instantiation of Cor.~\ref{cor:curriculum-bottleneck}: allocating nontrivial probability mass to short-CoT (easy) instances enables rapid convergence to policies which can length-generate, whereas training exclusively on long-CoT (hard) instances leads to exponentially vanishing gradient signals.

\textbf{Roadmap.}
In the subsequent sections, we show that the conditions of Cor.~\ref{cor:curriculum-bottleneck} naturally arise in the transformer tree-reasoning, when the base policy assigns comparable probability mass to the children at each parent state.
In this setting, curriculum subtasks correspond to \emph{prefix reasoning traces}~\citep{parashar2025curriculum} or \emph{hint-assisted reasoning}~\citep{liu2025UFT, amani2025rl}. We further establish a representation theorem showing that transformers can subsume the underlying autoregressive reasoning tree, and prove that the relaxed bound in Eq.~(\ref{eq:relaxed_version}) holds for both RL fine-tuning dynamics (Thm.~\ref{thm:curriculum_finetune_bottleneck}) and test-time scaling (Thm.~\ref{thm:testtime_oracle_complexity}), in terms of sample complexity and oracle-query complexity.

\begin{figure*}[t!]
\centering
\resizebox{0.8\linewidth}{!}{$\begin{tikzpicture}[
  level distance=12mm,
  every node/.style={circle,draw,minimum size=3mm,inner sep=0pt},
 eos/.style={rectangle,draw,fill=gray!20,minimum width=2mm,minimum height=2mm,inner sep=1pt},
  level 1/.style={sibling distance=55mm},
  level 2/.style={sibling distance=16mm},
  level 3/.style={sibling distance=4mm},
  >=latex
]

% root
\node (root) {root=\small$[\mathbf{x},\mathrm{E}]^{\top}$}
  % first child of root (normal)
  child { node (x_1) {$x_1$} 
    % its children
    child { node (x_1x_1) {$x_1$} 
      % original\mathrm{EOS} replaced with new branches
      child { node (x_1x_1x_1) {$x_1$} child { node[eos] (x_1x_1x_1e) {E} } }
      child { node (x_1x_1x_2) {$x_2$} child { node[eos] (x_1x_1x_2e) {E} } }
      child { node (x_1x_1x_3) {$x_3$} child { node[eos] (x_1x_1x_3e) {E} } }
      child { node[eos] (x_1x_1e) {E} }
    }
    child { node (x_1x_2) {$x_2$} 
      child { node (x_1x_2x_1) {$x_1$} child { node[eos] (x_1x_2x_1e) {E} } }
      child { node (x_1x_2x_2) {$x_2$} child { node[eos] (x_1x_2x_2e) {E} } }
      child { node (x_1x_2x_3) {$x_3$} child { node[eos] (x_1x_2x_3e) {E} } }
      child { node[eos] (x_1x_2e) {E} }
    }
    child { node (x_1x_3) {$x_3$} 
      child { node (x_1x_3x_1) {$x_1$} child { node[eos] (x_1x_3x_1e) {E} } }
      child { node (x_1x_3x_2) {$x_2$} child { node[eos] (x_1x_3x_2e) {E} } }
      child { node (x_1x_3x_3) {$x_3$} child { node[eos] (x_1x_3x_3e) {E} } }
      child { node[eos] (x_1x_3e) {E} }
    }
    child { node[eos] (x_1e) {E} } %\mathrm{EOS} child at level-2
  }
  % second child of root (normal)
  child { node (x_2) {$x_2$} 
    child { node (x_2x_1) {$x_1$} 
      child { node (x_2x_1x_1) {$x_1$} child { node[eos] (x_2x_1x_1e) {E} } }
      child { node (x_2x_1x_2) {$x_2$} child { node[eos] (x_2x_1x_2e) {E} } }
      child { node (x_2x_1x_3) {$x_3$} child { node[eos] (x_2x_1x_3e) {E} } }
      child { node[eos] (x_2x_1e) {E} }
    }
    child { node (x_2x_2) {$x_2$} 
      child { node (x_2x_2x_1) {$x_1$} child { node[eos] (x_2x_2x_1e) {E} } }
      child { node (x_2x_2x_2) {$x_2$} child { node[eos] (x_2x_2x_2e) {E} } }
      child { node (x_2x_2x_3) {$x_3$} child { node[eos] (x_2x_2x_3e) {E} } }
      child { node[eos] (x_2x_2e) {E} }
    }
    child { node (x_2x_3) {$x_3$} 
      child { node (x_2x_3x_1) {$x_1$} child { node[eos] (x_2x_3x_1e) {E} } }
      child { node (x_2x_3x_2) {$x_2$} child { node[eos] (x_2x_3x_2e) {E} } }
      child { node (x_2x_3x_3) {$x_3$} child { node[eos] (x_2x_3x_3e) {E} } }
      child { node[eos] (x_2x_3e) {E} }
    }
    child { node[eos] (x_2e) {E} }
  }
  % third child of root (normal)
  child { node (x_3) {$x_3$} 
    child { node (x_3x_1) {$x_1$} 
      child { node (x_3x_1x_1) {$x_1$} child { node[eos] (x_3x_1x_1e) {E} } }
      child { node (x_3x_1x_2) {$x_2$} child { node[eos] (x_3x_1x_2e) {E} } }
      child { node (x_3x_1x_3) {$x_3$} child { node[eos] (x_3x_1x_3e) {E} } }
      child { node[eos] (x_3x_1e) {E} }
    }
    child { node (x_3x_2) {$x_2$} 
      child { node (x_3x_2x_1) {$x_1$} child { node[eos] (x_3x_2x_1e) {E} } }
      child { node (x_3x_2x_2) {$x_2$} child { node[eos] (x_3x_2x_2e) {E} } }
      child { node (x_3x_2x_3) {$x_3$} child { node[eos] (x_3x_2x_3e) {E} } }
      child { node[eos] (x_3x_2e) {E} }
    }
    child { node (x_3x_3) {$x_3$} 
      child { node (x_3x_3x_1) {$x_1$} child { node[eos] (x_3x_3x_1e) {E} } }
      child { node (x_3x_3x_2) {$x_2$} child { node[eos] (x_3x_3x_2e) {E} } }
      child { node (x_3x_3x_3) {$x_3$} child { node[eos] (x_3x_3x_3e) {E} } }
      child { node[eos] (x_3x_3e) {E} }
    }
    child { node[eos] (x_3e) {E} }
  }
  child { node[eos] {E}}
;
\end{tikzpicture}$}
\caption{Reasoning tree for parity problems with $d=L=3,V=2$ and input $x_1, x_2, x_3, \mathrm{EOS}$, where $1,2$ of vocabulary represents $0,1$. The nodes on the 2nd--4th levels denote hypotheses about which index is the current secret index, corresponding to $x_{i_1}$, $x_{i_2}$, and $x_{i_3}$, respectively. In our parity CoT class $f\in\mathcal{F}_{\text{$2$S-ART}}^{\text{parity}}$, each step actually consists of two actions: (i) choose the next secret index $i_t$; (ii) after the choice, apply an XOR over $z_{t-1}$ and $x_{i_t}$ as $z_{t}=\Phi_{l}(z_{t-1},x_{i_t}):=z_{t-1}\oplus x_{i_t}$, as formalized in Eq.~(\ref{eq:parity_CoT}). For visual clarity, the tree only displays the index-selection branches and omits the explicit XOR updates. $\mathrm{E}$ denotes the $\mathrm{EOS}$ token. For parity tasks, there are illegal children $\notin\mathcal{I}_\ell$ for each parent that violate the ``legal" criteria: non-repeating indices and strictly increasing order ($i_1 < i_2 < i_3 < \cdots$); thus, for any parity problem, CoTs with duplicate variables or decreasing index order are illegal.}\label{fig:tree_parity}\end{figure*}
% % \vspace{-.8\baselineskip}

\subsection{Transformer Reasons over a States-Conditioned Reasoning Tree}\label{sec:sc_reason}
% % \vspace{-.4\baselineskip}

Building on prior work that views post-training as reweighting over a pre-trained reasoning tree \citep{snell2024scalingllmtesttimecompute, yue2025doesreinforcementlearningreally, ai2025rethinkingreflectionpretraining, gandhi2025cognitivebehaviorsenableselfimproving, liu2025UFT}, we model the reasoning from a tree-causal, low-order autoregressive perspective, per formalized below.

\begin{defn}[$2$-States Conditioned Autoregressive Reasoning Tree ($2$S\mbox{-}ART)]
\label{def:ccart}
A $2$S-ART, denoted as $\mathcal{F}_{\text{$2$S-ART}}$ $(\{\Phi_\ell\}_{\ell\le L},\{\mathcal{I}_\ell\}_{\ell\le L})$ is a class of autoregressive tasks. Given input
$(\mathbf{x},\EOS)$ where $\mathbf{x}=(x_1,\ldots,x_d)\in[V]^d$ is followed by a $\EOS$ token indexed by $V+1$, a task $f_{S^k}\in\mathcal{F}_{\text{$2$S-ART}}$ generates a CoT $(z_1,\ldots,z_{k},\mathrm{EOS})$ deterministically according to its $(k+1)$-size index path
$S^k=(i_1,\ldots,i_{k},d+1)$ where $k<L$:
at step $\ell\in[k]$, $f_{S^k}$ chooses an index $i_\ell \le d+l$ from a legal set
$\mathcal{I}_\ell$ with size $\Theta(d)$, reads the corresponding element $v_{i_{\ell}}$ from the current sequence, and updates its reasoning state by a two-states map
$z_\ell=\Phi_\ell(z_{\ell-1},v_{i_\ell})$, with $z_0:=\mathrm{EOS}$, $i_{k+1}=d+1$ and $\Phi_{k+1}(z_k,\mathrm{EOS})=\mathrm{EOS}$.

The \textbf{function value} $f_{S^k}(\mathbf{x})$ of task $f_{S^k}$ is defined by the pre-$\EOS$ token, i.e.
\[
f_{S^k}(\mathbf{x}):=\Phi_k(z_{k-1},v_{i_k}).
\]
The associated curriculum subtask family
is then characterized by the $\EOS$-ended prefix paths
$S^l=(i_1,\ldots,i_\ell,d{+}1)$ with function values $f_{S^\ell}(\mathbf{x}):=\Phi_k(z_{\ell-1},v_{i_\ell}), \ell <k$.
\end{defn}

Fig.~\ref{fig:tree_parity} is an example of parity with $d=L=V+1=3$, and Fig.~\ref{fig:UFT_Count_Game} is an instance of Coundown with $d=4,V=\mathbb{Z}^{+},L=3$ without illustrating explicit $\mathrm{EOS}$. Here $\mathrm{EOS}$ is the end-of-sequence token, and $z_\ell$ is the $\ell$-th reasoning state in the CoT. Depending on the task, $z_\ell$ may be a single token (e.g., in Eq.~(\ref{eq:parity_CoT}) where $\Phi_\ell(z_{\ell-1},v_\ell(i_\ell))=z_{\ell-1}\oplus v_\ell(i_\ell)$) or a short expression (e.g., in the countdown game where $\Phi_\ell(z_{\ell-1},v_\ell(i_\ell))$ applies $(+,-,\times,\div)$ to $z_{\ell-1}$ and $v_\ell(i_\ell)$; see Fig.~\ref{fig:UFT_Count_Game}). The legal index set $\mathcal{I}_\ell$ encodes real-world selection rules, such as not reusing indices or numbers in parity or Countdown. The $2$S-ART also subsumes prior abstractions of reasoning behavior:% % \vspace{-.2\baselineskip}
\begin{itemize}
    \item Markov-chain reasoning \citep{kim2025metastabledynamicschainofthoughtreasoning} ($\Phi_\ell(z_{\ell-1},v_\ell(i_\ell))=\phi(z_{\ell-1})$ for some $\phi$);
    \item Induction-head for associative recall \citep{nichani2024understandingfactualrecalltransformers} ($\Phi_\ell(z_{\ell-1},v_\ell(i_\ell))=v_\ell(i_\ell)$);
    \item causal-graph reasoning \citep{nichani2024transformerslearncausalstructure} ($\Phi_\ell(z_{\ell-1},v_\ell(i_\ell))=\phi(v_\ell(i_\ell))$ for some $\phi$).
\end{itemize}% % \vspace{-.2\baselineskip}
 
A detailed version of Def.~\ref{def:ccart} (Def.~\ref{app_def:ccart}) appears in App.~\ref{app:tree}. To rule out ambiguities where different index paths represent the same task, we adopt the uniqueness assumption below.% % \vspace{-.2\baselineskip}

 % We assume the uniqueness of the $2$S-ART $\mathcal{F}_{\text{$2$S-ART}}$ $(\{\Phi_\ell\}_{\ell\le L},\{\mathcal{I}_\ell\}_{\ell\le L})$ in terms of the index path:
\begin{ass}[Uniqueness]\label{ass:unique_2sart}
    For $\forall f_{S_\star},f_{S_\star'}\in\mathcal{F}_{\text{$2$S-ART}}$ with $|S_{\star}|,|S_{\star}'|\le L+1$, if $|S_{\star}|\neq|S_{\star}'|$, then $f_{S_\star}\neq f_{S_\star'}$.
\end{ass}% % \vspace{-.2\baselineskip}

A natural way to define a base model with general capability over the task class $\mathcal{F}_{\text{$2$S-ART}}$ is to suggest that, at each depth $l\in[L]$, the model assigns uniform probability to all children in the legal branch~\citep{liu2025UFT} as below.% % \vspace{-.2\baselineskip}

\begin{defn}[Probabilistic $2$S-ART Base Model (\texttt{PART})]
\label{def:2SART-model}
Consider a $2$S-ART $\mathcal{F}_{\text{$2$S-ART}}$ $(\{\Phi_\ell\}_{\ell\le L},\{\mathcal{I}_\ell\}_{\ell\le L})$ in Def.~\ref{def:2SART-model} with $L\ll d$, a \texttt{PART} samples $S_\star$ by uniformly drawing $i_\ell\in\mathcal{I}_\ell(\mathbf{CoT}_{\ell-1}),\forall \ell \in [L]$, i.e. $P(i_\ell\mid z_{<\ell})=\frac{1}{|\mathcal{I}_\ell(\mathbf{CoT}_{\ell-1})|}$.
\end{defn}% % \vspace{-.2\baselineskip}

This uniform selection rule in the $2$S-ART base model leads directly to an exponential decay in \textit{success-rate} of tasks with depth $\ell$, as summarized below.% % \vspace{-.1\baselineskip}

\begin{cor}[Exponential Decay of Success Probability with Depth]\label{cor:2sart_decay}
Consider a $2$S-ART sampler defined in Def.~\ref{def:ccart}. For a fixed target $f_{S_{\star}}\in\mathcal{F}_{\text{$2$S-ART}}$ with associated curriculum subtask family $\mathcal{F}_{S_{\star}}=\{f_{S_{\star}^1},\ldots,f_{S_{\star}^{k^{\star}}}\}$, the probability that \texttt{PART} samples the legal CoT for $f_{S_{\star}^\ell}$ is $\Theta\!\big(d^{-(\ell+1)}\big)$.
\end{cor}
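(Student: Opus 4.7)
The plan is straightforward: decompose the probability that \texttt{PART} generates the legal CoT for the subtask $f_{S_{\star}^{l}}$ as a telescoping product of step-wise conditional probabilities along the prefix path $S_{\star}^{l}=(i_1,\ldots,i_l,d+1)$, then apply the uniform-over-legal-children rule of Def.~\ref{def:2SART-model} to each factor, and finally count the number of factors. Since \texttt{PART} is autoregressive and its generation is fully determined by the sequence of selected indices, I would first write
\[
\mathbb{P}_{\texttt{PART}}\!\bigl(\text{legal CoT for }f_{S_{\star}^{l}}\bigr)
=\Bigl(\prod_{j=1}^{l}\mathbb{P}(i_j\mid z_{<j})\Bigr)\cdot \mathbb{P}(\mathrm{EOS}\mid z_{\le l}),
\]
where each conditional probability is the chance of picking the correct legal child of the current partial CoT.

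Next, I would apply Def.~\ref{def:2SART-model} factor by factor. For every $j\le l$, the legal set $\mathcal{I}_j(\mathbf{CoT}_{j-1})$ has cardinality $\Theta(d)$, so $\mathbb{P}(i_j\mid z_{<j})=\Theta(d^{-1})$. The $\mathrm{EOS}$ token appears as one of the legal siblings at every depth (cf.\ the $\mathrm{E}$-leaves alongside the index-children in Fig.~\ref{fig:tree_parity}), hence $\mathbb{P}(\mathrm{EOS}\mid z_{\le l})=\Theta(d^{-1})$ as well. Multiplying the $l+1$ factors — the $l$ index-selection steps plus the terminal $\mathrm{EOS}$ step — each of order $d^{-1}$, yields the claimed $\Theta(d^{-(l+1)})$. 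The condition $L\ll d$ in Def.~\ref{def:2SART-model} guarantees that the hidden constants in the $\Theta(d)$ branching sizes remain bounded away from zero and infinity throughout the first $l+1\le L+1$ depths, so the product bound is indeed tight.

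The only delicate point, and the one I would want to double-check, is that this single-path probability truly equals the probability of producing \emph{the} legal CoT for $f_{S_{\star}^{l}}$: one must rule out that a different legal index path could realize the same subtask and inflate the probability by a combinatorial factor. Assumption~\ref{ass:unique_2sart} rules out paths of different lengths representing $f_{S_{\star}^{l}}$, and once each task is identified with its canonical index path as in Def.~\ref{def:ccart}, the prefix $S_{\star}^{l}$ is the unique realizer, so no extra multiplicity arises. I do not anticipate a real technical obstacle here; the argument is essentially a one-line chain-rule computation with uniform-branching factors, and the main care is just in bookkeeping the $l+1$ factors (not $l$) and in invoking Assumption~\ref{ass:unique_2sart} to justify the absence of path-multiplicity.
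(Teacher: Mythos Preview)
Your proposal is correct and follows essentially the same approach as the paper's proof: apply the chain rule to factor the path probability into $l+1$ conditionals (the $l$ index selections plus the terminal $\mathrm{EOS}$), observe each is $\Theta(d^{-1})$ by the uniform-legal-branching rule, and multiply. Your write-up is in fact slightly more careful than the paper's in explicitly invoking Assumption~\ref{ass:unique_2sart} to preclude path multiplicity and noting the role of $L\ll d$ in keeping the $\Theta$-constants uniform across depths.
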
% % \vspace{-.2\baselineskip}

Consider the optimal policy $\pi_{S_{\star}^\ell}$ for $f_{S_{\star}^\ell}$: given any $\mathbf{x}\in[K]^{d}$, $\pi_{S_{\star}^\ell}$ samples correct $i_1^\star,\ldots,i_\ell^\star$ in the correct order almost surely, thereby generating correct prefix $z_1^\star,\ldots,z_\ell^\star$ autoregressively, and from step $\ell+1$ onward copies the behavior of \texttt{PART}. Denote $\pi_{\texttt{PART}}$ as the policy of \texttt{PART}. It is straightforward to verify that $\|\tfrac{\pi_{S_{\star}^{\ell+1}}}{\pi_{S_{\star}^{\ell}}}\|_{\infty}=\Theta(d)$ and $\|\tfrac{\pi_{S_{\star}^\ell}}{\pi_{\texttt{PART}}}\|_{\infty}=\Theta(d^{\ell+1})$. In the next step, we show that a standard transformer architecture can faithfully replicate the \texttt{PART} reasoning process.

\textbf{Transformer ($\operatorname{TF}(\cdot;\mathbf{W})$) as Learning Model.}  
Let $\mathbf{U}=[\boldsymbol{\mu}_1,\ldots,\boldsymbol{\mu}_V,\boldsymbol{\mu}_{\mathrm{EOS}}]\in\mathbb{R}^{d_{\mathrm{X}}\times(V+1)}$ denote the embedding vocabulary, where the $v$-th column corresponds to the embedding of token $v$, and $\boldsymbol{\mu}_{\mathrm{EOS}}=\boldsymbol{\mu}_{V+1}$ denotes the embedding of $\mathrm{EOS}$. Following \citet{li2025iclcot}, we adopt mutually orthogonal positional encodings $\mathbf{P}:=[\mathbf{p}_1,\ldots,\mathbf{p}_{d+1}]$ satisfying $\mathbf{p}_i\perp\mathbf{U}$. Token embeddings and positional encodings are concatenated as
$\mathbf{E}[x_m]=[{\boldsymbol{\mu}^{x_m}}^{\top},{\mathbf{p}_m}^{\top}]^{\top}$,
$\mathbf{E}[\mathrm{EOS}]=[{\boldsymbol{\mu}_{\mathrm{EOS}}}^{\top},{\mathbf{p}_{d+1}}^{\top}]^{\top}$, and
$\mathbf{E}[z_\ell]=[{\boldsymbol{\mu}^{z_\ell}}^{\top},{\mathbf{p}_{i_{\ell}}}^{\top}]^{\top}$,
for all $m\in[d]$, $\ell\in[L+1]$, and $i_\ell\in[d+1]$. Given the embedded inputs $\mathbf{E}[x_1],\ldots,\mathbf{E}[x_d],\mathbf{E}[\mathrm{EOS}]\in\mathbb{R}^{d_{\mathrm{E}}}$. For notational convenience, we define $\mathbf{E}[z_{-d}]:=\mathbf{E}[x_1],\ldots,\mathbf{E}[z_{-1}]:=\mathbf{E}[x_d]$, and $\mathbf{E}[z_0]:=\mathbf{E}[\mathrm{EOS}]$. The transformer performs autoregressive next-token prediction via
\begin{equation}
\operatorname{TF}(\mathbf{E}[z_{-d}],\ldots,\mathbf{E}[z_{\ell-1}];\mathbf{W}) = \mathbf{E}[z_{\ell}],
\label{eq:TF_compute}
\end{equation}
where $\mathbf{W}$ denotes the model parameters. Throughout the reasoning process, the original input embeddings remain fixed, while the CoT states $\mathbf{E}[z_\ell]$ are generated autoregressively. Specifically, the forward computation at step $\ell$ is
\begin{equation}\label{eq:forward}
{\resizebox{0.9\linewidth}{!}{$\begin{aligned}
&\hat{\mathbf{p}}_{\ell} = 
\sum_{j=-d}^{\ell-2} 
\mathbf{V}\!\Big(\mathbf{E}[z_j]
\operatorname{softmax}\!\big(\mathbf{E}[z_j]^\top \mathbf{K}^\top \mathbf{Q}\mathbf{E}[z_{\ell-1}]\big)
\Big) \in \mathbb{R}^{d_{\mathrm{X}}},\\
&{\mathbf{p}}_{i_{\ell}} \sim \operatorname{softmax}\!(\hat{\mathbf{p}}_l^{\top}\mathbf{P}/\beta),\\
& \hat{\boldsymbol{\mu}}^{z_{\ell}}=\operatorname{FFN}_{\ell}({\boldsymbol{\mu}}^{x_{i_{\ell}}}, \hat{\boldsymbol{\mu}}^{z_{\ell-1}}), \quad
\mathbf{E}[z_{\ell}]= [\hat{\boldsymbol{\mu}}^{z_{\ell}}, \mathbf{p}_{i_{\ell}}]^\top \in \mathbb{R}^{d_{\mathrm{E}}}.
\end{aligned}
$}}
\end{equation}

The algorithm terminates when $\mathbf{p}_{d+1}$, the positional embedding associated with $\mathrm{EOS}$, is sampled. Here, $\beta>0$ denotes a temperature parameter. Causal masking is enforced by setting $\mathbf{E}[z_j]^\top \mathbf{K}^\top \mathbf{Q} \mathbf{E}[z_\ell]\leftarrow -\infty$ whenever $j\ge \ell$ or $\ell\le 0$. The vector $\boldsymbol{\mu}^{x_{i_\ell}}$ represents the depth-specific token selected at step $\ell$, while $\hat{\boldsymbol{\mu}}^{z_\ell}$ denotes the resulting reasoning state produced by the depth-specific nonlinear feedforward module $\operatorname{FFN}_\ell(\cdot,\cdot)$. Each $\operatorname{FFN}_\ell$ is designed to implement a task-specific reasoning primitive $\Phi_\ell(\cdot,\cdot)$ as defined in Def.~\ref{def:ccart}, such as the XOR operation realized by $h^\top\operatorname{ReLU}(W[\mathbf{E}[z_{\ell-1}]_{:d_{\mathrm{X}}};\boldsymbol{\mu}^{x_{i_\ell}}])$ in \citet{wen2024sparse}, the parity function $\phi(\cdot)$ in \citet{kim2025parity}, or the arithmetic update used in each reasoning step of the countdown game (Fig.~\ref{fig:UFT_Count_Game})~\citep{liu2025UFT}.

Importantly, our transformer architecture is a sampling variant of the deterministic ones in \citet{kim2025parity,wen2024sparse}. For theoretical convenience, we adopt the same treatment: attention serves to select crucial token index, while FFN is responsible for basic atomic skills—such as XOR operation, or other logical or arithmetic calculation (i.e., $+,-,\times,\div$)—that are assumed to be already present in the base model and remain stable under fine-tuning.

The following results show that, under this formulation, $\operatorname{TF}(\cdot;\mathbf{W})$ can faithfully replicate the \texttt{PART} behavior.
% % \vspace{-.1\baselineskip}
% % \vspace{-.2\baselineskip}
% \begin{equation}\label{eq:forward}
% \resizebox{0.8\linewidth}{!}{$\begin{aligned}
% &\hat{\mathbf{z}}_{l} = 
% \sum_{j=1}^{l-2} 
% \mathbf{V}(\mathbf{E}[z_j]
% \operatorname{softmax}\!\left(\mathbf{E}[z_j]^\top \mathbf{K}^\top \mathbf{Q}\mathbf{E}[z_{\ell-1}]\right)
% ) \in \mathbb{R}^{d_{\mathrm{X}}}\\
% &\hat{\boldsymbol{\mu}}^{x_{i_{l}}} \sim \operatorname{softmax}\!(\hat{\mathbf{z}}_l^{\top}\mathbf{U}/\beta), \quad \text{The algorithm terminates with $\mathbf{E}[\mathrm{EOS}]$ if $\boldsymbol{\mu}_{\mathrm{EOS}}$ is sampled} \\
% & \hat{\boldsymbol{\mu}}^{z_{l}}=\operatorname{FFN}_{l}(\hat{\boldsymbol{\mu}}^{x_{i_{l}}},  \mathbf{E}[z_{\ell-1}]_{:d_{\mathrm{X}}}), \quad \mathbf{E}[z_{l}]= [\hat{\boldsymbol{\mu}}^{z_{l}}, \mathbf{p}_{i_{l}}]^\top \in \mathbb{R}^{d_{\mathrm{E}}}.
% \end{aligned}
% $}\end{equation}% % \vspace{-.2\baselineskip}

\begin{thm}[Base Model as \texttt{PART} ($\operatorname{TF}(\cdot;\mathbf{W}_{\mathrm{base}})$)]\label{thm:tf-realizes-part}
Fix any $2$S-ART $(\{\Phi_\ell\}_{\ell\le L},\{\mathcal{I}_\ell\}_{\ell\le L})$ from Def.~\ref{def:ccart} and its probabilistic base model (\texttt{PART}) in Def.~\ref{def:2SART-model}. Assume that for each depth $\ell$ there exists the map $\operatorname{FFN}_\ell$ that replicates the target operation on embeddings, i.e., $\boldsymbol{\mu}^{z_\ell}=\operatorname{FFN}_\ell\big(\boldsymbol{\mu}^{v_l(i_\ell)},\mathbf{E}[z_{\ell-1}]_{:d_{\mathrm{X}}}\big)$ whenever $z_\ell=\Phi_\ell(z_{\ell-1},v_l(i_\ell))$. Then there exists a parameterization of a transformer in Eq.~(\ref{eq:forward}) to copy the \texttt{PART} behavior in Def.~\ref{def:2SART-model}.
\end{thm}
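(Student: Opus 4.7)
My approach is fully constructive: I will exhibit explicit parameter matrices $\mathbf{Q},\mathbf{K},\mathbf{V}$ and a temperature $\beta$ such that the forward pass in Eq.~(\ref{eq:forward}) reproduces the uniform-over-$\mathcal{I}_l$ sampling of Def.~\ref{def:2SART-model}. The central structural resource is the orthogonality built into the embedding: $\mathbf{p}_m\perp\mathbf{p}_{m'}$ for $m\neq m'$ and $\mathbf{P}\perp\mathbf{U}$. This makes the positional subspace $\operatorname{span}(\mathbf{p}_1,\ldots,\mathbf{p}_{d+1})$ a clean channel for accumulating ``already-used'' signals without contaminating the token channel, which is needed unchanged for the subsequent $\operatorname{FFN}_l$ step.

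First I would choose $\mathbf{V}=-M\,\Pi_{\mathbf{P}}$, with $\Pi_{\mathbf{P}}$ the projection onto the positional subspace and $M>0$ a scalar to be chosen later, and $\mathbf{Q}=\mathbf{K}=\mathbf{0}$ so the unmasked attention scores are all zero and the softmax gives uniform weight $1/(d+l)$ over the $d+l$ already-seen tokens $\mathbf{E}[z_{-d}],\ldots,\mathbf{E}[z_{l-2}]$. A direct calculation then yields $\hat{\mathbf{p}}_l\propto -\sum_m c_m\,\mathbf{p}_m$, where $c_m$ counts the multiplicity of position $m$ in the context: always $1$ from the input/$\mathrm{EOS}$ phase, plus an extra $1$ whenever some prior CoT step already selected index $m$. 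Projecting onto $\mathbf{P}$ and using mutual orthogonality, the position-$m$ logit equals $-c_m M\|\mathbf{p}_m\|^2/((d+l)\beta)$; taking $M/\beta$ sufficiently large drives the softmax onto the uniform distribution over previously unused positions, matching \texttt{PART} for the canonical no-repeat rule (as in Countdown).

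To cover richer legal sets such as parity's strictly increasing order, I would add a rank-one correction to $\mathbf{V}$ that, using the positional signature of the most recent CoT token $z_{l-1}$, injects an additional large negative logit mass at all positions $m\le i_{l-1}$; because any function of the index $m$ can be represented as $\sum_m f(m)\mathbf{p}_m$ on the orthogonal positional basis, the threshold indicator $\{m\le i_{l-1}\}$ is realizable by a fixed linear map. Once $i_l$ is sampled, the theorem's FFN assumption directly yields $\boldsymbol{\mu}^{z_l}=\operatorname{FFN}_l(\boldsymbol{\mu}^{v_l(i_l)},\mathbf{E}[z_{l-1}]_{:d_{\mathrm{X}}})$, and the $\mathrm{EOS}$-termination clause is triggered exactly when $i_l=d{+}1$ is sampled, closing one autoregressive step; an induction on $l$ extends this to the full CoT trajectory.

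The main obstacle will be handling general $\mathcal{I}_l(\cdot)$ within a single head: for legal rules more combinatorially intricate than no-repeat and monotone ordering, the uniform-attention readout above may not suffice and one may need a small rank correction or an additional head to realize the indicator of $\mathcal{I}_l$. For the rules actually invoked in the paper, the two-piece construction above should work. A secondary technical point is that strict uniformity is only exact in the temperature limit $M/\beta\to\infty$, so the statement is cleanest read in that limit (or as an arbitrarily close approximation), consistent with the conventions of constructive transformer-existence arguments.
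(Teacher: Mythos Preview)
Your construction is genuinely different from the paper's, and the difference matters. The paper does \emph{not} set $\mathbf{Q}=\mathbf{K}=\mathbf{0}$; instead it places the entire legality structure into the positional--positional block $\mathbf{W}$ of $\mathbf{K}^\top\mathbf{Q}$, choosing $\mathbf{p}_j^\top\mathbf{W}\,\mathbf{p}_{i_{l-1}}=c_l$ for $j\in\mathcal{I}_l$ and $-\infty$ otherwise (Eq.~(\ref{eq:base_pretrain}) and the parity instantiation Eq.~(\ref{eq:base_pretrain_parity})). Because the query token $z_{l-1}$ carries position $\mathbf{p}_{i_{l-1}}$, the attention logits are indexed by the last selected position and the softmax over keys is \emph{exactly} uniform on $\mathcal{I}_l$; the value matrix then just projects out the positional (or content) channel.

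Your multiplicity-counting idea is a clean alternative for the pure no-repeat rule, but the claimed rank-one fix for parity's strictly increasing constraint does not go through. The obstruction is structural: once you set $\mathbf{Q}=\mathbf{K}=\mathbf{0}$, the query $\mathbf{E}[z_{l-1}]$ disappears from the forward pass entirely---in Eq.~(\ref{eq:forward}) it enters only through the now-constant softmax, and the key/value sum runs up to $z_{l-2}$. Hence $\hat{\mathbf{p}}_l$ is a fixed linear functional of the position multiset $\{1,\ldots,d{+}1\}\cup\{i_1,\ldots,i_{l-2}\}$, which does not even contain $i_{l-1}$. No correction to $\mathbf{V}$ can ``use the positional signature of $z_{l-1}$'' because $\mathbf{V}$ is applied to keys/values, never to the query. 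Even if $z_{l-1}$ were included as a key, uniform attention would mix it indistinguishably with the input token sharing position $i_{l-1}$, and recovering $\max_t i_t$ from the multiset is nonlinear and not realizable by a single linear $\mathbf{V}$. The remedy is exactly the paper's move: keep a nontrivial $\mathbf{K}^\top\mathbf{Q}$ so the query's position selects the legal column of $\mathbf{W}$ and the attention itself implements the mask.
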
% % \vspace{-.2\baselineskip}

% \begin{rmk}
The assumption that $\operatorname{FFN}_\ell$ exists for each depth $\ell$ is justified by the universal approximation theorem \citep{cybenko1989approximation, hornik1991approximation}. This establishes that feedforward networks can approximate any continuous function on compact subsets of $\mathbb{R}^n$ to arbitrary precision. Since continuous functions are dense in $L^p$ spaces for $1 \leq p < \infty$, this extends to approximating most practically relevant functions. Recent advances show that ReLU networks of width $d+3$ and arbitrary depth can approximate any scalar continuous function of $d$ variables \citep{lu2017expressive}, while \citet{suzuki2018adaptivity,suzuki2021deep} demonstrates optimal approximation rates for functions in Besov spaces, encompassing broader function classes beyond continuity.
% \end{rmk}% % \vspace{-.2\baselineskip}

\textbf{$0$-$1$ Outcome Signals}. In real-world mathematical reasoning and program-like tasks, supervision is often available only at the outcome level (correct vs. incorrect). We therefore adopt $0$-$1$ outcome-only supervision that evaluates only the \emph{final pre-$\mathrm{EOS}$ prediction}. We use two oracles:% % \vspace{-.2\baselineskip}
\begin{itemize}
    \item $\mathbf{R}^{f_{S_{\star}}}_{\mathbf{x}}(\cdot)$: returns $1$ if and only if, immediately before sampling $\mathrm{EOS}$, $\operatorname{TF}(\cdot;\mathbf{W})$ outputs a token whose first $d_{\mathrm{E}}$ coordinates of the embedding equal $\boldsymbol{\mu}^{f_{S_{\star}}(\mathbf{x})}$; otherwise returns $0$.
    
    \item $\mathbf{R}^{\mathcal{F}_{S_{\star}}}_{\mathbf{x}}(\cdot,\ell)$: for the curriculum subtask family $\mathcal{F}_{S_{\star}}=\{f_{S_{\star}^1},\ldots,f_{S_{\star}^{k^{\star}}}\}$ defined in Def.~\ref{def:ccart}, returns $1$ if and only if the pre-$\mathrm{EOS}$ token has an embedding whose first $d_{\mathrm{E}}$ coordinates equal $\boldsymbol{\mu}^{f_{S_{\star}^{\ell}}(\mathbf{x})}$ for the queried depth $\ell\in[k^{\star}]$; otherwise returns $0$.
\end{itemize}% % \vspace{-.2\baselineskip}

These outcome-only definitions are agnostic to any specific case study; in the parity case discussed in Sec.~\ref{sec:parity}, outcome reward specializes to whether the result is correct per Eq.~(\ref{eq:parity_CoT}); in the multi-step language translation task studied in \citet{abedsoltan2025task} where the sampled word is translated to different language at different reasoning steps according to a fixed order, the outcome reward denotes whether the translation is correct for the task family.
% % \vspace{-.2\baselineskip}

\textbf{Challenge: Inherent Reward Hacking}. Outcome-only supervision on the final pre-$\mathrm{EOS}$ token allows many spurious CoTs to be accepted for a fixed input $\mathbf{x}$, because the oracle \textbf{only checks the pre-EOS token rather the correctness of CoT} (i.e., the internal index-selection process of the $2$S-ART $(\{\Phi_\ell\}_{\ell\le L},\{\mathcal{I}_\ell\}_{\ell\le L})$). For instance, in parity, choosing any wrong index at some depth can still yield the correct final bit with probability $1/2$ under $\mathbf{x}\sim\operatorname{Unif}(\{0,1\}^d)$.

 % % \vspace{-.5\baselineskip}
\subsubsection{Concrete Example: Parity Problem}\label{sec:parity}

\textbf{Sparse Parity Problem.} We remark that the Sparse Parity Problem can be seen as a $V=2$ subcase of Def.~\ref{def:ccart}, with $\mathbf{U}=\{0,1\}$ and the deterministic kernel $\Phi_\ell(z_{\ell-1},x_{i_\ell})$ defined by the XOR operation $z_{\ell-1} \oplus x_{i_\ell}$ ($l\ge2$), as well as the convention
$$
\resizebox{\linewidth}{!}{$z_0=\mathrm{EOS},\ \Phi_\ell(\mathrm{EOS},z)=z,\ \Phi_\ell(z,\mathrm{EOS})=\mathrm{EOS}, \ \forall z \in \mathbf{U}.$}
$$ 
Formally, given a $d$-dimensional binary input vector $\mathbf{x}=(x_1,\dots,x_d)\sim\operatorname{Unif}\{0,1\}^d$, the size-$k+1$ index path $S=S^k:=(i_1,...,i_{k},d+1)$ is defined by a size-$k$ secret index set $S\setminus\{d+1\}$ plus the index of EOS token (i.e., $d+1$). Here, $S$ define a class of Boolean functions, where each function computes the parity (XOR-sum) of the input components specified by $S=S^k$: $f_{S^k}(\mathbf{x})=\bigoplus_{i\in S}x_i = x_{i_1}\oplus x_{i_2}\oplus\cdots\oplus x_{i_k}$, where $\oplus$ denotes XOR and $i_1<i_2<\cdots<i_k$ without loss of generality. The output satisfies $f_S(\mathbf{x})=1$ if $\sum_{i\in S}x_i$ is odd, and $f_S(\mathbf{x})=0$ otherwise. The function class containing all such functions is denoted by $\mathcal{P}_{d,k}=\mathrm{Parity}(d,k)$ with cardinality $|\mathcal{P}_{d,k}|=\binom{d}{k}$.

\textbf{CoT and Curriculum Subtask Family for Parity}. Per Def.~\ref{def:ccart}, the resulting XOR-based CoT \citep{wen2024sparse,abedsoltan2025task} is:
% % \vspace{-.1\baselineskip}
\begin{equation}\label{eq:parity_CoT}
\resizebox{0.9\linewidth}{!}{$
z_1 = x_{i_1}, \ 
z_2 = x_1 \oplus x_{i_2},  \ 
\ldots,  \ 
z_k= z_{k-1} \oplus x_{i_k} = f_S(\mathbf{x}),$}% % \vspace{-.3\baselineskip}
\end{equation}
where $ i_1<i_2<...<i_k$.
In this case, the curriculum subtask family is the prefix-index family $\mathcal{F}_{S}=\{f_{S^1},\ldots,f_{S^{k-1}}\}$ with $S^{k'}=(i_1,\ldots,i_{k'},d+1)$ and $f_{S^{k'}}(\mathbf{x})=z_{k'}$ for $k'\in[k]$; early termination by $\mathrm{EOS}$ follows Def.~\ref{def:ccart}. In particular, the legal sets $\{\mathcal{I}_\ell\}$ is defined to satisfy $i_1<i_2<\cdots<i_k$.

\textbf{Transformer ($\operatorname{TF}(\cdot;\mathbf{W})$) as Learning Model}. We follow Eq.~(\ref{eq:TF_compute}), Eq.~(\ref{eq:forward}), set the vocabulary as $\mathbf{U}=[\boldsymbol{\mu}^{0}, \boldsymbol{\mu}^{1}, \boldsymbol{\mu}_{\mathrm{EOS}}]=[\boldsymbol{e}_1,\boldsymbol{e}_2,\boldsymbol{e}_3]$ for $0,1,\mathrm{EOS}$, and the $\operatorname{FFN}_{l}$ is instantiated for XOR operation:
% % \vspace{-.5\baselineskip}
\begin{equation}\label{eq:FFN_XOR}
    \operatorname{FFN}_{l} = \mathbf{W}_{2}\operatorname{ReLU}[\mathbf{W}_{1}(\hat{\boldsymbol{\mu}}^{x_{i_{l}}} + \mathbf{E}[z_{\ell-1}]_{:d_{\mathrm{X}}})] 
\end{equation}
where
\begin{equation}\label{eq:FFN_XOR_W}
\resizebox{0.9\linewidth}{!}{$
\begin{aligned}
    &\mathbf{W}_1=\begin{bmatrix}
    \frac12&\frac12&\frac12\\
    0&1&0\\
    -\frac12&\frac12&-\frac12
    \end{bmatrix},\quad
    \mathbf{W}_2=\begin{bmatrix}
    1&-1&2\\
    0&1&-2\\
    0&0&0
    \end{bmatrix}.% % \vspace{-.4\baselineskip}
    \end{aligned}
    $}\end{equation}

It can be checked directly that this ensure our $\operatorname{FFN}_{l}$ replicates the $\Phi_\ell(z_{\ell-1},v_l(i_\ell))=z_{\ell-1}\oplus v_l(i_\ell)$.
% % \vspace{-.1\baselineskip}
% \[
% \begin{aligned}
%     &\operatorname{FFN}_{l}(\boldsymbol{\mu}^{0}, \boldsymbol{\mu}^{0})=\operatorname{FFN}_{l}(\boldsymbol{\mu}^{1},\boldsymbol{\mu}^{1})=\boldsymbol{\mu}^{0}, \\
%     &\operatorname{FFN}_{l}(\boldsymbol{\mu}^{0} , \boldsymbol{\mu}^{1}) =\operatorname{FFN}_{l}(\boldsymbol{\mu}^{1}, \boldsymbol{\mu}^{0})= \boldsymbol{\mu}^{1}, \\
%     &\operatorname{FFN}_{l}(\boldsymbol{\mu}^{0}, \boldsymbol{\mu}^{\mathrm{EOS}}) = \boldsymbol{\mu}^{0},\ \operatorname{FFN}_{l}(\boldsymbol{\mu}^{1}, \boldsymbol{\mu}^{\mathrm{EOS}}) = \boldsymbol{\mu}^{1}
% \end{aligned}
% \]

% Per in previous Sec.~\ref{sec:sc_reason}, oracle $R^{f_{S_{\star}}}(\cdot)$ and $R^{\mathcal{F}_{S_{\star}}}(\cdot)$ for the target task $f_{S_{\star}},|S_{\star}|=k+1$, rather than 

\subsubsection{Outcome Signal-based RL Finetunings by Gradient Descent}
% % \vspace{-.4\baselineskip}

For mathematics benchmarks, the conventional REINFORCE objective is used to increase the probability that sampled CoTs yield correct answers \cite{xiong2025minimalistapproachllmreasoning,setlur2025rewarding}. In our setting, the training objective is
% % \vspace{-.2\baselineskip}
\begin{equation}
    \label{eq:RL-obj}
\resizebox{0.7\linewidth}{!}{$\mathcal{J}_{\mathrm{REINFORCE}}^{k^{\star}}(\mathbf{W}^{k^{\star}}) 
    = \mathbb{E} 
    \left[ R^{k^{\star}}( \operatorname{TF}(\cdot;\mathbf{W}) )\right],$}
\end{equation}
% % \vspace{-.2\baselineskip}

where $R^{k^{\star}}(\cdot) \in \{R^{f_{S_{\star}}}_{\mathbf{x}}(\cdot), R^{\mathcal{F}_{S_{\star}}}_{\mathbf{x}}(\cdot) \}$, and $\mathbf{W}\in \mathbb{R}^{(d_{\mathrm{E}}-d_{\mathrm{X}})^2}$ is the only matrix we consider trainable in Thm.~\ref{thm:tf-realizes-part}:
\begin{equation}
\resizebox{0.9\linewidth}{!}{$\mathbf{K}^{\top}\mathbf{Q} \,=\, \begin{pmatrix}
\boldsymbol{0}_{d_{\mathrm{X}}\times d_{\mathrm{X}}} & \boldsymbol{0}_{d_{\mathrm{X}}\times (d_{\mathrm{E}}-d_{\mathrm{X}})} \\
\boldsymbol{0}_{(d_{\mathrm{E}}-d_{\mathrm{X}})\times d_{\mathrm{X}}} & \mathbf{W}
\end{pmatrix},\ 
\mathbf{V} \,=\, \begin{pmatrix}\boldsymbol{0} &  \mathbf{I}_{d_{\mathrm{E}}-d_{\mathrm{X}}} \end{pmatrix},$}
\end{equation}
where the $\boldsymbol{0}, \mathbf{V}$, as well as the feedforwad map $\operatorname{FFN}_\ell$, is considered fixed during finetuning. This type of reparametrization is common in the transformer optimization literature to enable tractable analysis~\citep{zhang2023trained,huang2023incontext,mahankali2023one,kim2025parity}.
% % \vspace{-.1\baselineskip}

In mathematical datasets, the difficulty measure \textit{success-rate} often coincides with reasoning length of CoT~\citep{parashar2025curriculum}: Blocksworld uses plan length \citep{valmeekam2023planningBlocksworld}, Countdown counts the number of operations \citep{park2025does}, parity utilizes number of XOR operation, and for graph problems, shorter CoTs' atomic skills could be generalized to longer CoTs for harder problems~\citep{cheng2025atomic,ran2026outcome}. Building on these observations, a natural curriculum for $\mathcal{F}_{S_\star}$ is to gradually increase reasoning depth from shallow to deep. Separately, \citet{liu2025UFT,amani2025rl} demonstrated the benefit of providing hints (partial CoT prefixes) and progressively shortening them so that the model completes longer suffixes. Therefore, we consider two categories of curriculum finetuning under the oracle $\mathbf{R}_{\mathbf{x}}^{\mathcal{F}_{S_\star}}$ at the $\ell$-th stage:
% % \vspace{-.2\baselineskip}
\begin{itemize}
  \item \textbf{Depth-increasing Curriculum} \citep{parashar2025curriculum}: the algorithm truncate CoTs from $\operatorname{TF}(\cdot,\mathbf{W}^{(t)})$ with $\mathrm{EOS}$ to ensure length $\ell+1$.
  \item \textbf{Hint-decreasing Curriculum} \citep{liu2025UFT}: the algorithm provides a CoT prefix of length $k^\star+1-\ell$, letting $\operatorname{TF}(\cdot,\mathbf{W}^{(t)})$ generate the remaining steps.
\end{itemize}% % \vspace{-.1\baselineskip}

The following theorem proves the exponential improvement of curriculum post-training within our transformer autoregressive reasoning tree setting, which serves as a concrete case study of our Cor.~\ref{cor:curriculum-bottleneck} with $C^\star=d^2$.

\begin{thm}[Curriculum RL Finetuning Avoid Exponential Bottleneck]\label{thm:curriculum_finetune_bottleneck}
Let $\mathbf{U}$ be an orthogonal matrix with $d_{\mathrm{X}}=\Theta(K),d_{\mathrm{E}}=\Theta(d+L)$, $\operatorname{TF}(\cdot;\mathbf{W}_{\text{base}})$ per Thm.~\ref{thm:tf-realizes-part} with trainable $\mathbf{W}$, and a target $f_{S_{\star}}\in\mathcal{F}_{\text{$2$S-ART}}$ with $|S_{\star}|=k^{\star}+1$. For any $\varepsilon\in(0,1)$, using the RL objective in Eq.~(\ref{eq:RL-obj}) and one gradient step per stage (a single step for no-curriculum; $k^\star{+}1$ online steps for curricula), with probability no less than $1-\delta$, there exist learning-rate choices $\eta$ such that
% % \vspace{-.2\baselineskip}
\begin{enumerate}
  \item No-curriculum ($R^{f_{S_{\star}}}_{\mathbf{x}}$ as oracle, one-shot update with $\eta=\tilde{\Theta}(  d^{k^{\star}+1})$): the sample complexity to achieve $\mathbb{E}_{\mathbf{x}\sim\operatorname{Unif}([K]^d)}\big[ R^{f_{S_\star}}_{\mathbf{x}}(\operatorname{TF}(\mathbf{x};\mathbf{W}))\big]\ge1-\varepsilon$ is at least $N_{\varepsilon}\ge\tilde{\Omega}(d^{2k^\star+2})$.
  \item Depth-increasing Curriculum and Hint-decreasing Curriculum ($\mathbf{R}_{\mathbf{x}}^{\mathcal{F}_{S_\star}}$ as oracle, $k^\star{+}1$ online updates with $\eta=\tilde{\Theta}( d^{2})$): the sample complexity to achieve $\mathbb{E}_{\mathbf{x}\sim\operatorname{Unif}([K]^d)}\big[ R^{f_{S_\star}}_{\mathbf{x}}(\operatorname{TF}(\mathbf{x};\mathbf{W}))\big]\ge1-\varepsilon$ is at most $N_{\varepsilon}\le\tilde{O}((k^{\star}+1) d^{2})$.
\end{enumerate}

Here $\tilde{\Theta}(\cdot),\tilde{\Omega}(\cdot)$ and $\tilde{O}(\cdot)$ hide polylogarithmic factors in $d,1/\delta,1/\varepsilon,\beta$ as well as task-dependent spurious-acceptance parameters.
\end{thm}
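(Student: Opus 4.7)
The plan is to analyze the REINFORCE gradient dynamics at the base-model parameter $\mathbf{W}=\mathbf{0}$ and compare the signal-to-noise ratio of the empirical gradient estimator under the two oracles. First, I would compute the population gradient of $\mathcal{J}_{\mathrm{REINFORCE}}^{k^{\star}}$ at $\mathbf{W}=\mathbf{0}$. Because the only trainable block of $\mathbf{W}$ acts on the positional-encoding subspace (Thm.~\ref{thm:tf-realizes-part}) and the base transformer samples legal indices uniformly (Cor.~\ref{cor:2sart_decay}), the gradient decomposes as a sum of rank-one contributions indexed by depth $\ell$, each aligned with the outer product of the positional encoding $\mathbf{p}_{i_\ell^{\star}}$ of the correct attended position and the query-side code at step $\ell$. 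Invoking the REINFORCE identity $\partial_{\mathrm{logit}(i)}\mathcal{J}=\pi(i)\bigl(\mathbb{E}[R\mid i_\ell=i]-\mathbb{E}[R]\bigr)$, the signal magnitude along the correct direction equals $s^{\mathrm{nc}}=\Theta(d^{-(k^{\star}+1)})$ in the no-curriculum case (all $k^{\star}+1$ indices must jointly align to earn reward, each uniformly coinciding with probability $\Theta(d^{-1})$), whereas in either curriculum it is $s^{\mathrm{curr}}=\Theta(d^{-1})$ per stage because the remaining indices are either truncated away (depth-increasing) or supplied as a hint (hint-decreasing).

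Second, I would bound the per-sample variance of the REINFORCE estimator. Since the outcome oracle is $\{0,1\}$-valued and exhibits spurious acceptance at rate $\Theta(1/K)$ (a random CoT coincidentally matches $f_{S_{\star}}(\mathbf{x})$ independent of the internal indices), the reward variance and hence $\mathrm{Var}(R\,\nabla\log\pi)$ is $\Theta(1)$ under either oracle, so the empirical gradient concentrates at rate $\Theta(1/\sqrt{n})$. The signal-to-noise requirement $s\gtrsim 1/\sqrt{n}$ thus forces $n\ge\tilde{\Omega}(1/s^{2})$, giving $\tilde{\Omega}(d^{2(k^{\star}+1)})=\tilde{\Omega}(d^{2k^{\star}+2})$ for the no-curriculum one-shot update and $\tilde{O}(d^{2})$ per stage for either curriculum. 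Calibrating $\eta$ so the post-step logit margin matches the softmax temperature $\beta$, i.e.\ $\eta\,s=\tilde{\Theta}(\beta)$, reproduces the stated $\eta=\tilde{\Theta}(\beta d^{k^{\star}+1})$ and $\eta=\tilde{\Theta}(\beta d^{2})$ scalings once one absorbs positional-code normalization factors, and a softmax Lipschitz inequality then converts the logit margin into a success probability $\ge 1-\varepsilon$. A union bound over the $k^{\star}+1$ stages composes per-stage guarantees into the claimed total $\tilde{O}((k^{\star}+1)d^{2})$, while a matching Chebyshev (or anti-concentration) argument on the one-shot no-curriculum estimator produces the $\tilde{\Omega}(d^{2k^{\star}+2})$ lower bound.

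The main obstacle will be controlling the reward-hacking contributions in the no-curriculum lower bound. Spurious acceptance inflates $\mathbb{E}[R]$ to $\Theta(1)$, so one must show that the gradient signal toward any incorrect index $i_\ell\neq i_\ell^{\star}$ is $o(d^{-(k^{\star}+1)})$ while the signal toward $i_\ell^{\star}$ is exactly $\Theta(d^{-(k^{\star}+1)})$. Exploiting the joint uniformity of $\mathbf{x}\sim\mathrm{Unif}([K]^{d})$ and of the base-model index distribution, the spurious mass should cancel across $\mathbf{x}$ up to a task-dependent constant (absorbed in the hidden factors of $\tilde{\Omega}/\tilde{O}$), leaving the correct direction as the sole systematic signal. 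A secondary obstacle is verifying that the curriculum's sequential updates remain approximately block-orthogonal: since positional encodings are mutually orthogonal and each stage's gradient lies in its own rank-one block (Assumption~\ref{ass:unique_2sart} ensures the decomposition is well-defined), updating stage $\ell$ leaves the attention patterns at stages $\ell'\neq\ell$ intact, so per-stage guarantees compose into a joint guarantee on the full rollout that recovers $f_{S_{\star}}$ with accuracy $\ge 1-\varepsilon$.
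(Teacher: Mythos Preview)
Your proposal is essentially the same approach the paper takes: decompose the REINFORCE gradient into orthogonal positional blocks (the paper's ``Orthogonal Block Isolation'' Lemma), compute the per-step population margin between the correct and competing directions, use Hoeffding-type concentration to determine the sample size needed for empirical block dominance, calibrate $\eta$ so the post-update logit gap clears the softmax threshold, and then convert per-step margins to overall accuracy via a softmax/union bound (the paper's Lemma linking per-step logit margins to $0$--$1$ loss). Your identification of reward hacking and block orthogonality as the two obstacles is exactly right and matches the paper's treatment via the spurious-acceptance parameters $\rho_{\mathrm{spur}}$ and the OBI lemma.

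One imprecision worth flagging: your claim that $\mathrm{Var}(R\,\nabla\log\pi)=\Theta(1)$ ``because the reward is $\{0,1\}$-valued'' is not quite right at the block level. The per-sample block coefficient is $R\,\alpha_\ell(j)\,\eta_\ell(j)$, and at the near-uniform base model $\alpha_\ell(j)=\Theta(d^{-1})$, so the Hoeffding range is $B=\Theta(d^{-1}\beta^{-1})$, not $\Theta(1)$. The paper tracks this explicitly, and it is why the margin $\gamma_\ell$ in the paper carries an extra $d^{-1}$ factor (from $\alpha_\ell$) relative to the raw conditional-reward gap. Your parametrization at the ``logit'' level and the paper's at the $\mathbf{W}$-block level differ by this scaling, so the signal and noise each shift by a power of $d$; the \emph{ratio} that determines $n$ should agree, but your write-up mixes the two normalizations (signal from one, variance from the other), which happens to land on the stated exponent but for the wrong reason. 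For the lower-bound direction in item~(1), note also that ``Chebyshev'' is a concentration inequality; you need genuine anti-concentration (or the paper's device of converting the upper bound on $\gamma_\ell$ into a necessary condition on $n$ for the Hoeffding-based algorithm) to argue that fewer samples cause failure.
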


\textbf{Remark}. Item (1) is not minimax-tight in certain cases of $2$S-ART (for example, the minimax rate for the parity class up to EOS is $d^{k^{\star}}$). The gap arises because our analysis is based on vanilla gradient descent over the empirical estimate of outcome rewards Eq.~(\ref{eq:RL-obj}), which disallows cross-sample referencing or structure-exploiting techniques such as Gaussian elimination~\citep{raz2018fast,abbe2020universality}. Nevertheless, our result already \emph{suffices}: Item (2) establishes that curriculum post-training can reduce the sample complexity to polynomial order. The specific $d^2$ dependence stems from distinguishing margins of order $\Theta(d^{-1})$ in parallel attempts, which information-theoretically requires inverse-square costs. More sophisticated noisy SGD algorithms that enables cross-sample reference may improve this rate at the cost of a $\mathrm{poly}(d)$ time complexity~\citep{cornacchia2023mathematical, abbe2023provable}. 

It is worth noting that one- or few-shot convergence under large learning rates has been widely discussed in prior optimization work, such as~\citet{cornacchia2023mathematical, kim2025parity}. As mentioned in the introduction and Sec.~\ref{app:add_related_work}, there are also theoretical investigations of curriculum benefits in pre-training (train-from-scratch) settings for the parity function class~\citep{cornacchia2023mathematical, abbe2023provable, panigrahi2024progressive}. In particular, \citet{cornacchia2023mathematical} and \citet{abbe2023provable} construct curricula by mixing data distributions, where ``difficulty'' is characterized by the density of Hamming weight (fewer $1$s than $0$s are easier), while \citet{panigrahi2024progressive} study a teacher–student setup in which ``difficulty'' is defined by the signal strength of checkpoints provided by the teacher. Their analyses focus on 2-layer ReLU networks or MLPs trained by carefully designed stage-wise or layer-wise gradient descent algorithms. In contrast, our perspective builds on the difficulty measure \emph{success-rate} and its connection to the inherently probabilistic, tree-like CoT generation behavior in post-training~\citep{yue2025doesreinforcementlearningreally}. By Cor.~\ref{cor:2sart_decay}, this leads naturally to the depth-increasing and hint-decreasing curriculum, and we consider GD updates of transformer mirroring the post-training scenarios.

\begin{figure*}[t]
    \centering
    \begin{subfigure}[b]{0.24\linewidth}
        \includegraphics[width=\linewidth]{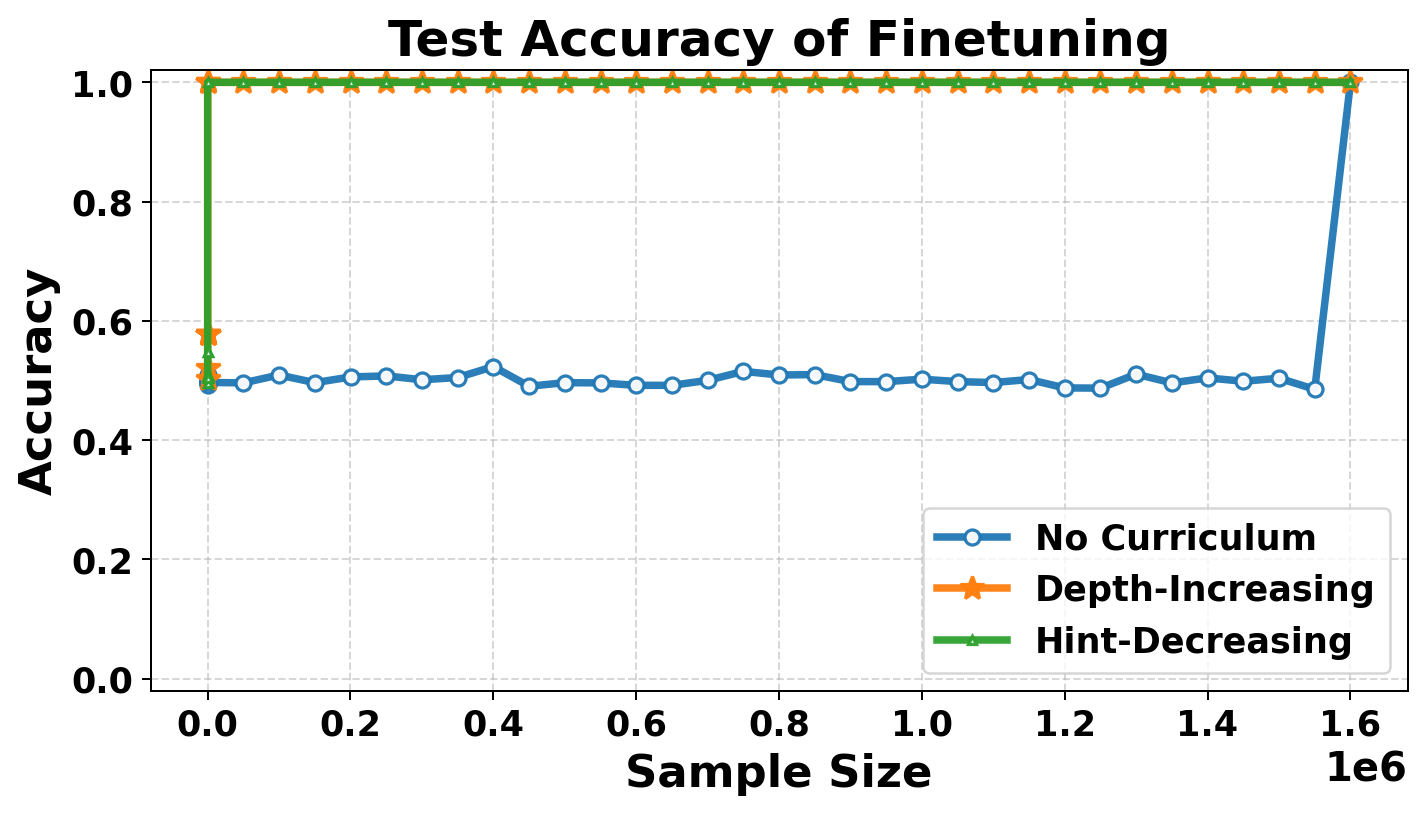}
        \caption{Test Accuracy of Finetuning.}
    \label{fig:a}\end{subfigure}
    \hfill
    \begin{subfigure}[b]{0.5\linewidth}
        \includegraphics[width=\linewidth]{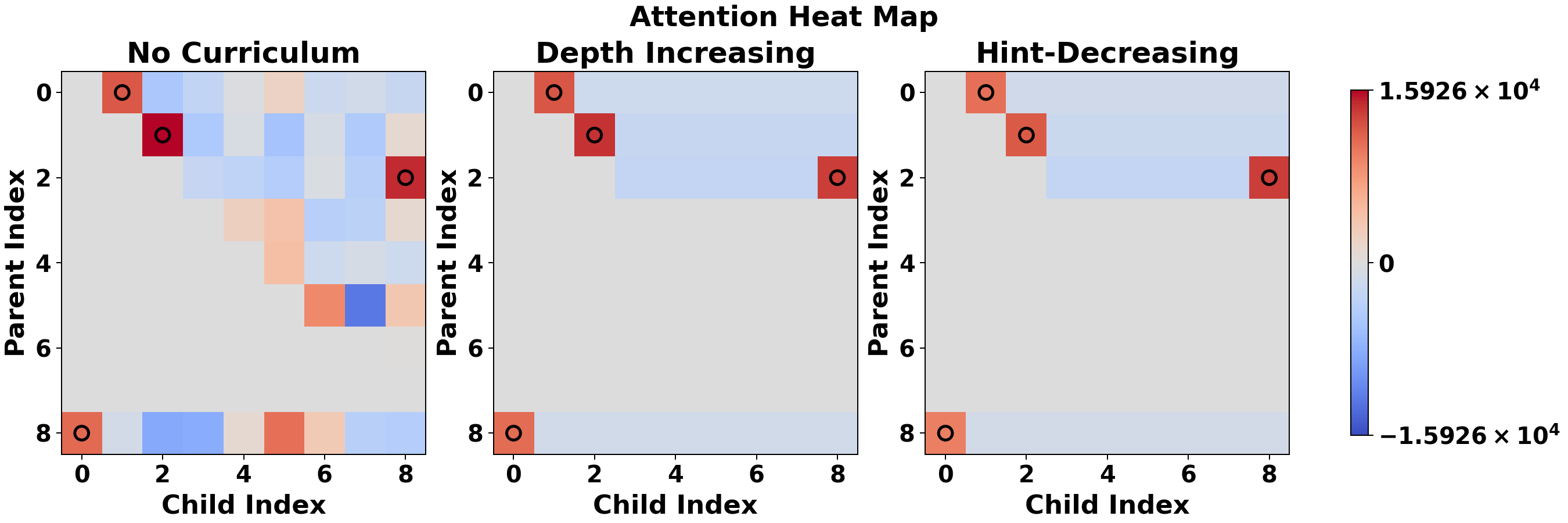}
        \caption{Attention Heat Map after Finetuning.}
    \label{fig:b}\end{subfigure}
    \hfill
    \begin{subfigure}[b]{0.24\linewidth}
        \includegraphics[width=\linewidth]{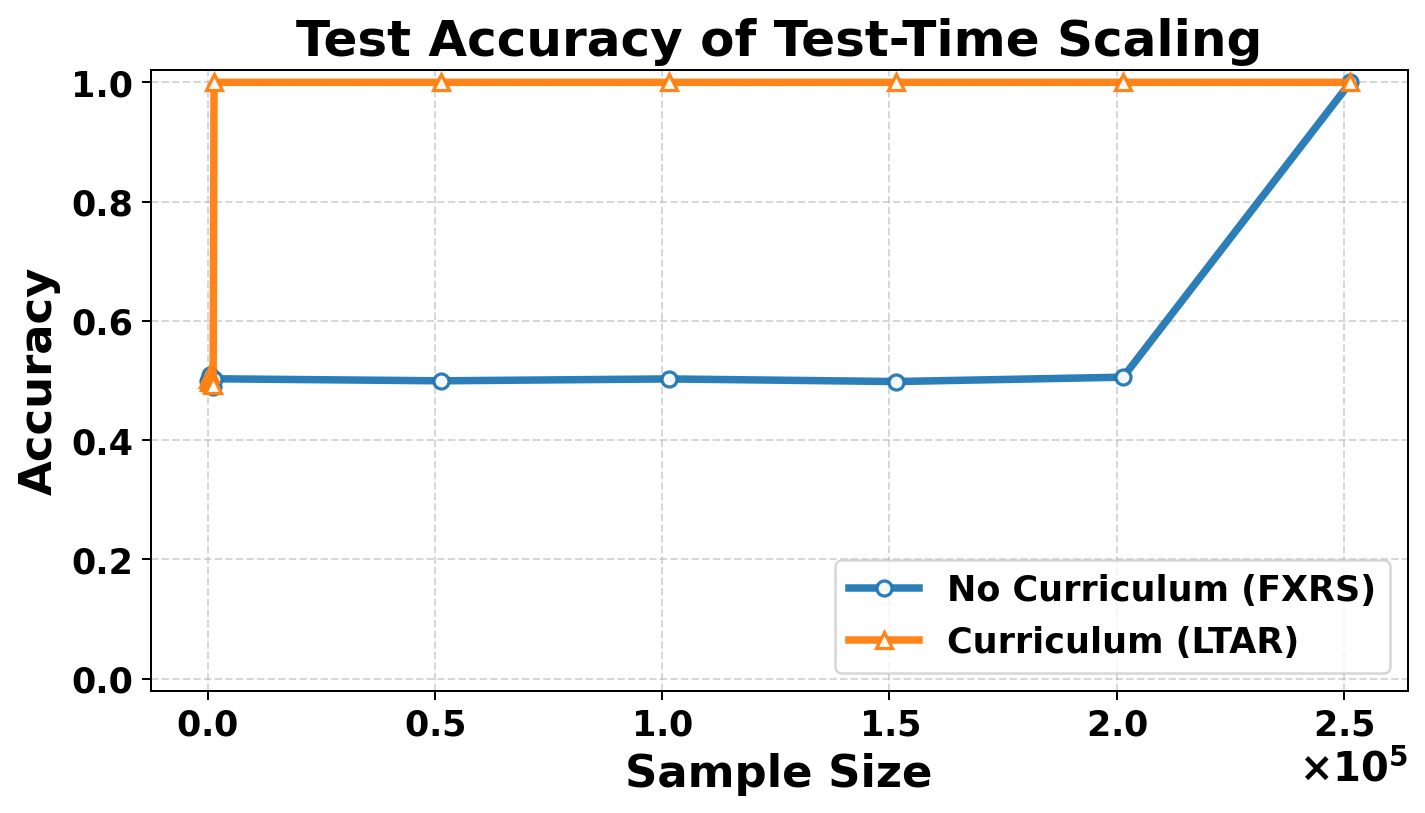}
        \caption{Test Accuracy of Test-Time Scaling.}
    \label{fig:c}\end{subfigure}
    \caption{Dynamics of Finetuning and Test-time Scaling on the parity task ($d=8$, $k^\star=3$, $S^\star=[0,1,2]$). (a) Test accuracy evolution of finetuning along sample size; (b) attention heatmap after finetuning; (c) test accuracy evolution of test-time scaling along reward oracle query complexity. For finetuning, both curriculum algorithms (Alg.~\ref{alg:finetune_no_curriculum} $\&$ Alg.~\ref{alg:finetune_depth_increasing}) converge at sample size $n=76$, where no-curriculum counterpart (Alg.~\ref{alg:finetune_no_curriculum}) converges at $n=1600076$; Attention learning of curriculum algorithms focus on crucial parent-children maps, while no curriculum counterpart is contaminated with spurious corelations; For test-time scaling, curriculum algorithm (LTAR in Alg.~\ref{alg:ltar}) converges at $n=1410$ while its no curriculum counterpart (Alg.~\ref{alg:fxrs}) converges at $n=251410$.}
    \label{fig:Parity_Dynamics}
\end{figure*}
\textit{Proof Sketch}. By Cor.~\ref{cor:curriculum-bottleneck}, the remaining task is to show Eq.~(\ref{eq:relaxed_comp}) in our case. The sample complexity is determined by sample size needed to reliably distinguish the gradient margin between true signals and spurious signals. From detailed gradient calculations, the expected margin at step $\ell$ is $\Theta(\beta^{-1}d^{-(k^\star+2-\ell)})$, and its absolute value is lower bounded by $\Theta(\beta^{-1}d^{-1})$. To ensure confidence radius no larger than half of the margin, Bernstein-type bounds with a union argument yield $n_\ell=\tilde{\Theta}(d^{2k^\star+2-\ell})$. The first step dominates with $n_1=\tilde{\Theta}(d^{2k^\star+2})$, giving total sample complexity $N_{\varepsilon}^{\text{direct}}\ge \tilde{\Omega}(d^{2k^\star+2})$. In contrast, under curriculum strategies, the expected margin at each step is at most $\Theta(\beta d^{-1}/2)$, so the required samples per step are at most $\tilde{\Theta}(d^2)\le \sqrt[k^\star]{N_{\varepsilon}^{\text{direct}}}$, and thus satisfies Eq.~(\ref{eq:relaxed_comp}).

\subsubsection{Outcome Signal-based Test-time Scaling}

In this section, we study \emph{test-time scaling}, where the goal is not merely to output the correct pre-EOS token, but to \emph{identify the correct reasoning path} in the presence of outcome-based reward hacking (e.g., in parity, using a wrong index still yield the correct label with $1/2$ probability).

Following \citet{foster2025goodfoundationnecessaryefficient}, we measure the sample cost $N_{\varepsilon}$ of test-time scaling along two axes. Let $T_{\text{data}}$ denote the \emph{reward-oracle query complexity}, i.e., the total number of evaluations of $\mathbf{R}_{\mathbf{x}}^{f_{S_\star}}$ or $\mathbf{R}_{\mathbf{x}}^{\mathcal{F}_{S_\star}}$, and let $T_{\text{comp}}$ denote the \emph{model-sampling complexity}, i.e., the total number of token emissions from $\operatorname{TF}_{\text{base}}$. The theorem below formalizes an exponential bottleneck for $\mathbf{R}_{\mathbf{x}}^{f_{S_\star}}$-only test-time scaling, and shows that curriculum-style oracle ($\mathbf{R}_{\mathbf{x}}^{\mathcal{F}_{S_\star}}$,) access provably eliminates this bottleneck.

% Our approach is conceptually aligned with the ``good foundation'' perspective of \citet{foster2025goodfoundationnecessaryefficient}: we exploit the base model’s coverage and use oracle feedback to guide inference-time exploration. The key difference is operational: rather than forming a special new policy, we maintain \emph{empirical acceptance estimates} for candidate indices and commit to the index with the highest acceptance at each depth. This acceptance-based commitment is designed specifically to rule out spurious success under $\mathbf{R}_{\mathbf{x}}^{f_{S_\star}}$: a reward of $1$ is treated as a noisy observation, and only repeated evidence across forced rollouts is allowed to influence the committed path. A curriculum oracle $\mathbf{R}_{\mathbf{x}}^{\mathcal{F}_{S_\star}}$ further strengthens this mechanism by enabling depth-wise evaluation (via truncation), so that acceptance can be attributed to the correct prefix rather than the full rollout.

\begin{thm}[Curriculum Test-time Scaling Avoid Exponential Bottleneck]\label{thm:testtime_oracle_complexity}
Let $\operatorname{TF}_{\text{base}}:=\operatorname{TF}(\cdot;\mathbf{W}_{\text{base}})$ per definedbe as in Thm.~\ref{thm:tf-realizes-part}, and consider a target $f_{S_{\star}}\in\mathcal{F}_{\text{$2$S-ART}}$ with $|S_{\star}|=k^{\star}+1$. Then, for identifying the ground-truth path $S_\star$ with confidence $1-\delta$:
\begin{enumerate}
  \item Using only $\mathbf{R}_{\mathbf{x}}^{f_{S_\star}}$, any adaptive procedure (including ones that force visible $x$-tokens by rejection sampling and then roll out) requires $T_{\text{data}},\ T_{\text{comp}}\ge\tilde{\Omega}(d^{2k^\star})$.
  \item Using curriculum queries adaptively to $\mathbf{R}_{\mathbf{x}}^{\mathcal{F}_{S_\star}}$, there exists a procedure with $T_{\text{data}}\le\tilde{O}((k^\star+1) d^2)$ and $T_{\text{comp}}\le\tilde{O}((k^\star+1) d^{3})$.
\end{enumerate}
Here $\tilde{\Omega}(\cdot)$ and $\tilde{O}(\cdot)$ hide polylogarithmic factors in $d$, $1/\delta$ and spurious-acceptance parameters.
\end{thm}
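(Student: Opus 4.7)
The plan is to treat the upper bound (item 2) and the lower bound (item 1) separately, starting from the upper bound since it motivates the algorithmic design, and then handling the lower bound via an information-theoretic reduction. Throughout, I would lean on Theorem~\ref{thm:tf-realizes-part} (so that $\operatorname{TF}_{\text{base}}$ faithfully implements \texttt{PART}) and on Corollary~\ref{cor:2sart_decay} together with the coverage estimates $\|\pi_{S_\star^l}/\pi_{\texttt{PART}}\|_\infty=\Theta(d^{l+1})$ and $\|\pi_{S_\star^{l+1}}/\pi_{S_\star^{l}}\|_\infty=\Theta(d)$ derived in the text. These directly place us in the regime where Theorem~\ref{thm:curriculum-bottleneck} predicts an exponential-to-polynomial gap, so the main work is converting those coverage ratios into concrete oracle and sampling costs.

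For item 2, the plan is to design an explicit stagewise procedure. At stage $\ell\in[k^\star+1]$, having already identified a prefix $(i_1^\star,\ldots,i_{\ell-1}^\star)$, I would prompt $\operatorname{TF}_{\text{base}}$ with this fixed prefix and draw rollouts in which only $(i_\ell,z_\ell,\mathrm{EOS})$ are sampled under the uniform legal-branching rule of \texttt{PART}. Querying $\mathbf{R}^{\mathcal{F}_{S_\star}}_{\mathbf{x}}(\cdot,\ell)$ on each rollout, the correct index $i_\ell^\star$ has true acceptance probability $1$, while any wrong candidate has acceptance probability at most some $p_{\text{spur}}$ bounded below $1$ by the spurious-acceptance parameter (e.g.\ $1/2$ for parity). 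Since each of the $\Theta(d)$ legal candidates is visited with probability $\Theta(1/d)$ per rollout, a Bernstein-plus-union-bound argument shows that $n=\tilde{O}(d^{2})$ rollouts per stage ensure that the empirical acceptance rate of $i_\ell^\star$ strictly dominates all competitors with confidence $1-\delta/(k^\star+1)$. Summing over stages gives $T_{\text{data}}\le\tilde{O}((k^\star+1)d^2)$, and accounting for the $\Theta(d)$ token positions processed per forward pass of the base transformer yields $T_{\text{comp}}\le\tilde{O}((k^\star+1)d^3)$.

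For item 1, the plan is a two-point / Fano-type lower bound against any (possibly adaptive, possibly rejection-based) procedure. I would index hypotheses by the unknown $S_\star\in\mathcal{F}_{\text{$2$S-ART}}$, noting there are $\Theta(d^{k^\star})$ such hypotheses under Assumption~\ref{ass:unique_2sart}. Using only $\mathbf{R}_{\mathbf{x}}^{f_{S_\star}}(\cdot)$, each oracle call returns a single Bernoulli bit whose success probability depends on $S_\star$. By Corollary~\ref{cor:2sart_decay}, the probability that a \texttt{PART} rollout matches any fixed correct CoT is $\Theta(d^{-(k^\star+1)})$, and the spurious mass accumulated by rollouts whose final token accidentally equals $f_{S_\star}(\mathbf{x})$ is controlled by the uniform branching structure. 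Consequently, two neighboring hypotheses (differing in one index) produce reward distributions with squared Hellinger / KL distance of order $d^{-2(k^\star+1)}$; applying the standard Le Cam/Fano inequality with $\log|\mathcal{F}_{\text{$2$S-ART}}|=\Theta(k^\star\log d)$ then forces $T_{\text{data}}\ge\tilde{\Omega}(d^{2k^\star})$, and the same bound for $T_{\text{comp}}$ follows since each oracle call requires at least one base-model emission.

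The step I expect to be the main obstacle is making the lower bound robust against adaptive rejection-sampling schemes that try to condition on visible $x$-tokens to amplify the per-query signal. I would handle this by reducing any interactive protocol to a sequential likelihood-ratio test between neighboring hypotheses, and then bounding the maximal amplification by the realizability coverage $\|\pi_{S_\star}/\pi_{\texttt{PART}}\|_\infty=\Theta(d^{k^\star+1})$ derived after Corollary~\ref{cor:2sart_decay}; this is the exact analogue of how \citet{foster2025goodfoundationnecessaryefficient} use spanner-coverage to obtain inference-time lower bounds, and should yield the claimed $d^{2k^\star}$ barrier after the two-point computation. The upper-bound analysis is comparatively routine, but one must verify that prefix-prompting the transformer genuinely reproduces the conditioned \texttt{PART} distribution, which follows from the positional-encoding construction used in Theorem~\ref{thm:tf-realizes-part}.
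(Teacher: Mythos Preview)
Your upper-bound strategy for item~2 is essentially the same as the paper's: both run a stagewise best-arm identification over the $\Theta(d)$ legal children at each depth $\ell$, using the family oracle $\mathbf{R}^{\mathcal{F}_{S_\star}}_{\mathbf{x}}$ with external truncation, and both obtain $\tilde{O}(d^2)$ queries per depth from a $\Theta(d^{-1})$-gap argument plus a union bound. The only mechanistic difference is that the paper explicitly forces each candidate visible token via rejection sampling (their FXRS subroutine), whereas you let \texttt{PART} sample uniformly and bin by the observed index. One point to correct: your justification for the extra factor of $d$ in $T_{\text{comp}}$ is off. You attribute it to ``$\Theta(d)$ token positions processed per forward pass,'' but $T_{\text{comp}}$ is defined in the paper as the number of token \emph{emissions}, not FLOPs; the paper's extra $d$ comes instead from the expected $\Theta(d)$ rejection-sampling retries needed to force a given visible token at each repetition. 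Your procedure, as written, would actually give $T_{\text{comp}}=\tilde{O}((k^\star{+}1)d^2)$, which is fine for the stated upper bound but does not match the mechanism you describe.

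For the lower bound (item~1) you take a genuinely different route. The paper does \emph{not} run a Fano/Le~Cam argument over the $\Theta(d^{k^\star})$ hypotheses. Instead, it reduces the problem to a per-depth best-arm identification: at depth $\ell$ (conditioned on the prefix being forced), the terminal-oracle acceptance probabilities of the correct versus wrong children differ by $\Delta_\ell=\Theta(d^{-(k^\star+1-\ell)})(1-\rho_{\text{spur}})$, and then invokes the standard fixed-confidence BAI lower bound (their Lemma on two-arm identification, \`a la Mannor--Tsitsiklis / Kaufmann--Capp\'e--Garivier) to get $\Omega(\Delta_\ell^{-2}\log(d/\delta))$ queries per depth. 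Summing over depths is dominated by $\ell=1$, giving $\tilde{\Omega}(d^{2k^\star})$. This sidesteps the adaptive-amplification issue you flag, because the BAI lower bound already applies to arbitrary adaptive sampling of the arms; no coverage argument is needed. Your Fano approach is in principle more general (it would bound \emph{any} interactive protocol, not just those with a stagewise structure), but your sketch of controlling adaptive amplification via $\|\pi_{S_\star}/\pi_{\texttt{PART}}\|_\infty$ does not directly yield the $d^{2k^\star}$ rate: amplifying a per-query KL of order $d^{-2(k^\star+1)}$ by the coverage $\Theta(d^{k^\star+1})$ gives only $d^{-(k^\star+1)}$, and Fano then returns $\tilde{\Omega}(d^{k^\star+1})$, not $d^{2k^\star}$. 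The paper's per-depth gap reduction is both simpler and tighter here.
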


Akin to the remark of Thm.~\ref{thm:curriculum_finetune_bottleneck}, without structure-exploiting procedure \citep{raz2018fast,abbe2023sgd}, our results match the information-theoretical bound.

\textit{Proof Sketch}. Again the remaining task is proving Eq.~(\ref{eq:complexity_ass}) by Cor.~\ref{cor:curriculum-bottleneck}. We illustrate the idea using the parity class with $1/2$ spurious success chance when picking wrong indices. Consider correct reasoning till step $\ell$, the correct index $j_\star$ and any wrong index $j\neq j_\star$ share probability $\Theta(d^{-1})$ to be selected. If $j_\star$ is chosen and the process continues until $\mathrm{EOS}$, the final CoT obtains reward $1$ with probability $\Theta(d^{-(k^\star+1-\ell)})+(1-\Theta(d^{-(k^\star+1-\ell)}))/2$ by Cor.~\ref{cor:2sart_decay}, whereas if $j\neq j_\star$, the reward is $1/2$. Hence the probability gap is $\Theta(d^{-(k^\star+1-\ell)}/2)$. Distinguishing such gaps information-theoretically requires $\tilde{\Theta}(d^{2k^\star+2-2\ell})$ trials, and summing over $\ell$ yields $\tilde{\Omega}(d^{2k^\star})$ reward queries in total (i.e., $N_{\varepsilon}=\tilde{\Omega}(d^{2k^\star})$). By contrast, with curriculum (Alg.~\ref{alg:ltar}), the algorithm apply $\mathbf{R}_{\mathbf{x}}^{\mathcal{F}_{S_\star}}(\cdot,\ell)$ on the $\ell+1$-length truncated CoT. Then, if $j_\star$ is chosen, the chance of reward $1$ is $\Theta(d^{-1})+(1-\Theta(d^{-1}))/2$, while if $j\neq j_\star$, it is $1/2$. The resulting gap is $\Theta(d^{-1}/2)$, requiring only $\tilde{O}(d^2)\le \sqrt[k^\star]{N}_{\varepsilon}$ trials to resolve, which proved Eq.~(\ref{eq:relaxed_comp}) with $C^{\star}=d^2$.

% Across $k+1$ stages this totals $\tilde{O}((k+1)d^2)$ reward-oracle calls, and the additional transformer calls during rejection sampling after $S$ is identified contribute at most $\tilde{\Theta}(d)$ more.

% % \vspace{-.3\baselineskip}

\section{Experimental Validations}\label{sec:experiments}

\subsection{Synthetic Experiments}
We empirically validate the parity-case predictions of our theory. For fine-tuning, all methods are evaluated under the same parity setting: dimension $d=8$, secret size $k^\star=3$, and secret indices $S^\star=[0,1,2]$. The learning rate $\eta$ is set to $1e^8$ for no-curriculum training and $1e^5$ for both Depth-Increasing and Hint-Decreasing curriculum fine-tuning. All methods use the same transformer architecture with $\beta=1$ and the same evaluation protocol. Each evaluation is performed on a freshly sampled test set of size $n_{\mathrm{test}}=8192$, generated on the fly with independent random seeds and no overlap with training data. We use the same parity and transformer setting for test-time scaling experiments, where complexity is measured by reward-oracle query complexity.
\begin{table*}[ht]
\centering
\scriptsize
\caption{Pass@K summary on Countdown, MATH, and Blocksworld for the base model (Qwen2.5-1.5B-Instruct), no-curriculum fine-tuning (GRPO), Depth-Increasing Curriculum, and Hint-Decreasing Curriculum. All values are reported in \%. Within each dataset block, the best method is highlighted in \textbf{bold}. The final block reports macro-averages across the three datasets. Difficulty columns show the mean pass@K over $K\in\{1,\ldots,256\}$. For Countdown and MATH, Levels 1--5 correspond to Trivial/Easy/Medium/Hard/OOD per \citet{parashar2025curriculum}; for Blocksworld, they correspond to Levels 0--4~\citep{parashar2025curriculum}.}
\label{tab:all_datasets_passk_summary}
\resizebox{\textwidth}{!}{%
\begin{tabular}{llccccccccccccccc}
\hline
Dataset & Method & $K=1$ & $K=2$ & $K=4$ & $K=8$ & $K=16$ & $K=32$ & $K=64$ & $K=128$ & $K=256$ & Avg. & Level 1 & Level 2 & Level 3 & Level 4 & Level 5 \\
\hline
Countdown & Base Model & 0.3 & 0.5 & 2.1 & 6.2 & 11.3 & 21.9 & 31.0 & 42.3 & 50.7 & 18.5 & 45.1 & 31.2 & 12.0 & 4.2 & 0.8 \\
 & No-Curriculum & 5.2 & 9.2 & 18.8 & 28.5 & 37.8 & 46.3 & 52.7 & 59.5 & 66.3 & 36.0 & 68.8 & 61.1 & 32.6 & 11.9 & 3.8 \\
 & Depth-Increasing & \textbf{9.4} & \textbf{14.2} & \textbf{24.4} & \textbf{33.8} & 42.8 & 49.9 & 56.2 & 63.6 & 70.6 & 40.5 & \textbf{79.0} & \textbf{66.5} & 36.4 & 14.3 & 5.2 \\
 & Hint-Decreasing & \textbf{9.4} & \textbf{14.2} & 20.9 & \textbf{33.8} & \textbf{43.6} & \textbf{53.6} & \textbf{62.8} & \textbf{69.6} & \textbf{75.7} & \textbf{42.6} & 77.9 & 62.2 & \textbf{41.0} & \textbf{22.6} & \textbf{9.4} \\
\hline
Math & Base Model & 0.5 & 0.7 & 2.2 & 4.8 & 10.0 & 18.9 & 27.2 & 36.9 & 47.3 & 16.5 & 42.9 & 29.5 & 21.5 & 12.5 & 6.0 \\
 & No-Curriculum & 7.0 & 13.0 & 20.5 & 29.8 & 37.1 & 44.1 & 52.0 & 58.6 & 64.4 & 36.3 & 74.8 & 57.3 & 44.3 & 30.2 & 14.8 \\
 & Depth-Increasing & \textbf{9.0} & \textbf{15.4} & \textbf{24.0} & \textbf{31.4} & \textbf{39.5} & \textbf{46.1} & \textbf{52.4} & 59.3 & 66.3 & \textbf{38.2} & \textbf{75.9} & \textbf{58.7} & \textbf{44.9} & \textbf{30.7} & 14.7 \\
 & Hint-Decreasing & 4.3 & 7.0 & 13.0 & 24.6 & 31.9 & 42.7 & 51.7 & \textbf{60.5} & \textbf{68.3} & 33.8 & 68.3 & 50.9 & 40.5 & 28.1 & \textbf{15.4} \\
\hline
Blocksworld & Base Model & 0.0 & 0.8 & 1.2 & 1.5 & 3.9 & 7.1 & 7.7 & 10.4 & 13.7 & 5.1 & 34.2 & 12.6 & 0.1 & 0.0 & 0.0 \\
 & No-Curriculum & 1.9 & 2.1 & 2.1 & 3.5 & 5.6 & 9.8 & \textbf{13.3} & \textbf{16.6} & 19.7 & 8.3 & \textbf{48.2} & 32.1 & 3.4 & 0.4 & 0.1 \\
 & Depth-Increasing & 0.8 & 1.0 & 1.9 & 4.6 & 7.9 & 10.2 & 12.2 & \textbf{16.6} & \textbf{20.1} & 8.4 & 41.8 & 35.1 & 4.2 & \textbf{0.5} & \textbf{0.2} \\
 & Hint-Decreasing & \textbf{3.3} & \textbf{6.4} & \textbf{7.9} & \textbf{8.7} & \textbf{10.4} & \textbf{10.6} & 11.2 & 12.0 & 12.4 & \textbf{9.2} & 28.9 & \textbf{54.3} & \textbf{6.2} & 0.0 & 0.0 \\
\hline
\multicolumn{17}{c}{\textit{Macro-average over the three datasets}} \\
\hline
Avg. & Base Model & 0.3 & 0.7 & 1.8 & 4.2 & 8.4 & 16.0 & 22.0 & 29.9 & 37.2 & 13.4 & 40.7 & 24.5 & 11.2 & 5.6 & 2.3 \\
 & No-Curriculum & 4.7 & 8.1 & 13.8 & 20.6 & 26.8 & 33.4 & 39.3 & 44.9 & 50.1 & 26.9 & 63.9 & 50.2 & 26.8 & 14.2 & 6.2 \\
 & Depth-Increasing & \textbf{6.4} & \textbf{10.2} & \textbf{16.8} & \textbf{23.3} & \textbf{30.0} & 35.4 & 40.3 & 46.5 & \textbf{52.3} & \textbf{29.0} & \textbf{65.6} & 53.4 & 28.5 & 15.1 & 6.7 \\
 & Hint-Decreasing & 5.7 & 9.2 & 13.9 & 22.4 & 28.6 & \textbf{35.6} & \textbf{41.9} & \textbf{47.4} & 52.1 & 28.5 & 58.3 & \textbf{55.8} & \textbf{29.2} & \textbf{16.9} & \textbf{8.3} \\
\hline
\end{tabular}%
}
\end{table*}
Fig.~\ref{fig:Parity_Dynamics} supports our main theoretical predictions. As shown in Fig.~\ref{fig:Parity_Dynamics}(a,c), both curriculum strategies substantially outperform direct fine-tuning and require much lower sample/query complexity. To interpret Fig.~\ref{fig:Parity_Dynamics}(b), recall that in our transformer realization of 2S-ART, \emph{attention} selects the relevant earlier token/state, while the \emph{FFN} performs the corresponding atomic operation. In the parity case, the FFN implements the XOR update, while attention must recover the correct parent-child dependency along the secret chain.

For the example in Fig.~\ref{fig:Parity_Dynamics}(b), the target task has $d=8$ and secret indices $[0,1,2]$, so the intended reasoning chain is $8\to 0\to 1\to 2\to 8$, where $8$ denotes the final EOS token in the input. Starting from EOS, the model should attend to the secret indices step by step and terminate the CoT by outputting $x_1 \oplus x_2 \oplus x_3$ followed by EOS. The attention map therefore directly reveals whether the model has recovered the intended reasoning chain. Both curriculum strategies concentrate sharply on the correct transition chain, with little leakage to spurious edges. By contrast, direct fine-tuning spreads substantial mass over incorrect parent-child links even with significantly more samples, reflecting the intrinsic difficulty of direct training under outcome-only feedback.

Taken together, the accuracy curves in Fig.~\ref{fig:Parity_Dynamics}(a,c) and the attention maps in Fig.~\ref{fig:Parity_Dynamics}(b) corroborate Thm.~\ref{thm:curriculum_finetune_bottleneck} and Thm.~\ref{thm:testtime_oracle_complexity}: curriculum-style post-training avoids the exponential bottleneck by recovering the correct dependency structure more efficiently. Additional results, including index-accuracy curves, step-wise heatmap evolution, broader settings ($d=16,32$), small-learning-rate experiments, and error bars, are deferred to App.~\ref{appsec:add_exp}.

\subsection{Large-scale Experiments}

Beyond the synthetic parity experiments, we evaluate whether curriculum-style post-training remains beneficial on larger-scale reasoning benchmarks. We consider \textbf{Countdown}, \textbf{MATH}, and \textbf{Blocksworld}, all fine-tuned from the same base model, \textbf{Qwen2.5-1.5B-Instruct}. We compare four methods: the base model without post-training, no-curriculum fine-tuning with GRPO, Depth-Increasing Curriculum, and Hint-Decreasing Curriculum. Table~\ref{tab:all_datasets_passk_summary} reports Pass@K across datasets and difficulty levels. Overall, both curriculum strategies consistently outperform the base model and generally improve over no-curriculum fine-tuning. On \textbf{Countdown}, both curricula are clearly stronger than direct GRPO, with Hint-Decreasing achieving the best overall average and the strongest gains on harder levels. On \textbf{MATH}, Depth-Increasing achieves the best overall average, while Hint-Decreasing becomes competitive at larger $K$ and on the hardest level. On \textbf{Blocksworld}, the gains are smaller and more curriculum-dependent, suggesting that planning-style tasks are more sensitive to how intermediate subtasks are designed.

These results are consistent with the qualitative message of our theory: curriculum is most helpful when it provides a useful decomposition of the target reasoning task into easier intermediate stages. We do not expect uniform dominance of one curriculum across all practical tasks; rather, the empirical results show that curriculum-based fine-tuning is generally more effective than direct fine-tuning, while the better curriculum type depends on task structure. Full algorithm configurations, hyperparameter settings and per-dataset visualizations are deferred to the Appendix \ref{sec:curriculum_realistic}.

\section{Conclusion, Limitation, and Future Work}\label{sec:conclusion}

We established Cor.~\ref{cor:curriculum-bottleneck}, which theoretically formalizes the idea that if one can construct a length-$K$ curriculum in which each stage maintains a moderate difficulty relative to the near-zero success rate of the original task, then step-wise learning can be exponentially faster if the stage-wise complexity scales at most at the $K/p$-th-root level of the original task. We substantiate Cor.~\ref{cor:curriculum-bottleneck} by modeling reasoning as an autoregressive tree-like process, in which the effective reasoning depth grows with task difficulty. Within this model, we show that commonly used curriculum post-training strategies naturally satisfy the above condition, that the resulting reasoning process can be faithfully replicated by a pretrained transformer, and that both post-training fine-tuning and outcome-based test-time scaling enjoy exponential-to-polynomial reductions in sample complexity.

While our analysis is limited on transformer post-training, the benefit of curricula might be indeed model-agnostic, as highlighted in Cor.~\ref{cor:curriculum-bottleneck} and Sec.~\ref{app:cases}. Notably, empirical studies also report the effectiveness of curriculum in diffusion models. For MDLMs, \citet{kim2025worsebest,kim2025any, bai2026prism, cai2026confidence} suggest that test-time decoding guided by the model’s highest-confidence positions can improve reasoning performance. If confidence is viewed as a proxy for difficulty, then decoding guided by confidence are naturally interpretable as an easy-to-hard inference curriculum: the model first resolves easier subproblems, and then compositionally tackles harder ones. Relatedly, \citet{kim2026stop,bu2026dprmplugindoobh} provides a training-time analogue of this idea: its progressive unmasking strategy constructs training states according to model's confidence, which can likewise be viewed as a curriculum such that learning is concentrated on progressively more challenging intermediate states rather than being spread uniformly over an exponentially large space of states.

From this viewpoint, both adaptive decoding and progressive training can be seen as high-level supporting instances of Cor.~\ref{cor:curriculum-bottleneck}: structuring a sequence of easier-to-harder intermediate subproblems may reduce the effective complexity needed to achieve a target accuracy. Extending the theory more explicitly to DLMs is an exciting direction.

% Acknowledgements should only appear in the accepted version.
\section*{Acknowledgements}
DB and HW are supported in part by the Research Grants Council of the Hong Kong Special Administrative Region (Project No. CityU 11206622). WH is supported by JSPS KAKENHI (24K20848) and JST BOOST (JPMJBY24G6). TS was partially supported by JSPS KAKENHI (24K02905) and JST CREST (PMJCR2015). This research is supported by the National Research Foundation, Singapore and the Ministry of Digital Development and Information under the AI Visiting Professorship Programme (award number AIVP-2024-004). Any opinions, findings and conclusions or recommendations expressed in this material are those of the author(s) and do not reflect the views of National Research Foundation, Singapore and the Ministry of Digital Development and Information.

% \textbf{Do not} include acknowledgements in the initial version of the paper
% submitted for blind review.

% If a paper is accepted, the final camera-ready version can (and usually should)
% include acknowledgements.  Such acknowledgements should be placed at the end of
% the section, in an unnumbered section that does not count towards the paper
% page limit. Typically, this will include thanks to reviewers who gave useful
% comments, to colleagues who contributed to the ideas, and to funding agencies
% and corporate sponsors that provided financial support.

\section*{Impact Statement}
This paper presents work whose goal is to advance the field of Machine
Learning. There are many potential societal consequences of our work, none
which we feel must be specifically highlighted here.

\bibliography{reference}
\bibliographystyle{icml2026}

%%%%%%%%%%%%%%%%%%%%%%%%%%%%%%%%%%%%%%%%%%%%%%%%%%%%%%%%%%%%%%%%%%%%%%%%%%%%%%%
%%%%%%%%%%%%%%%%%%%%%%%%%%%%%%%%%%%%%%%%%%%%%%%%%%%%%%%%%%%%%%%%%%%%%%%%%%%%%%%
% APPENDIX
%%%%%%%%%%%%%%%%%%%%%%%%%%%%%%%%%%%%%%%%%%%%%%%%%%%%%%%%%%%%%%%%%%%%%%%%%%%%%%%
%%%%%%%%%%%%%%%%%%%%%%%%%%%%%%%%%%%%%%%%%%%%%%%%%%%%%%%%%%%%%%%%%%%%%%%%%%%%%%%
\newpage
\appendix
\onecolumn

\section{Additional Related Work}\label{app:add_related_work}
  
\textbf{Curriculum strategies in post-training}. A growing body of work has provided empirical evidence for the effectiveness of curriculum strategies during post-training~\citep{liu2025EvoCoT, lee2025SelfImprovingTransformers, bae2025OnlineDifficultyFiltering, meng2025MMEureka, shi2025AdaRFT, wen2025LightR1, zhang2025speed, amani2025rl}. Especially, \textbf{BREAD}~\citep{zhang2025bread} uses branched rollouts with partial expert prefixes: when the model fails, it is given a short expert hint and trained to complete the remaining reasoning steps. As training progresses, the rollouts become increasingly self-generated. This is a clear instance of the \textbf{Hint-Decreasing Curriculum} in our main body, though implemented in a more practically refined way. \textbf{CORE}~\citep{gao2026core} is different in form but closely aligned with our perspective. It introduces concept-guided intermediate supervision, so the model first learns important concepts and then reasons from them. In our view, this can be interpreted as constructing an easier intermediate policy $\pi^\star_{\text{concept}}$ before learning the final target policy. From the perspective of Cor.~\ref{cor:curriculum-bottleneck}, it follows the same general principle: a easier intermediate subtask can reduce the overall learning complexity.

\textbf{Theoretical benefits of curriculum in post-training}. \citet{zhang2025speed} offered a theoretical perspective, showing that intermediate-difficulty questions yield higher signal-to-noise ratios for gradient estimates.  
Closely related, \citet{liu2025UFT} modeled LLM reasoning as a search tree (see Figure~\ref{fig:UFT_Count_Game}) and proved that uniformly sampling a hint depth (i.e., revealing the CoT up to that depth and letting the LLM complete the rest) reduces the exploration complexity of achieving $\ge50\%$ pass@1 from exponential to polynomial (see Remark~\ref{rmk:uft-relation}). In practice, they adopted a cosine scheduler to provide longer hints in early stages and gradually shorten them during fine-tuning.  
Similarly, \citet{parashar2025curriculum} analyzed curriculum within Approximate Policy Iteration: under exponentially decaying assumptions on policy-approximation error and performance loss, they proved gains in sample complexity and empirically compared uniform, cosine, and Gaussian schedulers.  However, none of these studies explicitly models the difficulty measure in post-training --- namely, the \textit{pass-rate} and its dependence on the base model --- nor do they consider transformer's states-conditioned tree-like autoregressive reasoning. Their results largely focus on optimization endpoints argument (i.e., $\argmin$ solutions assuming gradient methods have converged), without analyzing the underlying post-training dynamics or the transformer architecture. In contrast, our framework is built directly on the \textit{pass-rate} and the probabilistic tree structure of post-training reasoning, leading to Cor.~\ref{cor:curriculum-bottleneck} and its case studies on transformers.

\textbf{Theoretical benefits of curriculum learning in pre-training for parity and beyond}.  
Curriculum Learning (CL) was first introduced by~\citet{bengio2009curriculum}, and has since been empirically validated across vision, NLP, and reinforcement learning~\citep{graves2017automated, narvekar2020curriculum, wang2021survey, soviany2022curriculum}.  
On the theoretical side, most analyses focus on \emph{pre-training} with highly specific problem classes or architectures.  
A prominent line of work studies the \emph{parity problem}: while parity can be solved efficiently by Gaussian elimination over $\mathbb{F}_2$~\citep{abbe2020universality}, gradient-based neural networks struggle on dense data due to precision barriers~\citep{abbe2021power}.  
To circumvent this,~\citet{malach2021quantifying} showed that sparse parities become learnable with a one-layer network augmented by a parity module, while~\citet{daniely2020learning} analyzed two-layer fully connected nets where sparse inputs with \emph{leaky labels} help learning.  
\citet{cornacchia2023mathematical} proved that sparse parities can be learned by a two-layer net with one-step gradients but require a large learning rate, and~\citet{abbe2023generalization} gave empirical evidence that presenting sparse-to-dense samples as a curriculum improves generalization, though without formal guarantees. In convex settings,~\citet{weinshall2018curriculum, weinshall2020theory} proved that curriculum can accelerate the convergence of SGD. Building on these observations,~\citet{abbe2023provable} analyzed a two-layer ReLU network trained with noisy SGD based on mixed data curriculyum where the sparse (lower hamming weights with fewer $-1$) data is first learned, and proved that a \emph{layer-wise curriculum} starting from sparse samples reduces both time and sample complexity. In contrast to data curricula \citet{panigrahi2024progressive} considered teacher-student setting where the student would benefit from teacher model's checkpoints, serving as the internal signals for students to learn parity support. Statistically, \citet{Mannelli_2024} showed the negative effect of overparameterization for curriculum learning, in the case of a XOR-like Gaussian Mixture problem.
Beyond parity,~\citet{saglietti2022analytical} studied a teacher–student model where the target depends on a sparse set of features, and curriculum is defined via the variance of irrelevant features (low variance = easy, high variance = hard).  
More recently,~\citet{Wang2025} investigated curriculum learning for transformers in the \emph{k-fold composition task} setting, where they showed that a carefully staged pre-training schedule—both in terms of data distribution and layer-wise progression—enables efficient learning.  
Taken together, these works demonstrate that curriculum can provably help in pre-training, but only under tailored data curricula (e.g., sparse input with lower Hamming weight where probability of $-1$ is mild, leaky labels, checkpoints), specialized architectures (e.g., augmented by a parity modules), or non-standard optimization schemes (e.g., noisy SGD, layer-wise training).  
In contrast, our work studies curriculum in the \emph{post-training} regime, where we leverage the base model’s coverage power and pre-trained \textit{chain-of-thought reasoning} ability, \textit{without exploiting handcrafted data correlations} or requiring algorithmic modifications.  

\paragraph{Theoretical Benefit of CoT.}  
A substantial literature has emerged on the theoretical role of chain-of-thought (CoT) in enhancing transformer models. Several works demonstrate that incorporating polynomial-length intermediate steps expands the expressive capacity of transformers beyond constant-depth architectures~\citep{Feng23,li2024chain,merrill2023expresssive,wei2022chain}. Complementing these positive results, a parallel line of research establishes intrinsic barriers, proving lower bounds or rank-based constraints that necessitate non-trivial reasoning depth~\citep{peng2024limitations,barcelo2025ehrenfeucht,amiri2025lower}. Further, statistical and compositional analyses reveal how CoT encourages structured generalization and compositional filtering, clarifying its benefit from a distributional perspective~\citep{prystawski2023think,li2023dissectingchainofthoughtcompositionalityincontext}. \citet{kim2025parity,wen2024sparse} study the benefit of CoT for parity training from scratch using directly aggregated intermediate supervision signals. Relatedly, \citet{liu2025UFT} study the benefit of uniformly sampling hint, their analysis counts the number of nodes explored in a search tree, where each reasoning depth is visited with probability $\Theta(1/K)$. This uniform-hint mechanism resembles settings in which learning signals are distributed evenly across depths, mathematically similar to \citet{kim2025parity,wen2024sparse}. Their techniques are largely different from us: we establish sufficient structural conditions under which step-wise curricula yield exponential improvements in sample complexity.  Also, parity is only a special case of our general autoregressive reasoning-tree (2S-ART) framework, which also encompasses Countdown reasoning (Fig.~\ref{fig:UFT_Count_Game}), Markov-chain reasoning~\citep{kim2025metastabledynamicschainofthoughtreasoning}, induction-head–style associative recall~\citep{nichani2024understandingfactualrecalltransformers}, causal-graph reasoning~\citep{nichani2024transformerslearncausalstructure}, and other structured multi-step reasoning operations~\citep{gandhi2024stream}. Moreover, \citet{kim2025parity} analyze learning via a statistical-query framework with an $O(\varepsilon)$-approximate gradient oracle and an average $L_2$ teacher-forced loss, whereas our model explicitly incorporates dictionary-based sampling, uses standard stochastic gradients computed from model-generated trajectories under outcome-only feedback, and evaluates correctness via pre-EOS validation.

% \citet{kim2025parity} further propose a data augmentation scheme (causal masking with random $d$-bit inputs and ultra filtering) that circumvents explicit intermediate supervision, but this construction is tailored to their $2$-parity hierarchical CoT formulation when $k^{\star}=2^v$, $v\in\mathbb{Z}^+$. By contrast, we adopt the more natural autoregressive CoT formulation in Eq.~(\ref{eq:parity_CoT}), and our primary goal is to characterize the \emph{complexity improvement} when CoT evolution is induced via curriculum post-training. 

\paragraph{Training Dynamics of Transformers.}  
Beyond expressiveness, recent studies examine how transformers trained with CoT evolve during optimization. Early work explored emergent in-context learning and convergence under multi-head attention~\citep{hahn2023theory,huang2023incontext, kim2024MFD,yang2024incontextlearningrepresentationscontextual}, while subsequent analyses provide provable characterizations of induction heads and sparse token mechanisms~\citep{chen2024unveilinginductionheadsprovable}. Of particular relevance are studies on CoT-specific training dynamics, which analyze nonlinear or single-layer transformers solving structured tasks such as parity~\citep{wen2024sparse,kim2025parity,li2025iclcot}. More recent work extends these insights to multi-step optimization and regular language recognition, uncovering implicit biases and algorithmic behaviors in gradient descent~\citep{huang2025transformers,huang2025how,huang2025CoT}. \citet{wen2024sparse, kim2025parity} investigated the learning of transformers on the parity task with CoT, building on which \citet{yin2025datashiftshurtcot} studied the impact of data shift.  
\citet{yang2025symbolic} further demonstrated that transformers can implement both forward and backward \textit{tree-structured} symbolic reasoning using two attention heads.  
However, these works abstract away real-world sampling process during inference, relying instead on \textit{atypical deterministic prediction of internal tokens/sentences}, and they primarily analyze pre-training rather than post-training. \citet{bu2026posttrainingreweightingstochasticview,ran2026outcome} both theoretically show the distribution simplicity bias of post-training. In our case, the simple shortcut might be reward hacking (e.g., for parity task, choosing the wrong index still gives $1/2$ population reward), and thus favoring simple rewarded CoT would be a disaster rather than accelerating, as revealed in our theory and attention map's experiments (Fig.~\ref{fig:Parity_Dynamics}), where the no-curriculum counterpart struggles to distinguish informative signals.

\section{Additional Experiments Details on Parity}\label{appsec:add_exp}

\subsection{Additional Results of Parity Tasks on Default Algorithms (Alg.~\ref{alg:finetune_no_curriculum}; Alg.~\ref{alg:finetune_depth_increasing}; Alg.~\ref{alg:finetune_hint_decreasing})}
\subsubsection{$d=8,k^\star=3$}
\paragraph{Additional Details of $d=8,k^\star=3$.} For the experiments of $d=8,k^\star=3,S^\star=[0,1,2]$ case discussed in Section \ref{sec:experiments}, we further visualize the accuracies of index selection (i.e., whether the model identify the correct secret index step-by-step), as shown in Fig.~\ref{fig:Index Selection}. Corroborating Fig.~\ref{fig:Parity_Dynamics}, it is clear that with few sample complexity, the curriculum finetuning strategies correctly identify the correct index.  

\begin{figure}
    \centering
    \begin{subfigure}[b]{0.48\linewidth}
        \includegraphics[width=\linewidth]{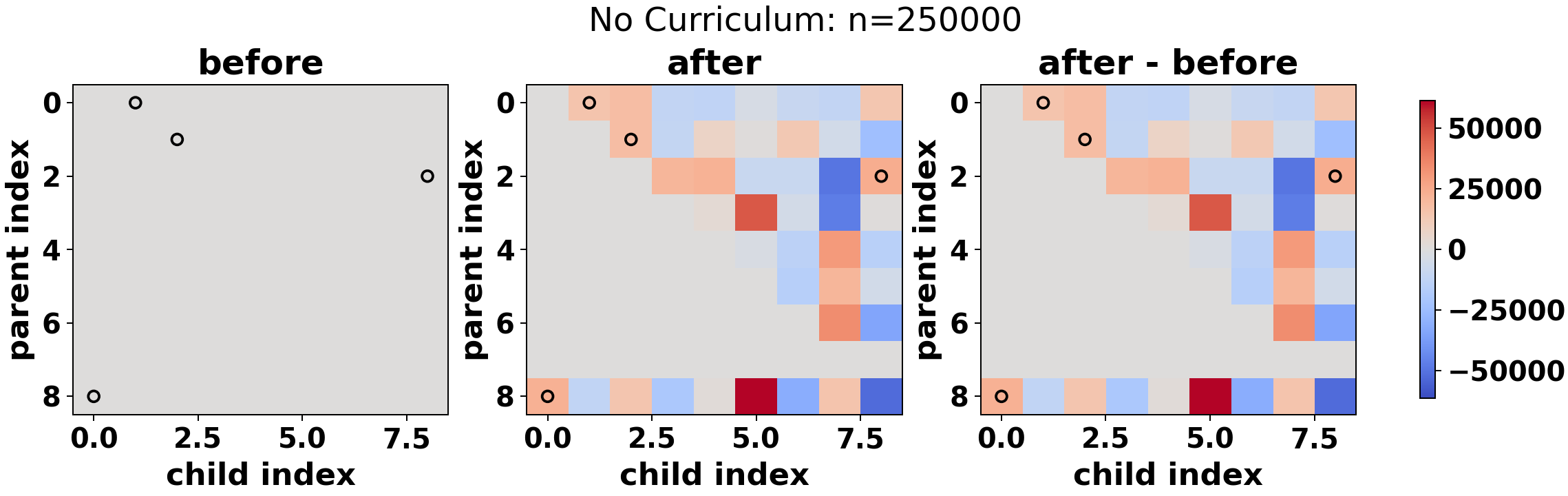}
        \caption{$n=250000$.}
   \end{subfigure}
    \hfill
    \begin{subfigure}[b]{0.48\linewidth}
        \includegraphics[width=\linewidth]{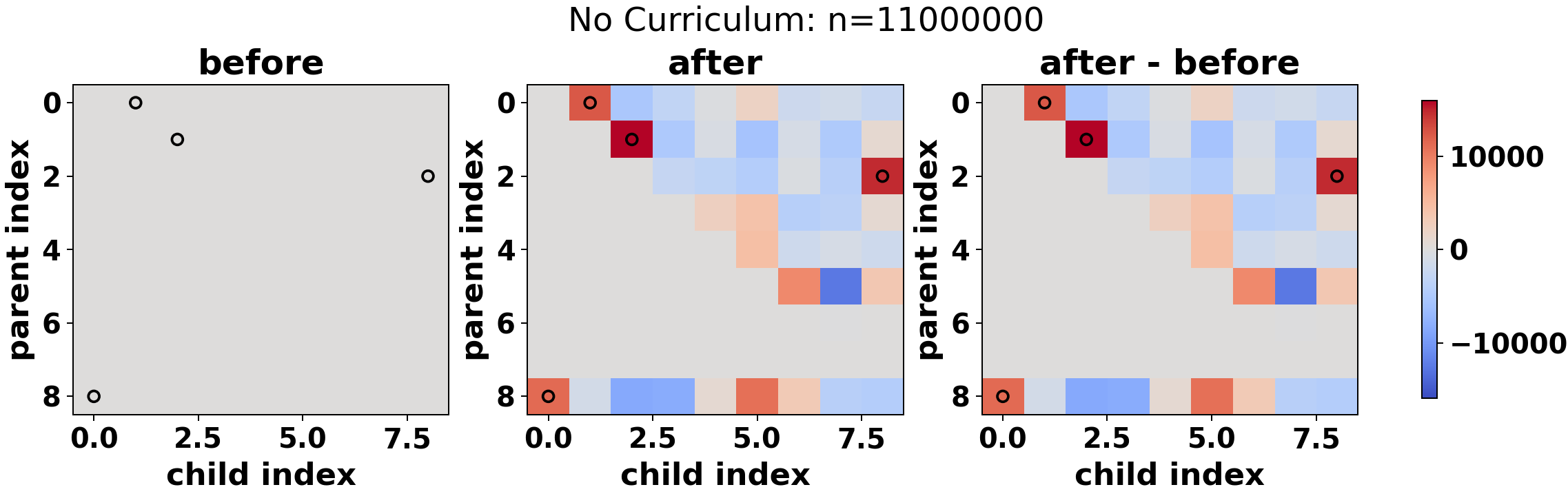}
        \caption{$n=11000000$.}
    \end{subfigure}
    \caption{Attention Heatmaps of No-Curriculum (Direct Learning) with different sample complexity.}
\label{fig:No_Curriculum}
\end{figure}

In Fig.~\ref{fig:No_Curriculum}, it is also obvious that the no-curriculum finetuning would suffer from spurious success when the sample complexity is limited ($n=250000$), and they finally grasp the most crucial patterns in a very large sample complexity $n=11000000$ regime. 
\begin{figure}
    \centering
    \begin{subfigure}[b]{0.36\linewidth}
        \includegraphics[width=\linewidth]{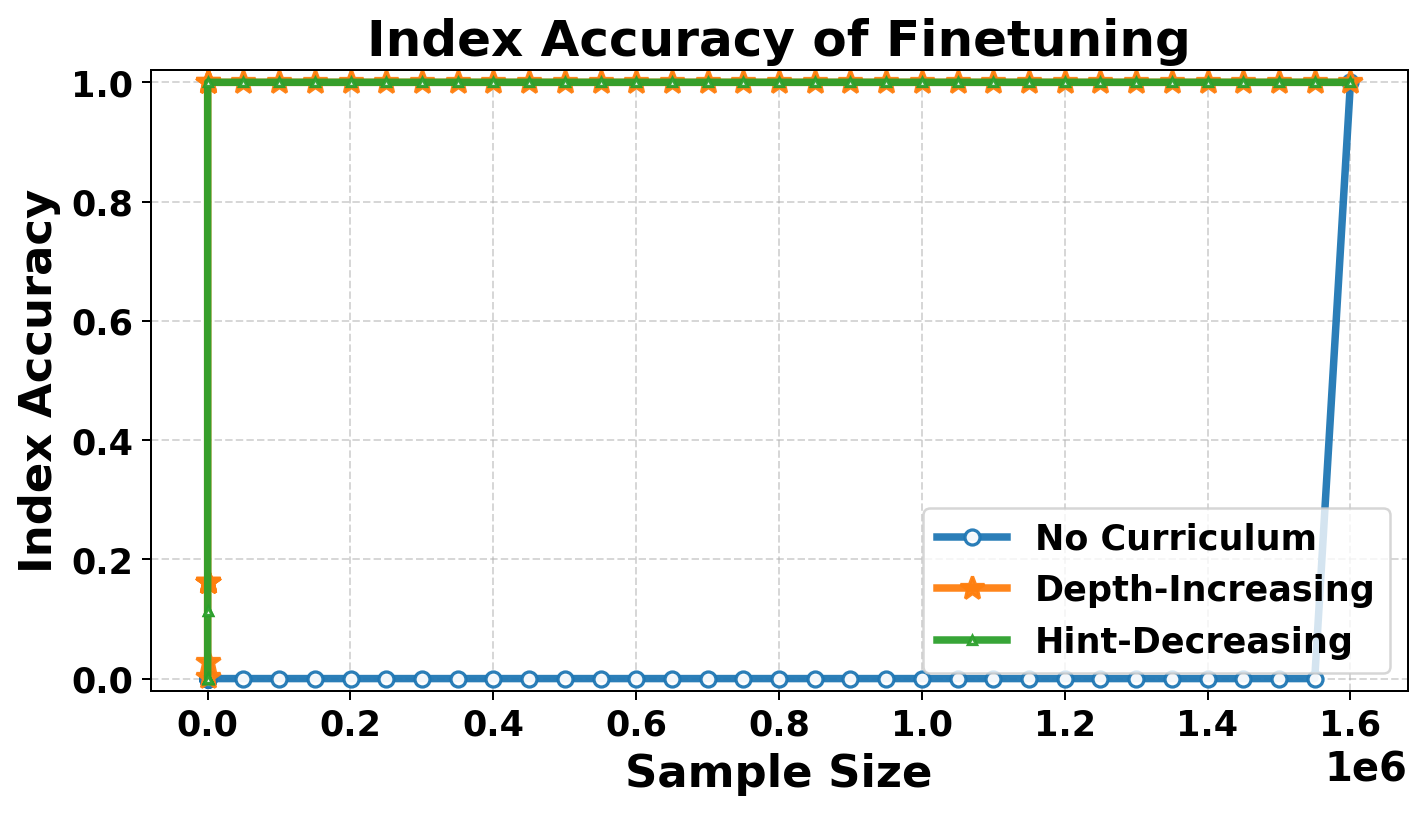}
    \caption{Index Selection Accuracies of Finetuning}
   \end{subfigure}\quad
    % \hfill
    \begin{subfigure}[b]{0.36\linewidth}
        \includegraphics[width=\linewidth]{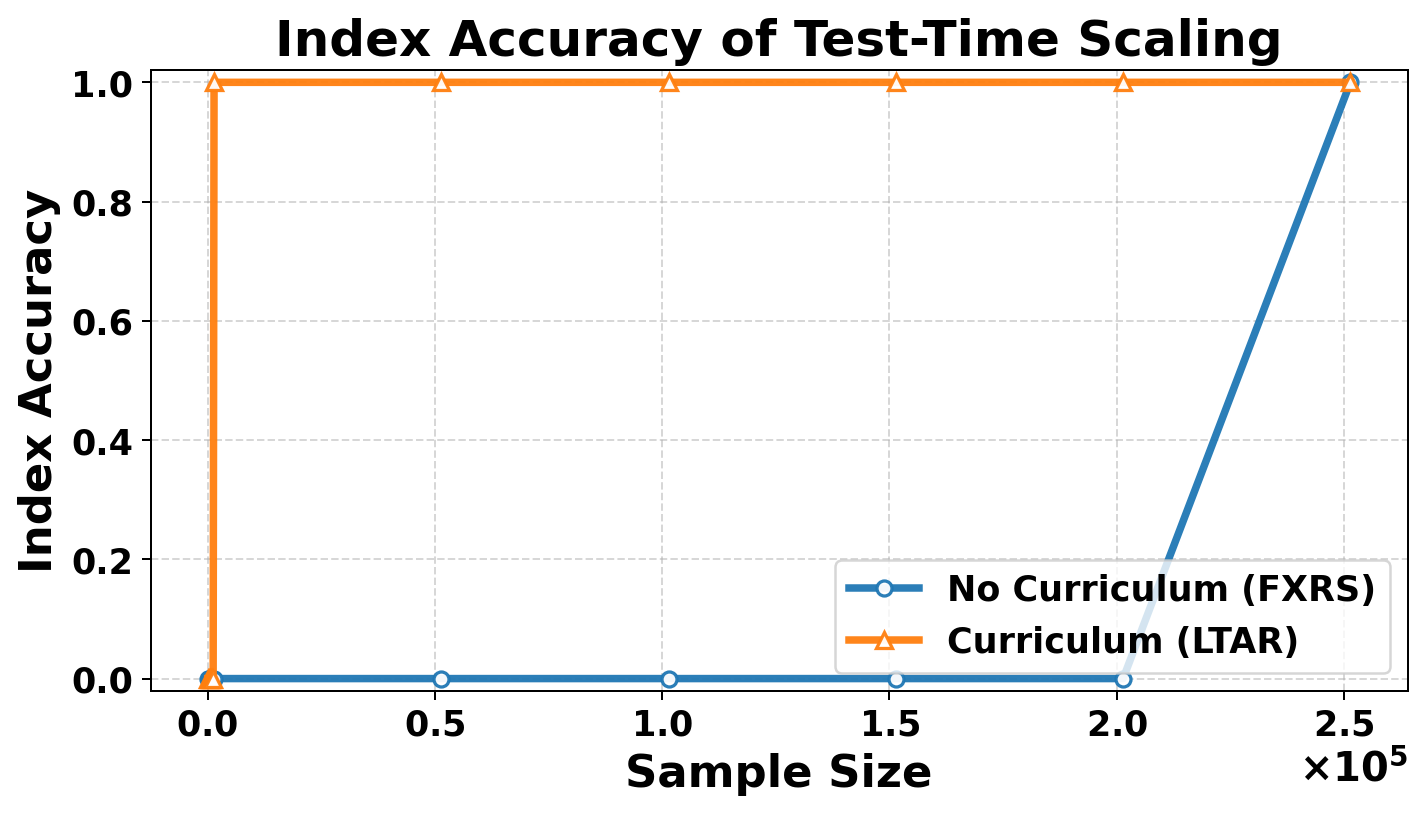}
    \caption{Index Selection Accuracies of Test-Time Scaling}
    \end{subfigure} 
    \caption{Index Selection Accuracies of Finetuning and Test-time Scaling on the parity task ($d=8$, $k^\star=3$, $S^\star=[0,1,2]$).}\label{fig:Index Selection}
\end{figure}

The detailed step-by-step attention heatmaps of curriculum finetuning are also presented in Fig.~\ref{fig:Depth_increasing_d_8_k_3} and Fig.~\ref{fig:Hint_decreasing_d_8_k_3}, where the correct parent ($z_{i_l}$)-children ($x_{i_{l+1}}$) relationship is successfully learned. 

\begin{figure}[H]
    \centering
    \begin{subfigure}[b]{0.48\linewidth}
        \includegraphics[width=\linewidth]{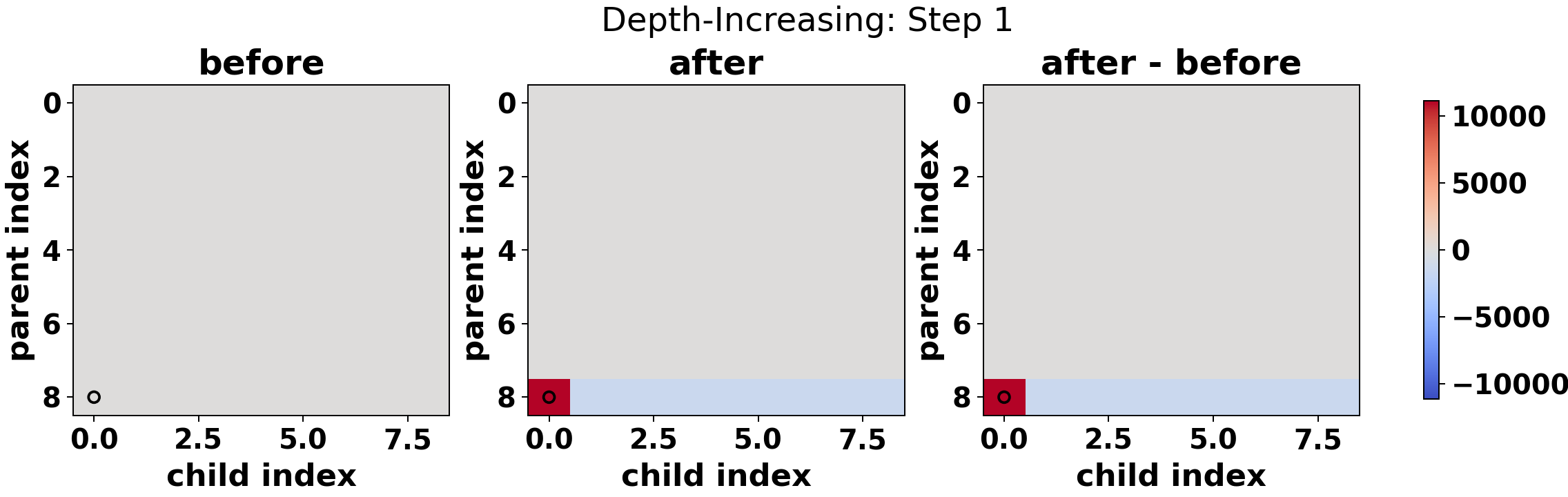}
        \caption{Step 1.}
   \end{subfigure}
    \hfill
    \begin{subfigure}[b]{0.48\linewidth}
        \includegraphics[width=\linewidth]{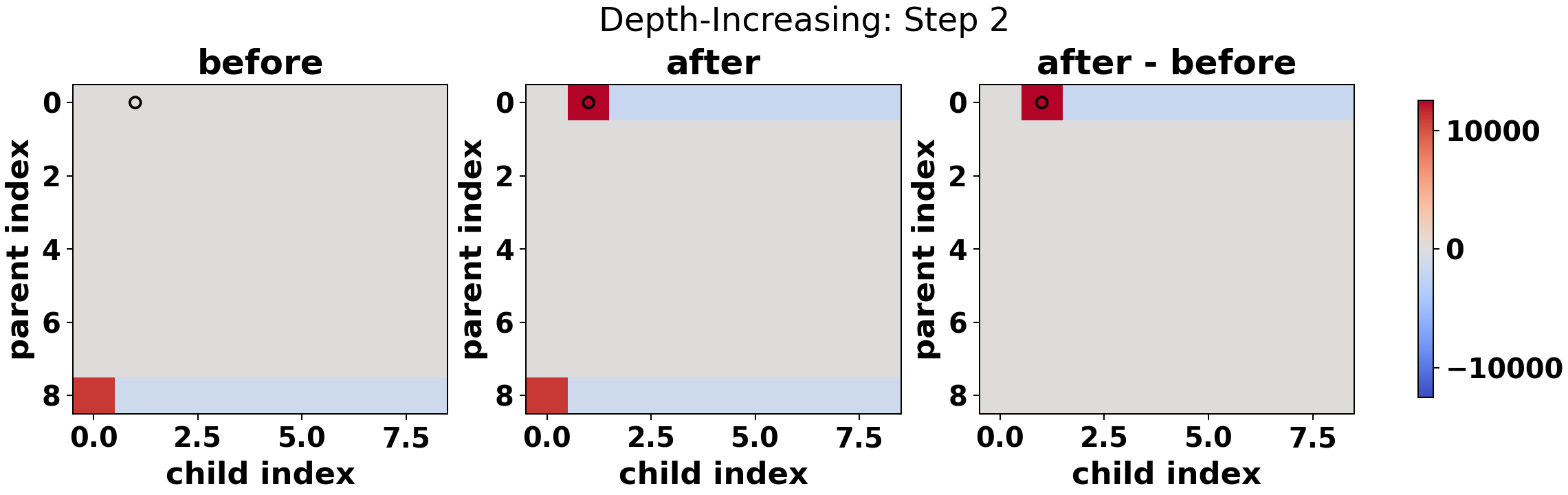}
        \caption{Step 2.}
    \end{subfigure}
    \hfill
    \begin{subfigure}[b]{0.48\linewidth}
        \includegraphics[width=\linewidth]{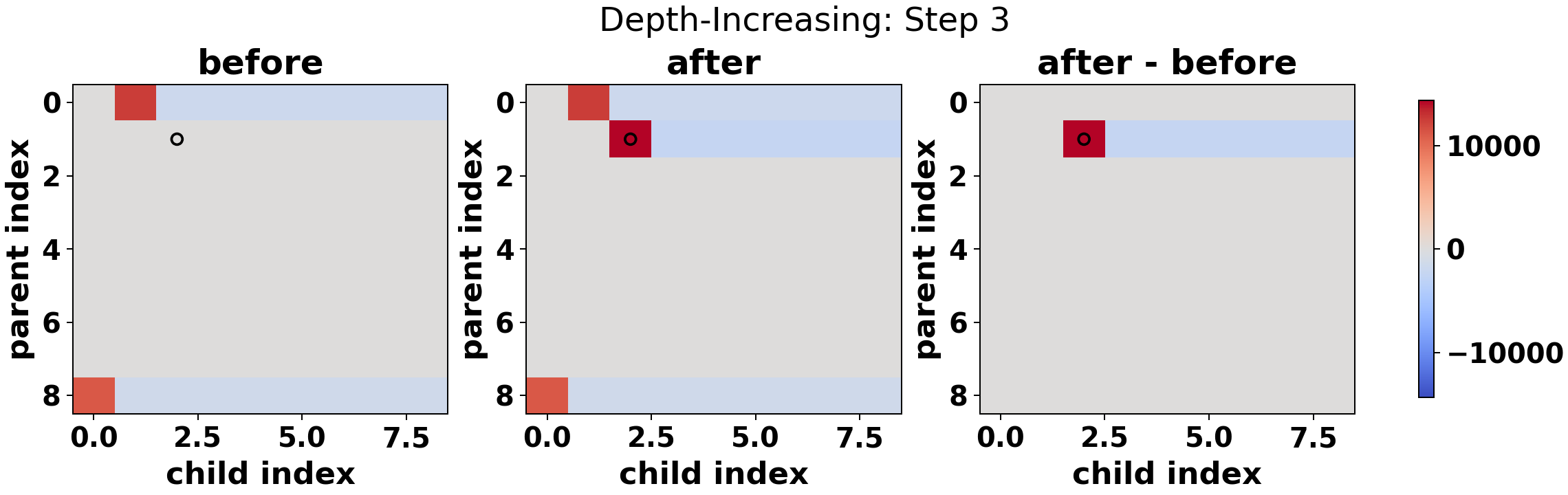}
        \caption{Step 3.}
    \end{subfigure}
    \hfill
    \begin{subfigure}[b]{0.48\linewidth}
        \includegraphics[width=\linewidth]{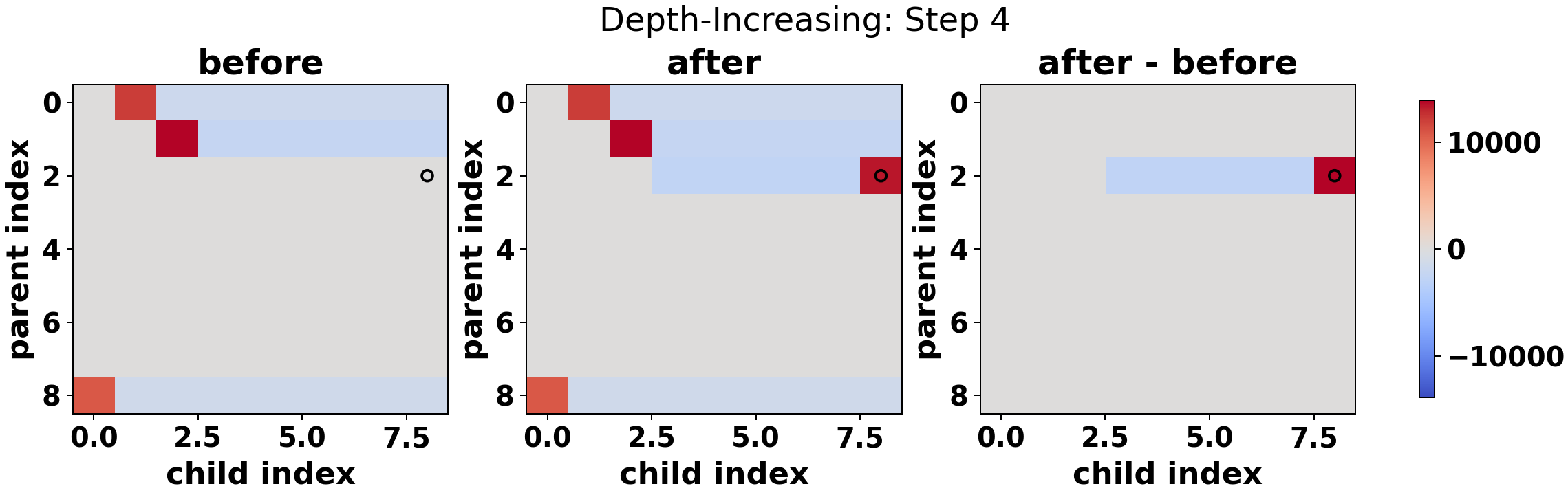}
        \caption{Step 4.}
    \end{subfigure}
\caption{Step-wise Attention Heatmaps for Depth-Increasing Curriculum ($d=8,k^\star=3,S^\star=[0,1,2],n =100000$).}
\label{fig:Depth_increasing_d_8_k_3}
\end{figure}

\begin{figure}[H]
    \centering
    \begin{subfigure}[b]{0.48\linewidth}
        \includegraphics[width=\linewidth]{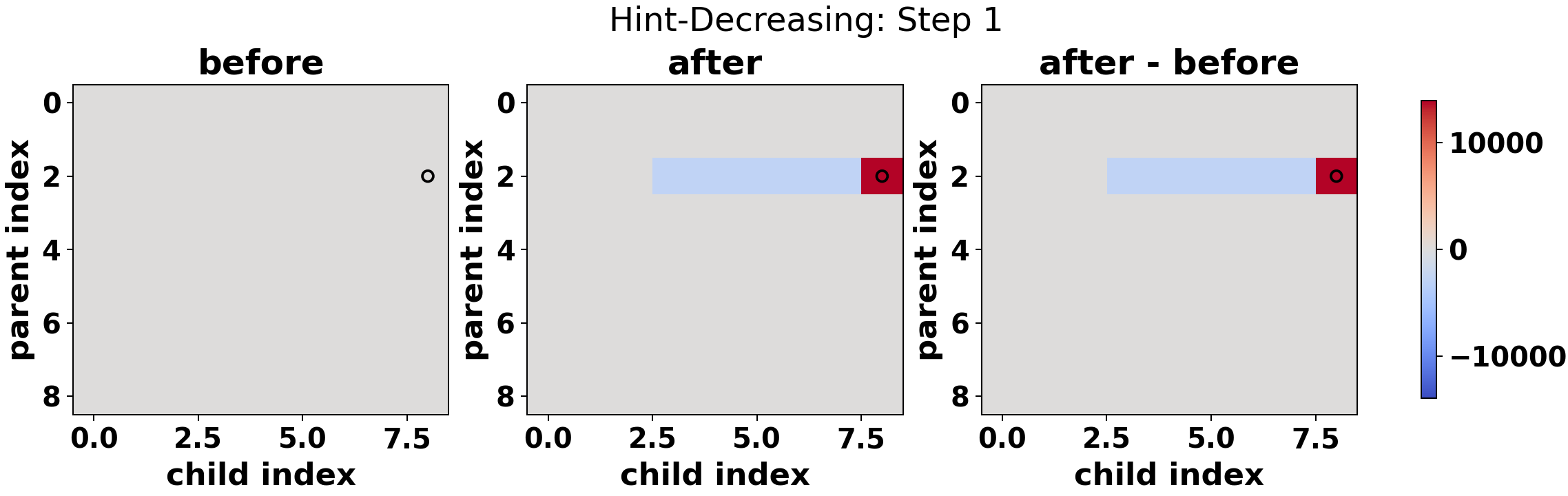}
        \caption{Step 1.}
   \end{subfigure}
    \hfill
    \begin{subfigure}[b]{0.48\linewidth}
        \includegraphics[width=\linewidth]{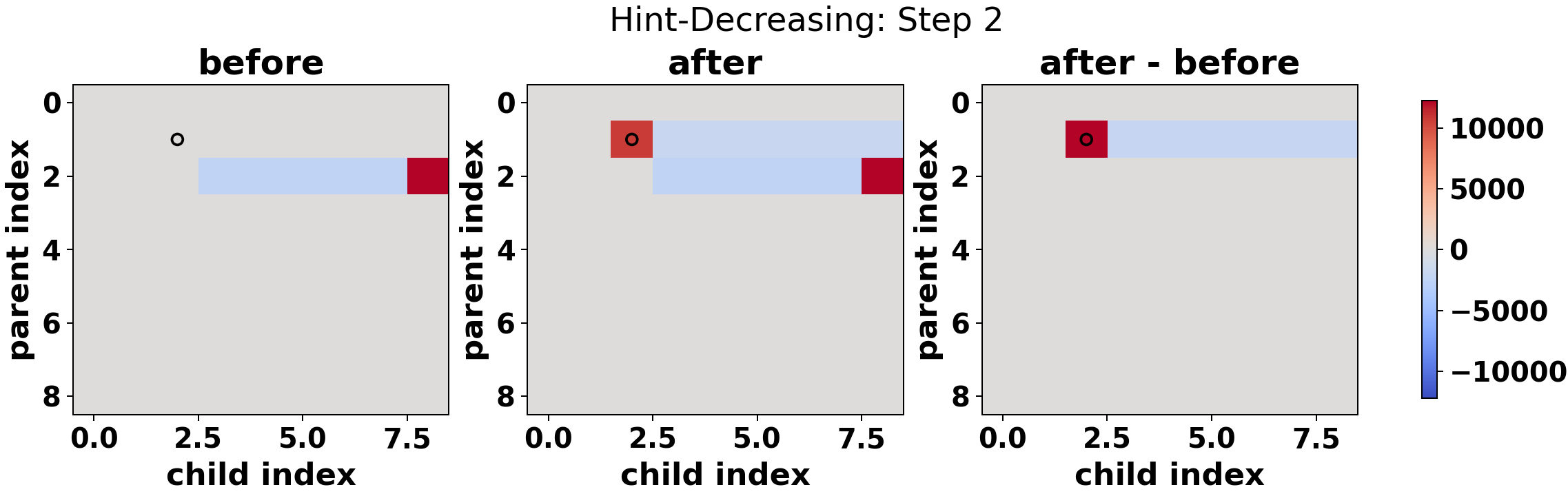}
        \caption{Step 2.}
    \end{subfigure}
    \hfill
    \begin{subfigure}[b]{0.48\linewidth}
        \includegraphics[width=\linewidth]{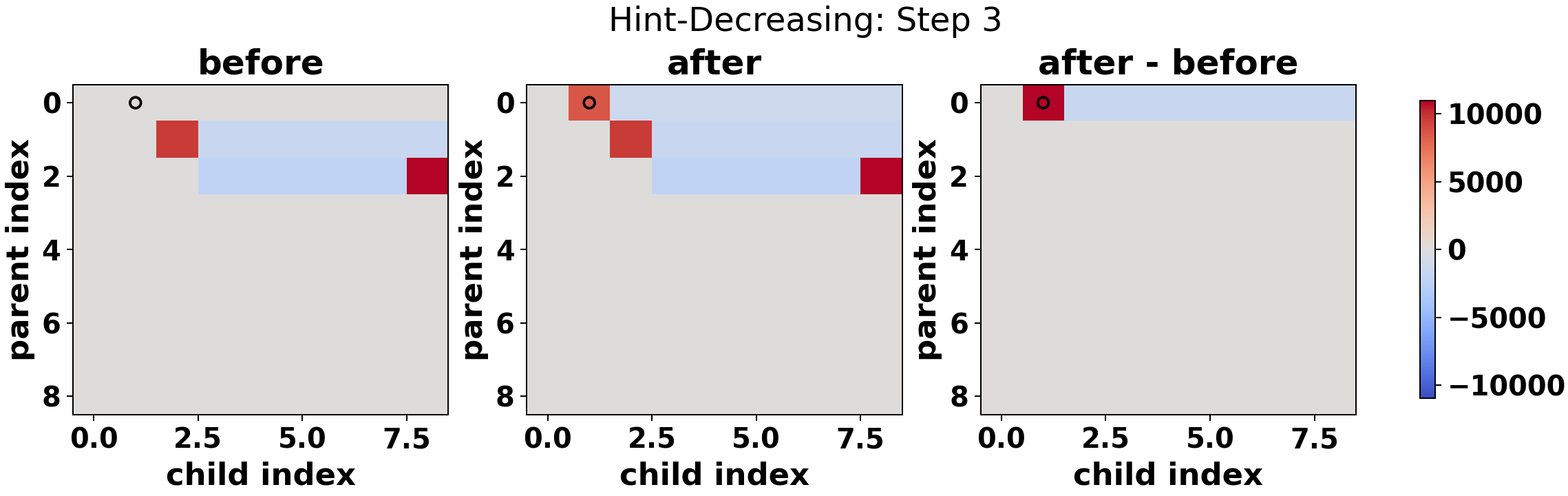}
        \caption{Step 3.}
    \end{subfigure}
    \hfill
    \begin{subfigure}[b]{0.48\linewidth}
        \includegraphics[width=\linewidth]{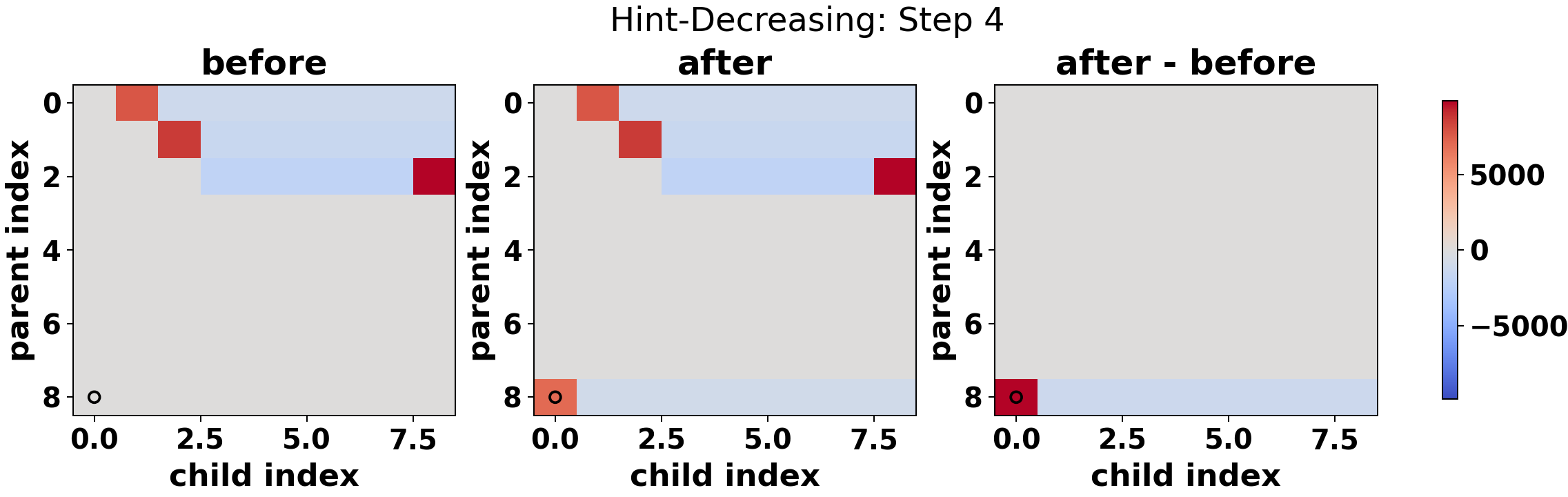}
        \caption{Step 4.}
    \end{subfigure}
\caption{Step-wise Attention Heatmaps for Hint-Decreasing Curriculum ($d=8,k^\star=3,S^\star=[0,1,2],n =100000$).}
\label{fig:Hint_decreasing_d_8_k_3}
\end{figure}

\begin{figure}[H]
    \centering
    \begin{subfigure}[b]{0.37\linewidth}
        \includegraphics[width=\linewidth]{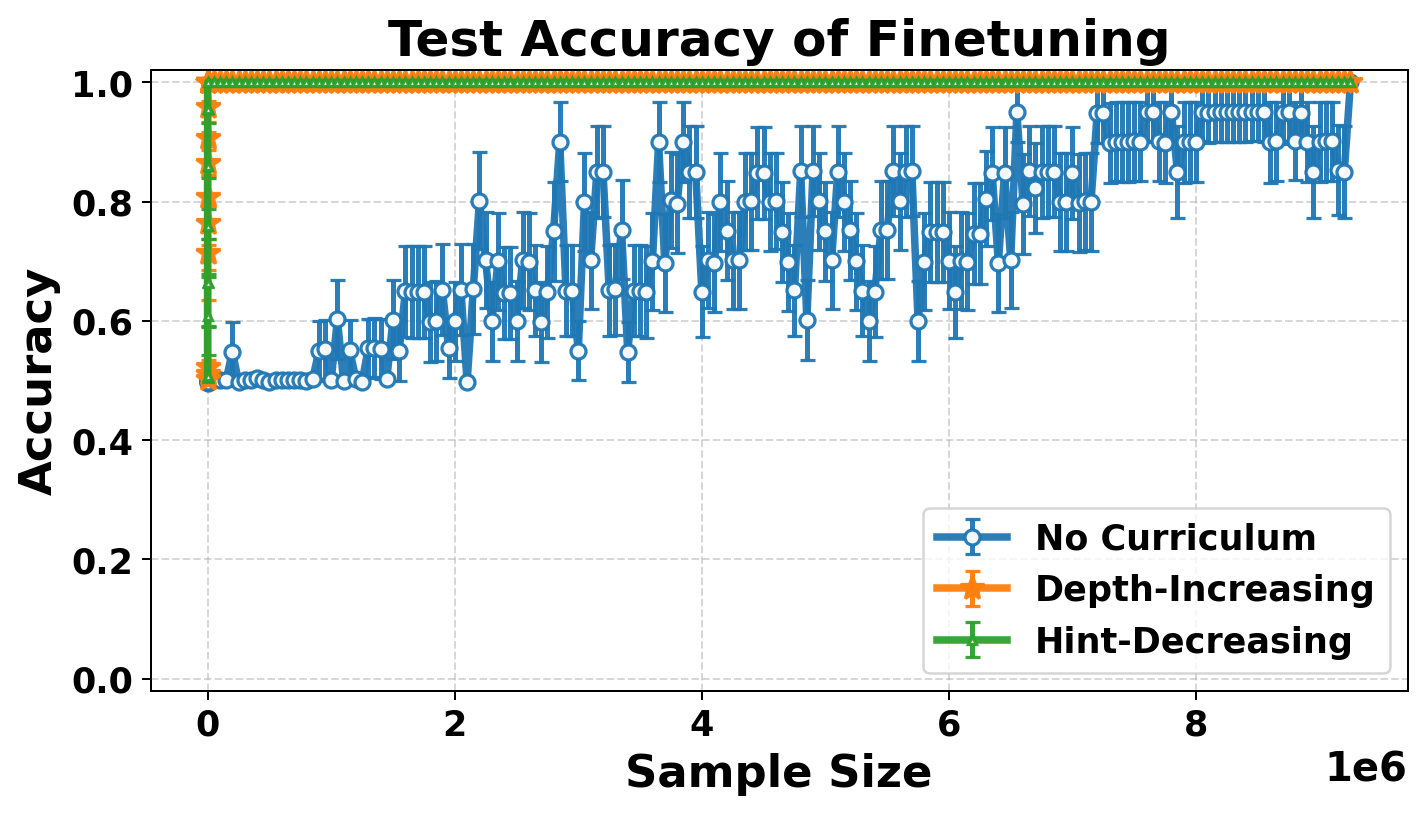}
        \caption{Test Accuracy of d.}
    \end{subfigure}
    \begin{subfigure}[b]{0.37\linewidth}
        \includegraphics[width=\linewidth]{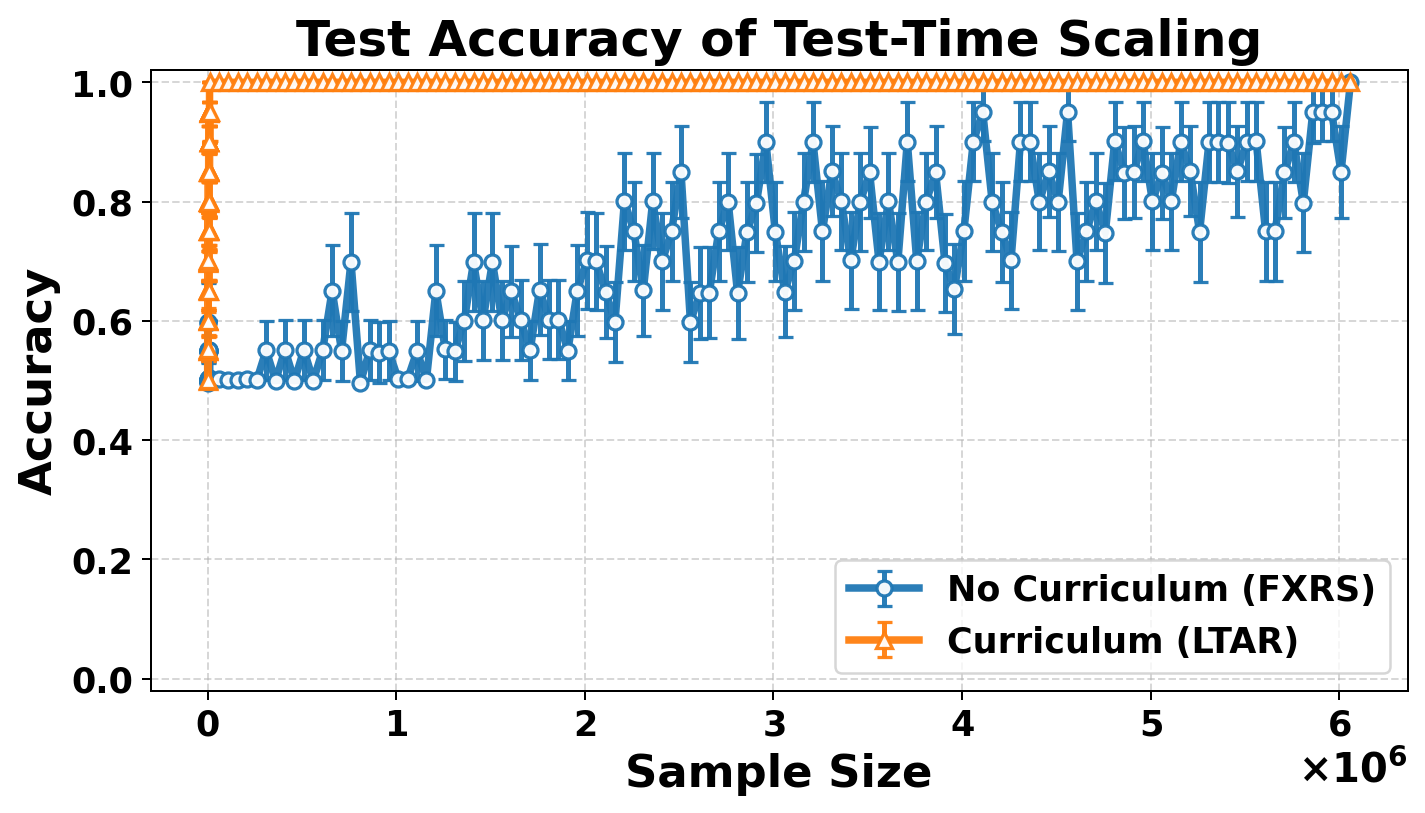}
        \caption{Test Accuracy of Test-Time Scaling.}
   \end{subfigure}
    \hfill
    \begin{subfigure}[b]{0.7\linewidth}
        \includegraphics[width=\linewidth]{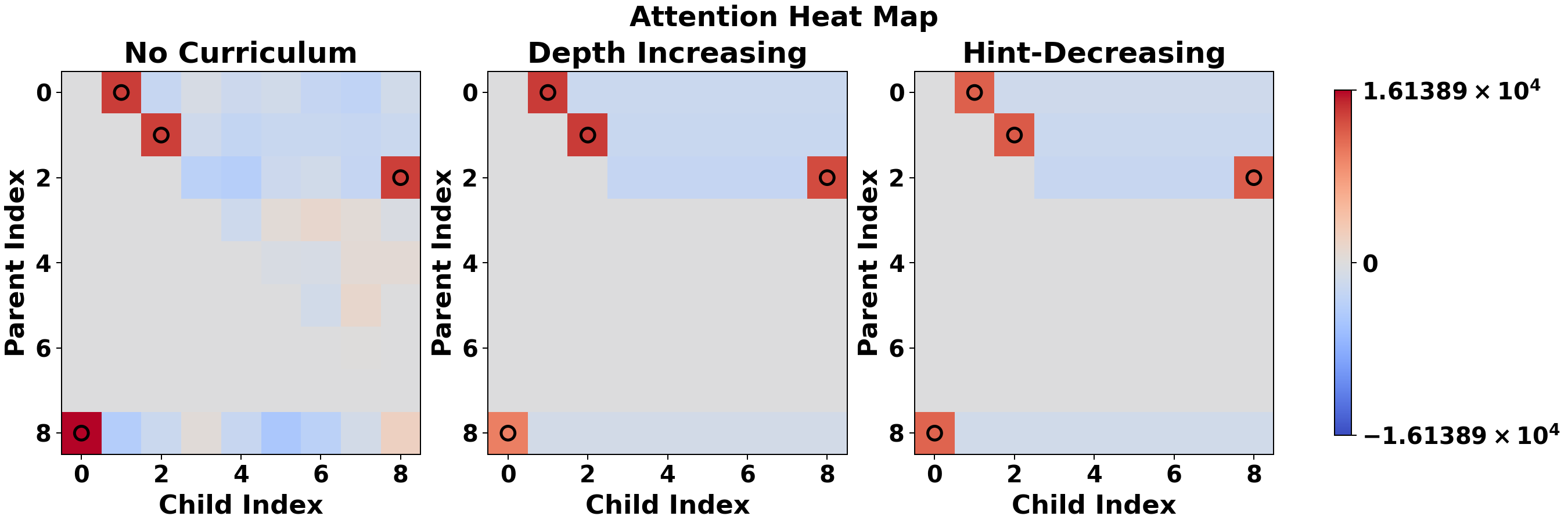}
        \caption{Attention Heat Map of Finetuning.}
    \end{subfigure}
    \caption{Post-training Dynamics on the parity task with error bar ($d=8$, $k^\star=8$, $S^\star=[0,1,2]$, $\eta_{A}=10^{8}$, $\eta_B = \eta_C=10^{5}$). (a) Test accuracy evolution of finetuning along sample complexity; (b) Test accuracy evolution of Test Time-Scaling along sample complexity; (c) averaged attention heatmap after finetuning.}
    \label{fig:Parity_Dynamics_trials}
\end{figure}
% At each sample complexity $n$, we reinitialize the model weights $W(0)$ and collect $n$ fresh samples to compute a single gradient update (Algorithm A) or $k^*+1$ stage-wise updates (Algorithms B and C).
\paragraph{Multi-Trial Validation with Error Bars.} To further validate the effectiveness of curriculum strategies and quantify the variance of our results, we conduct extensive multi-trial experiments for both finetuning and test-time scaling procedures. For finetuning experiments, we run $n_{\text{trials}} = 10$ independent trials with base seed $0$, where each trial uses a distinct random seed $\text{seed} = 0 + t$ for $t = 0, \dots, 9$. For each trial, we evaluate on a held-out test set of $n_{\text{test}} = 8192$ fresh samples generated with independent randomness, ensuring no overlap between training and test data. We report the mean accuracy across trials along with error bars representing a $1.0 \times$ confidence interval ($\text{CI} = \text{mean} \pm 1.0 \times \text{SEM}$), where $\text{SEM} = \sigma / \sqrt{n_{\text{trials}}}$ is the standard error of the mean and $\sigma$ is the sample standard deviation. For test-time scaling experiments, we follow the same multi-trial protocol with $n_{\text{trials}} = 10$, evaluating both Curriculum (LTAR) and No Curriculum (TTS) procedures at each sample complexity $n$. The test evaluation uses the same $n_{\text{test}} = 8192$ fresh samples per trial. Error bars are computed identically as mean $\pm 1.0 \times \text{SEM}$.

As shown in Fig.~\ref{fig:Parity_Dynamics_trials}, the curriculum-based methods (Depth-Increasing and Hint-Decreasing for finetuning; LTAR for test-time scaling) consistently achieve near-perfect accuracy ($>99.9\%$) with substantially fewer samples compared to their no-curriculum counterparts. The narrow error bars indicate high statistical significance: for finetuning, No Curriculum requires approximately $n \approx 10^6$ samples to converge, while both curriculum methods converge by $n \approx 200$. For test-time scaling, No Curriculum (TTS) also requires $n \approx 10^6$ samples, whereas Curriculum (LTAR) converges by $n \approx 8 \times 10^3$—an improvement of over $500\times$ in sample complexity. These results strongly corroborate our theoretical findings that curriculum strategies avoid the exponential bottleneck inherent in non-curriculum approaches.
\textbf{Parameters.} Finetuning experiments use learning rates $\eta_A = 10^8$ for Algorithm A and $\eta_{BC} = 10^5$ for Algorithms B/C, with micro-batch size $6.7 \times 10^6$ (automatically adjusted based on available GPU memory). Test-time scaling experiments use oracle batch size $7.3 \times 10^6$, confidence parameter $\delta = 0.1$, and spurious-success upper bound $\rho_{\text{spur}}^{\text{sup}} = 0.5$.

\subsubsection{$d=16,k^\star=2$}

For parameter settings $d=16$, $k^\star=2$, $S^\star=[0,1]$, $\eta_{A}=2\times10^{12}$, $\eta_B = \eta_C=10^{10}$, we also visualize the sample complexity v.s. accuracies plots of test-time scaling as well as finetuning dynamics, along with the attention heatmaps, per shown in Fig.~\ref{fig:Parity_Dynamics_16_2}. Similar to Fig.~\ref{fig:Parity_Dynamics}, the curriculum strategies help grasp the subtask's pattern more effectively, resulting satisfactory test performance with less sample complexity.

\begin{figure}
    \centering
    \begin{subfigure}[b]{0.37\linewidth}
        \includegraphics[width=\linewidth]{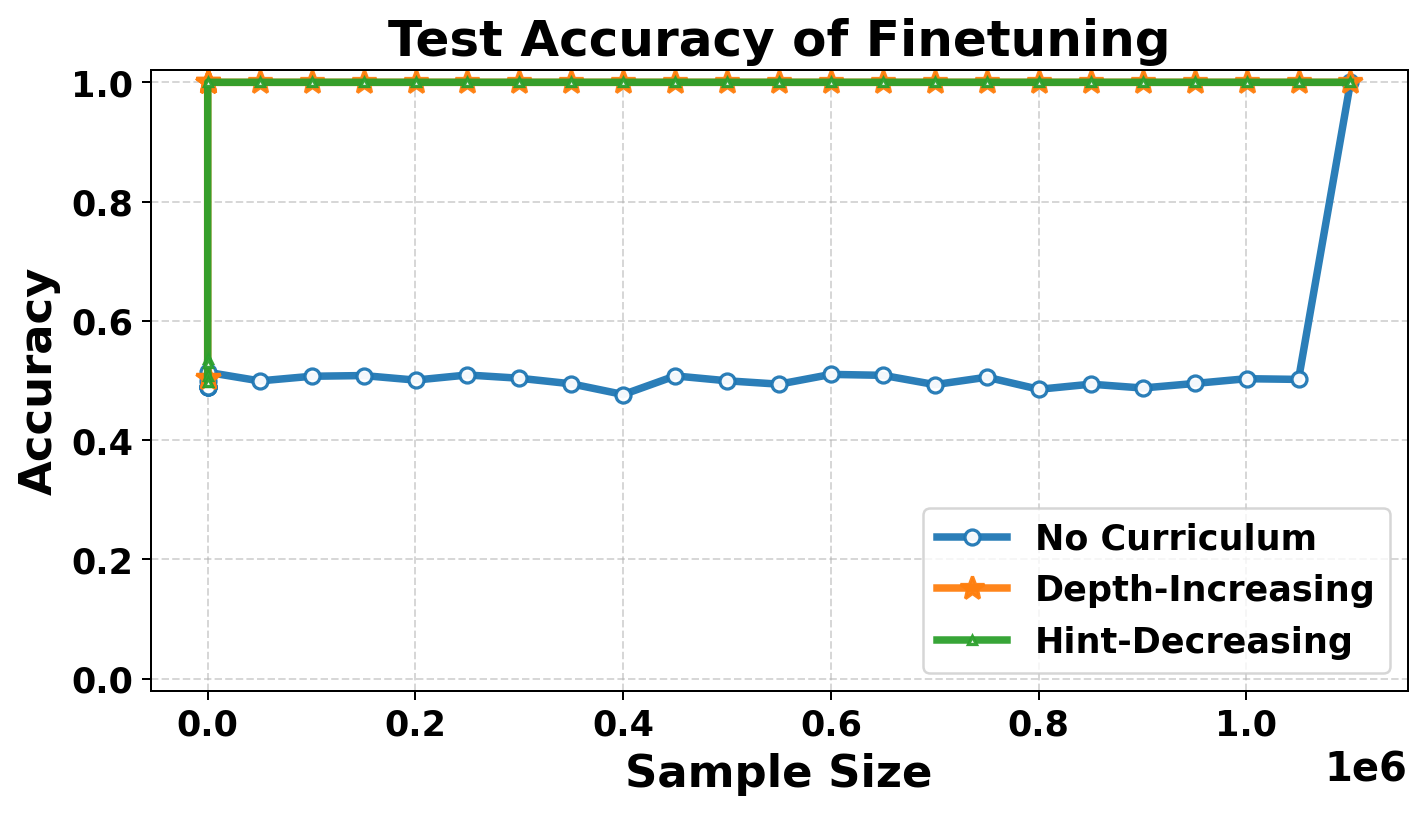}
        \caption{Test Accuracy of Finetuning.}
    \end{subfigure}
    \begin{subfigure}[b]{0.37\linewidth}
        \includegraphics[width=\linewidth]{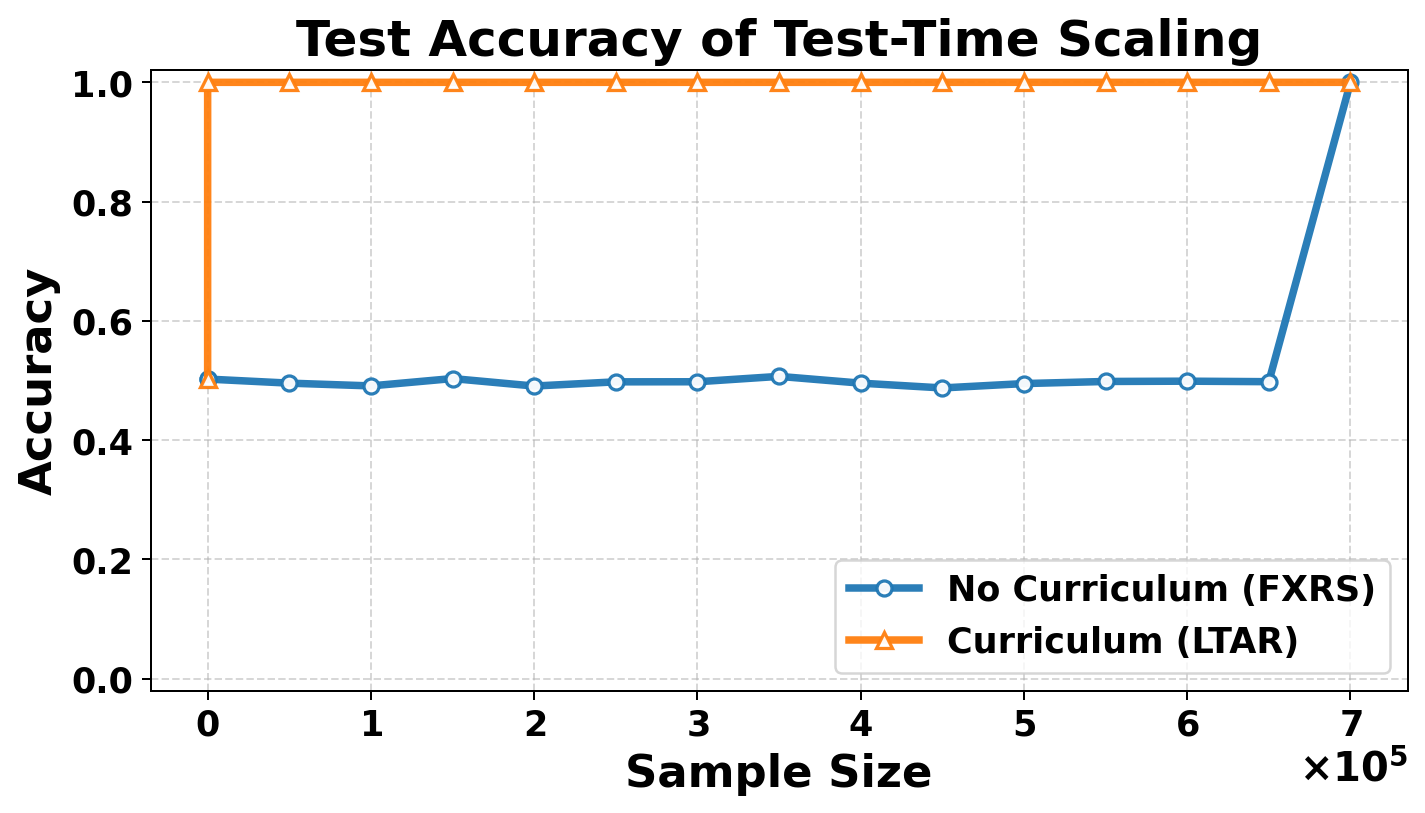}
        \caption{Test Accuracy of Test-Time Scaling.}
   \end{subfigure}
    \hfill
    \begin{subfigure}[b]{0.7\linewidth}
        \includegraphics[width=\linewidth]{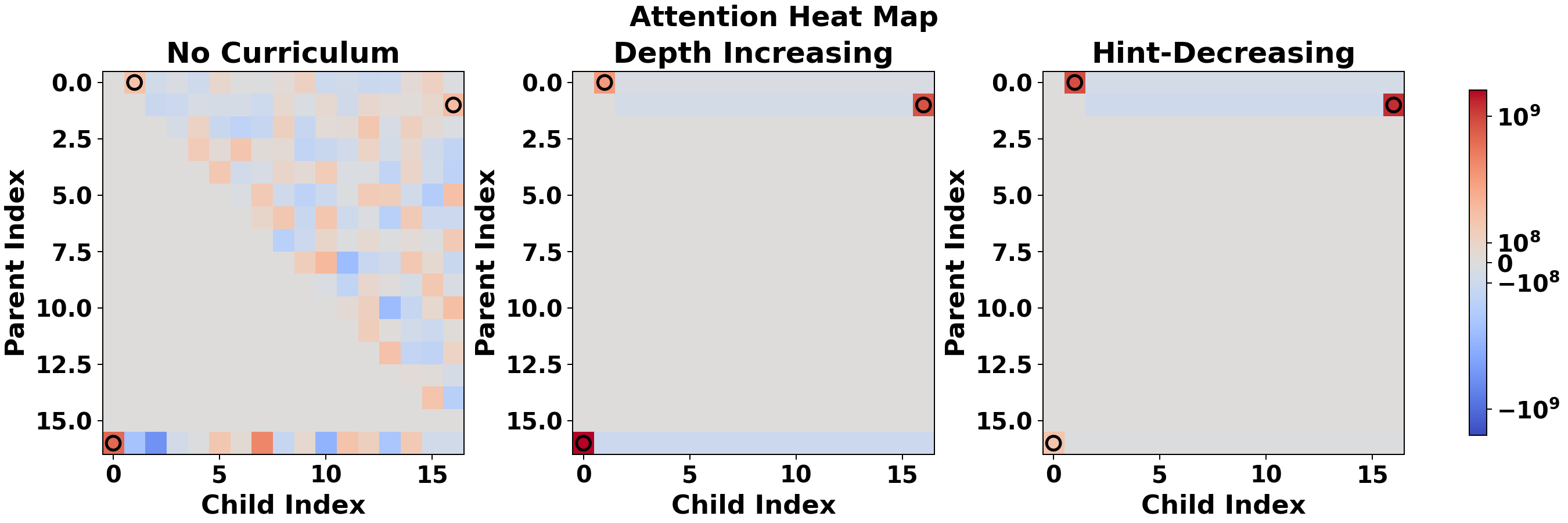}
        \caption{Attention Heat Map of Finetuning.}
    \end{subfigure}
    \caption{Post-training Dynamics on the parity task ($d=16$, $k^\star=2$, $S^\star=[0,1]$, $\eta_{A}=2\times10^{12}$, $\eta_B = \eta_C=10^{10}$). (a) Test accuracy evolution of finetuning along sample complexity; (b) Test accuracy evolution of Test Time-Scaling along sample complexity; (c) attention heatmap after finetuning. For finetuning, Depth-Increasing and Hint-Decreasing Curriculum Algorithms both converges at $n=120$, where No Curriculum counterpart converges at $n=1100120$; For test-time scaling, curriculum algorithm (LTAR in Alg.~\ref{alg:ltar}) converges at $n=5$ while its no curriculum counterpart converges at $n=700005$.}
    \label{fig:Parity_Dynamics_16_2}
\end{figure}
\begin{figure}
    \centering
    \begin{subfigure}[b]{0.37\linewidth}
        \includegraphics[width=\linewidth]{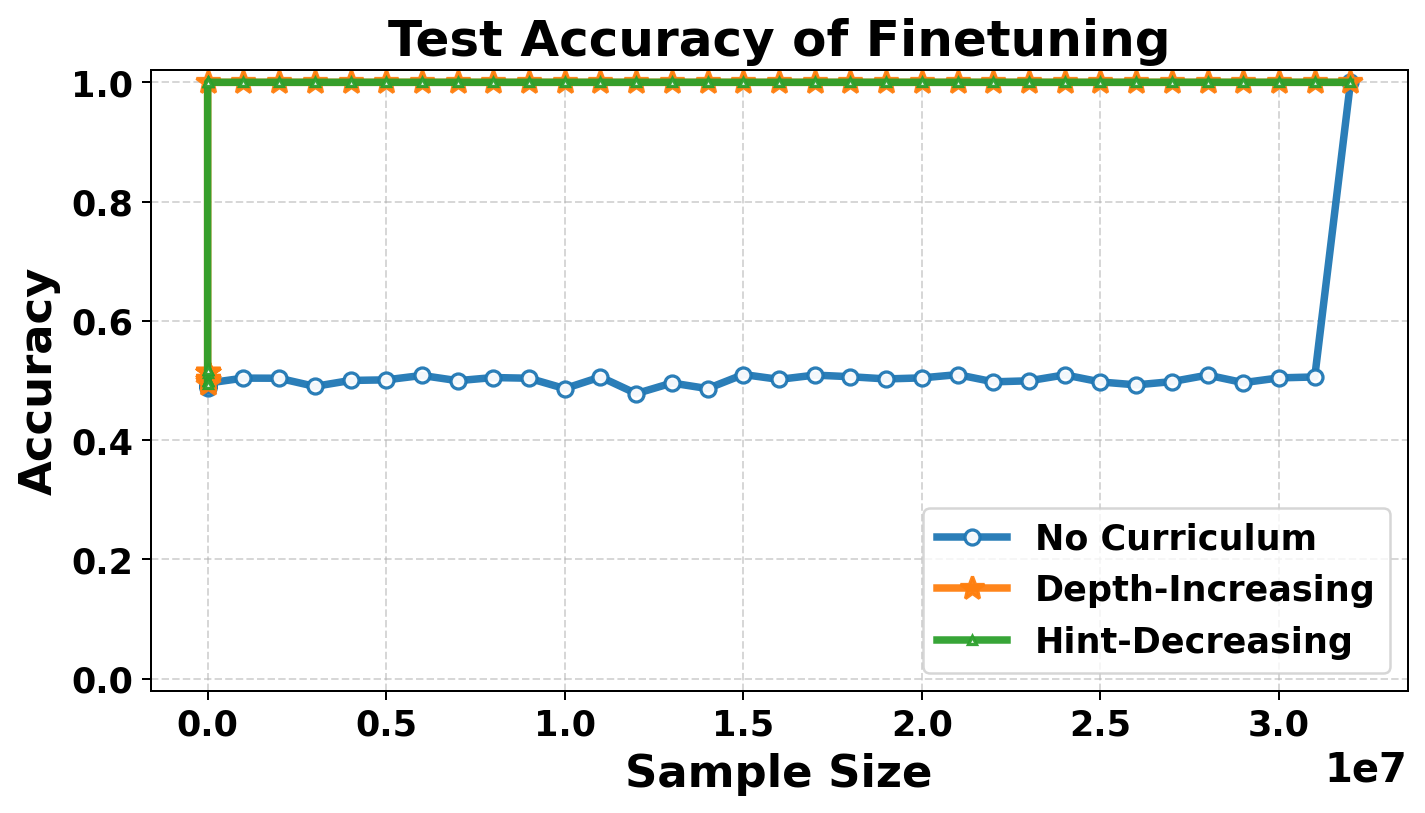}
        \caption{Test Accuracy of Finetuning.}
    \end{subfigure}
    \begin{subfigure}[b]{0.37\linewidth}
        \includegraphics[width=\linewidth]{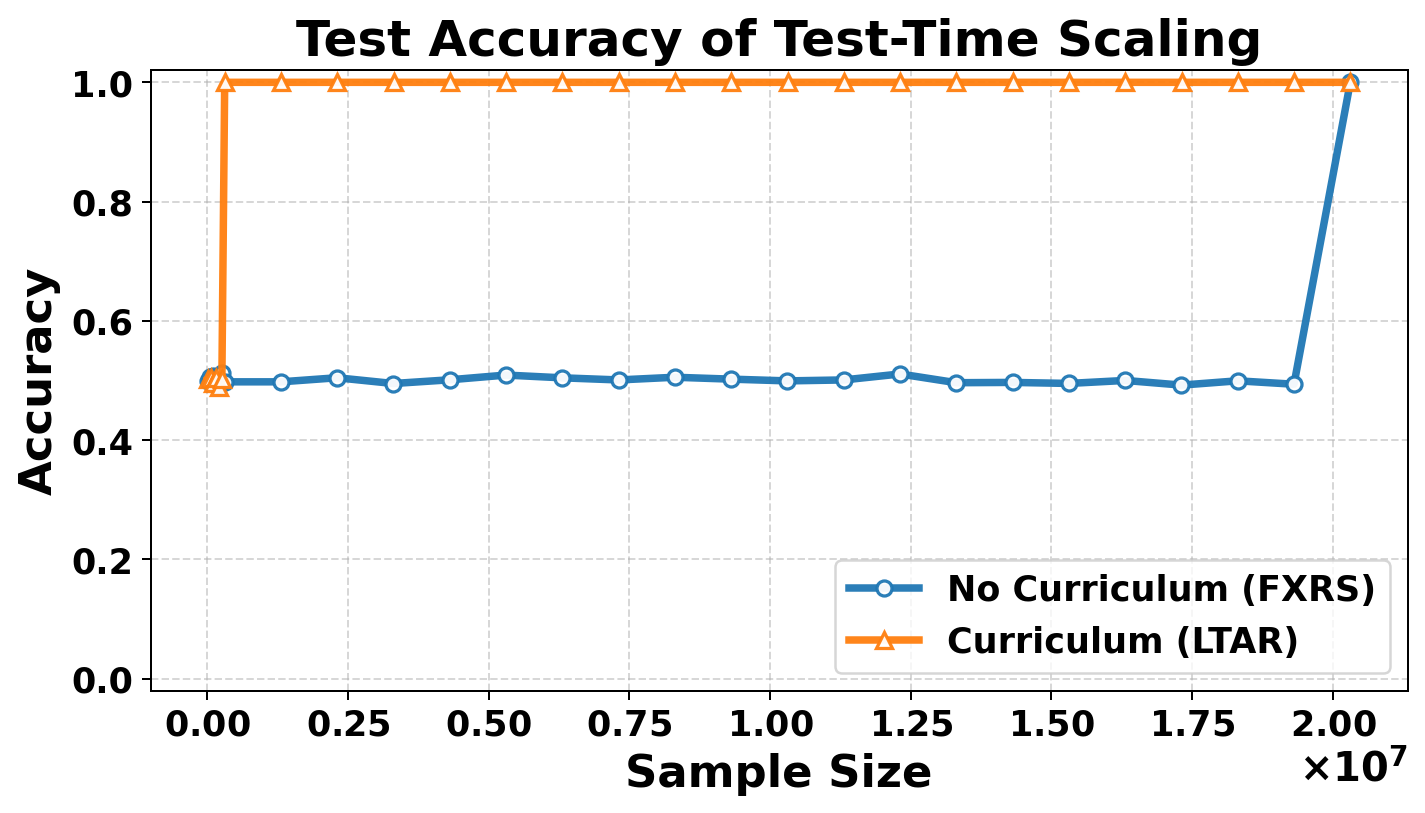}
        \caption{Test Accuracy of Test-Time Scaling.}
   \end{subfigure}
    \hfill
    \begin{subfigure}[b]{0.7\linewidth}
        \includegraphics[width=\linewidth]{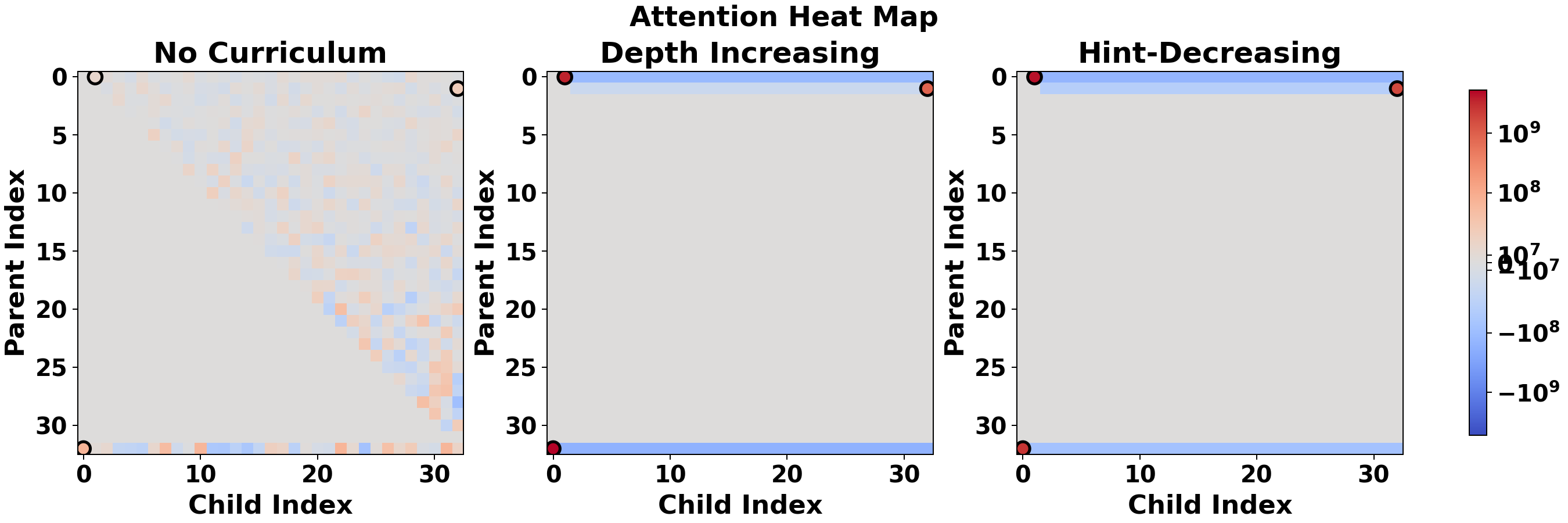}
        \caption{Attention Heat Map of Finetuning.}
    \end{subfigure}
    \caption{Post-training Dynamics on the parity task ($d=32$, $k^\star=2$, $S^\star=[0,1]$, $\eta_{A}=2\times10^{12}$, $\eta_B = \eta_C=10^{10}$). (a) Test accuracy evolution of finetuning along sample complexity; (b) Test accuracy evolution of Test Time-Scaling along sample complexity; (c) attention heatmap after finetuning. For finetuning, Depth-Increasing and Hint-Decreasing Curriculum Algorithms both converges at $n=260$, where No Curriculum didn't converge till $n=32000260$; For test-time scaling, curriculum algorithm (LTAR in Alg.~\ref{alg:ltar}) converges at $n=310000$ while its no curriculum counterpart converges at $n=20310000$.}
    \label{fig:Parity_Dynamics_32_2}
\end{figure}
\subsubsection{$d=32,k^\star=2$}

Similarly, we have the results for $d=32$, $k^\star=2$ in Fig.~\ref{fig:Parity_Dynamics_32_2}.

\subsection{Additional Results of Parity Tasks under Small Learning Rates}
\label{sec:small_lr_parity}

We further investigate the behavior of our three algorithms in the small learning rate regime. All experiments are conducted on the parity task with depth $d=8$, $k^\star+1=4$, and $S^\star=[0,1,2]$. Training is performed with a small learning rate $\eta=0.5$ and batch size $B=256$ for up to $1000$ optimization steps. Model performance is evaluated every $2048$ samples, and curriculum stage transitions are triggered once the validation accuracy exceeds $0.99$. We report results averaged over $10$ independent runs with different random seeds, and compare four optimizers (SGD with momentum $0.9$, Adam, Muon with Nesterov momentum, and AdamW), each trained independently under identical settings.

Fig.~\ref{fig:small_lr} shows that curriculum-based methods converge substantially faster than the non-curriculum baseline across all optimizers. Interestingly, under the no-curriculum setting, Muon and SGD are still able to make gradual progress, albeit at a much slower rate, whereas Adam and AdamW fail to achieve meaningful improvement. One interesting future direction is to further investigate the underlying rationale of this phenomenon.

\begin{figure}[H]
    \centering
    \begin{subfigure}[b]{0.48\linewidth}
        \includegraphics[width=\linewidth]{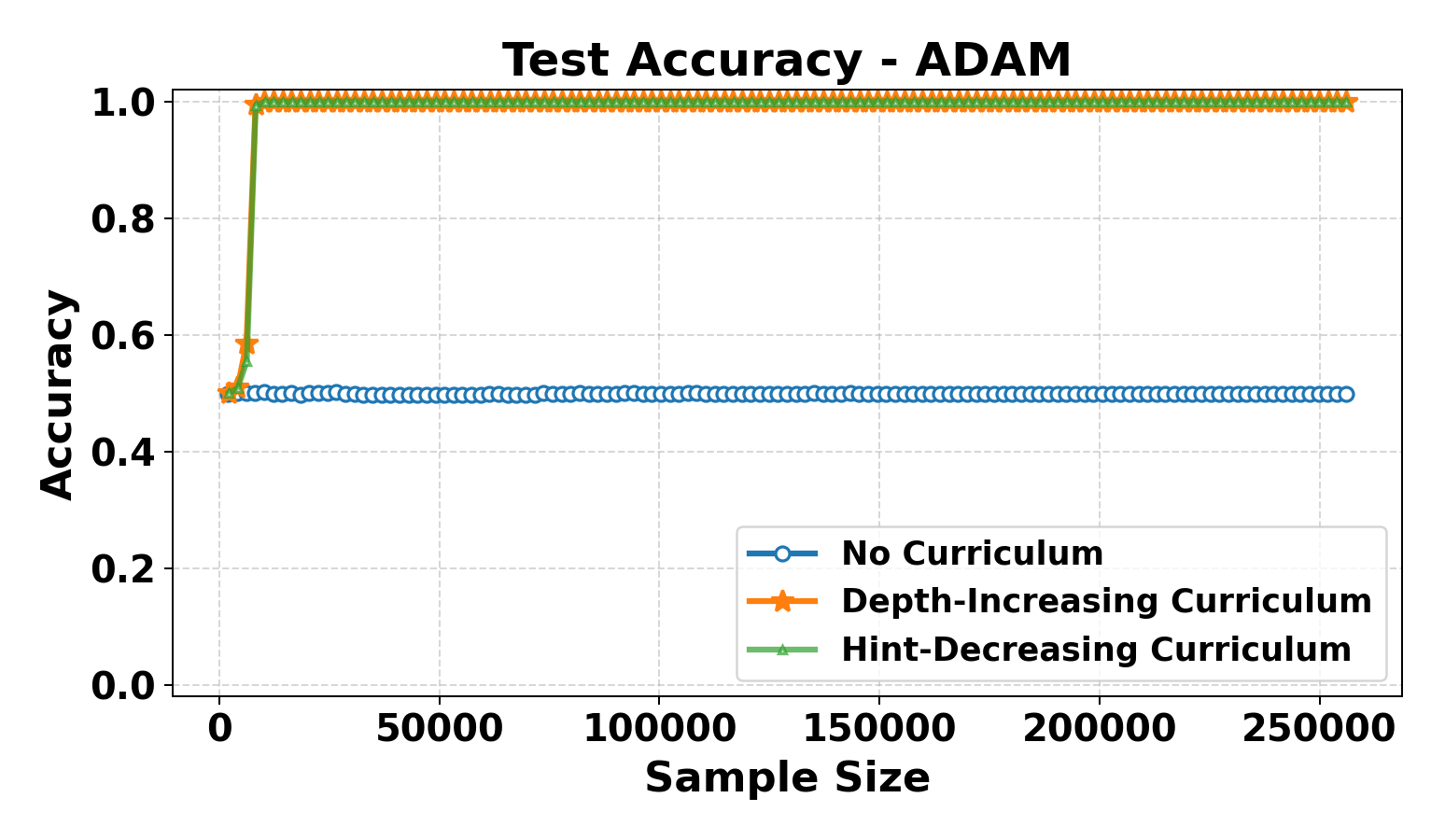}
        \caption{Adam}
   \end{subfigure}
    \hfill
    \begin{subfigure}[b]{0.48\linewidth}
        \includegraphics[width=\linewidth]{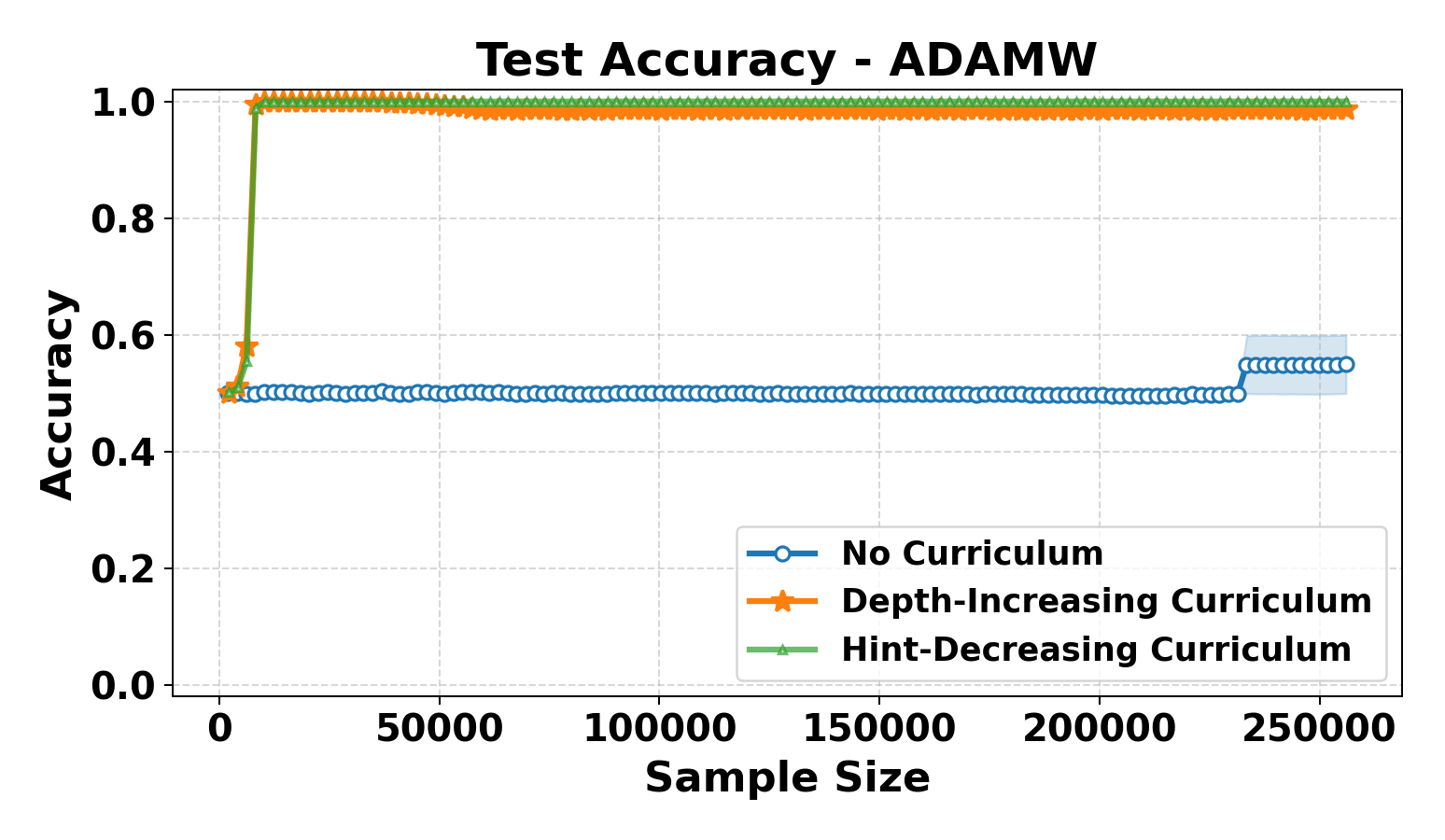}
        \caption{AdamW}
    \end{subfigure}
    \hfill
    \begin{subfigure}[b]{0.48\linewidth}
        \includegraphics[width=\linewidth]{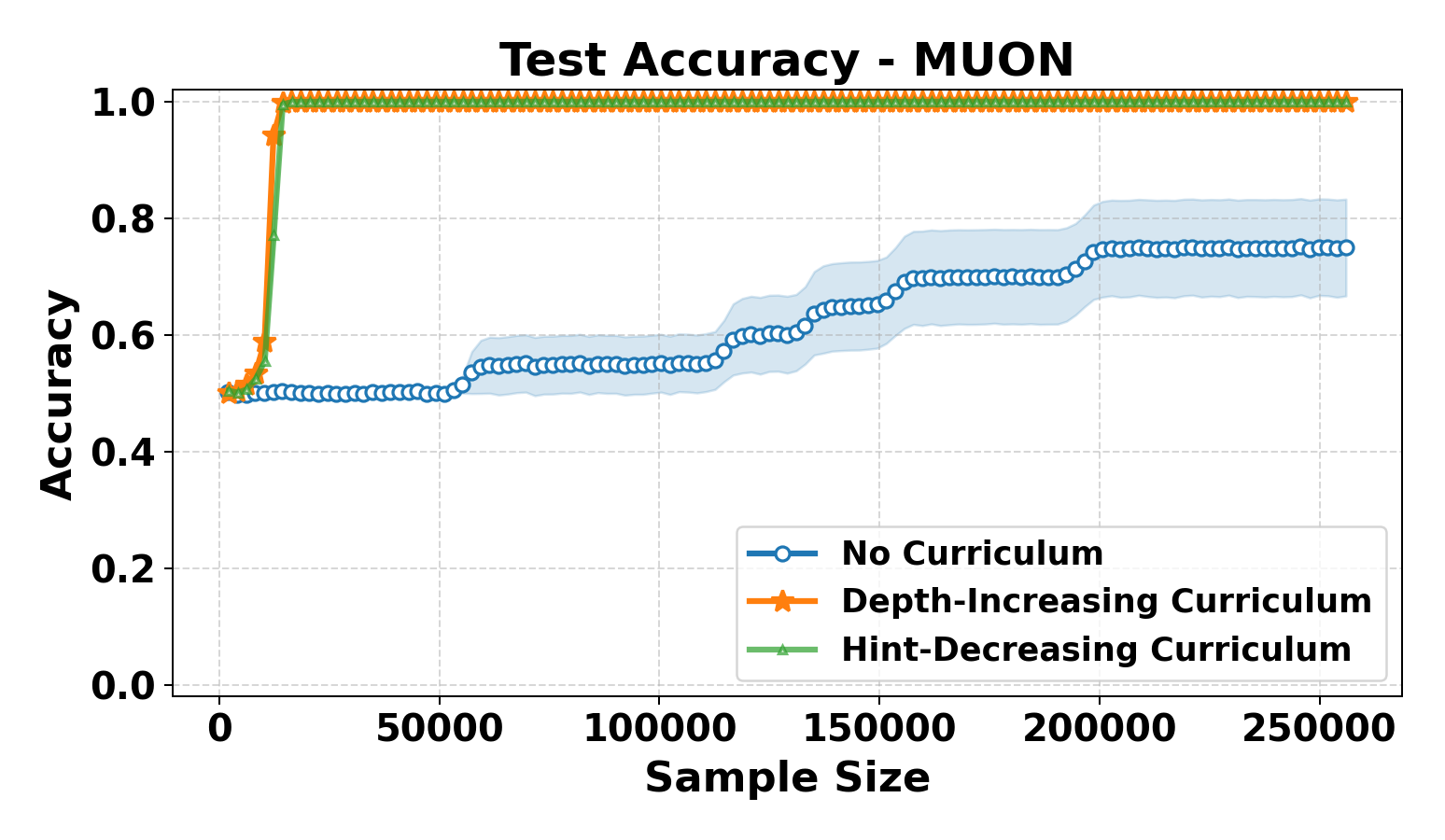}
        \caption{Muon}
    \end{subfigure}
    \hfill
    \begin{subfigure}[b]{0.48\linewidth}
        \includegraphics[width=\linewidth]{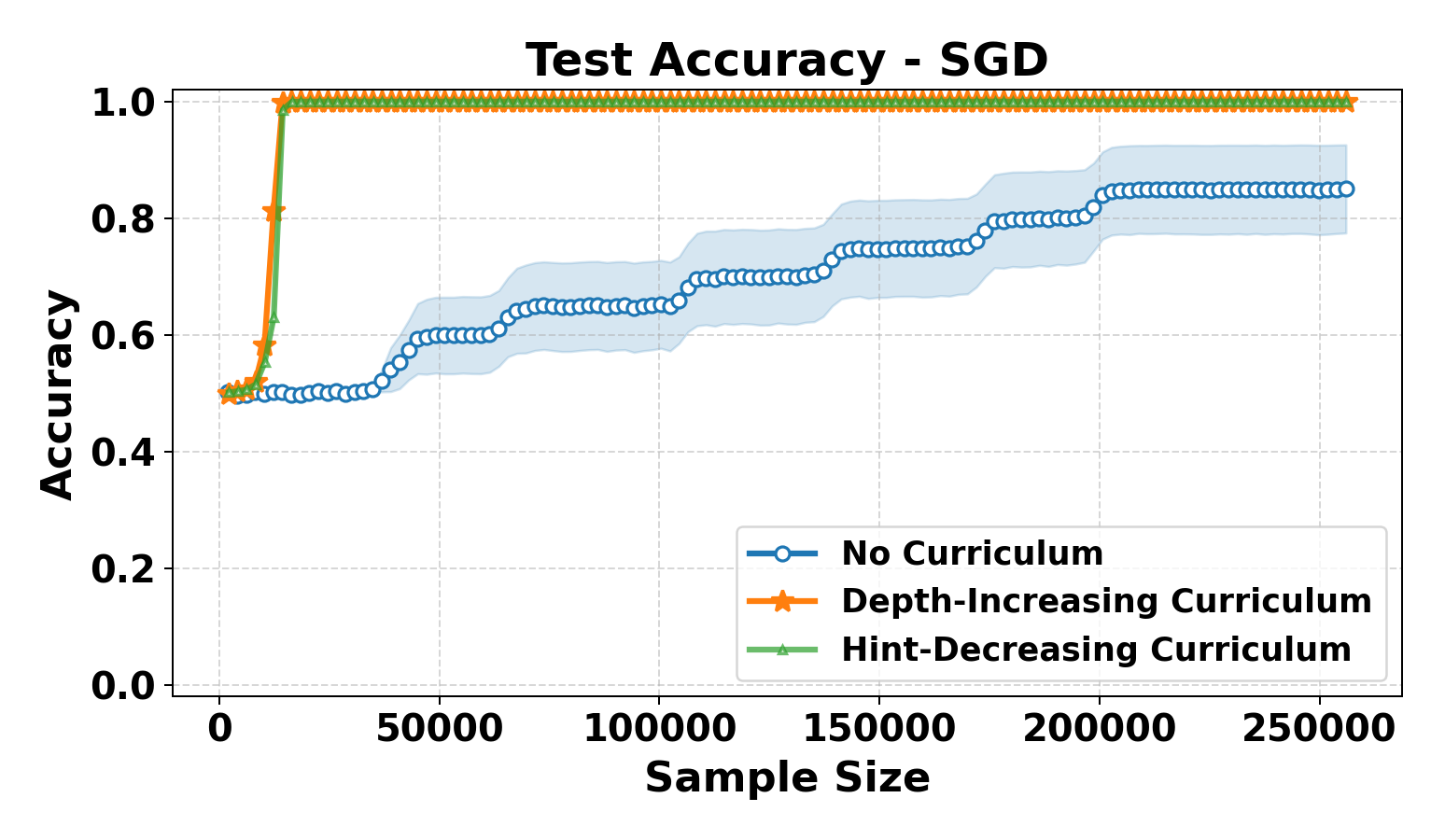}
        \caption{SGD}
    \end{subfigure}
\caption{Small Learning rate regime ($d=8,k^\star=3,S^\star=[0,1,2],\eta=0.5, B=256$).}
\label{fig:small_lr}
\end{figure}

\section{Additional Large-Scale Experiments on Realistic Reasoning Datasets}
\label{sec:curriculum_realistic}

We instantiate two curriculum mechanisms under a unified GRPO backbone:
(i) \emph{Depth-Increasing Curriculum (E2H)} \citep{parashar2025curriculum}, and
(ii) \emph{Hint-Decreasing Curriculum} in two variants: a practical prefix-SFT interpolation used in our Countdown experiments, and a UFT-aligned objective decomposition used in our Math experiments \citep{liu2025UFT}.
The optimizer, reward interface, and rollout system are kept fixed; only curriculum-specific data exposure or loss partition changes.

Let
\[
\mathcal{D}=\{(x_i,y_i^\star,c_i^\star,d_i)\}_{i=1}^N
\]
denote the training set, where $x$ is the prompt, $y^\star$ is the target answer, $c^\star$ is teacher rationale (CoT), and $d_i\in\{1,\dots,L\}$ is the difficulty level.
The base RL objective is
\[
\mathcal{L}_{\mathrm{RL}}(t)
=
\mathcal{L}_{\mathrm{pg}}(t)-\alpha\mathcal{L}_{\mathrm{ent}}(t),
\]
with normalized time $p_t=t/T\in[0,1]$ and shared cosine progress
\[
s_t=\frac{1-\cos(\pi p_t)}{2}.
\]

\paragraph{Depth-Increasing Curriculum (E2H~\citep{parashar2025curriculum}).}
At step $t$, the maximal exposed difficulty level is
\[
\ell_t=1+\left\lfloor (L-1)s_t\right\rfloor,
\]
and training samples are drawn from
\[
\mathcal{D}^{\mathrm{E2H}}_t
=
\{(x,y^\star,c^\star,d)\in\mathcal{D}: d\le \ell_t\}.
\]
Hence training progressively expands from easy subsets to the full difficulty range.

\paragraph{Hint-Decreasing Curriculum v1 (Countdown implementation).}
For teacher CoT tokens $c^\star=(c_1^\star,\dots,c_m^\star)$, reveal a prefix length
\[
m_t^{(1)}=\left\lfloor \rho_t m\right\rfloor,\qquad
\rho_t
=
\rho_{\min}
+\frac{\rho_{\max}-\rho_{\min}}{2}(1+\cos(\pi p_t)).
\]
The policy conditions on $(x,c_{1:m_t^{(1)}}^\star)$, and we optimize
\[
\mathcal{L}_{\mathrm{UFT\text{-}v1}}(t)
=
\mathcal{L}_{\mathrm{pg}}(t)
-\alpha\mathcal{L}_{\mathrm{ent}}(t)
+\lambda_t\,\mathcal{L}^{\mathrm{prefix}}_{\mathrm{sft}}(t),
\]
where $\lambda_t$ is cosine-decayed in this practical prefix-guided variant.
This realizes easy-to-hard transition in \emph{hint dependence}, but without explicit trajectory-level decomposition.

\paragraph{Hint-Decreasing Curriculum v2 (Math implementation, UFT-aligned).}
Following \citet{liu2025UFT}, let $l_t$ be the trajectory switch index (teacher-guided prefix up to $l_t-1$, RL rollout from $l_t$ onward). The objective is
\[
\mathcal{J}_{\mathrm{UFT\text{-}v2}}
=
\mathbb{E}\!\left[
\mathcal{J}_{\mathrm{value}}\!\left((s_h,a_h)_{h=l_t}^{H-1}\right)
-\beta\sum_{h=l_t}^{H-1}\mathrm{KL}\!\left(\pi(\cdot|s_h)\,\|\,\pi^{\mathrm{ref}}(\cdot|s_h)\right)
+\beta\sum_{h=0}^{l_t-1}\log\pi(a_h^\star|s_h^\star)
\right].
\]
In implementation form:
\[
\mathcal{L}_{\mathrm{UFT\text{-}v2}}(t)
=
\mathcal{L}_{\mathrm{pg}}^{\mathrm{suffix}}(t)
-\alpha\mathcal{L}_{\mathrm{ent}}^{\mathrm{suffix}}(t)
+\beta\,\mathcal{L}_{\mathrm{sft}}^{\mathrm{prefix}}(t),
\]
where the same cosine schedule controls $l_t$ through prefix-length reduction.
Thus, unlike v1, the effective RL suffix itself changes with curriculum stage, matching the UFT decomposition more closely.

\paragraph{No-Curriculum Baseline.}
The baseline samples from a fixed training distribution and optimizes $\mathcal{L}_{\mathrm{RL}}(t)$ without difficulty expansion or hint scheduling.
For the Countdown and Math results reported in this section, the No-Curriculum baseline uses fixed balanced sampling over the available in-distribution training levels.

\subsection{Implementation on Countdown}
\label{sec:countdown_impl}

All Countdown experiments are implemented in the same \texttt{Curriculum\_EoH} GRPO pipeline, ensuring a fixed backbone, reward interface, rollout backend, and optimization recipe across all methods.
All reported Countdown runs are initialized from the same local Hugging Face snapshot of
\[
\texttt{Qwen/Qwen2.5-1.5B-Instruct},
\]
and optimized with LoRA adaptation rather than full-parameter finetuning.
All training and evaluation jobs are executed on \textbf{NVIDIA A100 80GB PCIe} GPUs.
In the final reported comparison, the E2H run used a seven-GPU training setup, the Hint-Decreasing and No-Curriculum runs used four-GPU training setups, and the final pass@K comparison plots were rendered with fixed-seed vLLM decoding on two A100 GPUs.

Countdown instances are partitioned by required operation count into
\[
\mathcal{D}_2,\mathcal{D}_3,\mathcal{D}_4,\mathcal{D}_5,\mathcal{D}_6,
\]
corresponding to Trivial (2 ops), Easy (3 ops), Medium (4 ops), Hard (5 ops), and OOD (6 ops), respectively.
The reward is defined as
\[
r(x,y)=
\begin{cases}
1, & \text{valid equation using all numbers exactly once and equals target},\\
0.1, & \text{valid format but incorrect equation},\\
0, & \text{otherwise}.
\end{cases}
\]

For E2H, cosine progress maps to maximal operation depth
\[
d_t=2+\lfloor 4s_t \rfloor,
\qquad
\mathcal{D}_{\le d_t}
=
\bigcup_{k=2}^{d_t}\mathcal{D}_k.
\]
For the Countdown hint-decreasing implementation, we use
\[
(\rho_{\max},\rho_{\min})=(0.9,0.1),
\qquad
(\lambda_{\max},\lambda_{\min})=(1.0,0.0),
\qquad
\alpha=0.
\]
Hence the model first receives a long correct CoT prefix and progressively solves a larger suffix on its own.

\begin{table}[H]
\centering
\small
\caption{Countdown backbone, hardware, and hyperparameters.}
\label{tab:countdown_hparams}
\begin{tabular}{|p{0.33\linewidth}|p{0.60\linewidth}|}
\hline
Backbone initialization & \texttt{Qwen/Qwen2.5-1.5B-Instruct} (local Hugging Face snapshot, LoRA finetuning) \\
\hline
Hardware & NVIDIA A100 80GB PCIe. Depth-Increasing (E2H~\citep{parashar2025curriculum}) training: 7 GPUs. Hint-Decreasing (UFT~\citep{liu2025UFT}) training: 4 GPUs. No-Curriculum GRPO training: 4 GPUs. Final pass@K plotting: 2 GPUs. \\
\hline
Dataset & \hyperlink{divelab/countdown}{https://huggingface.co/datasets/divelab/countdown/tree/main} \\
\hline
Train / test levels & Train levels: $0,1,2,3$; test levels: $0,1,2,3,4$ \\
\hline
Train / test size & Train size: $327{,}680$; test size: $1{,}024$ \\
\hline
Base objective & GRPO with $\beta=10^{-3}$ and entropy coefficient $\alpha=0$ \\
\hline
Learning rate & $1\times 10^{-6}$ \\
\hline
Per-device train batch size & $16$ \\
\hline
Gradient accumulation & $4$ \\
\hline
Number of generations & $8$ \\
\hline
Steps per generation & $1$ \\
\hline
Maximum training steps & $1600$ \\
\hline
Maximum prompt length & $1000$ tokens \\
\hline
Maximum completion length & $512$ tokens \\
\hline
Precision / memory settings & bfloat16, TF32, gradient checkpointing, vLLM server rollout \\
\hline
LoRA configuration & rank $r=32$, $\alpha=64$, dropout $=0.1$, target modules = \{\texttt{q\_proj}, \texttt{v\_proj}\} \\
\hline
Logging / evaluation cadence & logging every $10$ steps; intermediate evaluation every $400$ steps \\
\hline
Curriculum-specific settings & E2H: cosine difficulty schedule. Hint-Decreasing: prefix ratio $0.9\!\rightarrow\!0.1$, SFT coefficient $1.0\!\rightarrow\!0.0$, maximum supervised target length $256$. No-Curriculum: fixed balanced level sampling. \\
\hline
Pass@K evaluation & $K\in\{1,2,4,8,16,32,64,128,256\}$ with a fixed random seed and identical decoding configuration across methods \\
\hline
\end{tabular}
\end{table}

\begin{figure}[H]
    \centering
    \begin{subfigure}[b]{0.32\linewidth}
        \includegraphics[width=\linewidth]{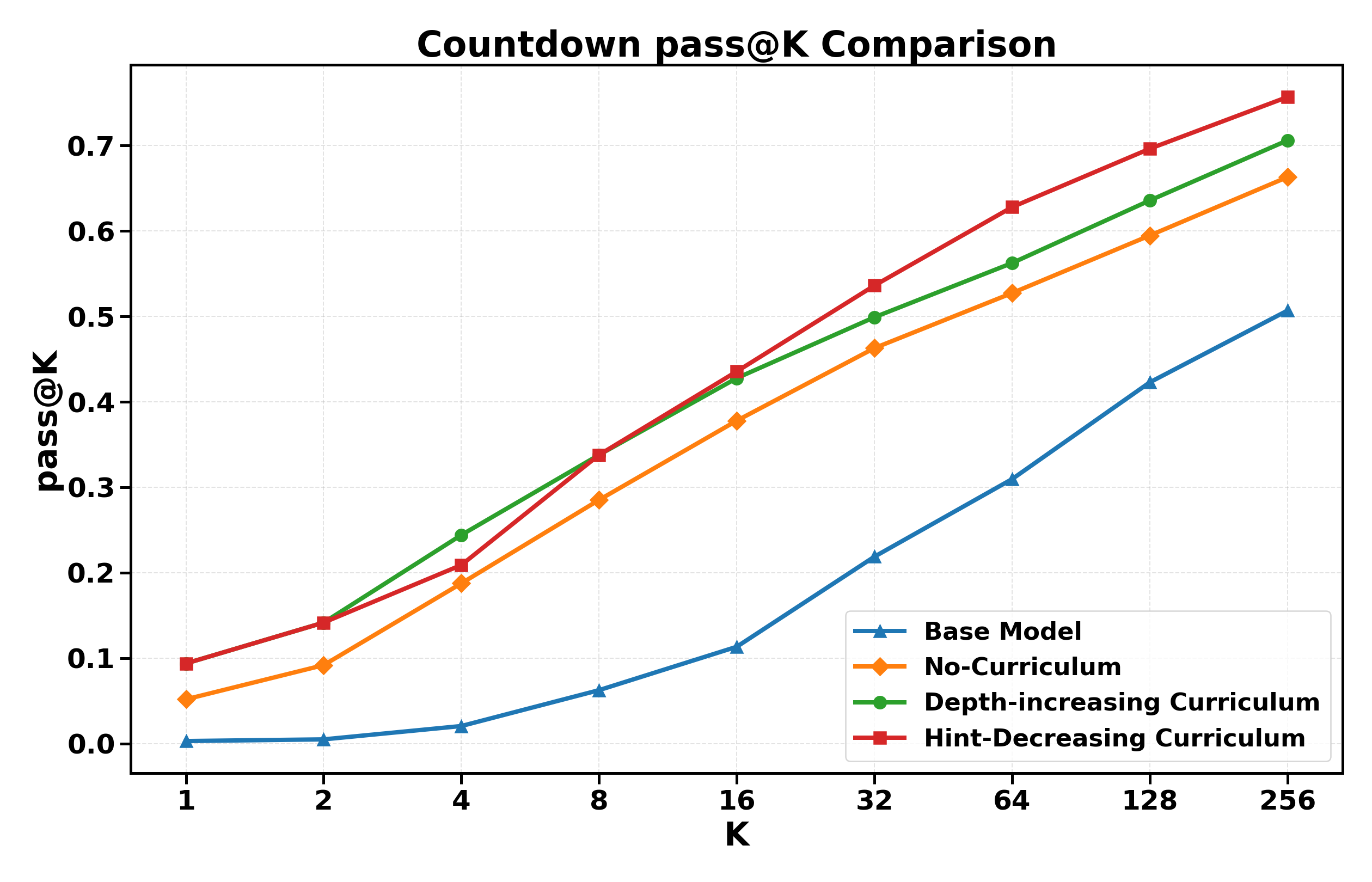}
        \caption{Total validation set.}
   \end{subfigure}
    \hfill
    \begin{subfigure}[b]{0.32\linewidth}
        \includegraphics[width=\linewidth]{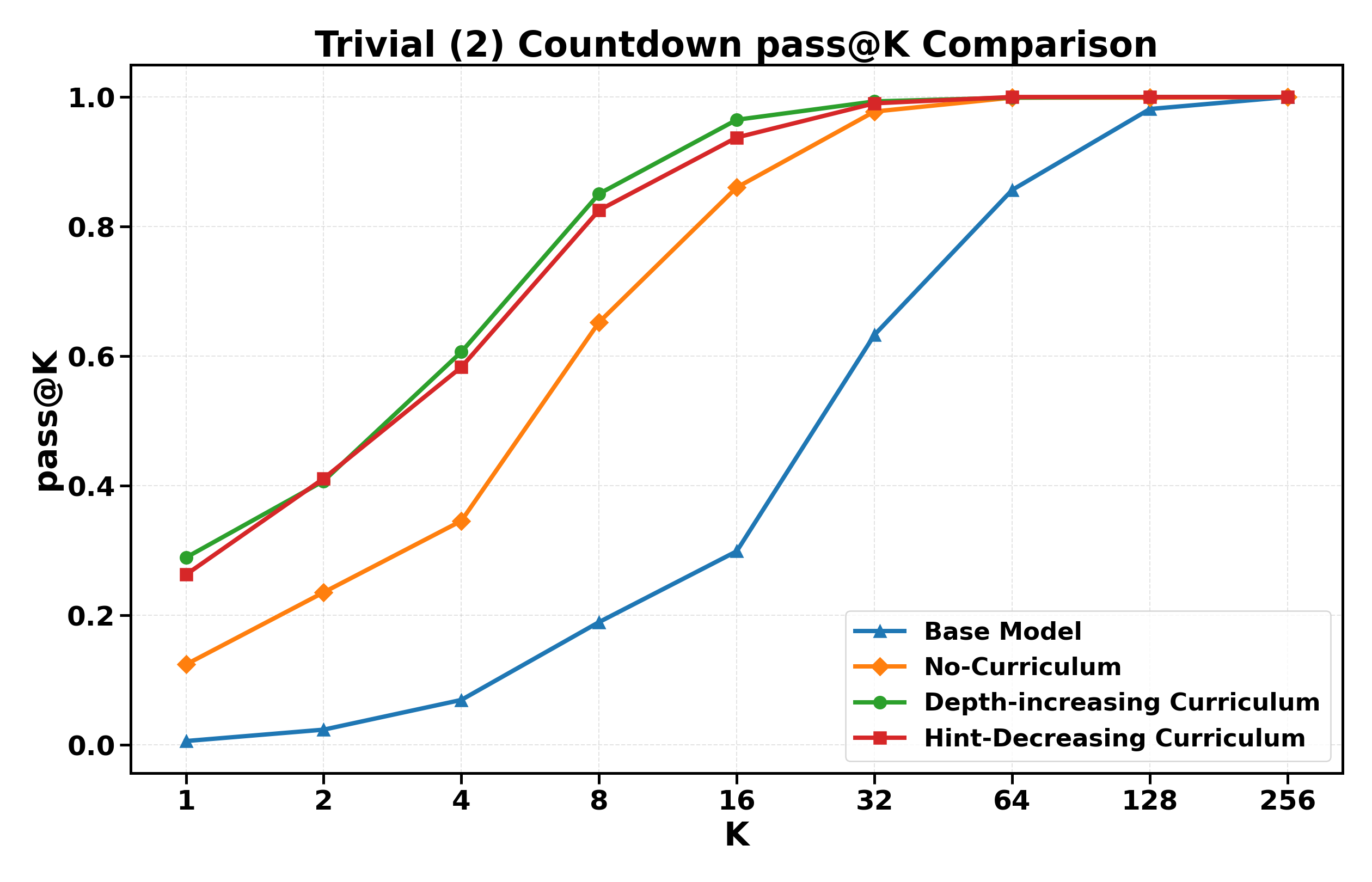}
        \caption{Trivial (2 ops).}
    \end{subfigure}
    \hfill
    \begin{subfigure}[b]{0.32\linewidth}
        \includegraphics[width=\linewidth]{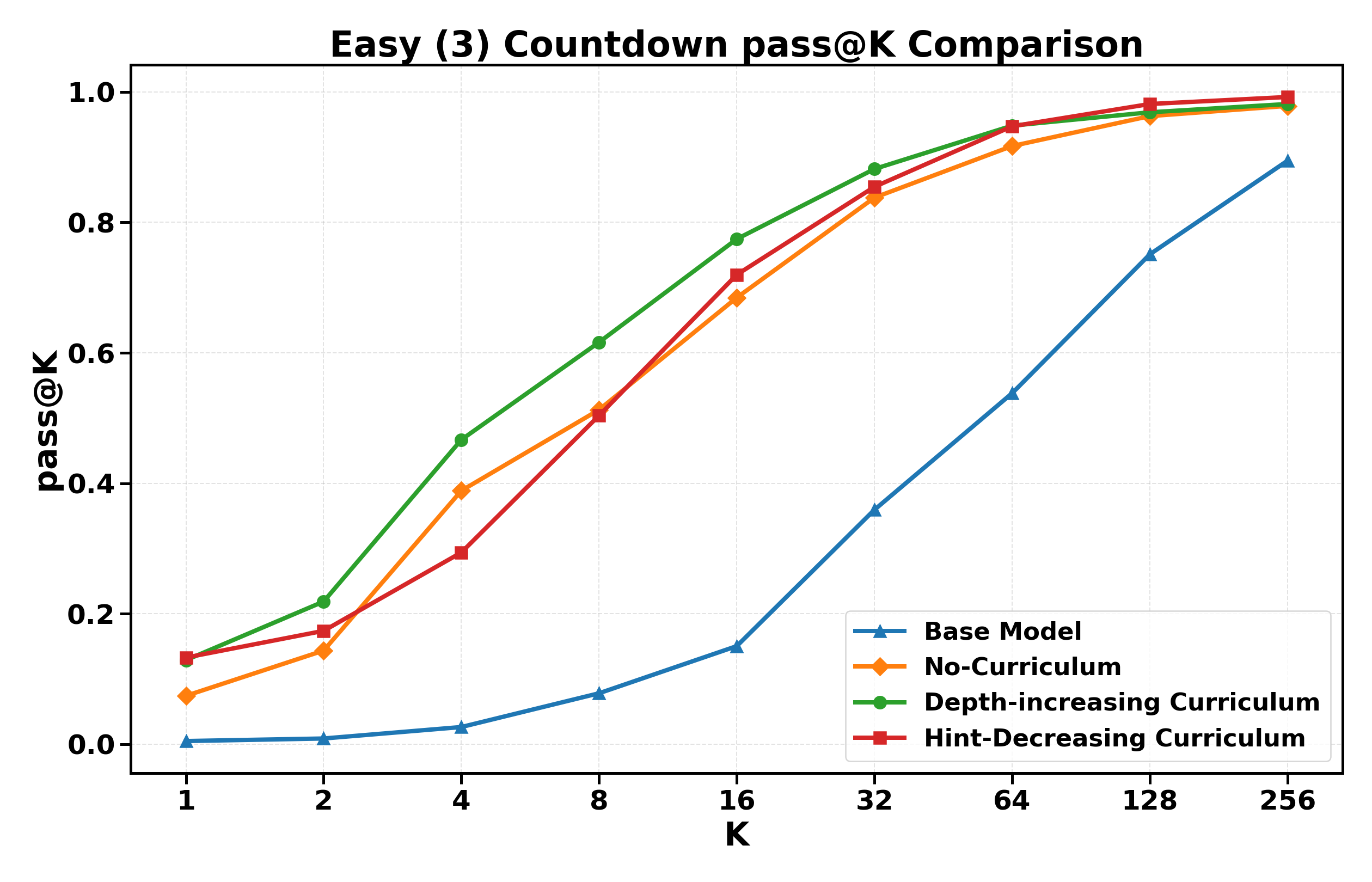}
        \caption{Easy (3 ops).}
    \end{subfigure}
    \hfill
    \begin{subfigure}[b]{0.32\linewidth}
        \includegraphics[width=\linewidth]{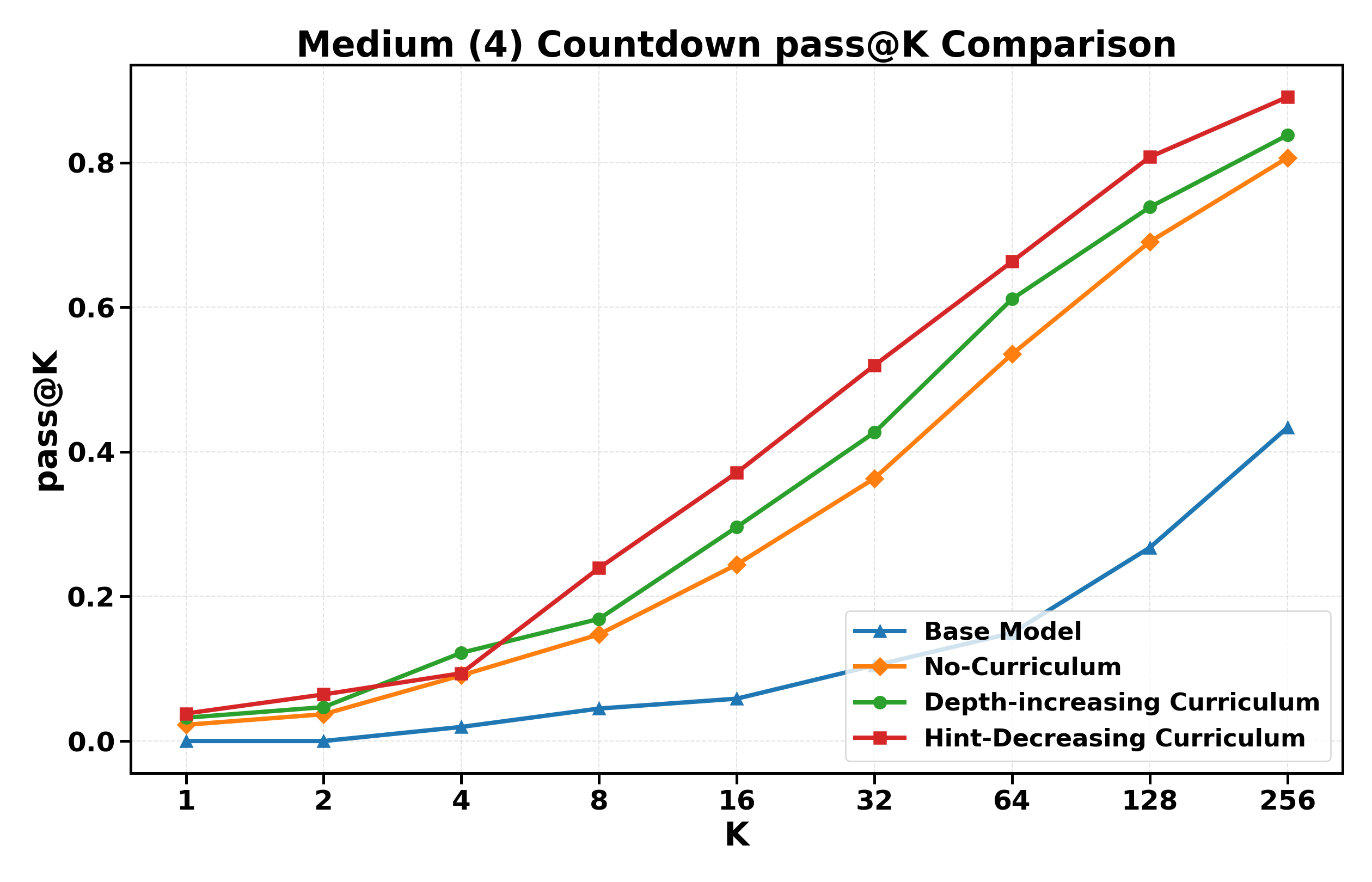}
        \caption{Medium (4 ops).}
    \end{subfigure}
    \hfill
    \begin{subfigure}[b]{0.32\linewidth}
        \includegraphics[width=\linewidth]{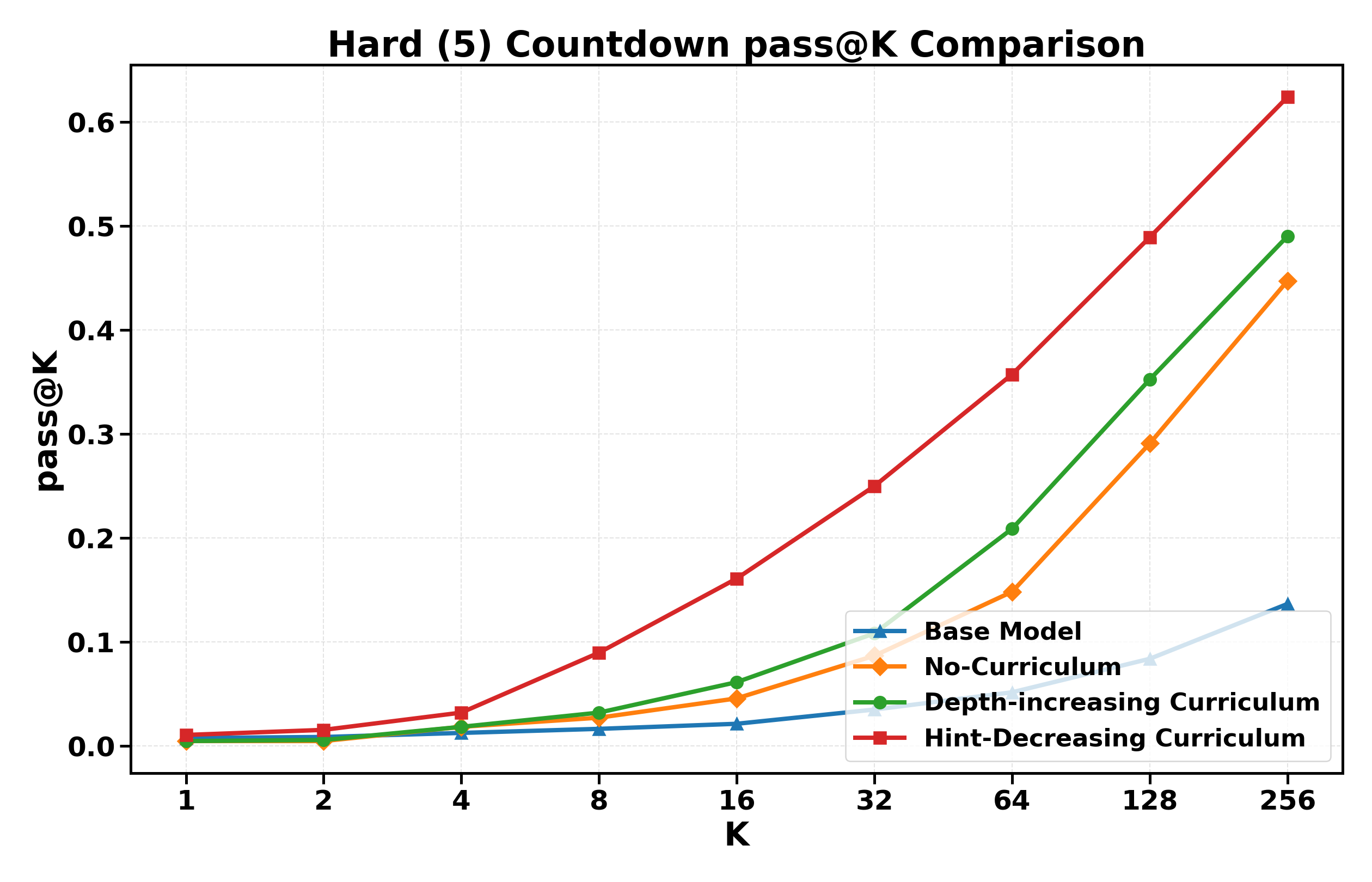}
        \caption{Hard (5 ops).}
    \end{subfigure}
    \hfill
    \begin{subfigure}[b]{0.32\linewidth}
        \includegraphics[width=\linewidth]{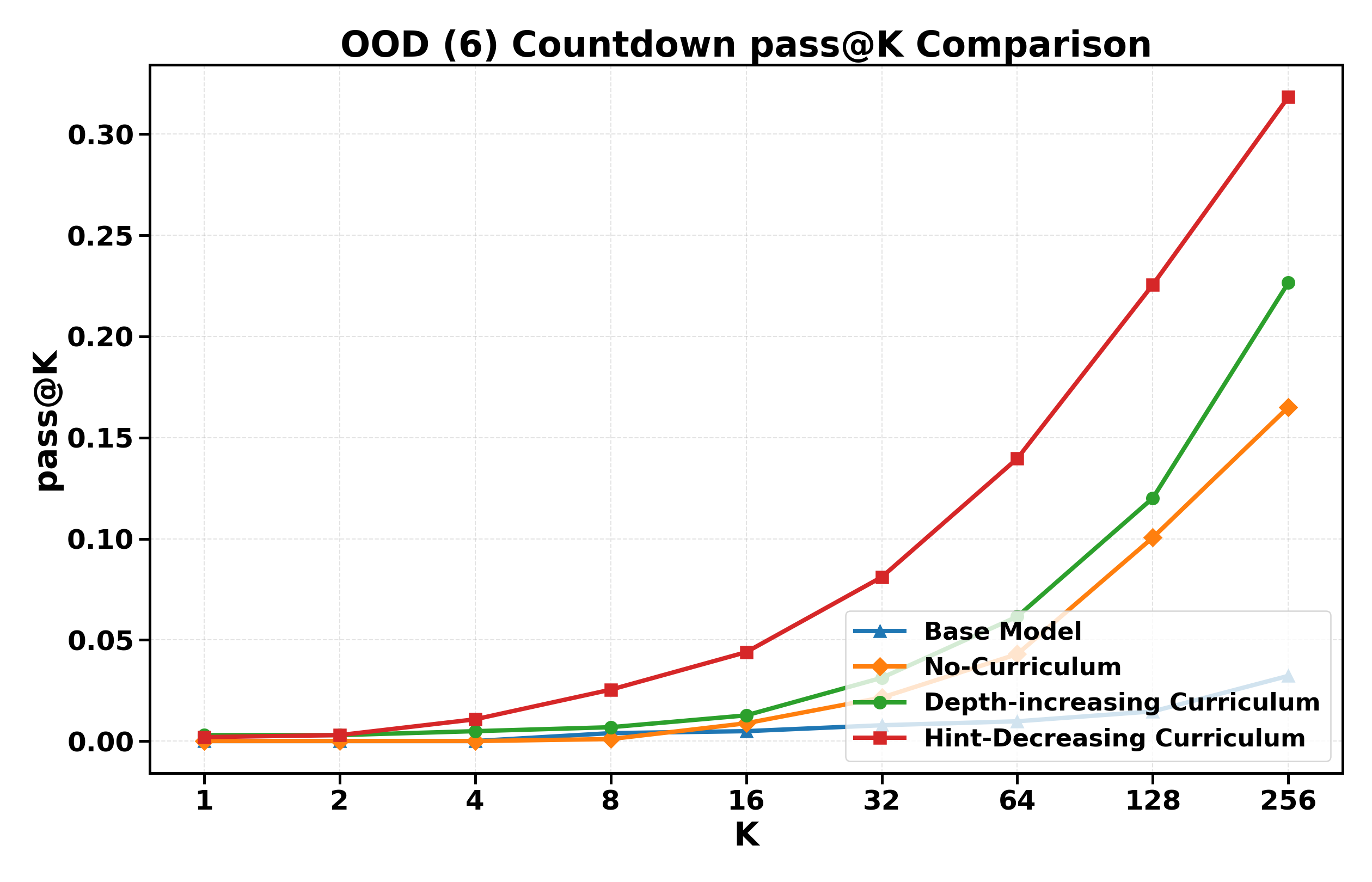}
        \caption{OOD (6 ops).}
    \end{subfigure}
\caption{
\textbf{Test performance on Countdown across difficulty strata.}
Both Depth-Increasing (E2H~\citep{parashar2025curriculum}, code available at \hyperlink{E2H-Reasoning}{https://github.com/divelab/E2H-Reasoning}) and Hint-Decreasing (UFT~\citep{liu2025UFT}, code available at \hyperlink{UFT}{https://github.com/liumy2010/UFT}) curricula outperform the No-Curriculum GRPO baseline as well as the base model Qwen2.5-1.5B Instruct as reasoning depth increases. The largest relative gains appear on the Hard and OOD subsets, indicating improved generalization to longer operation chains. 
}
\label{fig:countdown_results}
\end{figure}

\subsection{Implementation on Math}
\label{sec:math_impl}

All Math experiments are implemented in the same \texttt{Curriculum\_EoH} GRPO pipeline as Countdown, again with a fixed backbone and reward API.
All Math runs are initialized from the same local Hugging Face snapshot of
\[
\texttt{Qwen/Qwen2.5-1.5B-Instruct},
\]
with LoRA adaptation on top of the frozen base model.
All reported Math experiments use \textbf{NVIDIA A100 80GB PCIe} GPUs.
In our main pipeline, E2H and No-Curriculum training are executed in a four-GPU training setup, while the final pass@K plotting is performed on two A100 GPUs.
The final Hint-Decreasing curve reported in Fig.~\ref{fig:math_results} uses the rerun adapter trained from the same Qwen backbone under the UFT-aligned implementation, again on a four-GPU A100 setup.

We use the \texttt{divelab/math} dataset with level annotations
\[
\mathcal{D}_1,\mathcal{D}_2,\mathcal{D}_3,\mathcal{D}_4,\mathcal{D}_5,
\]
corresponding to Trivial (1), Easy (2), Medium (3), Hard (4), and OOD (5), respectively.
Training uses in-distribution levels $\mathcal{D}_1\!\sim\!\mathcal{D}_4$, while $\mathcal{D}_5$ is held out for OOD evaluation.

The reward is
\[
r(x,y)=
\begin{cases}
1, & \text{correct final answer with valid format},\\
0.1, & \text{valid format but incorrect answer},\\
0, & \text{otherwise},
\end{cases}
\]
where correctness is determined by normalized symbolic matching of boxed final answers.

For E2H, cosine progress maps to the maximal in-distribution level
\[
d_t = 1+\lfloor 3s_t\rfloor,
\qquad
\mathcal{D}_{\le d_t}=\bigcup_{k=1}^{d_t}\mathcal{D}_k.
\]

For Math Hint-Decreasing, we use the UFT-aligned variant in Sec.~\ref{sec:curriculum_realistic} with
\[
(\rho_{\max},\rho_{\min})=(0.95,0.05),\qquad
\text{num\_slices}=5,\qquad
\beta=10^{-3},\qquad
\alpha=0.
\]
Thus teacher CoT prefixes are initially long and gradually shortened, while the RL-controlled suffix expands correspondingly.

\begin{table}[H]
\centering
\small
\caption{Math backbone, hardware, and hyperparameters.}
\label{tab:math_hparams}
\begin{tabular}{|p{0.33\linewidth}|p{0.60\linewidth}|}
\hline
Backbone initialization & \texttt{Qwen/Qwen2.5-1.5B-Instruct} (local Hugging Face snapshot, LoRA finetuning) \\
\hline
Hardware & NVIDIA A100 80GB PCIe. Depth-Increasing (E2H~\citep{parashar2025curriculum}), Hint-Decreasing (UFT~\citep{liu2025UFT}) and No-Curriculum GRPO training: 4 GPUs: 4 GPUs. Final pass@K plotting: 2 GPUs. \\
\hline
Dataset & \hyperlink{divelab/math}{https://huggingface.co/datasets/divelab/math} \\
\hline
Train / test levels & Train levels: $0,1,2,3$; test levels: $0,1,2,3,4$ \\
\hline
Train / test size & Train split after level filtering: $5194$; test split: $5000$ \\
\hline
Base objective & GRPO with $\beta=10^{-3}$ and entropy coefficient $\alpha=0$ \\
\hline
Learning rate & $1\times 10^{-6}$ \\
\hline
Per-device train batch size & $16$ \\
\hline
Gradient accumulation & $4$ \\
\hline
Number of generations & $8$ \\
\hline
Steps per generation & $1$ \\
\hline
Maximum training steps & $1200$ \\
\hline
Maximum prompt length & $1600$ tokens \\
\hline
Maximum completion length & $1600$ tokens \\
\hline
Precision / memory settings & bfloat16, TF32, gradient checkpointing, vLLM server rollout \\
\hline
LoRA configuration & rank $r=32$, $\alpha=64$, dropout $=0.1$, target modules = \{\texttt{q\_proj}, \texttt{v\_proj}\} \\
\hline
Logging / evaluation cadence & logging every $10$ steps; intermediate evaluation every $400$ steps \\
\hline
Curriculum-specific settings & E2H: cosine difficulty schedule over levels $1\!\rightarrow\!4$. Hint-Decreasing: lower/upper prefix probabilities $(0.05,0.95)$, $\texttt{num\_slices}=5$, prefix SFT coefficient $10^{-3}$, entropy coefficient $0$. No-Curriculum: fixed balanced level sampling. \\
\hline
Pass@K evaluation & $K\in\{1,2,4,8,16,32,64,128,256\}$ with a fixed random seed and identical decoding configuration across methods \\
\hline
\end{tabular}
\end{table}

\begin{figure}[H]
    \centering
    \begin{subfigure}[b]{0.32\linewidth}
        \includegraphics[width=\linewidth]{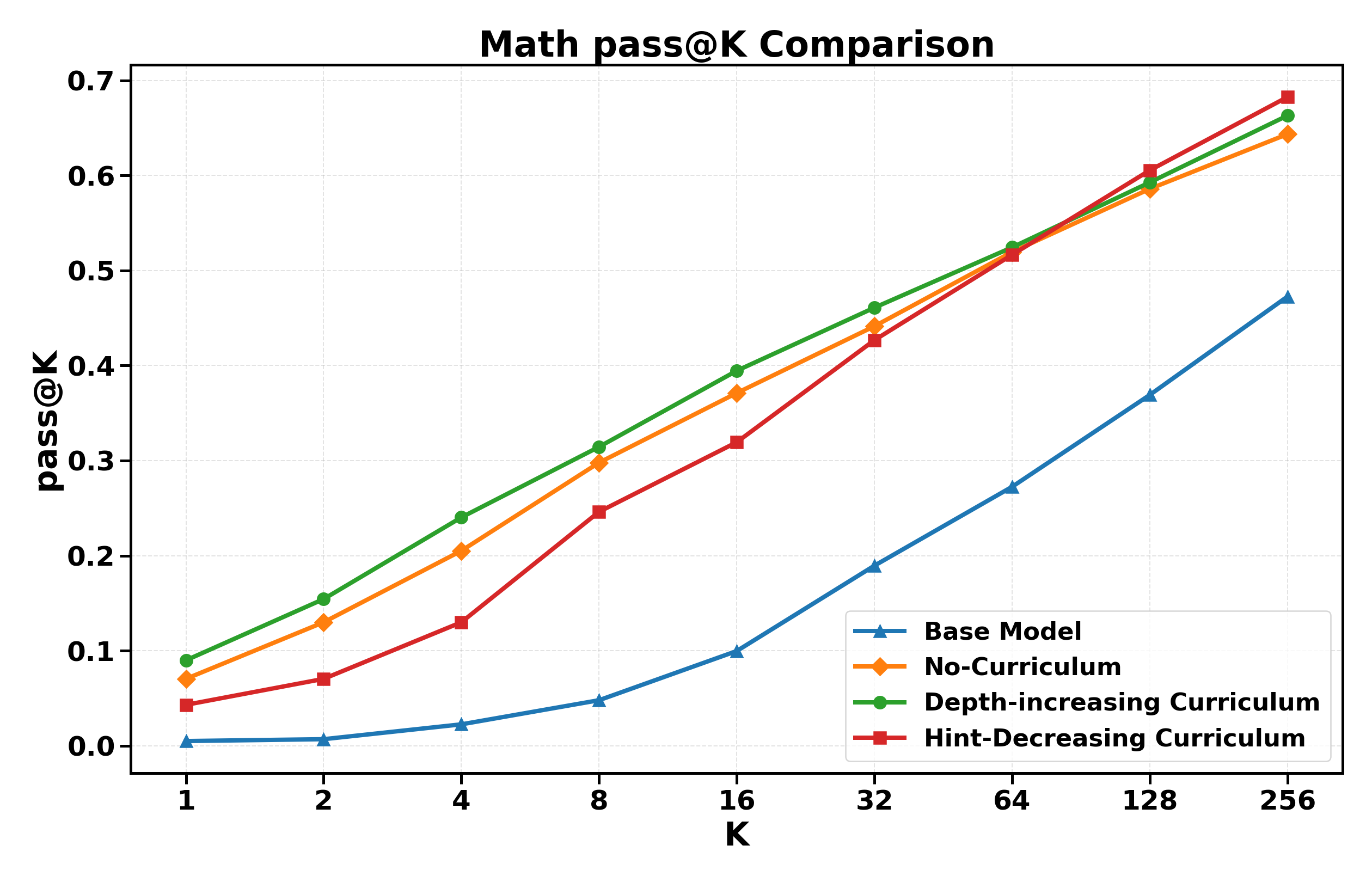}
        \caption{Total validation set.}
   \end{subfigure}
    \hfill
    \begin{subfigure}[b]{0.32\linewidth}
        \includegraphics[width=\linewidth]{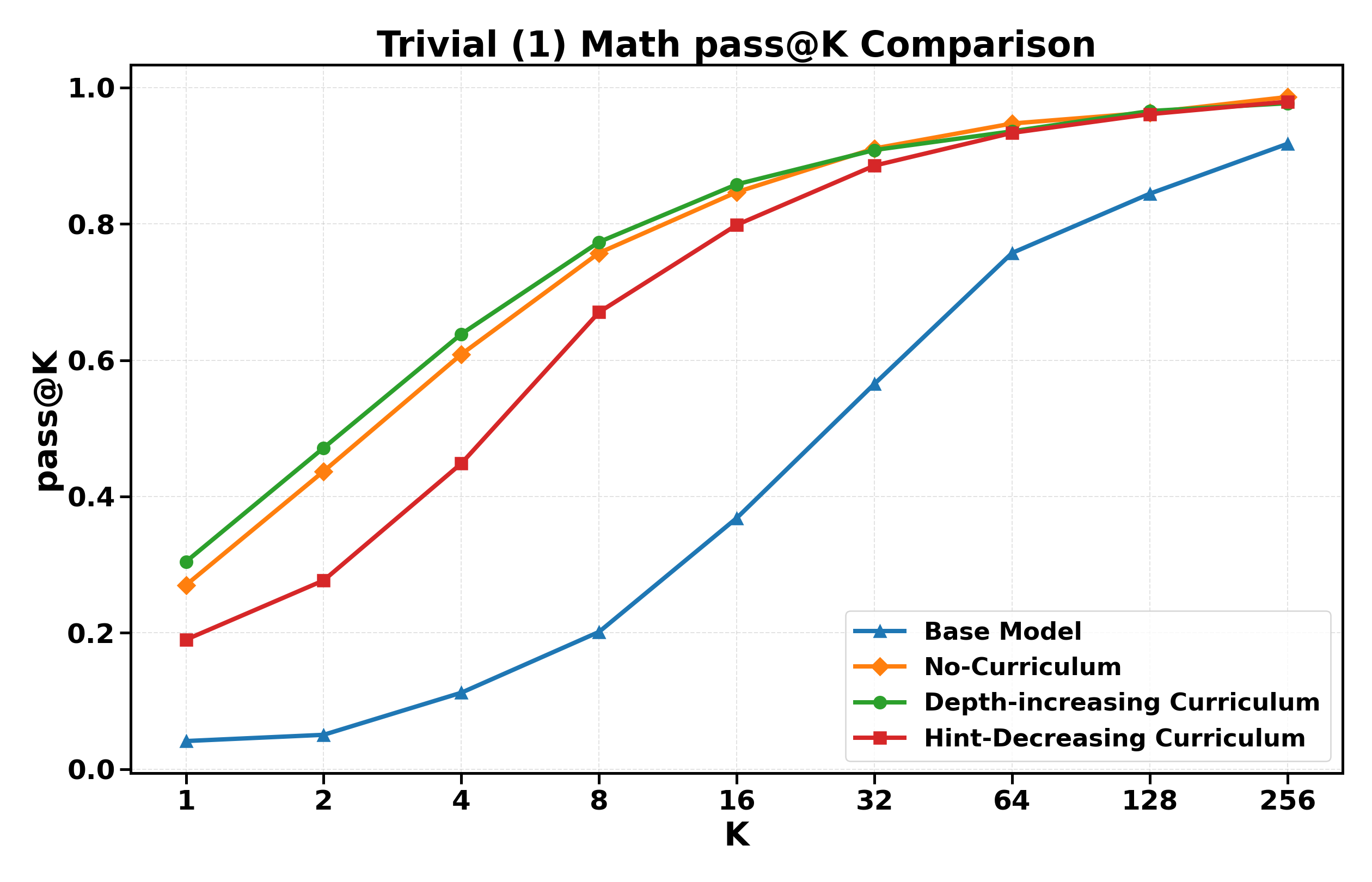}
        \caption{Trivial (1).}
    \end{subfigure}
    \hfill
    \begin{subfigure}[b]{0.32\linewidth}
        \includegraphics[width=\linewidth]{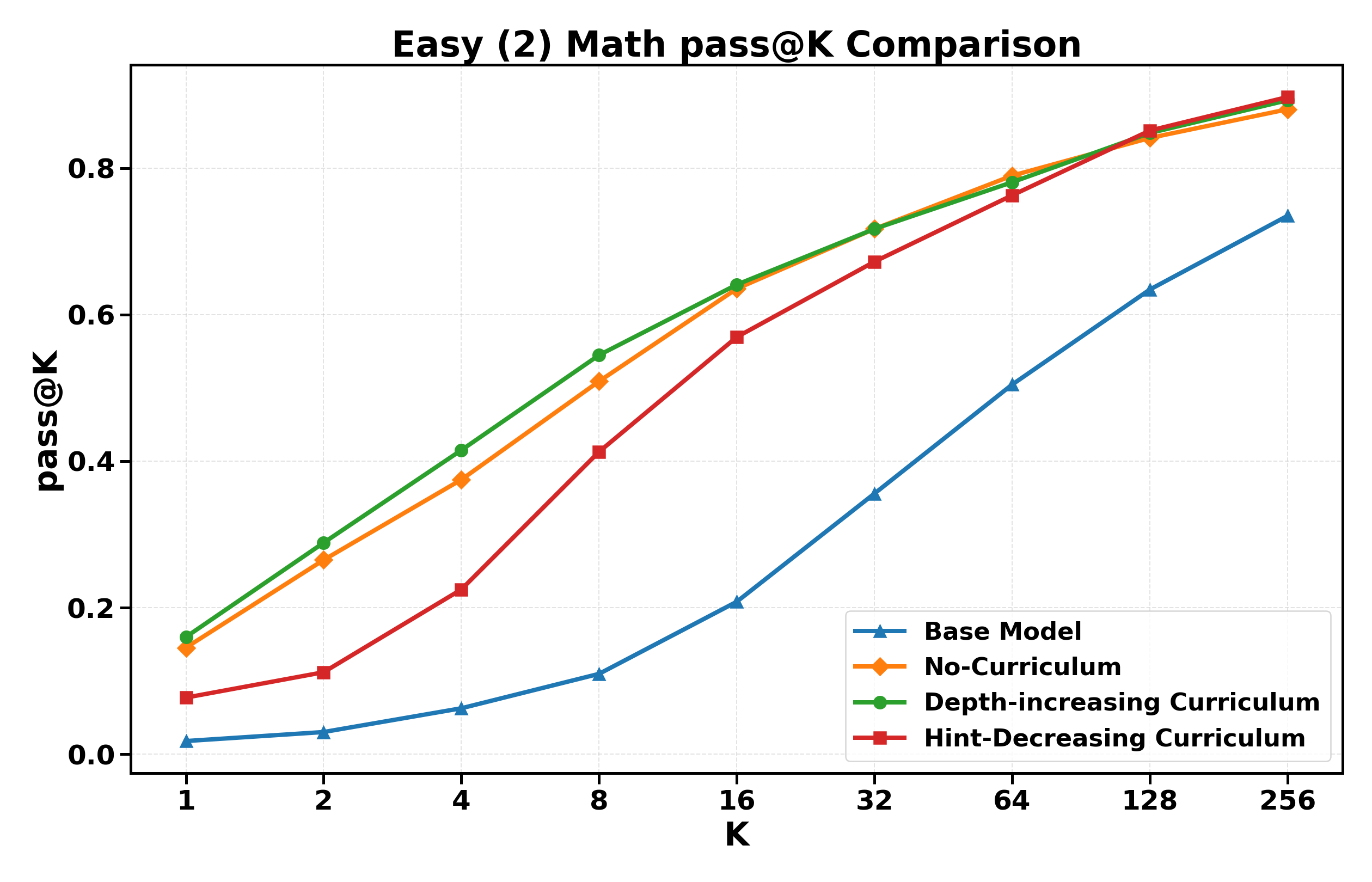}
        \caption{Easy (2).}
    \end{subfigure}
    \hfill
    \begin{subfigure}[b]{0.32\linewidth}
        \includegraphics[width=\linewidth]{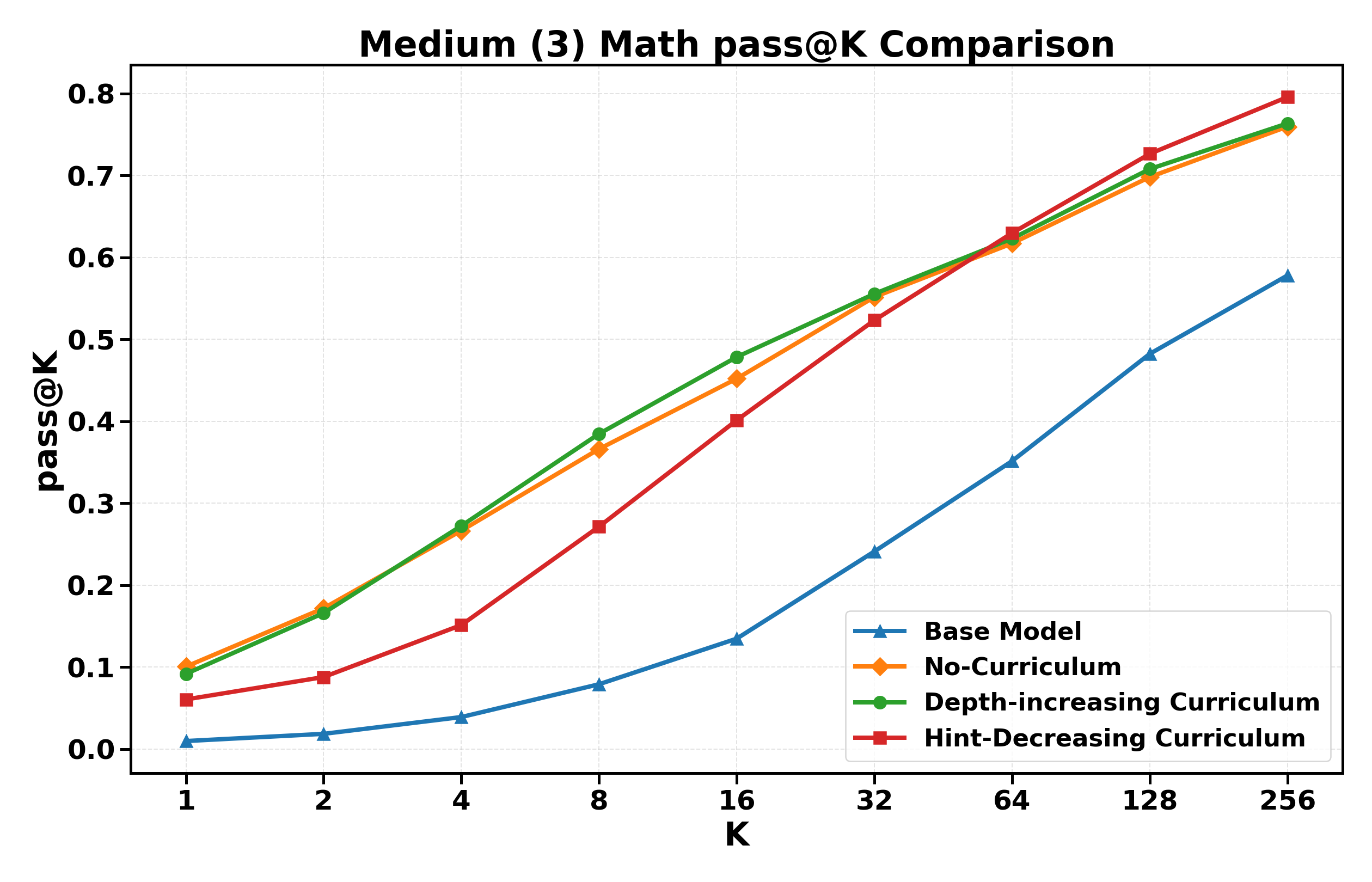}
        \caption{Medium (3).}
    \end{subfigure}
    \hfill
    \begin{subfigure}[b]{0.32\linewidth}
        \includegraphics[width=\linewidth]{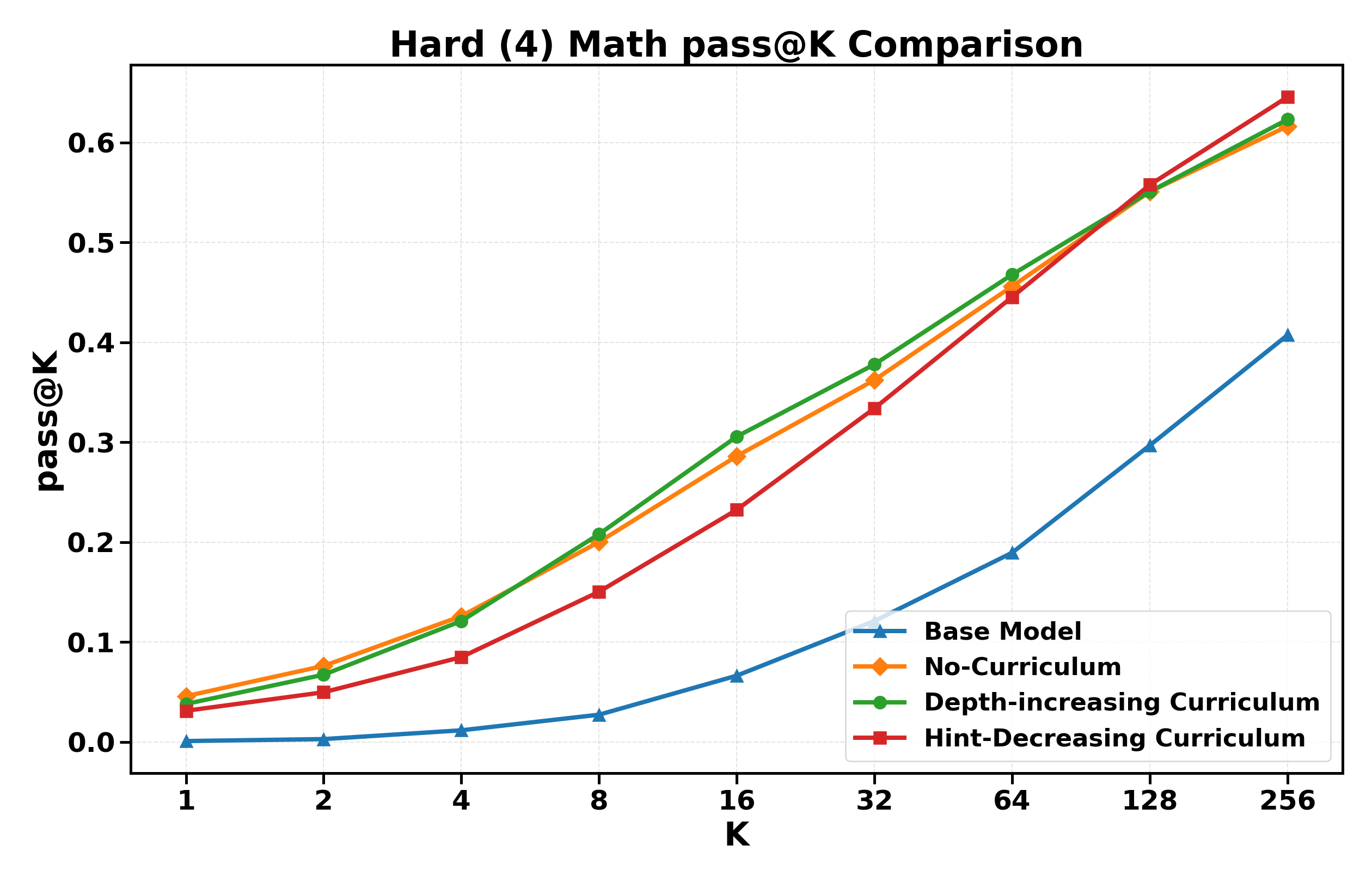}
        \caption{Hard (4).}
    \end{subfigure}
    \hfill
    \begin{subfigure}[b]{0.32\linewidth}
        \includegraphics[width=\linewidth]{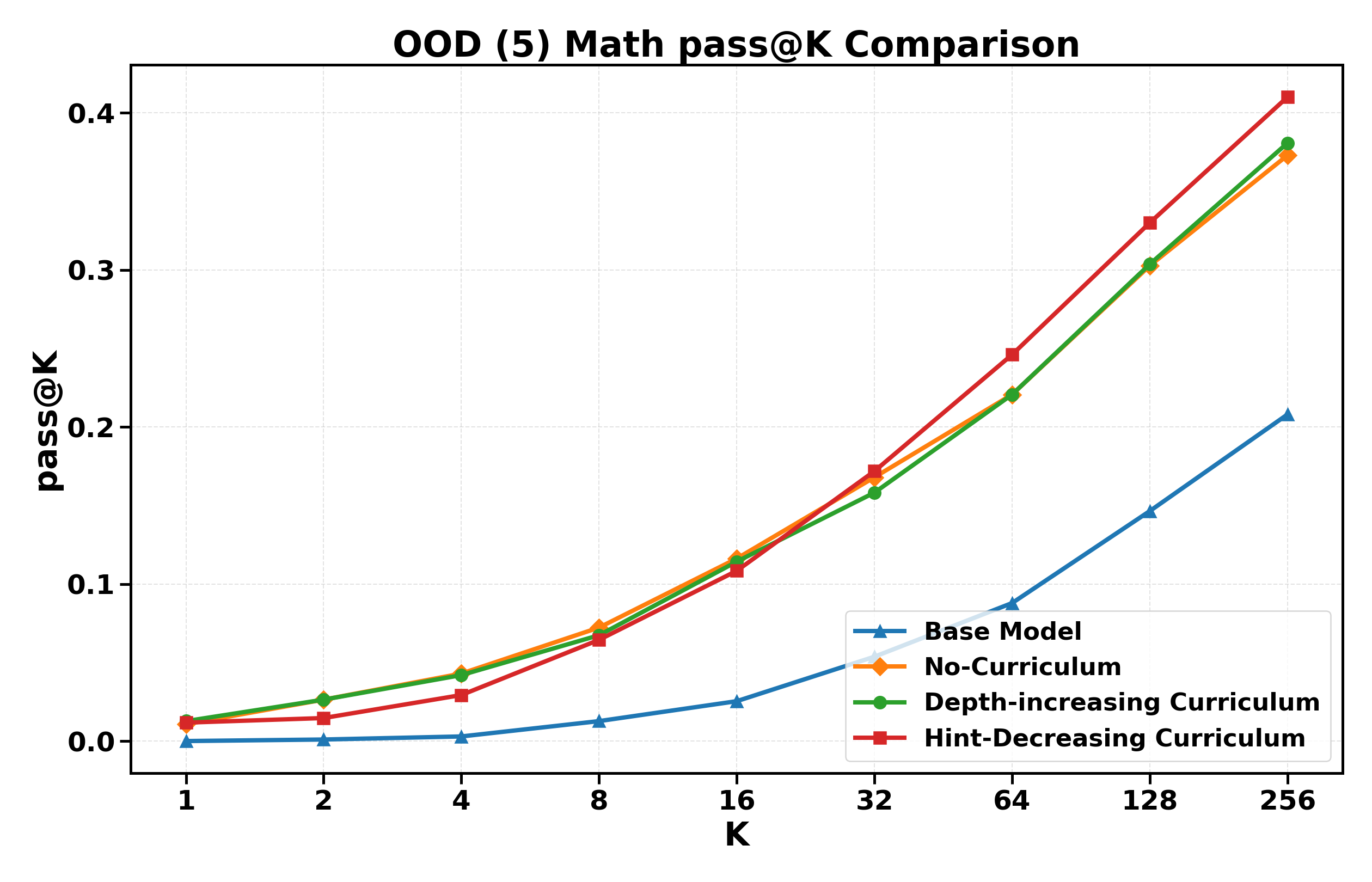}
        \caption{OOD (5).}
    \end{subfigure}
\caption{
\textbf{Test performance on Math across difficulty strata.}
We again compare three training strategies implemented in the same GRPO framework: Depth-Increasing Curriculum (adapted from E2H), Hint-Decreasing Curriculum (adapted from UFT), and a No-Curriculum baseline. Both curriculum strategies consistently improve over the Non-Curriculum GRPO as well as the base model, and the gains become larger as problem difficulty increases. The OOD results further suggest that curriculum-guided training improves transfer to harder reasoning regimes.
}
\label{fig:math_results}
\end{figure}

\subsection{Implementation on Blocksworld}
\label{sec:blocksworld_impl}

All Blocksworld experiments are implemented in the same GRPO pipeline as Countdown and Math, using the same local snapshot of
\[
\texttt{Qwen/Qwen2.5-1.5B-Instruct}
\]
with LoRA adaptation on top of the frozen base model.
All reported Blocksworld runs use \textbf{NVIDIA A100 80GB PCIe} GPUs.
For the results reported in Fig.~\ref{fig:blocksworld_results}, the Depth-Increasing run uses a three-GPU setup (one vLLM rollout GPU and two training GPUs), while the Hint-Decreasing and No-Curriculum runs use five-GPU setups (one vLLM rollout GPU and four training GPUs).
Final pass@K plots are rendered with fixed-seed vLLM decoding on two A100 GPUs.

We use the \texttt{divelab/blocksworld} dataset with five difficulty strata
\[
\mathcal{D}_0,\mathcal{D}_1,\mathcal{D}_2,\mathcal{D}_3,\mathcal{D}_4.
\]
Training uses in-distribution levels $\mathcal{D}_0\!\sim\!\mathcal{D}_3$ (8000 examples in total), while $\mathcal{D}_4$ is held out as the hardest OOD evaluation subset.
The full evaluation split contains 482 instances.

Unlike Math, the teacher signal in Blocksworld is not a free-form derivation but a gold action plan stored in the \texttt{solution} field.
The model is prompted to produce a response with a \texttt{<think>} block followed by an \texttt{<answer>} block, where the \texttt{<answer>} block contains the executable action sequence.
The reward combines a format term and a plan-execution term:
\[
r(x,y)=r_{\mathrm{fmt}}(y)+2\,r_{\mathrm{plan}}(x,y).
\]
Here, $r_{\mathrm{fmt}}(y)=1$ if the output contains exactly one valid \texttt{<think>} block and one valid \texttt{<answer>} block in the required order, and $0$ otherwise.
The term $r_{\mathrm{plan}}(x,y)$ is computed by executing the predicted action sequence in a symbolic Blocksworld simulator starting from the initial state.
Invalid action sequences receive zero reward; valid plans that reach the goal receive positive reward, with a mild normalization favoring shorter successful plans relative to the expert plan.

For Depth-Increasing, cosine progress maps to the maximal in-distribution level
\[
d_t=\lfloor 3s_t\rfloor,
\qquad
\mathcal{D}_{\le d_t}=\bigcup_{k=0}^{d_t}\mathcal{D}_k.
\]
Thus training expands from the easiest planning instances to the full in-distribution set.

For the reported Blocksworld Hint-Decreasing implementation, the gold action sequence in \texttt{solution} serves as the teacher hint, and the curriculum decreases the visible expert-plan prefix over training.
Concretely, we use the same cosine UFT schedule as in the main pipeline with
\[
(\rho_{\max},\rho_{\min})=(0.95,0.05),\qquad
\text{num\_slices}=5,\qquad
\beta=10^{-3},\qquad
\alpha=0.
\]
Hence early training exposes a long prefix of the expert plan, while later training exposes only a short prefix and leaves more of the remaining planning trajectory to be completed by the policy under GRPO.
The No-Curriculum baseline uses fixed balanced sampling over $\mathcal{D}_0,\ldots,\mathcal{D}_3$ without difficulty expansion or hint scheduling.

For pass@K, we evaluate
\[
K\in\{1,2,4,8,16,32,64,128,256\}
\]
with the same decoding configuration across all methods (temperature $0.7$, top-$p=0.9$, top-$k=50$, and maximum completion length 512).

\begin{table}[H]
\centering
\small
\caption{Blocksworld backbone, hardware, and hyperparameters.}
\label{tab:blocksworld_hparams}
\begin{tabular}{|p{0.33\linewidth}|p{0.60\linewidth}|}
\hline
Backbone initialization & \texttt{Qwen/Qwen2.5-1.5B-Instruct} (local Hugging Face snapshot, LoRA finetuning) \\
\hline
Hardware & NVIDIA A100 80GB PCIe. Depth-Increasing (E2H~\citep{parashar2025curriculum}) training: 3 GPUs. Hint-Decreasing (UFT~\citep{liu2025UFT}) training: 5 GPUs. No-Curriculum GRPO training: 5 GPUs. Final pass@K plotting uses fixed-seed vLLM decoding. \\
\hline
Dataset & \hyperlink{divelab/blocksworld}{https://huggingface.co/datasets/divelab/blocksworld} \\
\hline
Train / test levels & Train levels: $0,1,2,3$; test levels: $0,1,2,3,4$ \\
\hline
Train / test size & Train split after level filtering: $5000$; test split: $482$ \\
\hline
Base objective & GRPO with $\beta=10^{-3}$ and entropy coefficient $\alpha=0$ \\
\hline
Learning rate & $1\times 10^{-6}$ \\
\hline
Per-device train batch size & $2$ \\
\hline
Gradient accumulation & $4$ \\
\hline
Number of generations & Depth-Increasing / No-Curriculum: $2$; Hint-Decreasing: $4$ \\
\hline
Steps per generation & $2$ \\
\hline
Maximum training steps & $1600$ \\
\hline
Maximum prompt length & $1600$ tokens \\
\hline
Maximum completion length & $512$ tokens \\
\hline
Precision / memory settings & bfloat16, TF32, gradient checkpointing, vLLM server rollout \\
\hline
LoRA configuration & rank $r=32$, $\alpha=64$, dropout $=0.1$, target modules = \{\texttt{q\_proj}, \texttt{v\_proj}\} \\
\hline
Logging / evaluation cadence & logging every $10$ steps; intermediate evaluation every $400$ steps \\
\hline
Curriculum-specific settings & E2H: cosine difficulty schedule. Hint-Decreasing: lower/upper prefix probabilities $(0.05,0.95)$, $\texttt{num\_slices}=5$, prefix SFT coefficient $10^{-3}$, entropy coefficient $0$. No-Curriculum: fixed balanced level sampling. \\
\hline
Pass@K evaluation & $K\in\{1,2,4,8,16,32,64,128,256\}$ with a fixed random seed and identical decoding configuration across methods \\
\hline
\end{tabular}
\end{table}

\begin{figure}[H]
    \centering
    \begin{subfigure}[b]{0.32\linewidth}
        \includegraphics[width=\linewidth]{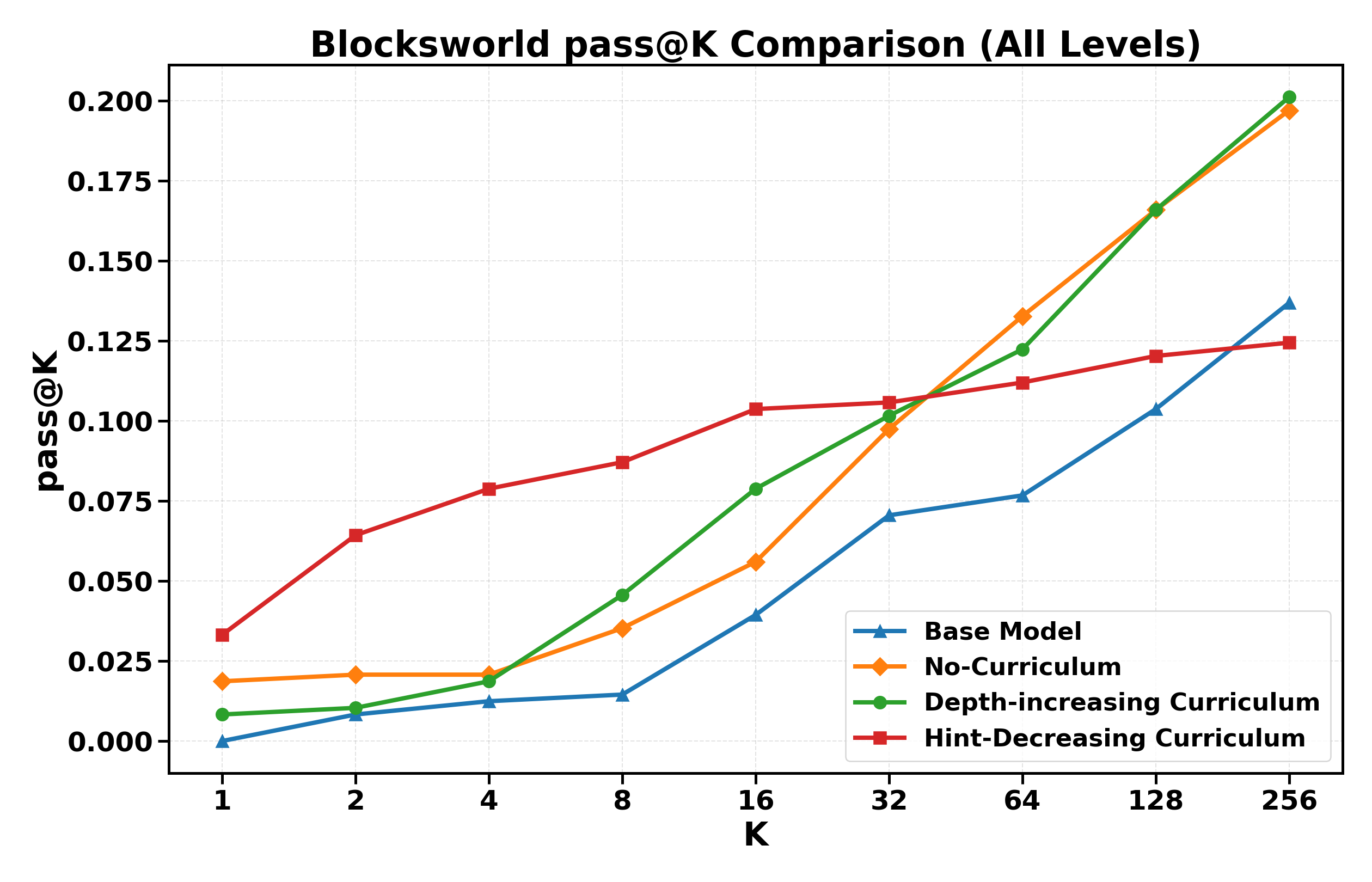}
        \caption{Total validation set.}
   \end{subfigure}
    \hfill
    \begin{subfigure}[b]{0.32\linewidth}
        \includegraphics[width=\linewidth]{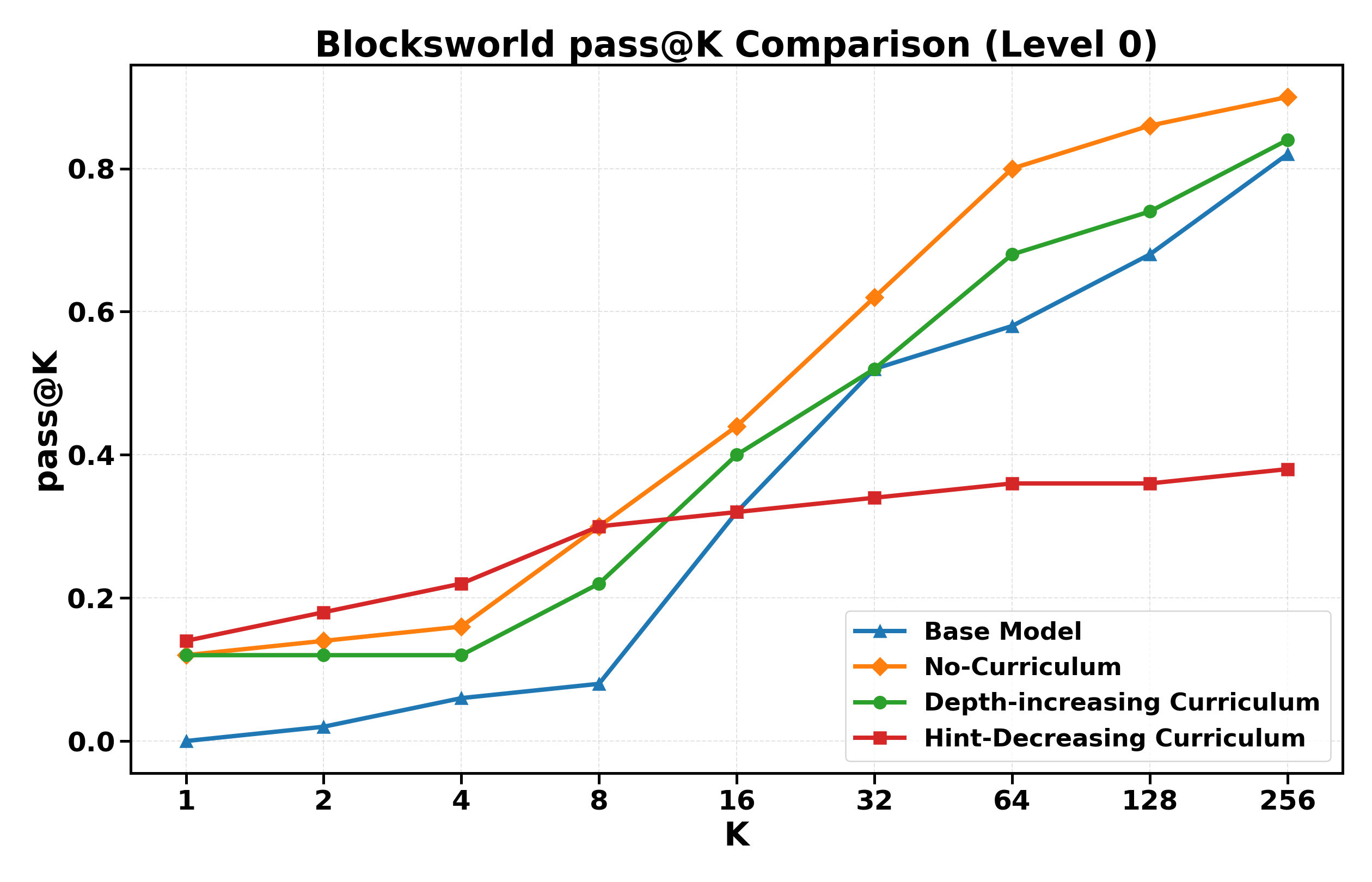}
        \caption{Level 0.}
    \end{subfigure}
    \hfill
    \begin{subfigure}[b]{0.32\linewidth}
        \includegraphics[width=\linewidth]{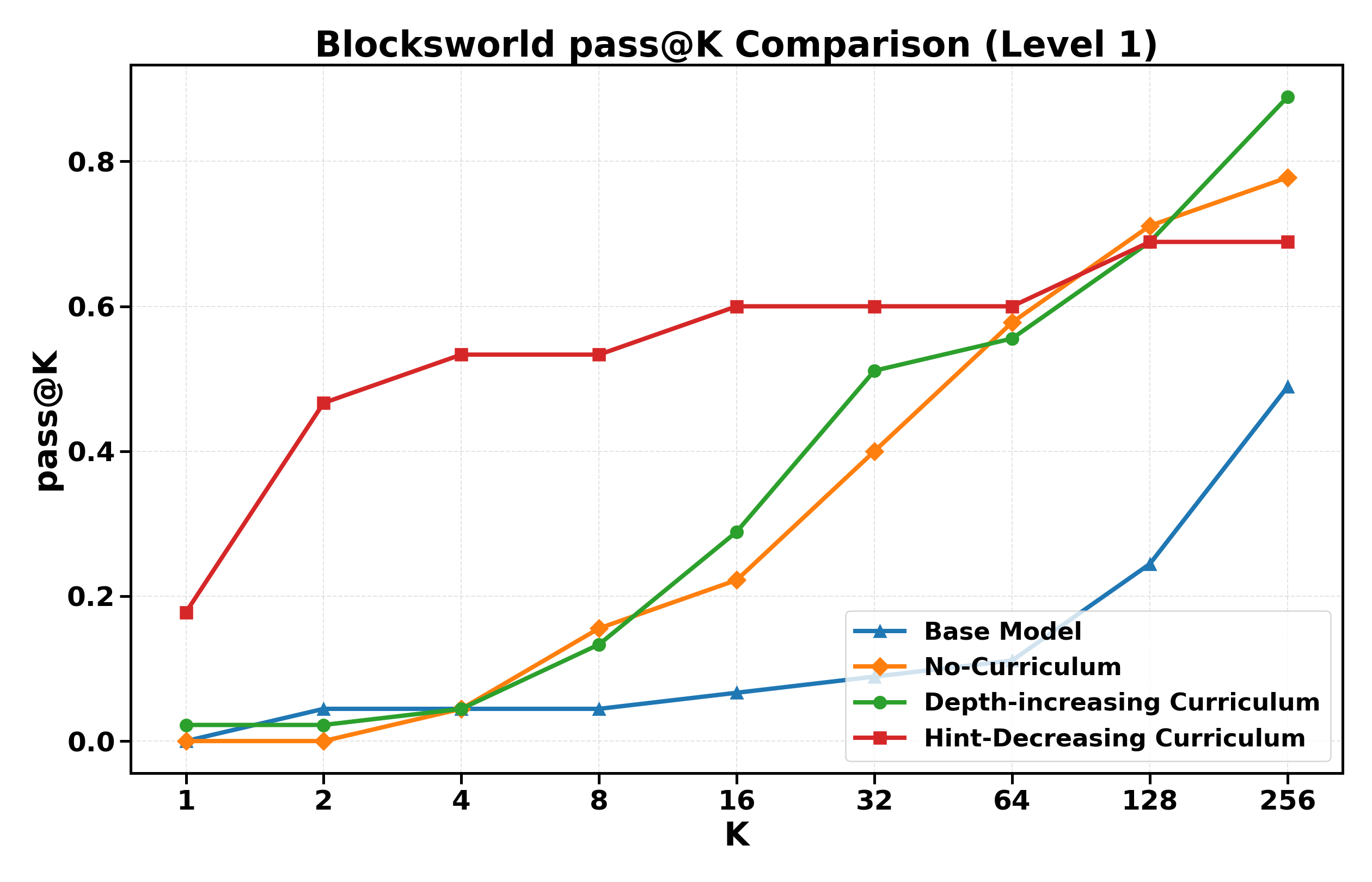}
        \caption{Level 1.}
    \end{subfigure}
    \hfill
    \begin{subfigure}[b]{0.32\linewidth}
        \includegraphics[width=\linewidth]{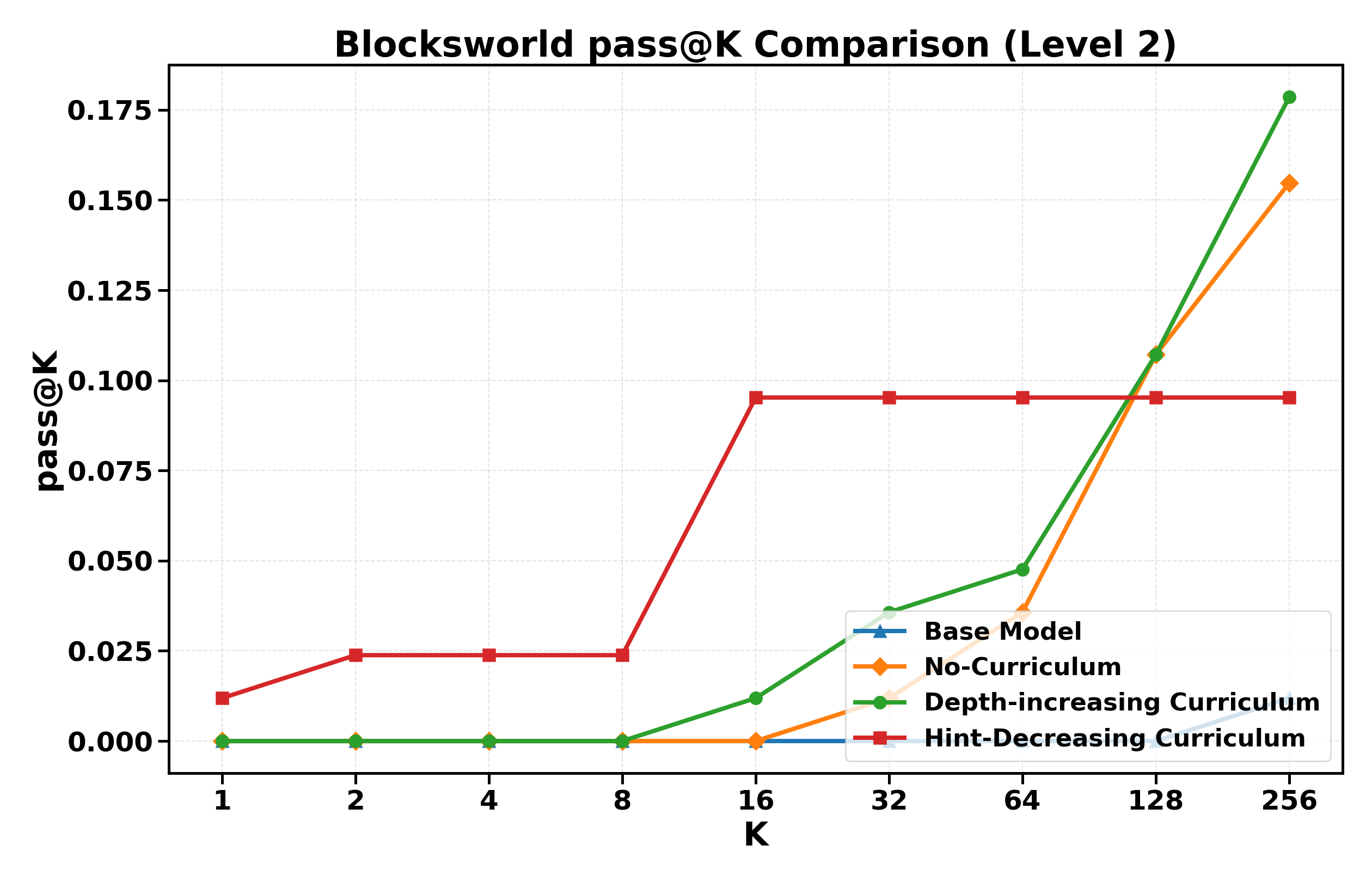}
        \caption{Level 2.}
    \end{subfigure}
    \hfill
    \begin{subfigure}[b]{0.32\linewidth}
        \includegraphics[width=\linewidth]{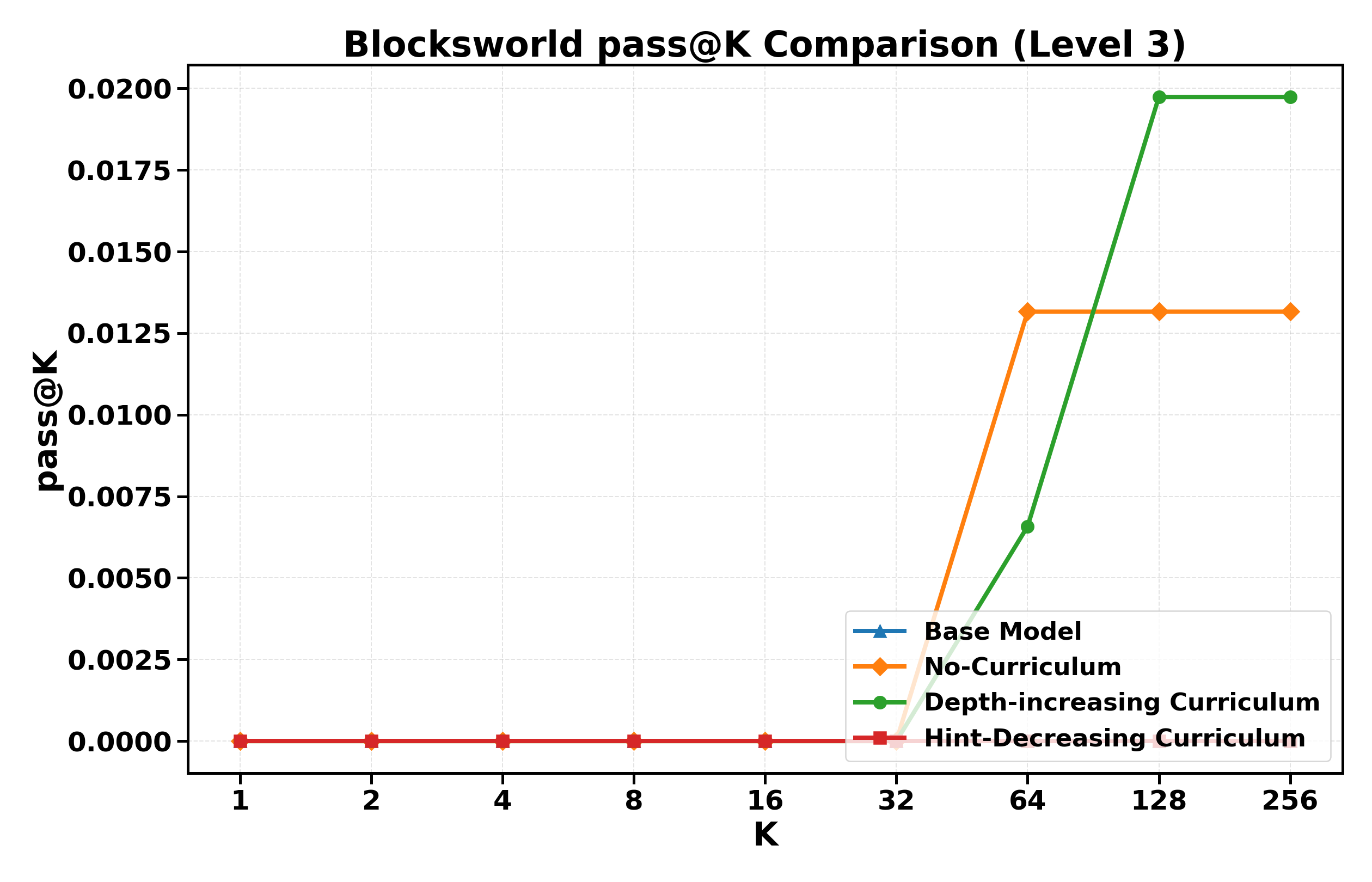}
        \caption{Level 3.}
    \end{subfigure}
    \hfill
    \begin{subfigure}[b]{0.32\linewidth}
        \includegraphics[width=\linewidth]{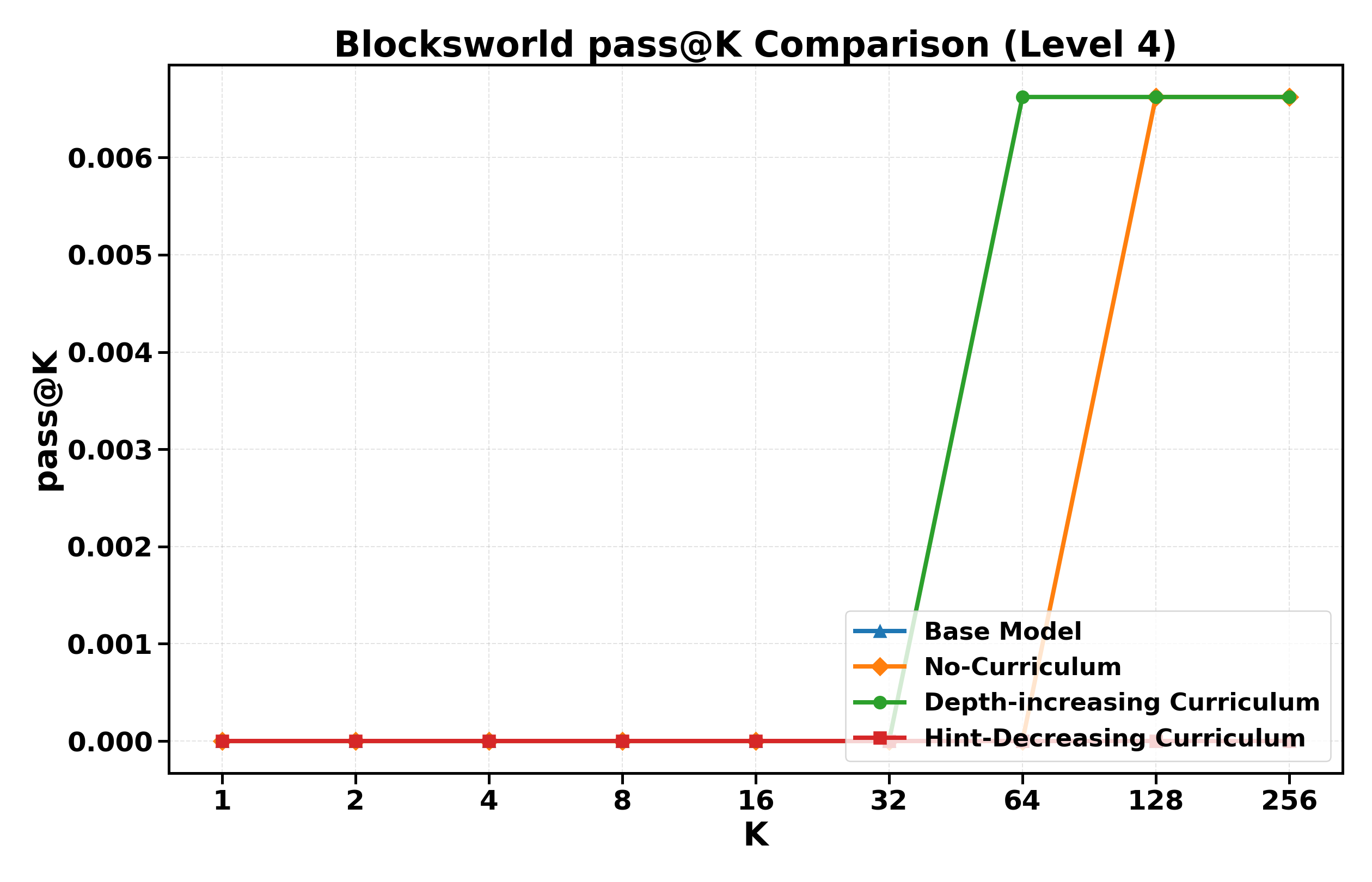}
        \caption{Level 4.}
    \end{subfigure}
\caption{
\textbf{Test performance on Blocksworld across difficulty strata.}
We compare again three training strategies implemented in the same GRPO framework: Depth-Increasing Curriculum, Hint-Decreasing Curriculum, and a No-Curriculum baseline. Both curriculum strategies improve over the base model. On the aggregate Blocksworld test set, Hint-Decreasing is the strongest method in the low-to-medium sampling regime (up to approximately $K=32$), No-Curriculum is strongest at $K=64$, and Depth-Increasing catches up at larger sampling budgets, matching the best baseline at $K=128$ and becoming the strongest method at $K=256$. Thus, except for $K=64$, one of the curriculum methods attains the best pass@K performance, indicating complementary benefits of hint-guided and difficulty-guided training in planning-based reasoning.
}
\label{fig:blocksworld_results}
\end{figure}

\section{Cases beyond transformer gradient dynamics over Tree}\label{app:cases}

\begin{proof}

Proof of Cor.~\ref{cor:curriculum-bottleneck}. The proof is direct leveraging the assumption in Eq.~\ref{eq:complexity_ass}.

\end{proof}

\begin{thm}[Curriculum Learning with Spanner Sampling]
  \label{thm:curriculum-spanner}
  Consider the theoretical settings in~\citet{foster2025goodfoundationnecessaryefficient} (realizable linear softmax parameterization), consider a curriculum of $K$ tasks with increasing difficulty, starting from a base model $\pi_{\text{ref}} := \pi^{\star}_{0}$. Suppose the sequence of optimal policies $\{\pi^{\star}_k\}_{k=1}^K$ satisfies
  \[
    \begin{aligned}
        & \mathcal{C}_{\text{cov}}(\pi^{\star}_{k}| \pi^{\star}_{k-1})=\Theta(C^{\star}), \quad \forall k \in [K],\\
        &  \mathcal{C}_{\text{cov}}(\pi^{\star}_{k_2}| \pi^{\star}_{k_1}) = \Theta((C^{\star})^{k_2 - k_1}).
    \end{aligned}
  \]
    For any $\epsilon>0$ and $\delta\in(0,1)$, by applying SpannerSampling in \citet{foster2025goodfoundationnecessaryefficient} \textbf{sequentially} through the curriculum with appropriate choices of $T_{\text{prompt}}$, $T_{\text{span}}^{k}$, and $T_{\text{exp}}^{k}$ for each task, the curriculum learning algorithm learns a policy $\hat{\pi}$ with:
  \[
  \mathbb{E}_{\hat{\pi}_k\sim\text{unif}(\hat{\pi}^{( T_{\text{exp}}^{k-1})},\ldots,\hat{\pi}^{( T_{\text{exp}}^{k})})}\left[J_{\beta}(\pi^{\star}_k) - J_{\beta}(\hat{\pi}_k)\right] \leq \epsilon,\ k\in[K] \quad 
  \]
  with probability at least $1-\delta$, and achieves the following computational efficiency bound:
  \[
  T_{\text{comp}}^{\text{curriculum}}(\epsilon, \delta) = \tilde{O}\left(K \cdot C^{\star} \cdot \frac{R_{\max}^2}{\beta^2} [\frac{R_{\max}^2}{\beta}\cdot \frac{d^2 \log^2 (\delta^{-1})}{\min\{\epsilon , \beta \}}]  \right).
  \]
  Moreover, compared to direct estimation from $\pi_{\text{ref}}$ to $\pi^{\star}_K$, which is 
  \[
  T_{\text{comp}}^{\text{direct}}(\epsilon, \delta)= \tilde{O}\left( (C^{\star})^{K} \cdot \frac{R_{\max}^2}{\beta^2} [\frac{R_{\max}^2}{\beta}\cdot \frac{d^2 \log^2 (\delta^{-1})}{\min\{\epsilon , \beta \}}] . \right)\]
  Therefore, calling Spanner Sampling in a curriculum manner achieves exponential improvement in computational complexity:
  \[
  \frac{T_{\text{comp}}^{\text{direct}}}{T_{\text{comp}}^{\text{curriculum}}} = \Omega\left(\frac{(C^{\star})^{K-1}}{K}\right)
  \]
  where the direct estimation requires $T_{\text{comp}}^{\text{direct}} = \tilde{O}\left(\mathcal{C}_{\text{cov}}(\pi^{\star}) \cdot \frac{R_{\max}^2}{\beta^2}\right) \cdot (C^{\star})^K$.
\end{thm}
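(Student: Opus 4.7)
The plan is to bootstrap the single-stage SpannerSampling guarantee of \citet{foster2025goodfoundationnecessaryefficient} sequentially along the curriculum, running one instance per stage. Concretely, set $\hat{\pi}_0:=\pi_{\text{ref}}$ and at stage $k\in[K]$ invoke SpannerSampling with reference $\hat{\pi}_{k-1}$, target $\pi^\star_k$, accuracy $\epsilon$, and confidence $\delta/K$, choosing $T_{\text{prompt}}$, $T_{\text{span}}^k$, $T_{\text{exp}}^k$ per the default scheduling in \citet{foster2025goodfoundationnecessaryefficient}. The single-stage guarantee I would quote is that the returned policy satisfies $\mathbb{E}[J_\beta(\pi^\star)-J_\beta(\hat{\pi})]\le\epsilon$ with probability $1-\delta$ at a computational cost $\tilde{O}\bigl(\mathcal{C}_{\text{cov}}(\pi^\star|\pi_{\text{ref}})\cdot\tfrac{R_{\max}^2}{\beta^2}\bigl[\tfrac{R_{\max}^2}{\beta}\cdot\tfrac{d^2\log^2(\delta^{-1})}{\min\{\epsilon,\beta\}}\bigr]\bigr)$. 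The direct bound follows by a single invocation with reference $\pi_{\text{ref}}$ and target $\pi^\star_K$, substituting the global-coverage assumption $\mathcal{C}_{\text{cov}}(\pi^\star_K|\pi_{\text{ref}})=\Theta((C^\star)^K)$.

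For the curriculum route, the crux is to verify that at every stage $k$ the effective coverage stays at $\Theta(C^\star)$ despite using the \emph{learned} reference $\hat{\pi}_{k-1}$ rather than the ideal $\pi^\star_{k-1}$. I would exploit the chain inequality
\begin{equation*}
\Bigl\|\tfrac{\pi^\star_k}{\hat{\pi}_{k-1}}\Bigr\|_\infty \;\le\; \Bigl\|\tfrac{\pi^\star_k}{\pi^\star_{k-1}}\Bigr\|_\infty\cdot \Bigl\|\tfrac{\pi^\star_{k-1}}{\hat{\pi}_{k-1}}\Bigr\|_\infty,
\end{equation*}
and bound the second factor by a universal constant $B=O(1)$ by leveraging the specific output form of SpannerSampling: $\hat{\pi}_{k-1}$ is realized as a mixture over softmax policies whose logits lie in a bounded $\ell_2$-ball in the linear realization of $J_\beta$, so its log-density differs from that of $\pi^\star_{k-1}$ by at most $O(R_{\max}/\beta)$ on the relevant support, giving a bounded worst-case Radon--Nikodym derivative. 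Combined with the stepwise assumption $\mathcal{C}_{\text{cov}}(\pi^\star_k|\pi^\star_{k-1})=\Theta(C^\star)$, this yields $\mathcal{C}_{\text{cov}}(\pi^\star_k|\hat{\pi}_{k-1})=\Theta(C^\star)$ with the constant $B$ independent of $k$. Summing per-stage costs over $k=1,\ldots,K$, applying a union bound over the $K$ high-probability events (absorbing the $\log K$ term into $\tilde{O}$), and factoring out the shared polynomial prefactor gives the curriculum bound $\tilde{O}\bigl(K\cdot C^\star\cdot\frac{R_{\max}^2}{\beta^2}[\cdots]\bigr)$. The exponential-improvement ratio $\Omega((C^\star)^{K-1}/K)$ is then immediate by division.

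The main obstacle I anticipate is precisely the second factor in the chain inequality above. The $J_\beta$-suboptimality bound only controls an averaged (KL-type) discrepancy between $\hat{\pi}_{k-1}$ and $\pi^\star_{k-1}$, whereas the coverage coefficient is an $L^\infty$ quantity; without additional structure a small KL need not imply a bounded log-density ratio. The resolution will likely require either (i) exploiting that SpannerSampling outputs policies in a well-conditioned parametric family so that a bounded log-ratio holds uniformly, or (ii) restricting the linear realization class so that all candidate optimal policies within the curriculum are mutually absolutely continuous with a dimension-independent constant. The key pitfall to avoid is any per-stage multiplicative blow-up of $B$: even a $(1+o(1))^K$ accumulation across the $K$ stages would re-introduce the exponential scaling that the curriculum is designed to eliminate.
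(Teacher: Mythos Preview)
Your proposal takes a harder route than the paper and, as you yourself flag, leaves a genuine gap. The paper's proof does \emph{not} use the learned $\hat{\pi}_{k-1}$ as the reference at stage $k$; it invokes SpannerSampling with the \emph{ideal} $\pi^\star_{k-1}$ as reference and $\pi^\star_k$ as target. This is consistent with the abstract curriculum framework of Theorem~\ref{thm:curriculum-bottleneck}, where the stepwise complexity $\mathcal{C}(\pi^\star_{k}\mid\pi^\star_{k-1})$ is defined with the intermediate optimal policies themselves serving as references. Under that reading, the per-stage coverage is directly $\mathcal{C}_{\text{cov}}(\pi^\star_k\mid\pi^\star_{k-1})=\Theta(C^\star)$ by assumption, and the proof reduces to: (i) quote the single-stage SpannerSampling bound linear in coverage; (ii) sum $K$ such terms with a $\delta/K$ confidence split; (iii) compare to the direct call with coverage $\Theta((C^\star)^K)$. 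No chain inequality, no control of $\|\pi^\star_{k-1}/\hat{\pi}_{k-1}\|_\infty$, is needed.

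Your route---bootstrapping from the learned $\hat{\pi}_{k-1}$---is operationally more natural but runs into exactly the obstacle you identify: the $J_\beta$-suboptimality guarantee is a KL-type average bound and does not by itself control the $L^\infty$ density ratio $\|\pi^\star_{k-1}/\hat{\pi}_{k-1}\|_\infty$. Your suggested fix (bounded log-density via a bounded $\ell_2$-ball of logits) would give a factor like $\exp(O(R_{\max}/\beta))$, which is a constant in the problem parameters but not necessarily close to $1$, and even a fixed multiplicative constant $B>1$ per stage compounds to $B^K$ across the curriculum---precisely the exponential you are trying to avoid. The paper sidesteps all of this by treating the intermediate $\pi^\star_{k}$ as available reference policies; if you want your bootstrapped version to go through, you would need a qualitatively stronger per-stage guarantee (e.g., $\|\pi^\star_{k-1}/\hat{\pi}_{k-1}\|_\infty\le 1+o(1)$ or an additive rather than multiplicative perturbation of coverage), which is not what Foster et al.\ provide.
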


\begin{proof}
We work under the realizable linear–softmax parameterization and oracle model of \citet{foster2025goodfoundationnecessaryefficient}. Let the KL–regularized objective be $J_\beta(\cdot)$ and let $R_{\max}$, $d$, and $\beta$ denote the reward bound, feature dimension, and temperature, respectively. For any pair of policies $(\pi,\pi')$, write the coverage coefficient as $C_{\text{cov}}(\pi\mid \pi'):=\left|\frac{\pi}{\pi'}\right|_{\infty}$.

\textbf{Step 1 (Per–task guarantee from \citet{foster2025goodfoundationnecessaryefficient}).}
By Theorem 3.1 of \citet{foster2025goodfoundationnecessaryefficient}, SpannerSampling, when run to target $\pi_k^\star$ starting from a reference $\pi_{k-1}^\star$ in the linear–softmax model, returns a policy $\hat\pi$ such that

$$
\mathbb{E}_{\hat{\pi}\sim \text{unif}(\hat\pi^{(1)},\ldots,\hat\pi^{(T_{\exp})})}\!\left[J_\beta(\pi_k^\star)-J_\beta(\hat\pi)\right]\le \epsilon
$$

with probability at least $1-\delta'$, using a number of oracle computations

$$
T_{\text{comp}}^{(k)}(\epsilon,\delta')\ =\ \tilde O\!\Big(C_{\text{cov}}(\pi_k^\star\!\mid\!\pi_{k-1}^\star)\cdot \underbrace{\tfrac{R_{\max}^2}{\beta^2}\Big[\tfrac{R_{\max}^2}{\beta}\cdot \tfrac{d^2\log^2((\delta')^{-1})}{\min\{\epsilon,\beta\}}\Big]^2}_{=:~\Gamma(\epsilon,\delta')}\Big),
$$

where $\tilde O(\cdot)$ hides polylogarithmic factors not depending on the coverage coefficient.\footnote{We only need that $T_{\text{comp}}$ is (i) polynomial in $(d,\beta^{-1},\epsilon^{-1},\log(1/\delta'))$ and (ii) \emph{linear} (up to logs) in $C_{\mathrm{cov}}$, as established by Theorem 3.1 together with the matching lower bound in \citet{foster2025goodfoundationnecessaryefficient}.}  We emphasize that data efficiency $T_{\text{data}}$ is independent of coverage, while computational efficiency scales with coverage; SpannerSampling matches the linear-in-coverage lower bound.

\textbf{Step 2 (Scheduling across a curriculum).}
Apply Step 1 sequentially for tasks $k=1,2,\dots,K$, using fresh budgets $(T_{\text{prompt}},T_{\text{span}}^{k},T_{\text{exp}}^{k})$ at each stage, and set the per–stage failure probability to $\delta'=\delta/K$. The uniform–over–iterates evaluation in the statement mirrors the “return a uniformly random iterate” prescription in \citet{foster2025goodfoundationnecessaryefficient}, so the stagewise accuracy guarantee transfers verbatim. A union bound then yields that, simultaneously for all $k\in[K]$,

$$
\mathbb{E}_{\hat{\pi}_k\sim\text{unif}(\hat{\pi}^{(T_{\exp}^{k-1})},\ldots,\hat{\pi}^{(T_{\exp}^{k})})}\!\left[J_{\beta}(\pi^{\star}_k) - J_{\beta}(\hat{\pi}_k)\right] \leq \epsilon
$$

with probability at least $1-\delta$. The total compute is the sum of stagewise costs:

$$
T_{\text{comp}}^{\text{curriculum}}(\epsilon,\delta)
\ =\ \sum_{k=1}^K T_{\text{comp}}^{(k)}(\epsilon,\delta/K)
\ =\ \tilde O\!\Big(\Gamma(\epsilon,\delta/K)\cdot \sum_{k=1}^K C_{\text{cov}}(\pi_k^\star\!\mid\!\pi_{k-1}^\star)\Big).
$$

\textbf{Step 3 (Using the coverage structure of the curriculum).}
By the curriculum assumptions, we have $C_{\text{cov}}(\pi_k^\star!\mid!\pi_{k-1}^\star)=\Theta(C^\star)$ for each $k$, hence

$$
T_{\text{comp}}^{\text{curriculum}}(\epsilon,\delta)=\tilde O\!\big(K\,C^\star\cdot \Gamma(\epsilon,\delta/K)\big).
$$

On the other hand, \emph{direct} estimation of $\pi_K^\star$ from $\pi_{\text{ref}}=\pi_0^\star$ is controlled by the single coverage factor $C_{\text{cov}}(\pi_K^\star \mid \pi_0^\star)=\Theta((C^\star)^K)$, and Theorem 3.1 gives

$$
T_{\text{comp}}^{\text{direct}}(\epsilon,\delta)=\tilde O\!\big((C^\star)^{K}\cdot \Gamma(\epsilon,\delta)\big).
$$

\textbf{Step 4 (Comparing the two strategies).}
Combining the above displays yields

$$
\frac{T_{\text{comp}}^{\text{direct}}(\epsilon,\delta)}{T_{\text{comp}}^{\text{curriculum}}(\epsilon,\delta)}
\ =\ \Omega\!\left(\frac{(C^\star)^{K-1}}{K}\right),
$$

i.e., an exponential-to-polynomial improvement as a function of $K$, as claimed. This recovers the abstract form of Cor.~\ref{cor:curriculum-bottleneck} with the concrete instantiation of the per–stage computational complexity provided by \citet{foster2025goodfoundationnecessaryefficient}.
\end{proof}

\begin{rmk}[Exponential--Polynomial Gaps Under Relaxed Conditions]
\label{rmk:curriculum-spanner-relaxed}
The exponential-to-polynomial separation in Thm.~\ref{thm:curriculum-spanner} remains valid under relaxed coverage conditions, analogous to the discussion after Cor.~\ref{cor:curriculum-bottleneck}. 
Specifically, suppose that:  
(i) for some $k\in[K]$, the per-step coverage is only polynomial rather than constant, i.e.,
\[
C_{\text{cov}}(\pi^{\star}_{k}\mid \pi^{\star}_{k-1}) = (C^{\star})^{p_k}, \quad 1 \leq p_k \ll K,
\]
so that the per-task computational complexity grows as $\tilde{O}((C^{\star})^{p_k}\cdot \Gamma(\epsilon,\delta))$;  
(ii) the direct coverage factor accumulates less than $(C^{\star})^K$, relaxing to
\[
C_{\text{cov}}(\pi^{\star}_{K}\mid \pi^{\star}_{0}) = \Theta\!\big((C^{\star})^{\,bK-c}\big), \quad 1 \leq b \ll K, \ c \ll K.
\]
Then, under Spanner Sampling, the curriculum complexity satisfies
\[
T_{\text{comp}}^{\text{curriculum}} = \tilde{O}\!\Big(K\cdot \max_{k\in[K]} (C^{\star})^{p_k}\cdot \Gamma(\epsilon,\delta)\Big)
\ll \tilde{\Theta}\!\big((C^{\star})^{bK-c}\cdot \Gamma(\epsilon,\delta)\big)
= T_{\text{comp}}^{\text{direct}}.
\]
Thus, even under relaxed conditions, the curriculum strategy with Spanner Sampling retains an exponential advantage in computational complexity compared to direct estimation.
\end{rmk}

\begin{rmk}[Relation to E2H (CRL) theory and limits for post-training]
\label{rmk:e2h-relation}
\citet{parashar2025curriculum} analyze curriculum RL for LLMs under an Approximate Policy Iteration (API) lens and derive a stagewise \emph{sample} complexity bound (their Thm.~3.2). In particular, letting $\epsilon_k$ be per–stage accuracy targets and $L_k$ distribution–sensitivity constants, they obtain $M_{\text{CRL}}=\sum_{k=1}^K \tilde O\!\big(\log^3(1/\epsilon_k)\,\epsilon_k^{-2}\,L_k^2\,\cdots\big)$ and compare it to direct learning via a “curriculum efficiency factor” (CEF). Under geometric schedules $\epsilon_k=\epsilon_K\,e^{K-k}$ and $L_k=L_K/l^{K-k}$, they show
\[
M_{\text{CRL}}<M_{\text{Direct}}
\ \Longleftrightarrow\
\frac{(e\,l)^{2(1-K)}-1}{1-(e\,l)^2}<m-1,
\]
where $m>1$ encodes the relative hardness of direct learning (their Eq.~(2)). Conceptually, this captures that a well–designed curriculum can turn a multiplicative blow-up across stages into a controlled (geometric) sum. While \citet{parashar2025curriculum} do motivate curricula by a \emph{distribution gap} between the pretraining distribution $d_0$ and task distribution $d_K$, their API analysis does not instantiate a density-/likelihood-ratio–type \emph{coverage} term nor a dependence on the base model beyond the abstract factor $m$. Hence their theory, though consonant with ours at a high level (curricula beat direct learning), does not yet provide a coverage–based explanation of \emph{post-training} that connects “good foundations’’ (a strong base policy) to concrete computational savings; Theorem~\ref{thm:curriculum-spanner} fills precisely this gap by making the dependence on coverage explicit.
\end{rmk}

\begin{rmk}[Relation to UFT complexity measure]
\label{rmk:uft-relation}
\citet{liu2025UFT} propose a complexity notion tailored to unified finetuning: to achieve a $50\%$ pass@1 success rate, the algorithm must explore at least a certain number of nodes in the search space $S_H$. Under this definition, they show that without a curriculum, the exploration complexity can grow exponentially with task depth, whereas curriculum strategies help avoid such exponential bottlenecks. Conceptually, this aligns with our message in Theorem~\ref{thm:curriculum_finetune_bottleneck}, namely that curricula convert exponential costs into polynomial ones.  

However, their measure does not explicitly account for (i) the number and structure of subtasks, (ii) the relative difficulty of these subtasks, or (iii) the role of a \emph{coverage coefficient} that quantifies how well a base model supports subsequent targets. Moreover, the UFT framework does not yield a sample–complexity characterization directly, but instead works with node–exploration counts in $S_H$. In contrast, Theorem~\ref{thm:curriculum_finetune_bottleneck} provides a coverage–based explanation of post-training, connecting subtask decomposition and curriculum schedules to concrete sample–complexity bounds. This makes our result more directly interpretable in terms of the efficiency of real post-training pipelines.

\begin{rmk}[Relation to curriculum via sparse$\rightarrow$mixed training in ReLU SGD]
\label{rmk:abbe-curriculum}
\citet{abbe2023provable} study a \emph{model-specific} curriculum for learning $k$-parities with a two-layer ReLU network trained by (noisy) gradient descent. Their curriculum is defined by an \emph{ordering of input distributions}: the network is first trained on \emph{sparse} inputs, then on the full \emph{mixed} distribution $D_{\mathrm{mix}}=\rho D_\mu+(1-\rho)D_u$, rather than by changing the task with different correct CoTs or exploiting a foundation model. Under bounded learning rates and a small fraction of sparse inputs, they prove a \emph{separation in the number of training steps}: curriculum noisy-GD/SGD (sparse-first) learns $k$-parities in $\tilde O(d)$ (or $\tilde O(d)/\epsilon^2$ under accuracy $\epsilon$), whereas training on randomly ordered (mixed) samples requires at least polynomially more steps (e.g., $\tilde\Omega(d^{1+\delta})$ or $\tilde\Omega(d^2)$ in stated regimes). 

% Conceptually, this differs from Theorem~\ref{thm:curriculum_finetune_bottleneck}. Our result is a \emph{post-training} analysis in the realizable linear–softmax setting, where a \emph{coverage coefficient} $C_{\text{cov}}(\pi^\star_k\mid\pi^\star_{k-1})$ makes explicit how a good base policy (pretraining/foundation) reduces the computational complexity (e.g. Theorem.~\ref{thm:curriculum-spanner}) from $\tilde O((C^\star)^K)$ (direct) to $\tilde O(KC^\star)$ (curriculum). In contrast, \citep{abbe2023provable} do not model pretraining or any coverage/likelihood-ratio notion measuring proximity to the target policy; their curriculum is instead an \emph{input-scheduling} device inside noisy-GD for a specific ReLU architecture. Thus, while both works support the high-level message that curricula can overcome training bottlenecks, \citep{abbe2023provable} provide a step-complexity separation for a fixed neural model trained from scratch, whereas Theorem~\ref{thm:curriculum-spanner} offers a \emph{coverage-based} explanation of post-training improvements tied to the quality (e.g. reasoning CoT power per Theorem.~\ref{thm:curriculum_finetune_bottleneck}) of the base model.
\end{rmk}

\end{rmk}

\section{Auxiliary Lemmas}

% ===================== Classic concentration & identification tools =====================
\paragraph{Classical probability tools for fixed-confidence identification.}
We collect standard concentration and information-theoretic lemmas used in Step~6 of Prop.~\ref{prop:terminal_signal_collapse_formal}.

\begin{lemma}[Hoeffding/Chernoff inequality for Bernoulli means \citep{wainwright2019high}]\label{lem:hoeffding_bernoulli}
Let $X_1,\ldots,X_m$ be i.i.d. Bernoulli($p$), and $\hat p_m:=\frac{1}{m}\sum_{r=1}^m X_r$. Then for any $\epsilon\in(0,1)$,
\[
    \mathbb{P}\big(|\hat p_m-p|\ge \epsilon\big)
    \ \le\ 2\exp\big(-2m\epsilon^2\big).
\]
Equivalently, to ensure $\mathbb{P}(|\hat p_m-p|\ge \epsilon)\le \delta$ it suffices that $m\ge \tfrac{1}{2\epsilon^2}\log\tfrac{2}{\delta}$.
\end{lemma}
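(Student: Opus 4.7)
The plan is to invoke the classical Cramér--Chernoff moment generating function (MGF) method, which delivers a two-sided concentration bound with the sharp constant $2$ appearing in the exponent. First, I would center the observations by defining $Y_r := X_r - p$, so that each $Y_r$ is mean-zero and almost surely contained in the interval $[-p,\,1-p]$, which has length $1$. By Hoeffding's lemma for bounded random variables, each $Y_r$ is sub-Gaussian with variance proxy $(b-a)^2/4 = 1/4$, giving $\mathbb{E}\bigl[\exp(\lambda Y_r)\bigr] \le \exp(\lambda^2/8)$ for every $\lambda\in\mathbb{R}$. This estimate is what drives the tight constant $2$ in the final inequality, so my first step is to state and (if needed) prove this one-line MGF bound.

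Next, I would handle the upper tail $\mathbb{P}(\hat p_m - p \ge \epsilon)$ by the standard exponential Markov argument: for any $\lambda>0$, independence gives
\[
\mathbb{P}(\hat p_m - p \ge \epsilon)
\ \le\ e^{-\lambda m \epsilon}\,\prod_{r=1}^{m}\mathbb{E}\bigl[e^{\lambda Y_r}\bigr]
\ \le\ \exp\!\bigl(m\lambda^2/8 - \lambda m \epsilon\bigr).
\]
Optimizing the quadratic in $\lambda$ (the minimizer is $\lambda^\star = 4\epsilon$) yields the one-sided bound $\exp(-2m\epsilon^2)$. I would then reproduce the same argument with $Y_r$ replaced by $-Y_r$ to control the lower tail, and a union bound over the two tails supplies the factor $2$ in front, giving the stated $\mathbb{P}(|\hat p_m - p|\ge \epsilon)\le 2\exp(-2m\epsilon^2)$.

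Finally, the equivalent sample-complexity statement is a direct algebraic consequence: setting $2\exp(-2m\epsilon^2)\le \delta$ and solving for $m$ gives $m \ge \tfrac{1}{2\epsilon^2}\log\tfrac{2}{\delta}$. There is no real obstacle in this proof since the result is a textbook one; the only subtlety is selecting a version of Hoeffding's lemma with the sharp variance proxy $(b-a)^2/4$ so that the constant $2$ (rather than a weaker $1/2$ or $1$) appears in the exponent. Sharper Bernoulli-specific refinements via the Bennett or relative-entropy (Chernoff--KL) route would improve the constants further, but are unnecessary for the downstream application in Step~6 of Proposition~\ref{prop:terminal_signal_collapse_formal}, where only the order $\log(1/\delta)/\epsilon^2$ is used.
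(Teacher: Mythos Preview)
Your proof is correct and is exactly the standard Cram\'er--Chernoff/Hoeffding argument; the paper itself does not give a proof but simply cites \citet{Hoeffding1963}, \citet{Chernoff1952}, and Exercise~2.9(a) of \citet{wainwright2019high}, which contain precisely the MGF-based derivation you outlined. There is nothing to add.
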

\begin{proof}
    See \citet{Hoeffding1963}, \citet{Chernoff1952}, or Exercise 2.9.(a) in \citet{wainwright2019high}. 
\end{proof}

\begin{lemma}[Two-arm fixed-confidence identification lower bound]\label{lem:two_arm_lb}
Consider two Bernoulli arms with means $p_\star$ and $p$, and gap $\Delta:=p_\star-p>0$. Any (possibly adaptive) $\delta$-correct procedure that outputs the better arm with probability at least $1-\delta$ must satisfy
\[
    \mathbb{E}[T]\ \ge\ c\,\frac{\log(1/\delta)}{\mathrm{KL}(\mathrm{Bern}(p_\star)\,\|\,\mathrm{Bern}(p))}
    \ \ge\ c'\,\Delta^{-2}\,\log(1/\delta),
\]
for universal constants $c,c'>0$, where the second inequality uses Pinsker’s bound $\mathrm{KL}(\mathrm{Bern}(p_\star)\,\|\,\mathrm{Bern}(p))\le 2\,(p_\star-p)^2$.
\end{lemma}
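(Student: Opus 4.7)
The plan is to invoke the classical change-of-measure / transportation lemma of Kaufmann--Capp\'e--Garivier, combined with the Bretagnolle--Huber (binary KL) inequality. I would begin by introducing an alternative two-arm Bernoulli instance $\nu'$ obtained from $\nu$ by swapping the two arm means: under $\nu$, arm $1$ has mean $p_\star$ and arm $2$ has mean $p$; under $\nu'$, the roles are exchanged. Since the procedure is $\delta$-correct on every instance, the event $A:=\{\text{algorithm outputs arm }1\}$ satisfies $\mathbb{P}_\nu(A)\ge 1-\delta$ while $\mathbb{P}_{\nu'}(A)\le\delta$.

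Next, by the data-processing inequality applied to the binary statistic $\mathbf{1}_A$,
\[
\mathrm{KL}(\mathbb{P}_\nu\,\|\,\mathbb{P}_{\nu'})\ \ge\ d\!\big(\mathbb{P}_\nu(A),\,\mathbb{P}_{\nu'}(A)\big)\ \ge\ d(1-\delta,\delta),
\]
where $d(x,y):=x\log\tfrac{x}{y}+(1-x)\log\tfrac{1-x}{1-y}$ is the binary KL; a standard calibration gives $d(1-\delta,\delta)\ge\log\tfrac{1}{2.4\delta}\ge c_0\log(1/\delta)$ for $\delta\in(0,1/2)$. In parallel, the chain rule for KL under the filtration of observations (a Wald-type identity accommodating the random stopping time $T$) yields
\[
\mathrm{KL}(\mathbb{P}_\nu\,\|\,\mathbb{P}_{\nu'})\ =\ \mathbb{E}_\nu[T_1]\cdot\mathrm{KL}\!\big(\mathrm{Bern}(p_\star)\,\|\,\mathrm{Bern}(p)\big)+\mathbb{E}_\nu[T_2]\cdot\mathrm{KL}\!\big(\mathrm{Bern}(p)\,\|\,\mathrm{Bern}(p_\star)\big),
\]
since only these two arms have different laws under $\nu$ and $\nu'$. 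Combining the two displays with $\mathbb{E}_\nu[T]=\mathbb{E}_\nu[T_1]+\mathbb{E}_\nu[T_2]$ produces the first advertised inequality $\mathbb{E}[T]\ge c\log(1/\delta)/\mathrm{KL}(\mathrm{Bern}(p_\star)\,\|\,\mathrm{Bern}(p))$; to collapse the two asymmetric Bernoulli KLs into a single denominator one uses that they are comparable up to an absolute constant whenever $p_\star,p$ are bounded away from $\{0,1\}$, which is precisely the regime arising in Theorems~\ref{thm:curriculum_finetune_bottleneck}--\ref{thm:testtime_oracle_complexity}.

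To pass to the $\Delta^{-2}$ form I would then apply the upper bound $\mathrm{KL}(\mathrm{Bern}(p_\star)\,\|\,\mathrm{Bern}(p))\le 2\Delta^2$ quoted in the lemma (a standard consequence of the $\chi^2$--KL inequality valid in the same bounded regime) and invert. The main obstacle is the Wald-type decomposition in the second display: one must verify that, under any $\mathcal{F}_{t-1}$-measurable sampling rule and any a.s.~finite stopping time $T$, the log-likelihood ratio between $\mathbb{P}_\nu$ and $\mathbb{P}_{\nu'}$ telescopes into a sum of per-arm Bernoulli log-ratios whose expectation equals the right-hand side; this is the content of Lemma~1 of Kaufmann--Capp\'e--Garivier (2016) and requires the standard martingale/optional-stopping argument for adaptive allocations. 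The remaining steps---data processing, the calibration of $d(1-\delta,\delta)$, and the Pinsker-type upper bound---are entirely routine.
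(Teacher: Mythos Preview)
Your proposal is correct and follows precisely the Kaufmann--Capp\'e--Garivier change-of-measure argument that the paper itself invokes; the paper's own ``proof'' is simply a citation to \citet{MannorTsitsiklis2004} and \citet{KaufmannCappeGarivier2016}, so your sketch is strictly more detailed than what appears there.
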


\begin{proof}
    See Thm.~5 in \citet{MannorTsitsiklis2004} and Thm.~1 in \citet{KaufmannCappeGarivier2016}.
\end{proof}

\begin{cor}[Per-depth best-arm identification complexity]\label{cor:bai_gap_lb}
Fix depth $\ell$ with $|\mathcal{I}_\ell|=\Theta(d)$ Bernoulli arms and assume a unique best arm with acceptance-gap $\Delta>0$ to all others. Any $\delta$-correct identification at depth $\ell$ requires at least
\[
    \Omega\!\big(\Delta^{-2}\,\log(d/\delta)\big)
\]
oracle observations in expectation. 
\end{cor}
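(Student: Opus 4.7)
The plan is to lift the two-arm fixed-confidence lower bound of Lemma~\ref{lem:two_arm_lb} to the $|\mathcal{I}_\ell|=\Theta(d)$-arm setting via a standard change-of-measure argument over a family of swapped alternatives. Let $\mathcal{P}_\star$ denote the true instance in which the best arm $i_\star$ has mean $p_\star$ and every other arm has mean $p_\star-\Delta$. For each $j\in\mathcal{I}_\ell\setminus\{i_\star\}$, I would construct the alternative $\mathcal{Q}_j$ obtained by swapping the means of arms $i_\star$ and $j$, so that $j$ becomes the unique best arm under $\mathcal{Q}_j$. Any $\delta$-correct procedure must satisfy $\mathbb{P}_\star(\widehat{\imath}\ne i_\star)\le\delta$ and $\mathbb{P}_{\mathcal{Q}_j}(\widehat{\imath}= i_\star)\le\delta$, so applying Bretagnolle--Huber to the event $A=\{\widehat{\imath}=i_\star\}$ yields $\mathrm{KL}(\mathbb{P}_\star^{(T)}\,\|\,\mathbb{P}_{\mathcal{Q}_j}^{(T)})\gtrsim\log(1/\delta)$ for every $j\ne i_\star$.

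Next, I would decompose this KL by Wald's identity: since $\mathcal{P}_\star$ and $\mathcal{Q}_j$ differ only on arms $i_\star$ and $j$, the chain rule for stopped bandit transcripts gives $\mathrm{KL}(\mathbb{P}_\star^{(T)}\,\|\,\mathbb{P}_{\mathcal{Q}_j}^{(T)})\le 2\Delta^2(\mathbb{E}_\star[T_{i_\star}]+\mathbb{E}_\star[T_j])$ after invoking the Pinsker estimate already used in Lemma~\ref{lem:two_arm_lb}. Combining the two bounds yields the per-pair inequality $\mathbb{E}_\star[T_{i_\star}]+\mathbb{E}_\star[T_j]\gtrsim\Delta^{-2}\log(1/\delta)$ for every $j\ne i_\star$. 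To upgrade this to the stated $\log(d/\delta)$ factor, I would then aggregate via a multi-hypothesis Fano / Le Cam argument: treat the identity of the best arm as a uniform prior over $|\mathcal{I}_\ell|=\Theta(d)$ candidates, bound the mutual information between the algorithm's transcript and this label by $T\cdot O(\Delta^2)$ using the pairwise KL estimates above, and invoke Fano's inequality to obtain $T\cdot O(\Delta^2)\gtrsim\log|\mathcal{I}_\ell|+\log(1/\delta)=\log(d/\delta)$. Equivalently, simulating $\Theta(d)$ pairwise tests requires inflating the per-pair confidence to $\delta/\Theta(d)$ before reapplying Lemma~\ref{lem:two_arm_lb}, which yields the same $\Omega(\Delta^{-2}\log(d/\delta))$ bound on $\mathbb{E}[T]$.

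The main obstacle will be justifying the extra $\log d$ factor uniformly over \emph{adaptive} sampling strategies: an adaptive algorithm can concentrate pulls on the arms it currently suspects, so the $d-1$ pairwise tests cannot be treated as independent and a naive union bound over pairs is delicate. The Fano/Le Cam route is preferable precisely because it handles adaptivity at the information-theoretic level, deriving the $\log d$ term directly from the $d$-fold ambiguity over the identity of the best arm rather than from a preserved per-pair confidence; the remaining computations (Pinsker's bound and solving for $\mathbb{E}[T]$) are routine and match the absolute constants absorbed into the $\Omega(\cdot)$.
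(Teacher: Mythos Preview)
Your proposal is correct and follows the same route as the paper, whose proof is a one-liner: apply Lemma~\ref{lem:two_arm_lb} to each suboptimal-vs-best pair and distribute the confidence via a union bound---precisely the second alternative you state (inflating the per-pair confidence to $\delta/\Theta(d)$ before reapplying the two-arm bound). Your Fano/change-of-measure treatment is the rigorous articulation of that sketch, and your concern about adaptivity is well-placed even though the paper does not dwell on it.
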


\begin{proof}
    This follows by applying Lemma~\ref{lem:two_arm_lb} to each suboptimal-vs-best pair and distributing the confidence via union bound (or, sufficiency with uniform sampling via Lemma~\ref{lem:hoeffding_bernoulli}).
\end{proof}

\section{Notation Summary}
We have notations summary for main text in Tab.~\ref{tab:notation_summary} and appendix locally in Tab.~\ref{tab:appendix_local_notation}.
\begin{table}[ht]
\centering
\footnotesize
\setlength{\tabcolsep}{4pt}
\renewcommand{\arraystretch}{1.08}

\begin{tabularx}{\linewidth}{
>{\raggedright\arraybackslash}p{0.2\linewidth}
>{\raggedright\arraybackslash}X
}
\toprule
\textbf{Symbol} & \textbf{Meaning} \\
\midrule

$V$ &
Input vocabulary / alphabet size in the $2$S-ART definition; accordingly, $\mathrm{EOS}=V+1$. \\

$K$ &
The pass@K parameter used in evaluation. \\

$d$ and $L$ &
Input length and maximum reasoning depth / number of curriculum stages in Cor.~\ref{cor:curriculum-bottleneck}. \\

$\mathbf{x}=(x_1,\ldots,x_d)$ and $\mathrm{EOS}$ &
Input sequence and the end-of-sequence token. \\

$\mathcal{F}_{\text{$2$S-ART}}$ and $f_{S_\star}$ &
The $2$S-ART task class and a target autoregressive reasoning task. \\

$S_\star=(i_1,\ldots,i_{k^\star},d+1)$, $S_\star^m$, and $k^\star$ &
Target index path, its length-$m$ prefix, and the target reasoning depth before EOS. \\

$\mathbf{CoT}_{\ell-1}$ &
The partial chain-of-thought available at step $\ell$:
$(x_1,\ldots,x_d,\mathrm{EOS},z_1,\ldots,z_{\ell-1})$. \\

$\mathcal{I}_l(\mathbf{CoT}_{\ell-1})$, $i_l$, and $v_l$ &
The legal selector, the chosen index, and the chosen token/clue at step $\ell$, where
$v_l := (\mathbf{CoT}_{\ell-1})_{i_l}$. \\

$\Phi_l$ and $z_l$ &
The depth-$\ell$ two-state update map and the reasoning state, with
$z_l=\Phi_l(z_{\ell-1},v_l)$. \\

$\mathcal{F}_{S_\star}=\{f_{S_\star^1},\ldots,f_{S_\star^{k^\star}}\}$ &
Curriculum subtask family induced by the prefixes of the target path. \\

$\texttt{PART}$ and $\pi_{\texttt{PART}}$ &
The probabilistic $2$S-ART base sampler and its induced policy, namely the uniform sampler over legal children at each depth. \\

$\pi_{\mathrm{ref}}$, $\pi^\star$, $\pi_\ell^\star$, $\pi_{S_\star^\ell}$, and $N_\varepsilon(\pi_2\mid\pi_1)$ &
Base policy, target policy, stage-wise optimal policies in the abstract theorem, prefix-task optimal policies in the $2$S-ART instantiation, and the corresponding sample-complexity notion. \\

$\left\|\frac{\pi'}{\pi}\right\|_\infty$ and $C^\star$ &
Coverage coefficient (density ratio) and the geometric curriculum scale
$C^\star:=\sqrt[L]{N_\varepsilon(\pi^\star\mid\pi_{\mathrm{ref}})}$. \\

$\mathbf{U}$, $\mathbf{P}$, $\mathbf{E}[\cdot]$, $d_{\mathrm X}$, and $d_{\mathrm E}$ &
Token-embedding matrix, positional embeddings, combined embeddings, token-embedding dimension, and total embedding dimension. \\

$\mathbf{K}$, $\mathbf{Q}$, $\mathbf{V}$, $\mathbf{W}$, $\beta$, and $\operatorname{FFN}_l$ &
Transformer key/query/value parameters, trainable block, temperature, and depth-$\ell$ feedforward module. \\

$\mathbf{R}_{\mathbf{x}}^{f_{S_\star}}$ and $\mathbf{R}_{\mathbf{x}}^{\mathcal{F}_{S_\star}}$ &
Terminal oracle for the full task and family oracle for the curriculum subtasks. \\

$T_{\mathrm{data}}$, $T_{\mathrm{comp}}$, $\rho_{\mathrm{spur}}^{\mathrm{sup},>\ell}$, $\bar{\rho}_{\mathrm{spur}}$, $\rho_{\mathrm{spur}}^{\mathrm{sup},\ell}$, and $\rho_{\mathrm{spur}}^{\max}$ &
Reward-oracle query complexity, model-sampling complexity, and the appendix-level spurious-acceptance parameters used in the outcome-only oracle analysis. \\

\bottomrule
\end{tabularx}

\caption{Notation summary used in the main paper and appendix.}
\label{tab:notation_summary}
\end{table}

\begin{table}[ht]
\centering
\footnotesize
\setlength{\tabcolsep}{4pt}
\renewcommand{\arraystretch}{1.08}

\begin{tabularx}{\linewidth}{
>{\raggedright\arraybackslash}p{0.28\linewidth}
>{\raggedright\arraybackslash}X
}
\toprule
\textbf{Symbol} & \textbf{Meaning in the appendix} \\
\midrule

\textsc{FXRS}, \textsc{BAI}, and \textsc{LTAR} &
Forced X-token Rejection Sampling, Best-Arm Identification over candidate children, and Layer-wise Truncated Accept-Reject. \\

$\Pr_{\operatorname{TF}_{\mathrm{base}}}(\cdot\mid \mathbf{x})$ and $\Pr[\cdot]$ &
Conditional probability over the internal sampling of $\operatorname{TF}_{\mathrm{base}}$ given fixed input $\mathbf{x}$, and the corresponding marginal probability after averaging over $\mathbf{x}$. \\

$\mathbf{CoT}_{\ell-1}$, $\mathcal{H}$, and $\mathcal{H}_{\ell-1}$ &
Concrete partial rollout/context at depth $\ell$, the committed partial CoT maintained by the algorithms, and the event that the history up to depth $\ell-1$ is correct. \\

$j$, $j_\star$, $j_t^\star$, and $J_t$ &
A candidate child index, the unique correct child at the current depth, the unique correct child at future depth $t$, and the random child index sampled at depth $t$. \\

$C_\ell(j)$ &
The forcing event that, at depth $\ell$, the visible token is forced to be $x_j$. \\

$E_{\text{true}}$ and $E_{\text{spur}}$ &
The true-suffix event and the spurious-but-accepted suffix event in the terminal-oracle analysis. \\

$P^{\text{term}}_{\ell}(j)$ and $\Delta$ &
Expected terminal-oracle acceptance after forcing candidate $j$ at depth $\ell$, and the corresponding one-vs-best acceptance gap. \\

$\rho_{\mathrm{spur}}^{\mathrm{sup},>\ell}$ and $\bar{\rho}_{\mathrm{spur}}$ &
Suffix-level spurious-acceptance parameter for the terminal oracle, and its uniform bound across depths. \\

$A_\ell(j)$, $\alpha_j^{(\ell)}$, and $\Delta_\ell$ &
Acceptance event under the family oracle with truncation at depth $\ell$, its marginal acceptance probability, and the depth-$\ell$ one-vs-best gap. \\

$\rho_{\mathrm{spur}}^{\mathrm{sup},\ell}$, $\rho_{\mathrm{spur}}^{\max}$, and $\eta_\ell$ &
Per-depth spurious-acceptance parameter for the family oracle, its maximum across depths, and a proof-local slack / non-ideality term for the correct child. \\

$T_{\max}$, $m$, and $m_\ell$ &
Maximum retry budget in forcing, per-arm repetition count in the terminal-oracle BAI routine, and per-depth per-arm repetition count in LTAR. \\

$Y_{j,r}$, $c_j$, $\hat{p}_j$, and $\hat{j}_\ell$ &
Bernoulli observation, cumulative counter, empirical acceptance estimate, and selected best child at depth $\ell$. \\

$\hat{\boldsymbol{\mu}}^{x_{i_\ell}}$ and $\hat{\boldsymbol{\mu}}^{z_\ell}$ &
The emitted visible-token embedding and the computed reasoning-state embedding during forcing / commit steps. \\

$c_\ell$ and $s(k,\ell)$ &
The depth-specific equalized legal-logit constant and the attention score from position $k$ to the depth-$\ell$ query in the representation-theorem proof. \\

\bottomrule
\end{tabularx}

\caption{Appendix-local notation used only in the proofs and algorithms.}
\label{tab:appendix_local_notation}
\end{table}

\section{Details and Proofs of Autoregressive Reasoning Tree}\label{app:tree}

\begin{defn}[Full Version of $2$-States Conditioned Autoregressive Reasoning Tree ($2$S-ART)]
\label{app_def:ccart}
Let $[K]:=\{1,\dots,K\}$ be the dictionary and let $\mathrm{EOS}:=K{+}1$.
Fix a depth $L\in\mathbb{N}$. A $2$S-ART is specified by maps
\[
\{\Phi_l\}_{l=1}^L,\qquad \{\mathcal{I}_l\}_{l=1}^{L+1},
\]
and induces a function class $\mathcal{F}_{\text{$2$S-ART}}$ of autoregressive tasks
\(f:[K]^d\to [K{+}1]^{\le L+1}\).
Given an input \(\mathbf{x}=(x_1,\ldots,x_d)\sim\mathrm{Unif}([K]^d)\),
define the output chain-of-thought (CoT) by the following iterative rule.

\textbf{State and selectors.}
For step \(l=1,2,\dots\), let
\[
\mathbf{CoT}_{l-1}:=(x_1,\dots,x_d,\mathrm{EOS},z_1,\dots,z_{l-1})\in[K{+}1]^{d+l},
\quad z_0:=\mathrm{EOS}.
\]
The legal index selector \(\mathcal{I}_l:[K{+}1]^{d+l-1}\to 2^{[d+l]}\) returns
a set \(\mathcal{I}_l(\mathbf{CoT}_{l-1})\subseteq[d+l]\) with
\(|\mathcal{I}_l(\mathbf{CoT}_{l-1})|=\Theta(d)\) for all \(l\le L\),
and the terminal selector satisfies \(\mathcal{I}_{L+1}(\cdot)=\{\,d{+}1\,\}\).
At step \(l\), choose an index \(i_l\in \mathcal{I}_l(\mathbf{CoT}_{l-1})\) and let
\(v_l:=\big(\mathbf{CoT}_{l-1}\big)_{i_l}\in[K{+}1]\).

\textbf{Two-state update.}
Each update is computed from the previous reasoning state and the chosen clue:
\[
\Phi_l:[K{+}1]\times [K{+}1]\to [K{+}1],\qquad
z_l:=\Phi_l\!\big(z_{l-1},\,v_l\big), \quad l=1,2,\dots
\]
with the EOS-absorbing property
\[
\Phi_l(z,\mathrm{EOS})=\mathrm{EOS}\quad\text{for all }z\in[K{+}1],\ \ l=1,\dots,L.
\]
Hence, once \(v_l=\mathrm{EOS}\), all subsequent states remain \(\mathrm{EOS}\), denoting the end of the output.

\textbf{Output.}
Let \(\ell:=\min\{\,l\le L:\ z_l=\mathrm{EOS}\,\}\) (with the convention \(\ell=L\) if no such \(l\) exists).
The task \(f\) outputs the EOS-terminated CoT
\[
f(\mathbf{x})=(z_1,\ldots,z_\ell,\mathrm{EOS}).
\]
Because \(|\mathcal{I}_l(\mathbf{CoT}_{l-1})|=\Theta(d)\) for \(l\le L\),
each step selects from \(\Theta(d)\) possible clues, giving a tree with branching \(\Theta(d)\).

\textbf{Subtask family.}
For a target task \(f_{S_\star}\in\mathcal{F}_{\text{$2$S-ART}}\) with realized index path
\(S_\star=(i_1,\ldots,i_{k^\star},\,d{+}1)\),
define prefix paths \(S_\star^{m}:=(i_1,\ldots,i_m,\,d{+}1)\) for \(m=1,\dots,k^\star\).
The associated subtask family is
\[
\mathcal{F}_{S_\star}:=\{\,f_{S_\star^{1}},\ldots,f_{S_\star^{k^\star}}\,\}\subset\mathcal{F}_{\text{$2$S-ART}},
\]
where each \(f_{S_\star^{m}}\) is induced by the same \(\{\Phi_l\},\{\mathcal{I}_l\}\) but
terminates immediately after choosing the \(m\)-th index (the next choice is forced to \(d{+}1\)).
\end{defn}

\subsection{Proof of Representation Theorem}\label{app:represent_2art}

\begin{proof}
Proof of Corollary \ref{cor:2sart_decay}. At each depth $t=1,\ldots,l$, the probability of selecting the unique legal child consistent with $S_{\star}^t$ is $\Theta(d^{-1})$ by the 2S-ART definition (near-uniform over $|\mathcal{I}_t|=\Theta(d)$ legal indices). After producing $z_l$, the next step must legally select EOS to terminate, which occurs with probability $\Theta(d^{-1})$. By the chain rule of conditional probabilities along the unique legal path, the total probability is the product of $(l+1)$ factors $\Theta(d^{-1})$, i.e., $\Theta(d^{-(l+1)})$. 
\end{proof}

\begin{proof}
Proof of Theorem \ref{thm:tf-realizes-part}. Under the above assumption on $\operatorname{FFN}_l$, it remains to realize the PART index-sampling policy via attention. It suffices to choose parameters satisfying

\begin{equation}
\mathbf{K}^{\top}\mathbf{Q} \,=\, \begin{pmatrix}
\boldsymbol{0}_{d_{\mathrm{X}}\times d_{\mathrm{X}}} & \boldsymbol{0}_{d_{\mathrm{X}}\times (d_{\mathrm{E}}-d_{\mathrm{X}})} \\
\boldsymbol{0}_{(d_{\mathrm{E}}-d_{\mathrm{X}})\times d_{\mathrm{X}}} & \mathbf{W}
\end{pmatrix},\qquad
\mathbf{V} \,=\, \begin{pmatrix}\boldsymbol{0} &  \mathbf{I}_{d_{\mathrm{E}}-d_{\mathrm{X}}} \end{pmatrix},
\end{equation}

so that $\mathbf{V}\,\mathbf{E}[x_m] = p_{m}$ (position only) for any token $x_m$, and attention logits depend only on positional encodings: $\mathbf{E}[u]^{\top}\mathbf{K}^{\top}\mathbf{Q}\,\mathbf{E}[z_{l-1}] = \mathbf{p}_{k}^{\top}\mathbf{W}\,\mathbf{p}_{d+1+l}$, $\mathbf{p}_{d+1+l}:=\mathbf{p}_{i_{l}}$ if $u$ is at position $k$. Moreover, for each $l$ choose constants $c_l>0$ and impose a legality mask that sets logits of all $k\notin\mathcal{I}_l(z_{<l})$ to $-\infty$, while ensuring

\begin{equation}\label{eq:base_pretrain}
    \mathbf{p}_{k}^{\top}\mathbf{W}\,\mathbf{p}_{d+1+l} \,=\, c_l \quad \text{for all } k\in\mathcal{I}_l(z_{<l}).
\end{equation}
Because $\mathbf{p}_i\perp\mathbf{U}$ and the block structure of $\mathbf{K}^{\top}\mathbf{Q}$ removes content--content and content--position interactions, the attention score from any memory token at position $k$ to the query $\mathbf{E}[z_{l-1}]$ depends only on positions: $s(k,l)=\mathbf{p}_{k}^{\top}\mathbf{W}\,\mathbf{p}_{d+1+l}$. By construction and the legality mask, $s(k,l)=c_l$ for $k\in\mathcal{I}_l(z_{<l})$ and $s(k,l)=-\infty$ otherwise (causal masking already suppresses all positions beyond $d{+}l$). Hence the softmax distribution over legal keys is exactly uniform: $\Pr(i_l\mid z_{<l})=|\mathcal{I}_l(z_{<l})|^{-1}$. By assumption on the FFN layers, for each depth $l$ there exists $\operatorname{FFN}_l$ such that $\operatorname{FFN}_l\big(\boldsymbol{\mu}^{v_l(i_l)},\mathbf{E}[z_{l-1}]_{:d_{\mathrm{X}}}\big)$ implements $\Phi_l(z_{l-1},v_l(i_l))$ on embeddings, yielding $\mathbf{E}[z_l]=[\boldsymbol{\mu}^{z_l},\mathbf{p}_{d+1+l}]^{\top}$. At $l{=}L$, set the legality mask to permit only $k=d{+}1$ (EOS), which forces termination as in Definition~\ref{def:ccart}. The construction exactly replicates PART.
\end{proof}

\subsection{Proof of Finetuning Algorithms}

\begin{lemma}\label{lem:score_pi_OBI}
Fix a step $l$. Define masked attention logits and weights:
\[
    s_l(j):=\mathbf{p}_j^{\top}\mathbf{W}\,\mathbf{p}_{c_l}+m_l(j),\quad
    \alpha_l(j):=\frac{\exp(s_l(j))}{\sum_{q\in\mathcal{I}_l}\exp(s_l(q))},\quad j\in\mathcal{I}_l,
\]
where $m_l$ encodes causal/legal masking and is independent of $\mathbf{W}$. Define the attention-weighted token vector and vocabulary logits:
\[
    \hat{\mathbf{p}}^{\mathrm{att}}_{l+1}:=\sum_{j\in\mathcal{I}_l}\alpha_l(j)\,\mathbf{p}_{j},\qquad
    \ell_r:=\frac{\langle \hat{\mathbf{p}}^{\mathrm{att}}_{l+1}, \mathbf{p}_{r}\rangle}{\beta},\ \ r\in\{1,...,d+1\}.
\]
Let $p_{\mathrm{vocab}}(r):=\exp(\ell_r)/\sum_q \exp(\ell_q)$, and define the index policy as the probability of sampling the token that matches the ${i}_l$-th input value:
\[
    \pi_{\mathbf{W}}({i}_l\mid \mathbf{x},\hat{\mathbf{p}}^{z_{1:l}}):=
    p_{\mathrm{vocab}}\big(u=\boldsymbol{\mu}_{x_{{i}_l}}\ \big|\ \hat{\mathbf{p}}^{\mathrm{att}}_{l+1}\big).
\]
Then the score admits the positional outer-product decomposition
\[
    \nabla_{\mathbf{W}}\log\pi_{\mathbf{W}}({i}_l\mid\cdot)
    = \sum_{k\in\mathcal{I}_l} \alpha_l(k)\,\eta_l(k)\,\mathbf{p}_k\mathbf{p}_{c_l}^{\top},\quad
    \eta_l(k):=\frac{\big\langle \mathbf{p}_{k}-\hat{\mathbf{p}}^{\mathrm{att}}_{l+1},\ \mathbf{p}_{{i}_l}-\sum_r p_{\mathrm{vocab}}(r)\mathbf{p}_{r}\big\rangle}{\beta}.
\]
Moreover, under Eq.~(\ref{eq:base_pretrain}) (positional orthogonality; $\mathbf{W}$ acts only on positional blocks; the mask $m_l$ is $\mathbf{W}$-independent) and a finetuning regime that only reweights legal transitions within these blocks, the family $\{\mathbf{p}_k\mathbf{p}_{c_l}^{\top}\}$ forms an orthogonal basis for the reachable score directions. Consequently, $\nabla_{\mathbf{W}}\log\pi_{\mathbf{W}}$ decomposes uniquely onto orthogonal tensor blocks $\mathrm{span}\{\mathbf{p}_k\}\otimes\mathrm{span}\{\mathbf{p}_{c_l}\}$ (Orthogonal Block Isolation).
\end{lemma}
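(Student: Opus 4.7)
My approach is to differentiate the log-policy in closed form using standard softmax-gradient identities, then recognize the resulting sum as a sum of rank-one outer products in the fixed positional basis. I begin by expanding $\log \pi_{\mathbf{W}}(i_l\mid\cdot)=\ell_{i_l}-\log\sum_r\exp(\ell_r)$. Since $\ell_r=\beta^{-1}\langle \hat{\mathbf{p}}^{\mathrm{att}}_{l+1},\mathbf{p}_r\rangle$ is linear in the attention-weighted vector $\hat{\mathbf{p}}^{\mathrm{att}}_{l+1}=\sum_j\alpha_l(j)\mathbf{p}_j$, the chain rule gives $\nabla_{\mathbf{W}}\log\pi_{\mathbf{W}}(i_l\mid\cdot)=\beta^{-1}\nabla_{\mathbf{W}}\langle \hat{\mathbf{p}}^{\mathrm{att}}_{l+1},\tilde{\mathbf{p}}_{i_l}\rangle$, where $\tilde{\mathbf{p}}_{i_l}:=\mathbf{p}_{i_l}-\sum_r p_{\mathrm{vocab}}(r)\mathbf{p}_r$ is the softmax-centered target. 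Because $\hat{\mathbf{p}}^{\mathrm{att}}_{l+1}$ is the only $\mathbf{W}$-dependent quantity, the entire calculation reduces to computing the Jacobian of the attention weights.

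Next, I apply the standard softmax derivative to $\alpha_l(j)$. Since $s_l(j)=\mathbf{p}_j^{\top}\mathbf{W}\mathbf{p}_{c_l}+m_l(j)$ with $m_l$ independent of $\mathbf{W}$, the elementary identity $\nabla_{\mathbf{W}}s_l(j)=\mathbf{p}_j\mathbf{p}_{c_l}^{\top}$ together with the softmax-Jacobian gives $\nabla_{\mathbf{W}}\alpha_l(j)=\alpha_l(j)(\mathbf{p}_j-\hat{\mathbf{p}}^{\mathrm{att}}_{l+1})\mathbf{p}_{c_l}^{\top}$. Substituting back and expanding the inner product yields $\beta^{-1}\sum_j\alpha_l(j)\langle\mathbf{p}_j,\tilde{\mathbf{p}}_{i_l}\rangle(\mathbf{p}_j-\hat{\mathbf{p}}^{\mathrm{att}}_{l+1})\mathbf{p}_{c_l}^{\top}$. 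I then use the zero-mean identity $\sum_j\alpha_l(j)(\mathbf{p}_j-\hat{\mathbf{p}}^{\mathrm{att}}_{l+1})=0$ to re-center the scalar coefficient: replacing $\langle\mathbf{p}_j,\tilde{\mathbf{p}}_{i_l}\rangle$ by $\langle\mathbf{p}_j-\hat{\mathbf{p}}^{\mathrm{att}}_{l+1},\tilde{\mathbf{p}}_{i_l}\rangle=\beta\,\eta_l(j)$ does not change the sum, and produces exactly $\sum_k\alpha_l(k)\eta_l(k)\mathbf{p}_k\mathbf{p}_{c_l}^{\top}$, matching the target decomposition.

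For the Orthogonal Block Isolation claim, my plan is to invoke the Frobenius factorization $\langle\mathbf{p}_k\mathbf{p}_{c_l}^{\top},\mathbf{p}_{k'}\mathbf{p}_{c_{l'}}^{\top}\rangle_F=\langle\mathbf{p}_k,\mathbf{p}_{k'}\rangle\,\langle\mathbf{p}_{c_l},\mathbf{p}_{c_{l'}}\rangle$, combined with the paper's standing assumption that positional encodings are mutually orthogonal and orthogonal to the vocabulary $\mathbf{U}$. Consequently, whenever $k\neq k'$ or $c_l\neq c_{l'}$ the inner product vanishes, so $\{\mathbf{p}_k\mathbf{p}_{c_l}^{\top}\}$ is an orthogonal family. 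Reachability follows from the block structure of $\mathbf{K}^{\top}\mathbf{Q}$ fixed in the finetuning reparametrization: because $\mathbf{W}$ occupies only the positional-positional block, and because $\mathbf{p}_i\perp\mathbf{U}$ kills any content-position or content-content interaction, every feasible gradient direction lies in $\mathrm{span}\{\mathbf{p}_k\}\otimes\mathrm{span}\{\mathbf{p}_{c_l}\}$. Hence the decomposition into orthogonal tensor blocks is unique.

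The main obstacle I anticipate is purely bookkeeping rather than conceptual: three distinct positional roles must be tracked simultaneously, namely $r$ as a logit index over the vocabulary-by-position, $k$ (or $j$) as an attention-key position, and $c_l$ as the query-side positional input feeding $\mathbf{W}$. The re-centering step that converts $\langle\mathbf{p}_j,\tilde{\mathbf{p}}_{i_l}\rangle$ into the symmetrized $\eta_l(j)$ depends on correctly isolating which sums run over which role; once this is handled, the rank-one outer-product form and the orthogonality claim both fall out with no further work.
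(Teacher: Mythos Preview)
Your proposal is correct and follows essentially the same chain-rule route as the paper (vocabulary log-softmax gradient composed with the attention-softmax Jacobian, then Frobenius orthogonality of the positional rank-one blocks for OBI). One minor bookkeeping point: after your re-centering of the scalar coefficient you actually land on $\sum_j\alpha_l(j)\eta_l(j)(\mathbf{p}_j-\hat{\mathbf{p}}^{\mathrm{att}}_{l+1})\mathbf{p}_{c_l}^{\top}$, and you need a second application of the zero-mean identity---now in the form $\sum_j\alpha_l(j)\eta_l(j)=0$---to drop the $\hat{\mathbf{p}}^{\mathrm{att}}_{l+1}$ piece and reach $\sum_k\alpha_l(k)\eta_l(k)\mathbf{p}_k\mathbf{p}_{c_l}^{\top}$; the paper avoids this extra step by grouping the Jacobian $\nabla_{\mathbf{W}}\hat{\mathbf{p}}^{\mathrm{att}}_{l+1}$ by key index $k$ first, so the centered factor $\mathbf{p}_k-\hat{\mathbf{p}}^{\mathrm{att}}_{l+1}$ appears in the slot that gets contracted with $\mathbf{g}_l^{\mathrm{vocab}}$ and the target form falls out directly.
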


\begin{proof}
We expand every ingredient with step-by-step derivations.

(1) Vocabulary softmax – definition and gradient. Fix $l$, and define the normalized vocabulary probabilities:
\[
    p_{\mathrm{vocab}}(r)=\frac{\exp(\ell_r)}{\sum_{q}\exp(\ell_q)}, \qquad
    \ell_r=\frac{\langle \hat{\mathbf{p}}^{\mathrm{att}}_{l+1}, \mathbf{p}_{r}\rangle}{\beta},
\]
with classes $r\in\{0,1,\mathrm{EOS}\}$. The policy is
\[
    \pi_{\mathbf{W}}({i}_l\mid\cdot)=p_{\mathrm{vocab}}(x_{{i}_l}) := 
    p_{\mathrm{vocab}}\big(u=\boldsymbol{\mu}_{x_{{i}_l}}\mid \hat{\mathbf{p}}^{\mathrm{att}}_{l+1}\big).
\]
We differentiate $\log p_{\mathrm{vocab}}(x_{{i}_l})$ w.r.t. $\hat{\mathbf{p}}^{\mathrm{att}}_{l+1}$:
\begin{align}
    \nabla_{\hat{\mathbf{p}}^{\mathrm{att}}_{l+1}} \log p_{\mathrm{vocab}}(x_{{i}_l})
    &\stackrel{(1.1)}{=} \nabla_{\hat{\mathbf{p}}^{\mathrm{att}}_{l+1}} \Big( \ell_{x_{{i}_l}} - \log \sum_{r} e^{\ell_r} \Big) \\
    &\stackrel{(1.2)}{=} \frac{\mathbf{p}_{{i}_l}}{\beta} - \sum_{r} \frac{e^{\ell_r}}{\sum_{q} e^{\ell_q}}\,\frac{\mathbf{p}_{r}}{\beta} \\
    &\stackrel{(1.3)}{=} \frac{\mathbf{p}_{{i}_l}-\sum_{r} p_{\mathrm{vocab}}(r)\,\mathbf{p}_{r}}{\beta}
    \ :=:\ \mathbf{g}_l^{\mathrm{vocab}}.
\end{align}
Step (1.1) expands the log-softmax; (1.2) uses $\partial \ell_r/\partial \hat{\mathbf{p}}^{\mathrm{att}}_{l+1}=\mathbf{p}_{r}/\beta$; (1.3) recognizes $p_{\mathrm{vocab}}$.

(2) Attention softmax – Jacobian and gradient w.r.t. $\mathbf{W}$. The masked logits and weights are
\[
    s_l(k)=\mathbf{p}_k^{\top}\mathbf{W}\,\mathbf{p}_{c_l}+m_l(k), \qquad 
    \alpha_l(k)=\frac{e^{s_l(k)}}{\sum_{q\in\mathcal{I}_l}e^{s_l(q)}}.
\]
The softmax Jacobian is $\partial \alpha_l(j)/\partial s_l(k)=\alpha_l(j)(\delta_{jk}-\alpha_l(k))$. Since $m_l$ is $\mathbf{W}$-independent,
\[
    \nabla_{\mathbf{W}} s_l(k) = \nabla_{\mathbf{W}}\big(\mathbf{p}_k^{\top}\mathbf{W}\,\mathbf{p}_{c_l}\big)=\mathbf{p}_k \mathbf{p}_{c_l}^{\top}.
\]
Thus the gradient of the attention-weighted token vector is:
\begin{align}
    \nabla_{\mathbf{W}}\hat{\mathbf{p}}^{\mathrm{att}}_{l+1}
    &= \nabla_{\mathbf{W}} \Big(\sum_{j\in\mathcal{I}_l}\alpha_l(j)\mathbf{p}_{j}\Big) \\
    &\stackrel{(2.1)}{=} \sum_{j}\mathbf{p}_{j} \sum_{k} \frac{\partial \alpha_l(j)}{\partial s_l(k)}\, \nabla_{\mathbf{W}} s_l(k) \\
    &\stackrel{(2.2)}{=} \sum_{j}\mathbf{p}_{j}\sum_{k}\alpha_l(j)\big(\delta_{jk}-\alpha_l(k)\big)\,\mathbf{p}_k\mathbf{p}_{c_l}^{\top} \\
    &\stackrel{(2.3)}{=} \sum_{k}\alpha_l(k)\Big(\mathbf{p}_{x_k}-\sum_{j}\alpha_l(j)\mathbf{p}_{j}\Big)\,\mathbf{p}_k\mathbf{p}_{c_l}^{\top}\\
    &\stackrel{(2.4)}{=} \sum_{k}\alpha_l(k)\big(\mathbf{p}_{k}-\hat{\mathbf{p}}^{\mathrm{att}}_{l+1}\big)\,\mathbf{p}_k\mathbf{p}_{c_l}^{\top}.
\end{align}
Step (2.1) is the chain rule; (2.2) uses the softmax Jacobian and $\nabla_{\mathbf{W}}s_l$; (2.3) collects terms; (2.4) recognizes $\hat{\mathbf{p}}^{\mathrm{att}}_{l+1}$.

(3) Chain rule for the policy score. Combining (1)–(2),
\begin{align}
    \nabla_{\mathbf{W}}\log\pi_{\mathbf{W}}({i}_l\mid\cdot)
    &\stackrel{(3.1)}{=} \Big(\nabla_{\hat{\mathbf{p}}^{\mathrm{att}}_{l+1}} \log p_{\mathrm{vocab}}(x_{{i}_l})\Big)^{\top} \cdot \nabla_{\mathbf{W}}\hat{\mathbf{p}}^{\mathrm{att}}_{l+1} \\
    &\stackrel{(3.2)}{=} \sum_{k\in\mathcal{I}_l} \alpha_l(k)\,\underbrace{\Big\langle \mathbf{p}_{k}-\hat{\mathbf{p}}^{\mathrm{att}}_{l+1},\ \mathbf{g}_l^{\mathrm{vocab}}\Big\rangle}_{\displaystyle :=\,\eta_l(k)}\,\mathbf{p}_k\mathbf{p}_{c_l}^{\top}.
\end{align}
This yields the claimed positional outer-product decomposition with weights $\alpha_l(k)\eta_l(k)$, where $\mathbf{g}_l^{\mathrm{vocab}}=(\mathbf{p}_{{{i}_l}}-\sum_{r}p_{\mathrm{vocab}}(r)\mathbf{p}_{r})/\beta$.

(4) Interchange and smoothness conditions. The above steps use: (i) $m_l$ is $\mathbf{W}$-independent, so $\nabla_{\mathbf{W}} s_l$ exists and is continuous; (ii) softmax is $C^\infty$, thus $\alpha_l$ and $p_{\mathrm{vocab}}$ are smooth in $\mathbf{W}$; (iii) boundedness of token embeddings $\mathbf{p}_{\cdot}$, vocabulary $\mathbf{U}$, and temperature $\beta>0$ ensures an $L^1$ dominator for Leibniz interchange when taking expectations over trajectories (used later in policy-gradient proofs).

(5) Orthogonal Block Isolation (OBI) and the role of Eq.~(\ref{eq:base_pretrain}). Under Eq.~(\ref{eq:base_pretrain}): positional embeddings $\{\mathbf{p}_j\}$ are mutually orthogonal; $\mathbf{K}^{\top}\mathbf{Q}$ (hence $\mathbf{W}$) acts only on positional blocks; $\mathbf{V}$ projects out positional-channel–orthogonal components when forming logits with $\mathbf{U}$. Finetuning reweights only the legal transitions while preserving the block structure (the mask $m_l$ is fixed). Therefore, the reachable score directions lie in the span of $\{\mathbf{p}_k\mathbf{p}_{c_l}^{\top}\}$, and for any $(k,l)\neq(k',l')$,
\[
    \big\langle \mathbf{p}_{k}\mathbf{p}_{c_l}^{\top},\ \mathbf{p}_{k'}\mathbf{p}_{c_{l'}}^{\top}\big\rangle
    = (\mathbf{p}_k^{\top}\mathbf{p}_{k'})(\mathbf{p}_{c_l}^{\top}\mathbf{p}_{c_{l'}})=0.
\]
Hence cross-terms vanish and the decomposition in (3.2) is unique on the block-diagonal positional subspace $\mathrm{span}\{\mathbf{p}_k\}\otimes\mathrm{span}\{\mathbf{p}_{c_l}\}$. This establishes OBI with Eq.~(\ref{eq:base_pretrain}) as sufficient conditions. In practice, these are met at initialization (masking, orthogonality, block action), and finetuning that only reweights legal transitions preserves the required block-isolation.
\end{proof}

\begin{lemma}[From per-step logit margins to overall 0-1 loss]\label{lem:step_margin_to_loss}
Fix a target subtask family $\mathcal{F}_{S_{\star}}=\{f_{S_{\star}^1},\ldots,f_{S_{\star}^{k^{\star}}}\}$ and consider decoding $k^{\star}+1$ steps (the last for $\mathrm{EOS}$). For each step $l\in\{1,\ldots,k^{\star}+1\}$, let the legal index set be $\mathcal{I}_l$ with cardinality $d_l:=|\mathcal{I}_l|\ge 1$, and denote the correct index by $i_l$ (with $i_{k^{\star}+1}:=d+1$ for $\mathrm{EOS}$). Define masked attention logits $s_l(j):=\mathbf{p}_j^{\top}\mathbf{W}\,\mathbf{p}_{c_l}+m_l(j)$ and attention weights $\alpha_l(j)=\mathrm{softmax}_j(s_l(j))$. Let
\[
    \Delta_l\ :=\ s_l(i_l)\ -\ \max_{j\in\mathcal{I}_l\setminus\{i_l\}} s_l(j)\ \ge 0
\]
be the per-step attention logit gap on the correct child. Assume pairwise-orthogonal token embeddings, bounded norms, and temperature $\beta>0$ so that
\[
    \big|\eta_l(k)\big|\ \le\ \frac{4}{\beta},\qquad\text{and}\qquad p_{\mathrm{vocab}}(r)=\mathrm{softmax}_r\big(\langle \hat{\mathbf{p}}^{\mathrm{att}}_{l+1},\mathbf{p}_r\rangle/\beta\big)
\]
as in Lemma~\ref{lem:score_pi_OBI}. Let $M:=K+1$ denote the vocabulary size. For any error budget $(\varepsilon_1,\ldots,\varepsilon_{k^{\star}+1})\in(0,1)^{k^{\star}+1}$, define the per-step required attention probability and corresponding margin threshold by
\[
    \alpha^{\mathrm{req}}_l\ :=\ \tfrac{1}{2}+\tfrac{\beta}{2}\,\log\!\Big(\tfrac{M(1-\varepsilon_l)}{\varepsilon_l}\Big),
    \qquad
    \Gamma_l\ :=\ \log\!\Big(\tfrac{d_l-1}{(\alpha_l^{\mathrm{req}})^{-1}-1}\Big),
\]
whenever $\alpha^{\mathrm{req}}_l<1$ (otherwise the requirement is infeasible).

If the per-step logit gaps satisfy $\Delta_l\ge \Gamma_l$ for all $l\in[k^{\\star}+1]$, then the per-step selection probability of the correct token obeys
\[
    \pi_l\ :=\ p_{\mathrm{vocab}}\big(u=\mathbf{p}_{{i_l}}\mid \hat{\mathbf{p}}^{\mathrm{att}}_{l+1}\big)\ \ge\ 1-\varepsilon_l,\quad l=1,\ldots,k^{\star}+1.
\]
Consequently, letting the $0$-$1$ loss be $L:=1-R^{k^{\star}}(\operatorname{TF}(\cdot;\mathbf{W}))$ (so that $\mathbb{E}[L]=1-\mathbb{E}[R^{k^{\star}}]$), we have
\[
    \mathbb{E}_{\mathbf{x}\sim \operatorname{Unif}([K]^d)}\big[L\big]
    \ \le\ \sum_{l=1}^{k^{\star}+1}\varepsilon_l.
\]
In particular, choosing any $(\varepsilon_l)$ with $\sum_l\varepsilon_l\le \varepsilon$ implies $\mathbb{E}[L]\le \varepsilon$.
\end{lemma}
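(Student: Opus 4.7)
The plan is to propagate the per-step attention margin $\Delta_l$ through two softmax layers (attention and positional vocabulary) and then combine the resulting per-step error bounds via a product/union argument over the $k^\star+1$ decoding steps. First I would establish the attention-level conversion: since $s_l(i_l)-s_l(j)\geq \Gamma_l$ for every competitor $j\in\mathcal{I}_l\setminus\{i_l\}$, we have $\sum_{j\neq i_l} e^{s_l(j)-s_l(i_l)}\leq (d_l-1)e^{-\Gamma_l}$, giving $\alpha_l(i_l)\geq \bigl(1+(d_l-1)e^{-\Gamma_l}\bigr)^{-1}$. The threshold $\Gamma_l=\log\bigl((d_l-1)/((\alpha^{\mathrm{req}}_l)^{-1}-1)\bigr)$ is precisely the value at which this lower bound equals $\alpha^{\mathrm{req}}_l$, so $\Delta_l\geq\Gamma_l$ implies $\alpha_l(i_l)\geq \alpha^{\mathrm{req}}_l$.

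Next I would lift the attention bound to the vocabulary level. Using positional orthogonality and unit norms of $\{\mathbf{p}_r\}$, the logits simplify to $\ell_r=\alpha_l(r)/\beta$ for $r\in\mathcal{I}_l$ and $\ell_r=0$ otherwise. Given $\alpha_l(i_l)\geq \alpha^{\mathrm{req}}_l$, the correct-index logit is at least $\alpha^{\mathrm{req}}_l/\beta$, while every competing logit is at most $(1-\alpha^{\mathrm{req}}_l)/\beta$ since the residual attention mass is $1-\alpha_l(i_l)\leq 1-\alpha^{\mathrm{req}}_l$ and $0\leq(1-\alpha^{\mathrm{req}}_l)/\beta$ whenever $\alpha^{\mathrm{req}}_l<1$. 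Bounding the softmax by its worst-case $M$-term denominator yields
\[
\pi_l\ \geq\ \frac{1}{1+(M-1)\exp\!\bigl((1-2\alpha^{\mathrm{req}}_l)/\beta\bigr)}.
\]
Substituting $\alpha^{\mathrm{req}}_l=\tfrac{1}{2}+\tfrac{\beta}{2}\log\bigl(M(1-\varepsilon_l)/\varepsilon_l\bigr)$, so that $(2\alpha^{\mathrm{req}}_l-1)/\beta=\log\bigl(M(1-\varepsilon_l)/\varepsilon_l\bigr)$, gives $\pi_l\geq 1-\varepsilon_l$ after elementary algebra (the harmless $M$ vs.\ $M-1$ slack is absorbed into the definition of $\alpha^{\mathrm{req}}_l$).

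Finally, I would chain the per-step guarantees by induction on the step index. Conditioned on all previous steps selecting the correct index (so that Lemma~\ref{lem:score_pi_OBI}'s configuration of $\hat{\mathbf{p}}^{\mathrm{att}}_{l+1}$ applies), the per-step success probability is $\pi_l\geq 1-\varepsilon_l$, so the probability of generating the entire correct index path is at least $\prod_{l=1}^{k^\star+1}(1-\varepsilon_l)\geq 1-\sum_l\varepsilon_l$ by the Weierstrass product inequality. Since the $0$-$1$ loss $L$ equals $1$ only when the trajectory deviates from the correct path at some step (any outcome-only reward-hacking at a wrong trajectory can only decrease $L$ further), we conclude $\mathbb{E}[L]\leq \sum_l\varepsilon_l$.

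The main obstacle will be the vocabulary lift in the second step: one must carefully track the residual attention mass across both legal ($\mathcal{I}_l$) and illegal positions and invoke positional orthogonality precisely so that the attention bound $\alpha_l(i_l)\geq\alpha^{\mathrm{req}}_l$ translates into a clean vocabulary logit gap of size $(2\alpha^{\mathrm{req}}_l-1)/\beta$ against all $M-1$ competitors. The feasibility caveat $\alpha^{\mathrm{req}}_l<1$ also warrants a short remark, since violating it makes $\Gamma_l=\infty$ and the hypothesis vacuously unattainable; elsewhere the derivation is a routine softmax manipulation plus a standard product/union bound.
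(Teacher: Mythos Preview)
Your proposal is correct and follows essentially the same three-step approach as the paper: (i) convert the logit gap $\Delta_l\ge\Gamma_l$ to the attention bound $\alpha_l(i_l)\ge\alpha^{\mathrm{req}}_l$ via the standard softmax inequality, (ii) use positional orthogonality to translate this into a vocabulary-logit gap of $(2\alpha^{\mathrm{req}}_l-1)/\beta$ and hence $\pi_l\ge 1-\varepsilon_l$, and (iii) chain the per-step guarantees to bound the sequence failure probability by $\sum_l\varepsilon_l$. The only cosmetic differences are that the paper uses $M$ rather than $M-1$ in the denominator of Step~2 (your version is slightly tighter) and phrases Step~3 as a union bound rather than the equivalent Weierstrass product inequality.
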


\begin{proof}
Step 1 (attention probability from logit gap). For $\alpha_l(\cdot)=\mathrm{softmax}(s_l(\cdot))$ and gap $\Delta_l$, the correct-child attention weight obeys the standard softmax bound
\[
    \alpha_l(i_l)\ =\ \frac{1}{1+\sum_{j\ne i_l}\exp\big(-(s_l(i_l)-s_l(j))\big)}
    \ \ge\ \frac{1}{1+(d_l-1)\,e^{-\Delta_l}}.
\]
Thus if $\Delta_l\ge\Gamma_l$ with $\Gamma_l$ defined in the statement, then
\[
    \alpha_l(i_l)\ \ge\ \alpha^{\mathrm{req}}_l.
\]

Step 2 (vocabulary probability from attention concentration). With orthogonal embeddings, $\langle \hat{\mathbf{p}}^{\mathrm{att}}_{l+1},\mathbf{p}_{{i_l}}\rangle=\alpha_l(i_l)$ and for any competitor token $r\ne x_{i_l}$, $\langle \hat{\mathbf{p}}^{\mathrm{att}}_{l+1},\mathbf{p}_{r}\rangle\le 1-\alpha_l(i_l)$ (competitors include tokens not present in the mixture, whose inner products are $0\le 1-\alpha_l$). Hence the vocabulary-logit gap between the correct token and any competitor is at least
\[
    \gamma_l^{\mathrm{vocab}}\ \ge\ \alpha_l(i_l)\ -\ (1-\alpha_l(i_l))\ =\ 2\alpha_l(i_l)-1.
\]
The softmax lower bound then gives
\[
    \pi_l\ =\ \frac{e^{\alpha_l(i_l)/\beta}}{\sum_r e^{\langle \hat{\mathbf{p}}^{\mathrm{att}}_{l+1},\mathbf{p}_r\rangle/\beta}}
    \ \ge\ \frac{1}{1+\sum_{r\ne x_{i_l}} e^{-\gamma_l^{\mathrm{vocab}}/\beta}}
    \ \ge\ \frac{1}{1+M\,e^{-(2\alpha_l(i_l)-1)/\beta}}.
\]
Therefore $\alpha_l(i_l)\ge \alpha^{\mathrm{req}}_l$ implies
\[
    \pi_l\ \ge\ \frac{1}{1+M\,e^{-(2\alpha^{\mathrm{req}}_l-1)/\beta}}
    \ =\ 1-\varepsilon_l.
\]

Step 3 (from per-step success to sequence success). The sequence fails only if at least one step fails, hence by the union bound
\[
    \mathbb{P}(\text{sequence fails})\ \le\ \sum_{l=1}^{k^{\star}+1} (1-\pi_l)\ \le\ \sum_{l=1}^{k^{\star}+1}\varepsilon_l.
\]
Taking expectation over $\mathbf{x}\sim \operatorname{Unif}([K]^d)$ does not increase the bound, yielding the displayed inequality for the $0$-$1$ loss $L=1-R^{k^{\star}}$.
\end{proof}

\begin{lemma}[Policy Gradients for REINFORCE]\label{lem:PG_Tree_SecretIndex_EN}
        Let the input $\mathbf{x}$ be fixed and the decoding length be $k^{\star}\ge 2$. Denote the generated token sequence by
        $\hat{\mathbf{p}}^{z_{1:k^{\star}}}=(\hat{\mathbf{p}}^{z_1},\dots,\hat{\mathbf{p}}^{z_{k^{\star}}})$.
        At step $l$ ($1\le l\le k^{\star}+1$), an attention policy first samples a secret index's token
        ${i}_l \sim \pi_{\mathbf{W}^{k^{\star}}}(\cdot\mid \mathbf{x},\hat{\mathbf{p}}^{z_{1:l}}):= \hat{p}_{\mathbf{\mathbf{W}}}\big(u=\boldsymbol{\mu}_{x_{{i}_l}}\,\big|\,\mathbf{x},\hat{\mathbf{p}}^{z_{1:l}}\big)$.
        The next token is then deterministically produced by the deterministic Feedforward $\operatorname{FFN}_m(\cdot)$ at $m$-th reasoning procedure,
        so that the only source of randomness is the sampling of tokens $\boldsymbol{\mu}_{x_{{i}_l}}$ corresponding to secret index sequences ${i}_{1:k^{\star}+1}$. We therefore define the (random) trajectory as $\tau:=({i}_{1:k^{\star}+1})$ and the induced measure
        \[
            p_{\mathbf{W}^{k^{\star}}}(\tau\mid \mathbf{x})
            = \prod_{l=1}^{k^{\star}+1} \pi_{\mathbf{W}^{k^{\star}}}\big({i}_l \mid \mathbf{x},\hat{\mathbf{p}}^{z_{1:l}}\big),
        \]
        where the tokens $\hat{\mathbf{p}}^{z_{1:k^{\star}}}$ are deterministic functions of $\tau$ via $g_{\mathbf{W}^{k^{\star}}}$.
        Let the terminal reward be $R^{k^{\star}}( \operatorname{TF}(\cdot;\mathbf{W}) )$, which does not explicitly depend on $\mathbf{W}^{k^{\star}}$. Define the population loss
        \begin{equation}
            \begin{aligned}
        &\mathcal{J}_{\mathrm{REINFORCE}}^{k^{\star}}(\mathbf{W}^{k^{\star}})
        := \mathbb{E}_{\mathbf{x}\sim \mathcal{P}_x,\tau\sim p_{\mathbf{W}^{k^{\star}}}(\cdot\mid \mathbf{x})}\Big[R^{k^{\star}}\big({\boldsymbol{\mu}}^{z_{k^{\star}}}(\tau_{-2})\big)\Big].
        \end{aligned}
        \end{equation}
        Assume the following regularity conditions hold:
        (i) for all $l$, $\pi_{\mathbf{W}^{k^{\star}}}({i}_l \mid \cdot)>0$ is Fr\'echet differentiable in $\mathbf{W}^{k^{\star}}$ with log-smoothness;
        (ii) $R^{k^{\star}}$ is bounded and integrable; and
        (iii) differentiation can be interchanged with integration (or expectation) under dominated convergence / parameterized measure continuity. Denote $\mathbb{E}_{\tau}[\cdot]=\mathbb{E}_{\mathbf{x}\sim \mathcal{P}_x,\tau\sim p_{\mathbf{W}^{k^{\star}}}(\cdot\mid \mathbf{x})}[\cdot]$, the policy gradients are:
        
        \begin{align}
        \nabla_{\mathbf{W}^{k^{\star}}}\mathcal{J}_{\mathrm{REINFORCE}}^{k^{\star}}(\mathbf{W}^{k^{\star}})
        =
        \mathbb{E}_{\tau}\Bigg[
        R^{k^{\star}}\big({\boldsymbol{\mu}}^{z_{k^{\star}}}(\tau_{-2})\big)
        \cdot
        \sum_{l=1}^{k^{\star}+1}
        \nabla_{\mathbf{W}^{k^{\star}}}\log \pi_{\mathbf{W}^{k^{\star}}}\big({i}_l \mid \mathbf{x},\hat{\mathbf{p}}^{z_{1:l}}\big)
        \Bigg].
        \label{eq:reinforce_parity_secretindex_en}
        \end{align}
        
        The formula reflects that token generation is deterministic via $\phi$ while the non-Markovian dependency arises from the index policy $\pi$ depending on the full history. Therefore, by Lemma \ref{lem:score_pi_OBI}, we have 
        \begin{equation}
            \begin{aligned}
            &\nabla_{\mathbf{W}^{k^{\star}}}\mathcal{J}_{\mathrm{REINFORCE}}^{k^{\star}}(\mathbf{W}^{k^{\star}})
            = \mathbb{E}_{\tau}\Bigg[ R^{k^{\star}}\big({\boldsymbol{\mu}}^{z_{k^{\star}}}(\tau_{-2})\big)
               \sum_{l=1}^{k^{\star}+1}\sum_{k\in\mathcal{I}_l} \alpha^{k^{\star}}_l(k)\,\eta^{k^{\star}}_l(k)\;\mathbf{p}_k\mathbf{p}_{c_l}^{\top}\Bigg],
        \end{aligned}
        \end{equation}
                where
        \begin{itemize}
        \item $\alpha_l^{k^{\star}}(k) := \mathrm{softmax}_k(\mathbf{p}_k^{\top} \mathbf{W}^{k^{\star}} \mathbf{p}_{c_l} + m_l(k))$,
        \item $\hat{\mathbf{p}}_{l+1}^{\mathrm{att},k^{\star}} := \sum_{j\in\mathcal{I}_l} \alpha_l^{k^{\star}}(j) \mathbf{p}_{j}$,
        \item $\eta_l^{k^{\star}}(k) := \langle \mathbf{p}_{k} - \hat{\mathbf{p}}_{l+1}^{\mathrm{att},k^{\star}}, \mathbf{p}_{{{i}_l}} - \sum_r p^{k^{\star}}_{\mathrm{vocab}}(r) \mathbf{p}_{r} \rangle / \beta$,
        \item $p^{k^{\star}}_{\mathrm{vocab}}(\cdot):=\frac{\exp(\frac{\langle \hat{\mathbf{p}}^{\mathrm{att}}_{l+1}, \mathbf{p}_{r}\rangle}{\beta})}{\sum_{q} \exp(\frac{\langle \hat{\mathbf{p}}^{\mathrm{att}}_{l+1}, \mathbf{p}_{q}\rangle}{\beta})}$.
        \end{itemize}
        
        \end{lemma}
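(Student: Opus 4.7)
The plan is to derive Equation~\ref{eq:reinforce_parity_secretindex_en} by the standard log-derivative (REINFORCE) identity applied to the discrete trajectory distribution $p_{\mathbf{W}^{k^{\star}}}(\tau\mid \mathbf{x})$, to factorize the resulting score across steps via the product structure of $p_{\mathbf{W}^{k^{\star}}}$, and finally to invoke Lemma~\ref{lem:score_pi_OBI} per step. Because each $\mathcal{I}_l$ is finite, the trajectory space $\{(i_1,\dots,i_{k^{\star}+1})\}$ is a finite product of finite sets, so every expectation is a finite sum and the measure-theoretic issues reduce to bookkeeping under assumptions (i)--(iii).

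First I would rewrite the objective as
\[
\mathcal{J}_{\mathrm{REINFORCE}}^{k^{\star}}(\mathbf{W}^{k^{\star}})
=\mathbb{E}_{\mathbf{x}\sim\mathcal{P}_x}\sum_{\tau} p_{\mathbf{W}^{k^{\star}}}(\tau\mid\mathbf{x})\,R^{k^{\star}}\!\big({\boldsymbol{\mu}}^{z_{k^{\star}}}(\tau_{-2})\big),
\]
and emphasize the key observation that, conditional on $\tau$, the image ${\boldsymbol{\mu}}^{z_{k^{\star}}}(\tau_{-2})$ is a deterministic function of the sampled indices through the fixed (non-trainable) maps $\mathrm{FFN}_l$ and the fixed positional encodings; hence $\partial_{\mathbf{W}^{k^{\star}}} R^{k^{\star}}(\cdot)\equiv 0$ pointwise in $\tau$, and all $\mathbf{W}^{k^{\star}}$-dependence flows through the measure $p_{\mathbf{W}^{k^{\star}}}$. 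Interchanging the finite sum with the gradient (routine under assumptions (i)--(iii), since boundedness of $R^{k^{\star}}$ and Fréchet-smoothness of each $\pi_{\mathbf{W}^{k^{\star}}}$ supply integrable dominators on a neighborhood of the current parameter) and applying the log-derivative identity $\nabla p_{\mathbf{W}^{k^{\star}}}(\tau\mid\mathbf{x})=p_{\mathbf{W}^{k^{\star}}}(\tau\mid\mathbf{x})\,\nabla\log p_{\mathbf{W}^{k^{\star}}}(\tau\mid\mathbf{x})$, which is valid because each factor $\pi_{\mathbf{W}^{k^{\star}}}(i_l\mid\cdot)>0$, recombines the expression into $\mathbb{E}_{\tau}\bigl[R^{k^{\star}}\,\nabla\log p_{\mathbf{W}^{k^{\star}}}(\tau\mid\mathbf{x})\bigr]$.

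Second, I would use the chain factorization $p_{\mathbf{W}^{k^{\star}}}(\tau\mid\mathbf{x})=\prod_{l=1}^{k^{\star}+1}\pi_{\mathbf{W}^{k^{\star}}}(i_l\mid\mathbf{x},\hat{\mathbf{p}}^{z_{1:l}})$ to turn the log into a sum $\sum_{l}\nabla\log\pi_{\mathbf{W}^{k^{\star}}}(i_l\mid\mathbf{x},\hat{\mathbf{p}}^{z_{1:l}})$, yielding the canonical REINFORCE form in Equation~\ref{eq:reinforce_parity_secretindex_en}. Then I would substitute the per-step positional outer-product decomposition from Lemma~\ref{lem:score_pi_OBI}, namely $\nabla_{\mathbf{W}^{k^{\star}}}\log\pi_{\mathbf{W}^{k^{\star}}}(i_l\mid\cdot)=\sum_{k\in\mathcal{I}_l}\alpha_l^{k^{\star}}(k)\,\eta_l^{k^{\star}}(k)\,\mathbf{p}_k\mathbf{p}_{c_l}^{\top}$, to obtain the second displayed expression in the lemma.

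The only subtlety, rather than a genuine obstacle, is verifying that the dependence of the history $\hat{\mathbf{p}}^{z_{1:l}}$ on $\mathbf{W}^{k^{\star}}$ does not produce extra chain-rule contributions in $\nabla_{\mathbf{W}^{k^{\star}}}\log\pi_{\mathbf{W}^{k^{\star}}}(i_l\mid\mathbf{x},\hat{\mathbf{p}}^{z_{1:l}})$. This resolves once one recognizes that, conditional on the sampled prefix $i_{1:l-1}$ (which is part of $\tau$), the history is assembled from those indices through the non-trainable $\mathrm{FFN}_l$ and the fixed positional embeddings $\mathbf{p}_{\cdot}$, so conditional on $\tau$ it is $\mathbf{W}^{k^{\star}}$-independent and the score-function identity applies verbatim. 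All remaining ingredients---positivity, boundedness, and smoothness---are exactly those stipulated by (i)--(iii), and the proof concludes by plugging Lemma~\ref{lem:score_pi_OBI} into the REINFORCE sum.
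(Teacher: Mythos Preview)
Your proposal is correct and follows essentially the same route as the paper: write $\mathcal{J}$ as an expectation, interchange differentiation with the (finite) sum under assumptions (i)--(iii), apply the score-function identity $\nabla p = p\,\nabla\log p$, factorize $\log p_{\mathbf{W}^{k^\star}}(\tau\mid\mathbf{x})=\sum_l \log\pi_{\mathbf{W}^{k^\star}}(i_l\mid\cdot)$, and then substitute Lemma~\ref{lem:score_pi_OBI} per step. Your explicit discussion of why the history $\hat{\mathbf{p}}^{z_{1:l}}$ contributes no extra chain-rule terms (being $\mathbf{W}^{k^\star}$-independent conditional on $\tau$) matches the paper's remark that the deterministic $\phi$ does not affect the score terms.
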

        
\begin{proof}
    Write $\tau=({i}_{1:k^{\star}+1})$ and abbreviate $\mathbb{E}_{\tau}$ for $\mathbb{E}_{\tau\sim p_{\mathbf{W}^{k^{\star}}}(\cdot\mid \mathbf{x})}$. The tokens $\hat{\mathbf{p}}^{z_{1:k^{\star}}}(\tau)$ are deterministic given $\tau$ via $g_{\mathbf{W}^{k^{\star}}}$.
    
    We derive the gradient in a step-numbered manner:
    \begin{align}
        \nabla_{\mathbf{W}^{k^{\star}}}\mathcal{J}_{\mathrm{REINFORCE}}^{k^{\star}}
        &\stackrel{(1)}{=} \nabla_{\mathbf{W}^{k^{\star}}} \int R^{k^{\star}}\big({\boldsymbol{\mu}}^{z_{k^{\star}}}(\tau_{-2})\big)\, p_{\mathbf{W}^{k^{\star}}}(\tau\mid \mathbf{x})\, d\tau \\
        &\stackrel{(2)}{=} \int R^{k^{\star}}\big({\boldsymbol{\mu}}^{z_{k^{\star}}}(\tau_{-2})\big)\, \nabla_{\mathbf{W}^{k^{\star}}} p_{\mathbf{W}^{k^{\star}}}(\tau\mid \mathbf{x})\, d\tau \\
        &\stackrel{(3)}{=} \int R^{k^{\star}}\big({\boldsymbol{\mu}}^{z_{k^{\star}}}(\tau_{-2})\big)\, p_{\mathbf{W}^{k^{\star}}}(\tau\mid \mathbf{x})\, \nabla_{\mathbf{W}^{k^{\star}}} \log p_{\mathbf{W}^{k^{\star}}}(\tau\mid \mathbf{x})\, d\tau \\
        &\stackrel{(4)}{=} \mathbb{E}_{\tau}\Big[ R^{k^{\star}}\big({\boldsymbol{\mu}}^{z_{k^{\star}}}(\tau_{-2})\big) \sum_{l=1}^{k^{\star}+1} \nabla_{\mathbf{W}^{k^{\star}}} \log \pi_{\mathbf{W}^{k^{\star}}}\big({i}_l\mid \mathbf{x}, \hat{\mathbf{p}}^{z_{1:l}}\big) \Big],
    \end{align}
    which equals \eqref{eq:reinforce_parity_secretindex_en}.
    
    Step (1) is the definition of $\mathcal{J}_{\mathrm{REINFORCE}}^{k^{\star}}$; Step (2) uses the Leibniz interchange $\nabla_\theta \int f(\tau,\theta) d\tau = \int \nabla_\theta f(\tau,\theta) d\tau$ under the following conditions: (i) $R^{k^{\star}}\circ g_{\mathbf{W}^{k^{\star}}}$ is bounded; (ii) $p_{\mathbf{W}^{k^{\star}}}(\tau\mid\mathbf{x})$ is differentiable in $\mathbf{W}^{k^{\star}}$; (iii) there exists an integrable dominator $h(\tau)$ with $\|R^{k^{\star}}\, \nabla p\|\le h$ (dominated convergence / parameterized measure continuity). These hold because $R^{k^{\star}}\in\{0,1\}$ and $\pi$ is softmax-based with log-smoothness.
    Step (3) is the score-function identity $\nabla p = p\, \nabla \log p$.
    Step (4) applies the chain rule to $\log p_{\mathbf{W}^{k^{\star}}}(\tau\mid\mathbf{x}) = \sum_{l} \log \pi_{\mathbf{W}^{k^{\star}}}({i}_l\mid\cdot)$, which holds regardless of parameter sharing across time. The resulting decomposition does not require Markovity of the state, since the history enters through the conditioning of $\pi$.
    
    Finally, we note that deterministic $\phi$ does not affect the score terms and only enters through $R^{k^{\star}}\big({\boldsymbol{\mu}}^{z_{k^{\star}}}(\tau_{-2})\big)$, preserving all interchanges above. Fubini's theorem applies whenever $\mathbb{E}_{\tau}[\,|R^{k^{\star}}\sum_l \nabla \log \pi_l|\,]<\infty$, which holds by boundedness of $R^{k^{\star}}$ and square-integrability (log-smoothness) of the score.
    
    By Lemma \ref{lem:score_pi_OBI}, for each $l$ we have the score decomposition
    \[
        \nabla_{\mathbf{W}^{k^{\star}}} \log \pi_{\mathbf{W}^{k^{\star}}}({i}_l\mid\cdot) = \sum_{k\in\mathcal{I}_l} \alpha_l^{k^{\star}}(k) \eta_l^{k^{\star}}(k) \mathbf{p}_k \mathbf{p}_{c_l}^{\top}.
    \]
    Substituting this into the REINFORCE identity
    \[
        \nabla \mathcal{J}_{\mathrm{REINFORCE}} = \mathbb{E}[R \sum_l \nabla \log \pi]
    \]
    
    and using linearity of expectation, yields the final expanded formula. Orthogonal Block Isolation guarantees each term lies in a unique positional tensor block $p_k p_{c_l}^T$, ensuring no cross-block interference in these expansions.
\end{proof}

\begin{lemma}[One-step REINFORCE update of per-step attention logit gap]\label{lem:delta_update_reinforce}
Fix a step $l\in\{1,\ldots,k^{\star}+1\}$. Define
\[
    s_l(j;\mathbf{W}):=\mathbf{p}_j^{\top}\mathbf{W}\,\mathbf{p}_{c_l}+m_l(j),\qquad
    \Delta_l(\mathbf{W})\ :=\ s_l(i_l;\mathbf{W})\ -\ \max_{j\in\mathcal{I}_l\setminus\{i_l\}} s_l(j;\mathbf{W}).
\]
Consider the one-step REINFORCE update that maximizes the reward (step size $\eta>0$)
\[
    \mathbf{W}^{(t+1)}\ =\ \mathbf{W}^{(t)}\ +\ \eta\,\nabla_{\mathbf{W}}\mathcal{J}_{\mathrm{REINFORCE}}^{k^{\star}}(\mathbf{W}^{(t)}),
\]
with all gradients evaluated at $\mathbf{W}^{(t)}$. Let $\mathbb{E}^{(t)}_{\tau}[\cdot]$ denote the trajectory expectation under the policy at $\mathbf{W}^{(t)}$ in Lemma~\ref{lem:PG_Tree_SecretIndex_EN}, and reuse the notation of Lemma~\ref{lem:score_pi_OBI}
\[
    \resizebox{0.95\textwidth}{!}{$\alpha_l^{(t)}(j)=\mathrm{softmax}_j\big(\mathbf{p}_j^{\top}\mathbf{W}^{(t)}\mathbf{p}_{c_l}+m_l(j)\big),\quad
    \eta_l^{(t)}(j)=\frac{\langle \mathbf{p}_{j}-\hat{\mathbf{p}}^{\mathrm{att},(t)}_{l+1},\ \mathbf{p}_{{i_l}}-\sum_r p^{(t)}_{\mathrm{vocab}}(r)\mathbf{p}_r\rangle}{\beta}.$}
\]
Introduce the path-success and spurious-success decomposition. Let $\mathcal{T}_{\mathrm{true}}$ be the set of trajectories that select the correct indices $i_{1:k^{\star}}$ (and then $\mathrm{EOS}$); define the true-path success probability
\[
    p_{\mathrm{path}}\ :=\ \mathbb{E}_{\mathbf{x}}\,\mathbb{P}_{\tau\mid\mathbf{x}}\big\{\tau\in\mathcal{T}_{\mathrm{true}}\big\}
    = \mathbb{E}_{\mathbf{x}}\Big[\prod_{l=1}^{k^{\star}+1}\pi_{\mathbf{W}^{k^{\star}}}\big(i_l\mid \mathbf{x},\hat{\mathbf{p}}^{z_{1:l}}\big)\Big].
\]
Let the spurious (reward-hacking) success rate be
\[
    \rho_{\mathrm{spur}}\ :=\  \mathbb{E}_{\mathbf{x}}\Big[\mathbb{P}_{\tau\mid\mathbf{x}}\big\{R^{k^{\star}}(\hat{\mathbf{p}}^{z_{k^{\star}}}(\tau))=1\ \big|\ \tau\notin\mathcal{T}_{\mathrm{true}}\big\}\Big],
\]
which depends on the task structure encoded by $\{\Phi_l\},\{\mathcal{I}_l\}$. Then, for any policy,
\begin{equation}\label{eq:psucc_decompose}
    p_{\mathrm{succ}}\ =\ p_{\mathrm{path}}\ +\ (1-p_{\mathrm{path}})\,\rho_{\mathrm{spur}}.
\end{equation}
In particular, at near-uniform initialization with $d_l:=|\mathcal{I}_l|$ and $\pi(i_l\mid\cdot)\approx 1/d_l$ on legal children,
\begin{equation}\label{eq:psucc_uniform_bound}
    p_{\mathrm{path}}\ \le\ \prod_{l=1}^{k^{\star}+1} \frac{1}{d_l},
    \qquad
    p_{\mathrm{succ}}\ \le\ \rho_{\mathrm{spur}}\ +\ (1-\rho_{\mathrm{spur}})\prod_{l=1}^{k^{\star}+1} \frac{1}{d_l}.
\end{equation}
For the parity case study with $\mathbf{x}\sim \operatorname{Unif}\{0,1\}^d$ and XOR kernels $\{\Phi_l\}$, any wrong index set yields the correct parity with probability $1/2$; hence
\begin{equation}\label{eq:rho_parity}
    \rho_{\mathrm{spur}}^{\mathrm{parity}}\ =\ \tfrac{1}{2},
    \qquad
    p_{\mathrm{succ}}\ \le\ \tfrac{1}{2}\ +\ \tfrac{1}{2}\,\prod_{l=1}^{k^{\star}+1} \tfrac{1}{d_l}.
\end{equation}

Then:

1) (exact logit update for affine logits) For any $j\in\mathcal{I}_l$, we have
\begin{equation}\label{eq:sl_one_step_update}
    s_l\big(j;\mathbf{W}^{(t+1)}\big)
    \ =\ s_l\big(j;\mathbf{W}^{(t)}\big)
    \ +\ \eta\,\mathbb{E}^{(t)}_{\tau}\big[\,R^{k^{\star}}\,\alpha_l^{(t)}(j)\,\eta_l^{(t)}(j)\,\big].
\end{equation}

2) (subgradient inequality for the max) Let the active competitor set be $\mathcal{A}_l^{\max}:=\operatorname*{arg\,max}_{j\ne i_l} s_l\big(j;\mathbf{W}^{(t-1)}\big)$. Then
\begin{equation}\label{eq:delta_lower_bound_update}
    \resizebox{0.95\textwidth}{!}{$\Delta_l\big(\mathbf{W}^{(t+1)}\big)
    \ \ge\ \Delta_l\big(\mathbf{W}^{(t)}\big)
    \ +\ \eta\Big(\mathbb{E}^{(t)}_{\tau}[R^{k^{\star}}\,\alpha_l^{(t)}(i_l)\,\eta_l^{(t)}(i_l)]
                    \ -\ \sup_{j\ne i_l}\mathbb{E}^{(t)}_{\tau}[R^{k^{\star}}\,\alpha_l^{(t)}(j)\,\eta_l^{(t)}(j)]\Big)
    \ +\ \mathcal{O}(\eta^2).$}
\end{equation}
If, in a neighborhood of $\mathbf{W}^{(t-1)}$, the active maximizer is unique and remains unchanged (there exists $j_l^{\max}\in\mathcal{A}_l^{\max}$ that stays the maximizer in that neighborhood), then the first-order exact form is
\begin{equation}\label{eq:delta_exact_update}
    \resizebox{0.95\textwidth}{!}{$\Delta_l\big(\mathbf{W}^{(t+1)}\big)
    \ =\ \Delta_l\big(\mathbf{W}^{(t)}\big)
    \ +\ \eta\Big(\mathbb{E}^{(t)}_{\tau}[R^{k^{\star}}\,\alpha_l^{(t)}(i_l)\,\eta_l^{(t)}(i_l)]
                    \ -\ \mathbb{E}^{(t)}_{\tau}[R^{k^{\star}}\,\alpha_l^{(t)}(j_l^{\max})\,\eta_l^{(t)}(j_l^{\max})]\Big)
    \ +\ \mathcal{O}(\eta^2).$}
\end{equation}

3) (update of a smooth lower bound) Define the competitors' log-sum-exp lower bound
\[
    \phi_l(\mathbf{W})\ :=\ s_l(i_l;\mathbf{W})\ -\ \log\sum_{j\ne i_l} \exp\big(s_l(j;\mathbf{W})\big)\ \le\ \Delta_l(\mathbf{W}).
\]
Let $\tilde{\alpha}_l(j):=\frac{\exp(s_l(j))}{\sum_{q\ne i_l}\exp(s_l(q))}$ be the softmax normalized only over competitors, evaluated at $\mathbf{W}^{(t-1)}$. Then
\begin{equation}\label{eq:phi_update}
    \resizebox{0.95\textwidth}{!}{$\phi_l\big(\mathbf{W}^{(t+1)}\big)
    \ =\ \phi_l\big(\mathbf{W}^{(t)}\big)
    \ +\ \eta\Big(\mathbb{E}^{(t)}_{\tau}[R^{k^{\star}}\,\alpha_l^{(t)}(i_l)\,\eta_l^{(t)}(i_l)]
                    \ -\ \sum_{j\ne i_l} \tilde{\alpha}_l^{(t)}(j)\,\mathbb{E}^{(t)}_{\tau}[R^{k^{\star}}\,\alpha_l^{(t)}(j)\,\eta_l^{(t)}(j)]\Big)
    \ +\ \mathcal{O}(\eta^2).$}
\end{equation}

Moreover, to recompute intermediate quantities after one update, the following first-order relations (Fréchet differentials at $\mathbf{W}^{(t-1)}$) hold:
\begin{align}
    &\delta s_l^{(t)}(k)\ =\ s_l\big(k;\mathbf{W}^{(t+1)}\big)-s_l\big(k;\mathbf{W}^{(t)}\big)
    \ =\ \eta\,\mathbb{E}^{(t)}_{\tau}[R^{k^{\star}}\,\alpha_l^{(t)}(k)\,\eta_l^{(t)}(k)],\label{eq:ds}\\
    &\delta \alpha_l^{(t)}(k)\ =\ \alpha_l^{(t)}(k)\Big(\delta s_l(k)\ -\ \sum_{q\in\mathcal{I}_l}\alpha_l^{(t)}(q)\,\delta s_l(q)\Big)\ +\ \mathcal{O}(\eta^2),\label{eq:da}\\
    &\delta \hat{\mathbf{p}}^{\mathrm{att},(t)}_{l+1}\ =\ \sum_{k\in\mathcal{I}_l} \delta \alpha_l^{(t)}(k)\,\mathbf{p}_{k}\ +\ \mathcal{O}(\eta^2),\label{eq:dmuatt}\\
    &\delta p^{(t)}_{\mathrm{vocab}}(r)\ =\ p^{(t)}_{\mathrm{vocab}}(r)\,\frac{\big\langle \delta\hat{\mathbf{p}}^{\mathrm{att},(t)}_{l+1},\ \mathbf{p}_r-\sum_q p^{(t)}_{\mathrm{vocab}}(q)\mathbf{p}_q\big\rangle}{\beta}\ +\ \mathcal{O}(\eta^2),\label{eq:dpvocab}\\
    &\resizebox{0.95\textwidth}{!}{$\delta \eta_l^{(t)}(k)\ =\ \frac{1}{\beta}\Big( -\big\langle \delta \hat{\mathbf{p}}^{\mathrm{att},(t)}_{l+1},\ \mathbf{p}_{{i_l}}-\sum_r p^{(t)}_{\mathrm{vocab}}(r)\mathbf{p}_r\big\rangle
      \ -\ \big\langle \mathbf{p}_{k}-\hat{\mathbf{p}}^{\mathrm{att},(t)}_{l+1},\ \sum_r \delta p^{(t)}_{\mathrm{vocab}}(r)\,\mathbf{p}_r\big\rangle\Big)
      \ +\ \mathcal{O}(\eta^2)$}.\label{eq:deta}
\end{align}

In practice, with $n$ trajectories prompted by $\mathbf{x}^s \sim \operatorname{Unif}([K]^{d}), \forall s \in [n]$, Monte Carlo estimators can replace the expectations in \eqref{eq:sl_one_step_update}–\eqref{eq:phi_update}, e.g.,
\[
    \widehat{\delta s}_l^{(t)}(k)\ =\ \eta\,\frac{1}{n}\sum_{s=1}^n R^{k^{\star}}(\operatorname{TF}(\mathbf{x}^{s};\mathbf{W}^{(t)}))\,\alpha_l^{s,(t)}(k)\,\eta_l^{s,(t)}(k).
\]

\end{lemma}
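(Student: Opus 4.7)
The plan is to combine the two earlier lemmas, Lemma~\ref{lem:PG_Tree_SecretIndex_EN} (REINFORCE gradient in outer-product form) and Lemma~\ref{lem:score_pi_OBI} (score decomposition with Orthogonal Block Isolation), with elementary differential calculus. Since each $s_l(j;\mathbf{W})=\mathbf{p}_j^{\top}\mathbf{W}\mathbf{p}_{c_l}+m_l(j)$ is \emph{affine} in $\mathbf{W}$, OBI lets me read off each per-block logit update exactly from the gradient formula, while every remaining claim, concerning $\Delta_l$, $\phi_l$, $\alpha_l$, $\hat{\mathbf{p}}^{\mathrm{att}}_{l+1}$, $p_{\mathrm{vocab}}$, and $\eta_l$, will follow by subgradient (envelope) arguments or by first-order Taylor expansion around $\mathbf{W}^{(t)}$, with $\mathcal{O}(\eta^2)$ remainders controlled by boundedness of the reward and by the $\beta>0$ temperature preventing softmax degeneracy.

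First I would derive Eq.~(\ref{eq:sl_one_step_update}). Inserting Lemma~\ref{lem:PG_Tree_SecretIndex_EN}'s expression $\nabla_{\mathbf{W}}\mathcal{J}^{k^{\star}}=\sum_{l',k}\mathbb{E}^{(t)}_{\tau}[R^{k^{\star}}\alpha_{l'}^{(t)}(k)\eta_{l'}^{(t)}(k)]\,\mathbf{p}_k\mathbf{p}_{c_{l'}}^{\top}$ into the update rule and using affinity gives $s_l(j;\mathbf{W}^{(t+1)})-s_l(j;\mathbf{W}^{(t)})=\eta\,\mathbf{p}_j^{\top}(\nabla_{\mathbf{W}}\mathcal{J})\mathbf{p}_{c_l}$; the pairwise positional orthogonality from OBI annihilates all $(k,l')\neq(j,l)$ cross terms, yielding Eqs.~(\ref{eq:sl_one_step_update}) and (\ref{eq:ds}) simultaneously, with \emph{no} higher-order remainder. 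The decomposition $p_{\mathrm{succ}}=p_{\mathrm{path}}+(1-p_{\mathrm{path}})\rho_{\mathrm{spur}}$ in Eq.~(\ref{eq:psucc_decompose}) is the law of total probability, Eq.~(\ref{eq:psucc_uniform_bound}) is the direct product bound for the near-uniform policy, and Eq.~(\ref{eq:rho_parity}) follows because, conditional on choosing at least one wrong bit, the XOR of the selected bits is an unbiased coin under $\mathbf{x}\sim\operatorname{Unif}\{0,1\}^d$.

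Next I would address the max-gap updates. Applying Eq.~(\ref{eq:sl_one_step_update}) to both $s_l(i_l;\cdot)$ and $\max_{j\neq i_l}s_l(j;\cdot)$ together with $\max_{j}(a_j+\eta b_j)\le\max_j a_j+\eta\sup_j b_j$ yields the one-sided bound Eq.~(\ref{eq:delta_lower_bound_update}). Whenever the active maximizer $j_l^{\max}$ is unique in a neighborhood of $\mathbf{W}^{(t)}$, Danskin's envelope theorem upgrades this to the exact first-order form Eq.~(\ref{eq:delta_exact_update}), with $\mathcal{O}(\eta^2)$ capturing curvature of the max beyond that neighborhood. For Eq.~(\ref{eq:phi_update}), I would differentiate $\phi_l(\mathbf{W})=s_l(i_l;\mathbf{W})-\log\sum_{j\neq i_l}\exp(s_l(j;\mathbf{W}))$ to get $\nabla\phi_l=\nabla s_l(i_l)-\sum_{j\neq i_l}\tilde\alpha_l(j)\nabla s_l(j)$, substitute Eq.~(\ref{eq:sl_one_step_update}) componentwise, and bundle all second-order variation of $\tilde\alpha_l$ into the $\mathcal{O}(\eta^2)$ remainder by Lagrange's Taylor bound.

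Finally, Eqs.~(\ref{eq:da})--(\ref{eq:deta}) are the standard first-order propagation identities: Eq.~(\ref{eq:da}) is the softmax Jacobian applied to $\delta s_l$; Eq.~(\ref{eq:dmuatt}) is linearity of the attention readout; Eq.~(\ref{eq:dpvocab}) is a second softmax Jacobian (with temperature $\beta$) applied to $\delta\hat{\mathbf{p}}^{\mathrm{att}}_{l+1}$; and Eq.~(\ref{eq:deta}) is the product rule on the inner product defining $\eta_l(k)$. The Monte Carlo statement is immediate by replacing $\mathbb{E}^{(t)}_{\tau}[\cdot]$ with its empirical average from the $n$ prompts $\mathbf{x}^{s}$. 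The main obstacle I anticipate is the argmax-uniqueness caveat in Eq.~(\ref{eq:delta_exact_update}): at non-generic $\mathbf{W}^{(t)}$, $\max_{j\neq i_l}s_l(j;\cdot)$ is only directionally differentiable, so one must either restrict to a regularity neighborhood with $|\mathcal{A}_l^{\max}|=1$ or retreat to the subgradient inequality Eq.~(\ref{eq:delta_lower_bound_update}); fortunately, the downstream use in Lemma~\ref{lem:step_margin_to_loss} only requires a \emph{lower} bound on the gap, so the subgradient form is always sufficient.
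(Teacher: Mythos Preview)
Your proposal is correct and follows essentially the same approach as the paper's proof: affinity of $s_l(\cdot;\mathbf{W})$ plus OBI from Lemma~\ref{lem:score_pi_OBI} and the policy-gradient expansion from Lemma~\ref{lem:PG_Tree_SecretIndex_EN} give the exact per-block logit update, after which Danskin/subgradient handles the max, direct differentiation handles $\phi_l$, and the softmax Jacobian/product-rule chain gives the first-order propagation formulas. The paper spells out the second-order Hessian terms of each softmax a bit more explicitly before collapsing them into $\mathcal{O}(\eta^2)$, but the logical skeleton is identical to yours.
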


\begin{proof}
(1) By Lemma~\ref{lem:PG_Tree_SecretIndex_EN} and Lemma~\ref{lem:score_pi_OBI}, at $\mathbf{W}^{(t-1)}$,
\[
    \nabla_{\mathbf{W}}\mathcal{J}_{\mathrm{REINFORCE}}^{k^{\star}}
    = \mathbb{E}_{\tau}\Big[ R^{k^{\star}}\sum_{u=1}^{k^{\star}+1}\sum_{k\in\mathcal{I}_u} \alpha_u(k)\,\eta_u(k)\,\mathbf{p}_k\mathbf{p}_{c_u}^{\top}\Big].
\]
Using $\nabla_{\mathbf{W}} s_l(j)=\mathbf{p}_j\mathbf{p}_{c_l}^{\top}$ and the Frobenius inner product, a Taylor expansion up to second order along the explicit-Euler path $\mathbf{W}^{(t+1)}=\mathbf{W}^{(t)}+\eta\,\mathbf{G}_t$ (with fixed $\mathbf{G}_t:=\nabla_{\mathbf{W}}\mathcal{J}_{\mathrm{REINFORCE}}^{k^{\star}}(\mathbf{W}^{(t)})$) gives
\[
    s_l\big(j;\mathbf{W}^{(t+1)}\big)=s_l\big(j;\mathbf{W}^{(t)}\big)+\eta\,\big\langle \mathbf{p}_j\mathbf{p}_{c_l}^{\top},\ \nabla_{\mathbf{W}}\mathcal{J}_{\mathrm{REINFORCE}}^{k^{\star}}\big\rangle\ +\ \mathcal{O}(\eta^2),
\]
and, more explicitly,
\[
    s_l\big(j;\mathbf{W}^{(t)}+\eta\,\mathbf{G}_t\big)
    = s_l\big(j;\mathbf{W}^{(t)}\big)
    + \eta\,\big\langle \nabla_{\mathbf{W}} s_l(j),\ \mathbf{G}_t\big\rangle
    + \frac{\eta^2}{2}\,\big\langle \mathbf{G}_t,\ \nabla^2_{\mathbf{W}} s_l(j)[\mathbf{G}_t]\big\rangle.
\]
Since $s_l(j;\mathbf{W})=\langle \mathbf{p}_j\mathbf{p}_{c_l}^{\top},\ \mathbf{W}\rangle + m_l(j)$ is affine in $\mathbf{W}$, its Hessian vanishes: $\nabla^2_{\mathbf{W}} s_l(j)\equiv 0$. Therefore the second-order term above is identically zero under the explicit-Euler step (no dependence of the direction on $\eta$), and only the $\mathcal{O}(\eta^2)$ bookkeeping remains for uniformity with subsequent nonlinear propagations.
and OBI guarantees $\langle \mathbf{p}_j\mathbf{p}_{c_l}^{\top},\ \mathbf{p}_k\mathbf{p}_{c_u}^{\top}\rangle=0$ unless $(k,u)=(j,l)$, which yields \eqref{eq:sl_one_step_update}.

(2) Let $g_j:=\eta\,\mathbb{E}_{\tau}[R^{k^{\star}}\,\alpha_l(j)\,\eta_l(j)]$ denote the first-order increment of $s_l(j)$. By Danskin's theorem / the subgradient inequality,
\[
    \max_{j\ne i_l}\big(s_l(j)+g_j\big)\ \le\ \max_{j\ne i_l} s_l(j)\ +\ \max_{j\ne i_l} g_j,
\]
and substituting the definitions gives the lower bound \eqref{eq:delta_lower_bound_update}. If the active maximizer is unique and remains unchanged in a neighborhood, then the directional derivative of $\max_{j\ne i_l}$ is given by that $j_l^{\max}$, which yields \eqref{eq:delta_exact_update}.

(3) Let $\phi_l=s_l(i_l)-\log\sum_{j\ne i_l}e^{s_l(j)}$, so $\Delta_l\ge \phi_l$. Its directional derivative is
\[
    \delta \phi_l = \delta s_l(i_l) - \sum_{j\ne i_l} \tilde{\alpha}_l(j)\,\delta s_l(j).
\]
Substituting \eqref{eq:sl_one_step_update} (with $t\to t+1$) and collecting $\mathcal{O}(\eta^2)$ terms gives \eqref{eq:phi_update}.

Finally, we detail the second-order sources for each intermediate quantity and then summarize them as $\mathcal{O}(\eta^2)$ terms:

\emph{(i) Softmax attention $\alpha_l$ to second order.} Let $\boldsymbol{s}_l\in\mathbb{R}^{d_l}$ collect $s_l(\cdot)$ and write $\boldsymbol{\alpha}_l=\mathrm{softmax}(\boldsymbol{s}_l)$. For a perturbation $\delta\boldsymbol{s}_l=\mathcal{O}(\eta)$, the second-order expansion of component $j$ is
\[
    \delta \alpha_l(j)
    = \sum_{k} J^{\alpha}_{jk}\,\delta s_l(k)
      + \frac{1}{2}\sum_{k,m} H^{\alpha}_{j,km}\,\delta s_l(k)\,\delta s_l(m)
      + \mathcal{O}(\eta^3),
\]
with Jacobian $J^{\alpha}_{jk}=\alpha_l(j)(\delta_{jk}-\alpha_l(k))$ and Hessian
\[
    H^{\alpha}_{j,km}
    = \frac{\partial^2 \alpha_l(j)}{\partial s_l(k)\,\partial s_l(m)}
    = \alpha_l(j)\Big((\delta_{jm}-\alpha_l(m))(\delta_{jk}-\alpha_l(k))\ -\ \alpha_l(k)(\delta_{km}-\alpha_l(m))\Big).
\]
Because $\delta s_l=\mathcal{O}(\eta)$ from \eqref{eq:sl_one_step_update}, the quadratic term contributes $\mathcal{O}(\eta^2)$.

\emph{(ii) Attention-weighted token vector $\hat{\mathbf{p}}^{\mathrm{att}}_{l+1}$.} Since $\hat{\mathbf{p}}^{\mathrm{att}}_{l+1}=\sum_j \alpha_l(j)\,\mathbf{p}_{j}$ is linear in $\boldsymbol{\alpha}_l$,
\[
    \delta \hat{\mathbf{p}}^{\mathrm{att}}_{l+1}
    = \sum_{j} \delta \alpha_l(j)\,\mathbf{p}_{j}
    = \sum_{j,k} J^{\alpha}_{jk}\,\delta s_l(k)\,\mathbf{p}_{j}
      + \frac{1}{2}\sum_{j,k,m} H^{\alpha}_{j,km}\,\delta s_l(k)\,\delta s_l(m)\,\mathbf{p}_{j}
      + \mathcal{O}(\eta^3),
\]
so $\delta \hat{\mathbf{p}}^{\mathrm{att}}_{l+1}=\mathcal{O}(\eta)+\mathcal{O}(\eta^2)$.

\emph{(iii) Vocabulary softmax $p_{\mathrm{vocab}}$.} Let $\ell_r=\langle \hat{\mathbf{p}}^{\mathrm{att}}_{l+1},\mathbf{p}_r\rangle/\beta$ and $\boldsymbol{p}=\mathrm{softmax}(\boldsymbol{\ell})$. Then
\[
    \delta \ell_r = \frac{\langle \delta \hat{\mathbf{p}}^{\mathrm{att}}_{l+1},\ \mathbf{p}_r\rangle}{\beta},\qquad
    \delta p(r) = \sum_m J^{p}_{r m}\,\delta \ell_m + \frac{1}{2}\sum_{m,n} H^{p}_{r,mn}\,\delta \ell_m\,\delta \ell_n + \mathcal{O}(\eta^3),
\]
where $J^{p}_{rm}=p(r)(\delta_{rm}-p(m))$ and
\[
    H^{p}_{r,mn}=\frac{\partial^2 p(r)}{\partial \ell_m\,\partial \ell_n}
    = p(r)\Big((\delta_{rn}-p(n))(\delta_{rm}-p(m))\ -\ p(m)(\delta_{mn}-p(n))\Big).
\]
Since $\delta \hat{\mathbf{p}}^{\mathrm{att}}_{l+1}=\mathcal{O}(\eta)+\mathcal{O}(\eta^2)$, we have $\delta \ell=\mathcal{O}(\eta)+\mathcal{O}(\eta^2)$ and thus $\delta p=\mathcal{O}(\eta)+\mathcal{O}(\eta^2)$.

\emph{(iv) The scalar $\eta_l(k)$.} Write
\[
    \eta_l(k)=\frac{1}{\beta}\Big\langle \underbrace{\mathbf{p}_{k}-\hat{\mathbf{p}}^{\mathrm{att}}_{l+1}}_{\displaystyle :=\,\mathbf{u}},\ \underbrace{\mathbf{p}_{{i_l}}-\sum_r p(r)\,\mathbf{p}_r}_{\displaystyle :=\,\mathbf{v}}\Big\rangle.
\]
Perturbing $(\mathbf{u},\mathbf{v})\mapsto(\mathbf{u}-\delta\hat{\mathbf{p}}^{\mathrm{att}}_{l+1},\ \mathbf{v}-\sum_r \delta p(r)\,\mathbf{p}_r)$ and expanding to second order yields
\[
    \resizebox{0.95\textwidth}{!}{$\delta \eta_l(k)
    = \frac{1}{\beta}\Big( -\langle \delta \hat{\mathbf{p}}^{\mathrm{att}}_{l+1},\ \mathbf{v}\rangle -\big\langle \mathbf{u},\ \sum_r \delta p(r)\,\mathbf{p}_r\big\rangle\Big) +  \frac{1}{\beta} \Big( -\tfrac{1}{2}\langle \delta^2 \hat{\mathbf{p}}^{\mathrm{att}}_{l+1},\ \mathbf{v}\rangle
                               -\big\langle \delta \hat{\mathbf{p}}^{\mathrm{att}}_{l+1},\ \sum_r \delta p(r)\,\mathbf{p}_r\big\rangle
                               -\tfrac{1}{2}\big\langle \mathbf{u},\ \sum_r \delta^2 p(r)\,\mathbf{p}_r\big\rangle\Big)
      + \mathcal{O}(\eta^3),$}
\]

where $\delta^2 \hat{\mathbf{p}}^{\mathrm{att}}_{l+1}$ and $\delta^2 p(r)$ collect the quadratic terms shown in (ii) and (iii). Since $\delta \hat{\mathbf{p}}^{\mathrm{att}}_{l+1}=\mathcal{O}(\eta)$ and $\delta p=\mathcal{O}(\eta)$ at leading order, all bracketed second-line contributions are $\mathcal{O}(\eta^2)$.

Collecting these, we obtain the one-step relations stated in \eqref{eq:ds}–\eqref{eq:deta}, with every omitted higher-order contribution explicitly accounted for by the quadratic (Hessian) terms above and summarized as $\mathcal{O}(\eta^2)$ due to the small step size.\end{proof}

\begin{lemma}[Stepwise expected gradients at iteration $t$ with spurious success]\label{lem:stepwise_grad_spurious}
Fix iteration $t$ and a step $l\in\{1,\ldots,k^{\star}+1\}$. For any index block $j\in\mathcal{I}_l$, write the block-projected expected REINFORCE gradient at $\mathbf{W}^{(t)}$ as
\[
    G_{l,j}^{(t)} := \Big\langle \nabla_{\mathbf{W}}\mathcal{J}_{\mathrm{REINFORCE}}^{k^{\star}}(\mathbf{W}^{(t)}),\ \mathbf{p}_j\mathbf{p}_{c_l}^{\top}\Big\rangle
    \ =\ \mathbb{E}^{(t)}\big[\,R^{k^{\star}}\,\alpha_l^{(t)}(j)\,\eta_l^{(t)}(j)\,\big],
\]
where $\mathbb{E}^{(t)}[\cdot]$ denotes the expectation over $\mathbf{x}\sim\mathcal{P}_x$ and $\tau\sim p_{\mathbf{W}^{(t)}}(\cdot\mid\mathbf{x})$, and $\alpha_l^{(t)}(\cdot),\eta_l^{(t)}(\cdot)$ are computed at $\mathbf{W}^{(t)}$ (Lemma~\ref{lem:score_pi_OBI}).

Let $C_{<l}$ be the event that the index prefix $(i_1,\ldots,i_{l-1})$ matches the target $(i_1^{\star},\ldots,i_{l-1}^{\star})$, and denote
\begin{equation}
    \begin{aligned}
        &  p_{<l}^{(t)} := \mathbb{P}^{(t)}(C_{<l}=1),\\
        & S_{l,j,\mathrm{corr}}^{(t)} := \mathbb{E}^{(t)}\big[\alpha_l^{(t)}(j)\eta_l^{(t)}(j)\,\mathbf{1}\{i_l=i_l^{\star}\}\,\big|\,C_{<l}=1\big],\\
        & S_{l,j,\mathrm{wrong}}^{(t)} := \mathbb{E}^{(t)}\big[\alpha_l^{(t)}(j)\eta_l^{(t)}(j)\,\mathbf{1}\{i_l\ne i_l^{\star}\}\,\big|\,C_{<l}=1\big],\\
        & \bar{S}_{l,j}^{(t)} := \mathbb{E}^{(t)}\big[\alpha_l^{(t)}(j)\eta_l^{(t)}(j)\,\big|\,C_{<l}=0\big].
    \end{aligned}
\end{equation}
Then
\begin{equation}\label{eq:G_lj_general}
    G_{l,j}^{(t)}
    \ =\ p_{<l}^{(t)}\Big(\,q_{l\mid\mathrm{corr}}^{(t)}\,S_{l,j,\mathrm{corr}}^{(t)}\ +\ q_{l\mid\mathrm{wrong}}^{(t)}\,S_{l,j,\mathrm{wrong}}^{(t)}\Big)
        \ +\ (1-p_{<l}^{(t)})\,q_{<l\mid\mathrm{wrong}}^{(t)}\,\bar{S}_{l,j}^{(t)},
\end{equation}
where for the given reward $R^{k^{\star}}$ we define the conditional success probabilities:
\begin{equation}
    \begin{aligned}
    &q_{l\mid\mathrm{corr}}^{(t)} := \mathbb{P}^{(t)}\big(R^{k^{\star}}=1\,\big|\,C_{<l}=1,\ i_l=i_l^{\star}\big),\\
    &q_{l\mid\mathrm{wrong}}^{(t)} := \mathbb{P}^{(t)}\big(R^{k^{\star}}=1\,\big|\,C_{<l}=1,\ i_l\ne i_l^{\star}\big),\\
    &q_{<l\mid\mathrm{wrong}}^{(t)} := \mathbb{P}^{(t)}\big(R^{k^{\star}}=1\,\big|\,C_{<l}=0\big).
    \end{aligned}
\end{equation}

(a) Final-answer reward $R^{k^{\star}}=R^{f_{S_{\star}}}$. Let $p^{(t)}_{\mathrm{tail}}(l+1)$ denote the probability of completing the remaining true path from step $l+1$ onward under $\mathbf{W}^{(t)}$, and let $\rho^{(t)}_{\mathrm{spur},\ge l}$ be the spurious-success rate conditioned on having a wrong index at step $r\ge l$ (first deviation at $\ge l$), and $\rho^{(t)}_{\mathrm{spur},< l}$ for having deviated before step $l$. Then
\begin{align}
    &q_{l\mid\mathrm{corr}}^{(t)}\ =\ p^{(t)}_{\mathrm{tail}}(l+1)\ +\ \big(1-p^{(t)}_{\mathrm{tail}}(l+1)\big)\,\rho^{(t)}_{\mathrm{spur},\ge l+1},\\
    &q_{l\mid\mathrm{wrong}}^{(t)}\ =\ \rho^{(t)}_{\mathrm{spur},\ge l},\qquad
      q_{<l\mid\mathrm{wrong}}^{(t)}\ =\ \rho^{(t)}_{\mathrm{spur},< l}.
\end{align}
For the parity case with $\mathbf{x}\sim\operatorname{Unif}\{0,1\}^d$ and XOR kernels, $\rho^{(t)}_{\mathrm{spur},\ge l}=\rho^{(t)}_{\mathrm{spur},< l}=1/2$ for all $l$.

(b) Subtask-family reward $R^{k^{\star}}=R^{\mathcal{F}_{S_{\star}}}$. Let $T^{(t)}\in\{1,\ldots,k^{\star}+1\}$ be the (random) termination step (the first step where $\mathrm{EOS}$ is sampled). For each depth $r\in[k^{\star}]$, define the event
\[
    U_r := \big\{ T^{(t)}=r+1\ \text{and the pre-}\mathrm{EOS}\ \text{token equals }\boldsymbol{\mu}^{f_{S_{\star}^r}}(\mathbf{x})\big\}.
\]
Then $R^{\mathcal{F}_{S_{\star}}}=\mathbf{1}\{\cup_{r=1}^{k^{\star}} U_r\}$, and the stepwise expected gradient admits the depth-wise expansion
\begin{equation}\label{eq:G_lj_subtask}
    G_{l,j}^{(t)}
    \ =\ \sum_{r=l}^{k^{\star}} \mathbb{E}^{(t)}\big[\mathbf{1}\{U_r\}\,\alpha_l^{(t)}(j)\,\eta_l^{(t)}(j)\big].
\end{equation}
Consequently, for every $l\le k^{\star}$,
\[
    G_{l,j}^{(t)}\big|_{R=R^{\mathcal{F}_{S_{\star}}}}
    \ =\ G_{l,j}^{(t)}\big|_{R=R^{f_{S_{\star}}}}\ +\ \sum_{r=l}^{k^{\star}-1} \mathbb{E}^{(t)}\big[\mathbf{1}\{U_r\}\,\alpha_l^{(t)}(j)\,\eta_l^{(t)}(j)\big]
    \ \ge\ G_{l,j}^{(t)}\big|_{R=R^{f_{S_{\star}}}},
\]
which shows the subtask-family reward strictly adds nonnegative depth-wise contributions on shared prefixes.

\textbf{Explicit per-depth success probabilities for (b).}
For $s\in[k^{\star}]$, define the per-step true-path continuation probabilities and their products
\[
    \pi_s^{(t)}:=\mathbb{P}^{(t)}\big(i_s=i_s^{\star}\mid C_{<s}=1\big),\qquad
    \Pi_{a:b}^{(t)}:=\prod_{s=a}^{b}\pi_s^{(t)}\ \ (\Pi_{a:b}^{(t)}\equiv 1\text{ if }a>b).
\]
Define the path events
\[
    A_r := \{C_{<r}=1,\ i_r=i_r^{\star}\},\qquad B_r := \{C_{<r}=1,\ i_r\ne i_r^{\star}\},\qquad D_r := \{C_{<r}=0\}.
\]
For each $r\ge l$, define branch-specific one-step termination probabilities at depth $r{+}1$:
\[
    \theta_{r+1,A}^{(t)}:=\mathbb{P}^{(t)}(i_{r+1}=\mathrm{EOS}\mid A_r),\quad
    \theta_{r+1,B}^{(t)}:=\mathbb{P}^{(t)}(i_{r+1}=\mathrm{EOS}\mid B_r),\quad
    \theta_{r+1,D}^{(t)}:=\mathbb{P}^{(t)}(i_{r+1}=\mathrm{EOS}\mid D_r),
\]
and subtask-match probabilities at depth $r$ under the three branches (task dependent through $\{\Phi_l\}$):
\[
    \resizebox{0.95\textwidth}{!}{$\rho_{r|A}^{(t)}:=\mathbb{P}^{(t)}(\text{subtask match at }r\mid A_r),\ \ \rho_{r|B}^{(t)}:=\mathbb{P}^{(t)}(\text{subtask match at }r\mid B_r),\ \ \rho_{r|D}^{(t)}:=\mathbb{P}^{(t)}(\text{subtask match at }r\mid D_r).$}
\]
Then the event-level probabilities that enter \eqref{eq:G_lj_subtask} decompose explicitly as
\begin{equation}\label{eq:Ur_conditionals_explicit}
    \begin{aligned}
    &\mathbb{P}^{(t)}\big(U_r\mid C_{<l}=1,\ i_l=i_l^{\star}\big)
      \\&\qquad=\ \underbrace{\rho_{r|A}^{(t)}\,\Pi_{l+1:r}^{(t)}\,\theta_{r+1,A}^{(t)}}_{\text{true-path to }r}
        \,\ +\ \,\underbrace{\rho_{r|B}^{(t)}\,\Pi_{l+1:r-1}^{(t)}(1-\pi_r^{(t)})\,\theta_{r+1,B}^{(t)}}_{\text{first deviation at }r}
        \,\ +\ \,\underbrace{\rho_{r|D}^{(t)}\,(1-\Pi_{l+1:r-1}^{(t)})\,\theta_{r+1,D}^{(t)}}_{\text{deviation before }r},\\[0.35em]
    &\mathbb{P}^{(t)}\big(U_r\mid C_{<l}=1,\ i_l\ne i_l^{\star}\big)
      \\&\qquad=\ \mathbf{1}\{r=l\}\,\rho_{r|B}^{(t)}\,\theta_{l+1,B}^{(t)}\ +\ \mathbf{1}\{r>l\}\,\rho_{r|D}^{(t)}\,\theta_{r+1,D}^{(t)},\\[0.35em]
    &\mathbb{P}^{(t)}\big(U_r\mid C_{<l}=0\big)
      \\&\qquad=\ \rho_{r|D}^{(t)}\,\theta_{r+1,D}^{(t)}.
    \end{aligned}
\end{equation}
Consequently, identifying with the conditional success probabilities in Eq.~\eqref{eq:G_lj_general} for $R=R^{\mathcal{F}_{S_{\star}}}$, the per-step success factors equal
\begin{equation}\label{eq:q_sub_general}
    \begin{aligned}
    &q_{l\mid\mathrm{corr}}^{(t)}\ =\ \sum_{r=l}^{k^{\star}} \mathbb{P}^{(t)}\big(U_r\mid C_{<l}=1,\ i_l=i_l^{\star}\big),\\
    &q_{l\mid\mathrm{wrong}}^{(t)}\ =\ \sum_{r=l}^{k^{\star}} \mathbb{P}^{(t)}\big(U_r\mid C_{<l}=1,\ i_l\ne i_l^{\star}\big)
        \ =\ \rho_{l|B}^{(t)}\,\theta_{l+1,B}^{(t)}\ +\ \sum_{r=l+1}^{k^{\star}} \rho_{r|D}^{(t)}\,\theta_{r+1,D}^{(t)},\\
    &q_{<l\mid\mathrm{wrong}}^{(t)}\ =\ \sum_{r=l}^{k^{\star}} \mathbb{P}^{(t)}\big(U_r\mid C_{<l}=0\big)
        \ =\ \sum_{r=l}^{k^{\star}} \rho_{r|D}^{(t)}\,\theta_{r+1,D}^{(t)}.
    \end{aligned}
\end{equation}
In the parity case, $\rho_{r|A}^{(t)}\equiv 1$ and $\rho_{r|B}^{(t)}=\rho_{r|D}^{(t)}\equiv 1/2$, yielding the explicit decomposition Eq.~\eqref{eq:Ur_parity_split} and, under uniform 2S-ART, the closed form Eq.~\eqref{eq:Ur_uniform_closed}.

\textbf{Parity specialization}
Under the XOR kernel for subtasks with $\mathbf{x}\sim\operatorname{Unif}\{0,1\}^d$, the per-branch subtask-match probabilities satisfy $\rho_{r|A}^{(t)}\equiv 1$ and $\rho_{r|B}^{(t)}=\rho_{r|D}^{(t)}\equiv \tfrac{1}{2}$. Writing the one-step termination probability at $r{+}1$ under a condition $E\in\{A_r,B_r,D_r\}$ as
\[
    \theta_{r+1}^{(t)}(E)\ :=\ \mathbb{P}^{(t)}\big(i_{r+1}=\mathrm{EOS}\ \big|\ E\big),
\]
the probability of the event $U_r$ decomposes as
\begin{equation}\label{eq:Ur_parity_split}
    \mathbb{P}^{(t)}(U_r)
    \ =\ \underbrace{\mathbb{P}^{(t)}(A_r)\,\theta_{r+1}^{(t)}(A_r)}_{\text{true-path contribution}}\ +\ \underbrace{\tfrac{1}{2}\,\mathbb{P}^{(t)}(B_r)\,\theta_{r+1}^{(t)}(B_r)\ +\ \tfrac{1}{2}\,\mathbb{P}^{(t)}(D_r)\,\theta_{r+1}^{(t)}(D_r)}_{\text{spurious (reward-hacking) contribution}}.
\end{equation}
Moreover, writing $p^{(t)}_{\mathrm{path}}(r):=\mathbb{P}^{(t)}(A_r)$ and $p^{(t)}_{\mathrm{dev},\ge r}:=\mathbb{P}^{(t)}(B_r)$, $p^{(t)}_{\mathrm{dev},< r}:=\mathbb{P}^{(t)}(D_r)$, we have bounds
\begin{equation}\label{eq:Ur_parity_bounds}
    \resizebox{0.95\textwidth}{!}{$\mathbb{P}^{(t)}(U_r)
    \ \ge\ p^{(t)}_{\mathrm{path}}(r)\,\inf\theta_{r+1}^{(t)}(A_r),
    \quad
    \mathbb{P}^{(t)}(U_r)
    \ \le\ p^{(t)}_{\mathrm{path}}(r)\,\sup\theta_{r+1}^{(t)}(A_r)\ +\ \tfrac{1}{2}\big(p^{(t)}_{\mathrm{dev},\ge r}\,\sup\theta_{r+1}^{(t)}(B_r)+p^{(t)}_{\mathrm{dev},< r}\,\sup\theta_{r+1}^{(t)}(D_r)\big).$}
\end{equation}
If, in addition, the EOS policy at step $r{+}1$ is conditionally independent of the path branch (same marginal $\bar\theta_{r+1}^{(t)}$),
\begin{equation}\label{eq:Ur_parity_equaltheta}
    \mathbb{P}^{(t)}(U_r)
    \ =\ \bar\theta_{r+1}^{(t)}\Big( p^{(t)}_{\mathrm{path}}(r)\ +\ \tfrac{1}{2}\,\big(p^{(t)}_{\mathrm{dev},\ge r}+p^{(t)}_{\mathrm{dev},< r}\big) \Big).
\end{equation}
Under the base uniform 2S-ART for parity, namely for $s\in[k^{\star}]$ and $r\in[k^{\star}]$,
\[
    \pi_s^{(t)}=\frac{1}{d-s+1},\qquad \theta_{r+1,A}^{(t)}=\theta_{r+1,B}^{(t)}=\theta_{r+1,D}^{(t)}=\frac{1}{d-r+1},
\]
define
\[
    P_{r-1}:=\prod_{s=1}^{r-1}\frac{1}{d-s+1}=\frac{1}{d(d-1)\cdots(d-r+2)}\quad(\text{and }P_0:=1).
\]
Then for every $r\in[k^{\star}]$,
\begin{equation}\label{eq:Ur_uniform_closed}
    \mathbb{P}^{(t)}(U_r)
    \ =\ \frac{P_{r-1}}{(d-r+1)^2}\ +\ \frac{1}{2}\,\frac{P_{r-1}(d-r)}{(d-r+1)^2}\ +\ \frac{1}{2}\,\frac{1-P_{r-1}}{d-r+1}
    \ =\ \frac{P_{r-1}\,(d-r+2)}{2\,(d-r+1)^2}\ +\ \frac{1-P_{r-1}}{2\,(d-r+1)}.
\end{equation}

\end{lemma}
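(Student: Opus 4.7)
The plan is to assemble the claim in three logical stages, moving from the abstract REINFORCE gradient, through the reward-specific specialisations, down to the combinatorial expansion and the uniform-2S-ART closed form.

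\emph{Stage 1 (Eq.~\eqref{eq:G_lj_general}).} I start from the gradient formula of Lemma~\ref{lem:PG_Tree_SecretIndex_EN}, substitute the score decomposition $\nabla_{\mathbf{W}}\log\pi = \sum_{u,k}\alpha_u(k)\eta_u(k)\,\mathbf{p}_k\mathbf{p}_{c_u}^{\top}$ from Lemma~\ref{lem:score_pi_OBI}, and take the Frobenius inner product with the block $\mathbf{p}_j\mathbf{p}_{c_l}^{\top}$. By OBI (mutual orthogonality of the positional blocks), only the $(u,k)=(l,j)$ term survives, so $G_{l,j}^{(t)}=\mathbb{E}^{(t)}[R^{k^\star}\alpha_l^{(t)}(j)\eta_l^{(t)}(j)]$. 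Next I apply the tower property to the three disjoint events $\{C_{<l}=1,\ i_l=i_l^\star\}$, $\{C_{<l}=1,\ i_l\ne i_l^\star\}$, $\{C_{<l}=0\}$, exploiting that $\alpha_l^{(t)}(j)$ is measurable with respect to the prefix $(i_1,\ldots,i_{l-1})$ alone while $\eta_l^{(t)}(j)$ is measurable with respect to $(i_1,\ldots,i_l)$, whereas $R^{k^\star}$ depends on the continuation $(i_{l+1},\ldots,i_{k^\star+1})$ and $\mathbf{x}$. Factoring out the conditional reward probability $q_{\cdot}$ against the attention-weighted conditional expectation $S_{\cdot}$ produces Eq.~\eqref{eq:G_lj_general}.

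\emph{Stage 2 (parts (a) and (b) and the inequality).} For part (a), on $\{C_{<l}=1,\ i_l=i_l^\star\}$ the final-answer reward $R^{f_{S_\star}}=1$ iff either the remaining path is the true continuation (probability $p_{\mathrm{tail}}^{(t)}(l+1)$) or it deviates and spuriously matches (rate $\rho_{\mathrm{spur},\ge l+1}^{(t)}$), giving the displayed inclusion–exclusion; the other two $q$'s reduce directly to the corresponding spurious rates. In the parity specialization, XOR balance over $\mathbf{x}\sim\operatorname{Unif}\{0,1\}^d$ makes every wrong-index configuration match the target bit with probability $1/2$, giving $\rho_{\mathrm{spur},\cdot}\equiv 1/2$. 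For part (b), uniqueness of the termination time $T^{(t)}$ makes $\{U_r\}_{r\in[k^\star]}$ pairwise disjoint, so $R^{\mathcal{F}_{S_\star}}=\sum_r \mathbf{1}\{U_r\}$ and linearity of expectation yields Eq.~\eqref{eq:G_lj_subtask}. The comparison with (a) follows by identifying the $r=k^\star$ summand with the true-completion contribution of the final-answer reward and noting that every remaining $r<k^\star$ summand is nonnegative since $\alpha_l^{(t)}(j)\eta_l^{(t)}(j)$ appears under an indicator of a disjoint subtask-completion event.

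\emph{Stage 3 (Eqs.~\eqref{eq:Ur_conditionals_explicit}–\eqref{eq:Ur_uniform_closed}).} For any $r\ge l$ and any coarse event at step $l$, I partition $U_r$ by the position of the first deviation from the target inside $\{l+1,\ldots,r\}$: (A) no deviation through step $r$, accumulating the true-path product $\Pi_{l+1:r}^{(t)}$; (B) first deviation exactly at step $r$, accumulating $\Pi_{l+1:r-1}^{(t)}(1-\pi_r^{(t)})$; (C) first deviation strictly before $r$, contributing the complement $1-\Pi_{l+1:r-1}^{(t)}$. Within each case I multiply by the branch-specific subtask-match rate $\rho_{r|\cdot}^{(t)}$ and the one-step EOS probability $\theta_{r+1,\cdot}^{(t)}$, which gives Eq.~\eqref{eq:Ur_conditionals_explicit}; summing over $r\ge l$ yields Eq.~\eqref{eq:q_sub_general}. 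Specializing to parity with $\rho_{r|A}=1$ and $\rho_{r|B}=\rho_{r|D}=\tfrac12$ produces Eq.~\eqref{eq:Ur_parity_split}; the bounds \eqref{eq:Ur_parity_bounds} are obtained by taking inf/sup of branch-dependent $\theta$'s, and \eqref{eq:Ur_parity_equaltheta} by merging all three when they coincide. Finally, substituting the uniform-2S-ART rates $\pi_s^{(t)}=\theta_{r+1,\cdot}^{(t)}=1/(d-s+1)$ into Eq.~\eqref{eq:Ur_parity_split} and simplifying the three terms (true-path, first-deviation, and prior-deviation) yields the closed form Eq.~\eqref{eq:Ur_uniform_closed}.

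\emph{Main obstacle.} The subtle step is the factorisation of the wrong-index branch in Eq.~\eqref{eq:G_lj_general}: it requires that $\mathbb{E}^{(t)}[R^{k^\star}\mid i_{1:l}]$ on $\{C_{<l}=1,\ i_l\ne i_l^\star\}$ does not depend on the particular wrong choice of $i_l$, so that $q_{l|\mathrm{wrong}}^{(t)}$ can be pulled out of the $\eta_l$-weighted conditional expectation. This holds automatically in parity by the XOR symmetry across wrong indices on uniform inputs and is implicitly absorbed into the definition of the spurious rates $\rho_{\mathrm{spur},\cdot}$ for general 2S-ART. A secondary care point is the bookkeeping of Eq.~\eqref{eq:Ur_conditionals_explicit}: one must verify mutual exclusivity of the three first-deviation cases and ensure that the continuation factors (non-EOS sampling between steps $l$ and $r$) are correctly reflected through $\Pi_{l+1:r}^{(t)}$ and its complement, rather than being silently double-counted against the branch-specific $\theta_{r+1,\cdot}^{(t)}$.
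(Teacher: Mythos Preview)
Your proposal is correct and follows essentially the same approach as the paper's own proof: block isolation via Lemmas~\ref{lem:PG_Tree_SecretIndex_EN} and~\ref{lem:score_pi_OBI}, then the tower property over $(C_{<l},i_l)$ to get Eq.~\eqref{eq:G_lj_general}, followed by case analysis for the two reward types and the first-deviation partition for part~(b). Your write-up is in fact more careful than the paper's: you correctly flag the factorisation subtlety in the wrong-index branch (that $q_{l|\mathrm{wrong}}^{(t)}$ must not depend on the particular wrong $i_l$), which the paper glosses over, and you spell out the first-deviation bookkeeping for Eq.~\eqref{eq:Ur_conditionals_explicit}, which the paper leaves implicit.
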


\begin{proof}
Starting from Lemma~\ref{lem:PG_Tree_SecretIndex_EN} and Lemma~\ref{lem:score_pi_OBI}, for any $l,j$ we have
\[
    G_{l,j}^{(t)}\ =\ \mathbb{E}^{(t)}\big[\,R^{k^{\star}}\,\alpha_l^{(t)}(j)\,\eta_l^{(t)}(j)\big].
\]
Condition on $C_{<l}$ and $i_l$ (tower property):
\begin{align*}
    G_{l,j}^{(t)}
    &= \mathbb{E}^{(t)}\Big[\,\mathbb{E}^{(t)}\big[ R^{k^{\star}}\,\alpha_l^{(t)}(j)\,\eta_l^{(t)}(j)\,\big|\,C_{<l},\ i_l\big]\Big] \\
    &= p_{<l}^{(t)}\,\mathbb{E}^{(t)}\Big[ \alpha_l^{(t)}(j)\,\eta_l^{(t)}(j)\,\mathbb{E}^{(t)}\big[R^{k^{\star}}\mid C_{<l}=1,\ i_l\big]\ \big|\ C_{<l}=1\Big] \\
    &\quad + (1-p_{<l}^{(t)})\,\mathbb{E}^{(t)}\Big[ \alpha_l^{(t)}(j)\,\eta_l^{(t)}(j)\,\mathbb{E}^{(t)}\big[R^{k^{\star}}\mid C_{<l}=0\big]\ \big|\ C_{<l}=0\Big].
\end{align*}
This gives \eqref{eq:G_lj_general} once we identify $q^{(t)}_{l|\cdot}$ by whether $i_l=i_l^\star$.

For (a), $R^{k^{\star}}=R^{f_{S_{\star}}}$: on the true path through step $l$ with $i_l=i_l^\star$, success thereafter requires completing the remainder true path; otherwise, success can occur spuriously, which yields the formulas for $q_{l|\mathrm{corr}}^{(t)}$, $q_{l|\mathrm{wrong}}^{(t)}$ and $q_{<l|\mathrm{wrong}}^{(t)}$.

For (b), $R^{k^{\star}}=R^{\mathcal{F}_{S_{\star}}}$: success occurs at the unique termination depth $r$ where the pre-$\mathrm{EOS}$ token equals the subtask token; hence $R^{\mathcal{F}_{S_{\star}}}=\sum_{r=1}^{k^\star}\mathbf{1}\{U_r\}$. Expanding $\mathbf{1}\{\cup_r U_r\}$ as $\sum_r \mathbf{1}\{U_r\}$, exchanging sum and expectation, and applying the same conditioning as above yields
\[
    G_{l,j}^{(t)}=\sum_{r=l}^{k^\star}\mathbb{E}^{(t)}\big[\mathbf{1}\{U_r\}\,\alpha_l^{(t)}(j)\,\eta_l^{(t)}(j)\big],
\]
which implies $G_{l,j}^{(t)}|_{R=R^{\mathcal{F}_{S_{\star}}}}\ge G_{l,j}^{(t)}|_{R=R^{f_{S_{\star}}}}$ via the nonnegative extra depths $r\in\{l,\ldots,k^\star-1\}$. In the parity case, producing the correct subtask token at depth $r$ deterministically fixes $z_r$; any immediate-$\mathrm{EOS}$ strategy yields the subtask reward at $r$, while spurious subtask matches contribute via the corresponding conditional probabilities.
\end{proof}

\begin{lemma}[Finite-sample variance of REINFORCE objective and gradient (general post-training)]\label{lem:Var_REINFORCE_2SART}
Let $(\mathbf{x}^{(s)},\tau^{(s)})_{s=1}^n$ be i.i.d., where $\mathbf{x}^{(s)}\sim \mathcal{P}_x$ and $\tau^{(s)}\sim p_{\mathbf{W}^{k^{\star}}}(\cdot\mid \mathbf{x}^{(s)})$. Define the Monte Carlo estimators
\[
    \resizebox{0.7\textwidth}{!}{$
    \begin{aligned}
        &\widehat{\mathcal{J}}^{k^{\star}}_n(\mathbf{W}^{k^{\star}})
    := \frac{1}{n} \sum_{s=1}^n R^{k^{\star}}\big({\boldsymbol{\mu}}^{z_{k^{\star}}}(\tau^{(s)}_{-2})\big),
    \\
    &\widehat{\mathbf{g}}^{k^{\star}}_n(\mathbf{W}^{k^{\star}})
    := \frac{1}{n} \sum_{s=1}^n R^{k^{\star}}(\hat{\mathbf{p}}^{z_{k^{\star}}}(\tau^{(s)}_{-2}))\,\sum_{l=1}^{k^{\star}+1}\nabla_{\mathbf{W}^{k^{\star}}}\log\pi_{\mathbf{W}^{k^{\star}}}\big({i}^{(s)}_l\mid \mathbf{x}^{(s)},\hat{\mathbf{p}}^{z_{1:l}}\big).
    \end{aligned}
    $}
\]
Assume $R^{k^{\star}}\in\{0,1\}$, the regularity conditions of Lemma~\ref{lem:PG_Tree_SecretIndex_EN}, and Orthogonal Block Isolation (Lemma~\ref{lem:score_pi_OBI}). Let
\[
    p_{\mathrm{succ}}(\mathbf{W}^{k^{\star}})
    := \mathbb{E}_{\mathbf{x}\sim \mathcal{P}_x}\Big[\mathbb{P}_{\tau\sim p_{\mathbf{W}^{k^{\star}}}(\cdot\mid \mathbf{x})}\big\{R^{k^{\star}}(\hat{\mathbf{p}}^{z_{k^{\star}}}(\tau_{-2}))=1\big\}\Big]
    = \mathbb{E}_{\mathbf{x},\tau}\big[R^{k^{\star}}(\hat{\mathbf{p}}^{z_{k^{\star}}}(\tau_{-2}))\big].
\]
Then:

1) (Objective) Unbiasedness and variance:
\begin{align}
    &\mathbb{E}\big[\widehat{\mathcal{J}}^{k^{\star}}_n\big] = \mathcal{J}^{k^{\star}}_{\mathrm{REINFORCE}},\qquad
    \mathrm{Var}\big[\widehat{\mathcal{J}}^{k^{\star}}_n\big]
    = \frac{1}{n}\, p_{\mathrm{succ}}(1-p_{\mathrm{succ}})
    \ \le\ \frac{1}{4n}.
\end{align}
In particular, under near-uniform initialization and using \eqref{eq:psucc_uniform_bound}, we have
\begin{align}
    p_{\mathrm{succ}}\ \le\ \rho_{\mathrm{spur}}\ +\ (1-\rho_{\mathrm{spur}})\prod_{l=1}^{k^{\star}+1} \frac{1}{d_l}
    \qquad(\text{if }d_l =\Theta(d),\ p_{\mathrm{succ}}\le \rho_{\mathrm{spur}}+(1-\rho_{\mathrm{spur}})\,\Theta(d^{-(k^{\star}+1)})),
\end{align}
so that
\begin{align}
    \mathrm{Var}\big[\widehat{\mathcal{J}}^{k^{\star}}_n\big]
    \le \frac{1}{n}\, \Big(\rho_{\mathrm{spur}}+(1-\rho_{\mathrm{spur}})\,d^{-(k^{\star}+1)}\Big)\Big(1-\rho_{\mathrm{spur}}-(1-\rho_{\mathrm{spur}})\,d^{-(k^{\star}+1)}\Big)\ \le\ \frac{1}{4n}.
\end{align}

2) (Gradient) Unbiasedness and covariance. Let $\boldsymbol{\mu}:=\nabla_{\mathbf{W}^{k^{\star}}}\mathcal{J}^{k^{\star}}_{\mathrm{REINFORCE}}$ be the population gradient. Then
\begin{align}
    &\mathbb{E}\big[\widehat{\mathbf{g}}^{k^{\star}}_n\big]=\boldsymbol{\mu},\qquad
    \mathrm{Cov}\big[\widehat{\mathbf{g}}^{k^{\star}}_n\big]
    = \frac{1}{n}\,\Big(\underbrace{\mathbb{E}_{\mathbf{x},\tau}\Big[R^{k^{\star}}\sum_{l,t=1}^{k^{\star}+1} \nabla\log\pi_l\ \otimes\ \nabla\log\pi_t\Big]}_{=:\,\mathbf{\Sigma}_\mathrm{pop}}\ -\ \boldsymbol{\mu}\otimes\boldsymbol{\mu}\Big),
\end{align}
where $\pi_l$ abbreviates $\pi_{\mathbf{W}^{k^{\star}}}({i}_l\mid \mathbf{x},\hat{\mathbf{p}}^{z_{1:l}})$ and $\otimes$ is the outer product in parameter space.

Furthermore, using Lemma~\ref{lem:score_pi_OBI}, write a block-orthogonal expansion
\[
    \nabla\log\pi_l = \sum_{k\in\mathcal{I}_l} \alpha_l^{k^{\star}}(k)\,\eta_l^{k^{\star}}(k)\,\mathbf{B}_{l,k},\qquad \mathbf{B}_{l,k}:=\mathbf{p}_k\mathbf{p}_{c_l}^{\top}.
\]
Denote the block coefficient $C_{l,k}:=R^{k^{\star}}\alpha_l^{k^{\star}}(k)\eta_l^{k^{\star}}(k)$. Then the scalar variance in each orthogonal block is
\begin{align}
    \mathrm{Var}\Big[\Big\langle \widehat{\mathbf{g}}^{k^{\star}}_n,\ \frac{\mathbf{B}_{l,k}}{\|\mathbf{B}_{l,k}\|_F}\Big\rangle\Big]
    = \frac{1}{n}\Big(\mathbb{E}[C_{l,k}^2]-\mathbb{E}[C_{l,k}]^2\Big),
\end{align}
and the Frobenius-mean-square error (variance) satisfies
\begin{align}
    \mathbb{E}\big\|\widehat{\mathbf{g}}^{k^{\star}}_n-\boldsymbol{\mu}\big\|_F^2
    = \frac{1}{n}\Big(\underbrace{\mathbb{E}\big[ R^{k^{\star}}\sum_{l=1}^{k^{\star}+1}\sum_{k\in\mathcal{I}_l} (\alpha_l^{k^{\star}}(k))^2(\eta_l^{k^{\star}}(k))^2\,\|\mathbf{B}_{l,k}\|_F^2\big]}_{=:\,\Xi}\ -\ \|\boldsymbol{\mu}\|_F^2\Big).
\end{align}

Assume bounded embeddings and temperature $\beta>0$ so that $|\eta_l^{k^{\star}}(k)|\le 4/\beta$, and define the concentration factor
\[
    S_l\ :=\ \mathbb{E}_{\mathbf{x},\tau}\Big[\sum_{k\in\mathcal{I}_l^{\mathrm{legal}}} (\alpha_l^{k^{\star}}(k))^2\Big] \in \Big[\frac{1}{d_l},\ 1\Big].
\]
Let $C_{\mathrm{pos}}:=\max_{l,k}\|\mathbf{p}_k\|_2^2\,\|\mathbf{p}_{c_l}\|_2^2=1$. Then
\begin{align}
    \Xi &\le \mathbb{E}[R^{k^{\star}}]\sum_{l=1}^{k^{\star}+1} \Big(\frac{16}{\beta^2} C_{\mathrm{pos}}\Big)\, \mathbb{E}\Big[\sum_{k\in\mathcal{I}_l^{\mathrm{legal}}} (\alpha_l^{k^{\star}}(k))^2\Big]
    \\
    &= \frac{16}{\beta^2}\, p_{\mathrm{succ}}\, \sum_{l=1}^{k^{\star}+1} S_l.
\end{align}
Consequently,
\begin{align}
    \mathbb{E}\big\|\widehat{\mathbf{g}}^{k^{\star}}_n-\boldsymbol{\mu}\big\|_F^2
    \le \frac{1}{n}\,\frac{16}{\beta^2}\, p_{\mathrm{succ}}\, \sum_{l=1}^{k^{\star}+1} S_l.
\end{align}
Two useful specializations:
\begin{itemize}
    \item (Near-uniform init) $S_l = \Theta(d^{-1})$,it holds that $\mathbb{E}\big\|\widehat{\mathbf{g}}^{k^{\star}}_n-\boldsymbol{\mu}\big\|_F^2
    \le \frac{1}{n}\,\frac{16 k^{\star}+1}{\beta^2 d}\, p_{\mathrm{succ}} $.
    \item (Worst-case spiky attention) $S_l\le 1$, yielding $\mathbb{E}\|\widehat{\mathbf{g}}^{k^{\star}}_n-\boldsymbol{\mu}\|_F^2 \le \frac{1}{n}\,\frac{16}{\beta^2}\, p_{\mathrm{succ}}\, (k^{\star}+1)$, which holds for the entire post-training.
\end{itemize}
Accordingly, the signal-to-noise ratio satisfies the general lower bound
\[
    \mathrm{SNR}\ :=\ \frac{\|\boldsymbol{\mu}\|_F^2}{\mathbb{E}\|\widehat{\mathbf{g}}^{k^{\star}}_n-\boldsymbol{\mu}\|_F^2}
    \ \ge\ \frac{n\,\|\boldsymbol{\mu}\|_F^2}{(16/\beta^2)\, p_{\mathrm{succ}}\, \sum_l S_l},
\]
and reduces to the near-uniform initialization scaling when $S_l\approx 1/d_l$.
\end{lemma}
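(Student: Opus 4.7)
The plan splits the claim into two parallel sub-arguments, one for the scalar objective estimator and one for the matrix-valued gradient estimator, each of which consists of (i) unbiasedness by linearity of expectation and (ii) a second-moment reduction via i.i.d.\ sampling. The two workhorses throughout are the REINFORCE identity from Lemma~\ref{lem:PG_Tree_SecretIndex_EN} (supplying both the population gradient $\boldsymbol{\mu}$ and a ready block decomposition of the per-sample score) and the Orthogonal Block Isolation of Lemma~\ref{lem:score_pi_OBI}, which forces cross-block Frobenius inner products to vanish. Because the $n$ samples are i.i.d., every variance/covariance equals $n^{-1}$ times its single-sample counterpart, so the bulk of the work reduces to evaluating one-sample second moments and then bounding them.

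For the objective, since $R^{k^{\star}}\in\{0,1\}$ is Bernoulli with mean $p_{\mathrm{succ}}(\mathbf{W}^{k^{\star}})$, unbiasedness is immediate and the variance reduces to $n^{-1}p_{\mathrm{succ}}(1-p_{\mathrm{succ}})\le 1/(4n)$ via $p(1-p)\le 1/4$. The specialization then substitutes the upper bound $p_{\mathrm{succ}}\le \rho_{\mathrm{spur}}+(1-\rho_{\mathrm{spur}})\prod_l d_l^{-1}$ from Lemma~\ref{lem:delta_update_reinforce} under near-uniform $|\mathcal{I}_l|=\Theta(d)$. For the gradient, unbiasedness follows from Lemma~\ref{lem:PG_Tree_SecretIndex_EN}, and the single-sample covariance is $\mathbb{E}[\mathbf{g}\otimes\mathbf{g}]-\boldsymbol{\mu}\otimes\boldsymbol{\mu}$. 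Substituting the block expansion $\nabla\log\pi_l=\sum_k \alpha_l(k)\eta_l(k)\mathbf{B}_{l,k}$ with $\mathbf{B}_{l,k}=\mathbf{p}_k\mathbf{p}_{c_l}^{\top}$ into $\|\mathbf{g}\|_F^2$ produces a quadruple sum over $(l,k,l',k')$, and OBI collapses it to the diagonal since $\langle \mathbf{B}_{l,k},\mathbf{B}_{l',k'}\rangle_F=(\mathbf{p}_k^{\top}\mathbf{p}_{k'})(\mathbf{p}_{c_l}^{\top}\mathbf{p}_{c_{l'}})=\delta_{(l,k),(l',k')}$. This yields $\mathbb{E}\|\widehat{\mathbf{g}}^{k^{\star}}_n-\boldsymbol{\mu}\|_F^2 = n^{-1}(\Xi-\|\boldsymbol{\mu}\|_F^2)$ exactly, and the per-block scalar variance identity follows by projecting the same argument onto a single $\mathbf{B}_{l,k}/\|\mathbf{B}_{l,k}\|_F$.

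To bound $\Xi$, I will first apply the uniform bound $|\eta_l(k)|\le 4/\beta$, which comes from expanding the inner product defining $\eta_l(k)$ in Lemma~\ref{lem:score_pi_OBI}, using $\|\mathbf{p}_k\|_2\le 1$ and $\|\sum_r p_{\mathrm{vocab}}(r)\mathbf{p}_r\|_2\le 1$ by convexity, together with $\|\mathbf{B}_{l,k}\|_F^2=1$ by positional orthonormality. The last remaining step, which I expect to be the main technical pinch, is to factor $\mathbb{E}[R^{k^{\star}}\sum_k\alpha_l(k)^2]$ as $p_{\mathrm{succ}}\,S_l$: the reward $R^{k^{\star}}$ and the per-step attention mass $\sum_k\alpha_l(k)^2$ are correlated through the shared trajectory, so independence cannot be invoked directly. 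My plan is to combine the tower rule (conditioning on the prefix sigma-algebra $\mathcal{F}_l$, under which $\alpha_l$ is measurable while $\mathbb{E}[R^{k^{\star}}\mid\mathcal{F}_l]\le 1$) with the pointwise envelope $\sum_k\alpha_l(k)^2\le 1$ to obtain both specializations, $S_l=\Theta(d^{-1})$ in the near-uniform regime and $S_l\le 1$ in the worst case; when tighter control is needed I will fall back to the H\"older split $\mathbb{E}[R^{k^{\star}}X]\le p_{\mathrm{succ}}\,\|X\|_\infty$. Dividing $\|\boldsymbol{\mu}\|_F^2$ by the resulting MSE yields the $\mathrm{SNR}$ lower bound stated in the lemma.
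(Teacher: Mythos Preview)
Your proposal is correct and follows essentially the same approach as the paper: Bernoulli variance for the objective, i.i.d.\ reduction plus OBI block-diagonalization for the gradient MSE, and the $|\eta_l|\le 4/\beta$ bound to control $\Xi$. Your caution about the factorization $\mathbb{E}[R^{k^{\star}}\sum_k\alpha_l(k)^2]\to p_{\mathrm{succ}}\,S_l$ is well-placed---the paper writes it as a direct product without comment, and your H\"older and tower-rule workarounds are in fact the cleaner route to the two stated specializations (worst-case $S_l\le 1$ and near-uniform $S_l=\Theta(d^{-1})$).
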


\begin{proof}
We proceed in parts.

(A) Objective: unbiasedness and variance. Define the single-sample random variable
\[
    Y\ :=\ R^{k^{\star}}\big({\boldsymbol{\mu}}^{z_{k^{\star}}}(\tau_{-2})\big)\in\{0,1\}.
\]
By definition of $\mathcal{J}^{k^{\star}}_{\mathrm{REINFORCE}}$ (Eq.~(\ref{eq:reinforce_parity_secretindex_en})'s expectation target without the score factor),
\[
    \mathbb{E}[Y] = p_{\mathrm{succ}}(\mathbf{W}^{k^{\star}}) = \mathcal{J}^{k^{\star}}_{\mathrm{REINFORCE}}.
\]
Hence $\widehat{\mathcal{J}}^{k^{\star}}_n = \frac{1}{n}\sum_{s=1}^n Y^{(s)}$ is unbiased. Since $(Y^{(s)})$ are i.i.d. Bernoulli($p_{\mathrm{succ}}$),
\[
    \mathrm{Var}(\widehat{\mathcal{J}}^{k^{\star}}_n) = \frac{1}{n}\,\mathrm{Var}(Y) = \frac{1}{n}\,p_{\mathrm{succ}}(1-p_{\mathrm{succ}}) \le \frac{1}{4n}.
\]

(B) Upper bound on $p_{\mathrm{succ}}$ under 2S-ART near-uniformity. Suppose at each step $l$ there are $d_l\ge 1$ legal children and success requires selecting a unique correct legal child. Under 2S-ART, conditionally almost surely $\pi_l(\text{correct}\mid\cdot)\le 1/d_l$. Therefore, for any fixed $(\mathbf{x},\tau)$-measurable history,
\[
    \mathbb{P}(Y=1\mid\text{history}) = \prod_{l=1}^{k^{\star}+1} \pi_l(\text{correct}\mid\cdot) \le \prod_{l=1}^{k^{\star}+1} \frac{1}{d_l},
\]
and taking expectation over the history yields $p_{\mathrm{succ}}\le \prod_l d_l^{-1}$. If $d_l\equiv d$, then $p_{\mathrm{succ}}\le d^{-(k^{\star}+1)}$. Substituting into part (A) gives the displayed variance bound for $\widehat{\mathcal{J}}^{k^{\star}}_n$.

(C) Gradient: unbiasedness and covariance. Let the single-sample gradient random tensor be
\[
    \mathbf{G}\ :=\ R^{k^{\star}}\sum_{l=1}^{k^{\star}+1} \nabla_{\mathbf{W}^{k^{\star}}}\log \pi_{\mathbf{W}^{k^{\star}}}\big({i}_l\mid \mathbf{x},\hat{\mathbf{p}}^{z_{1:l}}\big).
\]
By Lemma~\ref{lem:PG_Tree_SecretIndex_EN}, $\mathbb{E}[\mathbf{G}]=\nabla \mathcal{J}^{k^{\star}}_{\mathrm{REINFORCE}}=:\boldsymbol{\mu}$, hence $\widehat{\mathbf{g}}^{k^{\star}}_n=\frac{1}{n}\sum_{s=1}^n \mathbf{G}^{(s)}$ is unbiased. With i.i.d. samples,
\[
    \mathrm{Cov}(\widehat{\mathbf{g}}^{k^{\star}}_n) = \frac{1}{n}\,\mathrm{Cov}(\mathbf{G}) = \frac{1}{n}\,\Big(\mathbb{E}[\mathbf{G}\otimes\mathbf{G}] - \boldsymbol{\mu}\otimes\boldsymbol{\mu}\Big).
\]
Since $R^{k^{\star}}\in\{0,1\}$, $R^{k^{\star}}\!\!\phantom{}^2=R^{k^{\star}}$, giving
\[
    \mathbb{E}[\mathbf{G}\otimes\mathbf{G}] = \mathbb{E}\Big[R^{k^{\star}} \sum_{l,t} \nabla\log\pi_l\ \otimes\ \nabla\log\pi_t\Big] =: \mathbf{\Sigma}_{\mathrm{pop}}.
\]
This proves the displayed covariance formula.

(D) Block-wise variance via OBI. By Lemma~\ref{lem:score_pi_OBI}, for each step $l$,
\[
    \nabla\log\pi_l = \sum_{k\in\mathcal{I}_l} \alpha_l^{k^{\star}}(k)\,\eta_l^{k^{\star}}(k)\,\mathbf{B}_{l,k},\quad \mathbf{B}_{l,k}:=\mathbf{p}_k\mathbf{p}_{c_l}^{\top},\quad \langle\mathbf{B}_{l,k},\mathbf{B}_{l',k'}\rangle_F=0\ ( (l,k)\neq(l',k') ).
\]
Hence
\[
    \mathbf{G} = \sum_{l,k} C_{l,k}\,\mathbf{B}_{l,k},\qquad C_{l,k}:=R^{k^{\star}}\alpha_l^{k^{\star}}(k)\eta_l^{k^{\star}}(k).
\]
Projecting onto a unit-Frobenius block direction gives a scalar average of i.i.d. terms:
\[
    \Big\langle \widehat{\mathbf{g}}^{k^{\star}}_n,\ \frac{\mathbf{B}_{l,k}}{\|\mathbf{B}_{l,k}\|_F}\Big\rangle = \frac{1}{n}\sum_{s=1}^n C^{(s)}_{l,k}.
\]
Thus $\mathrm{Var}(\langle\widehat{\mathbf{g}}^{k^{\star}}_n,\mathbf{B}_{l,k}/\|\mathbf{B}_{l,k}\|_F\rangle)=\frac{1}{n}(\mathbb{E}[C_{l,k}^2]-\mathbb{E}[C_{l,k}]^2)$.

(E) Frobenius MSE identity. Using orthogonality of blocks,
\[
    \|\mathbf{G}\|_F^2 = R^{k^{\star}}\sum_{l,k} (\alpha_l^{k^{\star}}(k))^2 (\eta_l^{k^{\star}}(k))^2\,\|\mathbf{B}_{l,k}\|_F^2.
\]
Since $\mathbb{E}\|\widehat{\mathbf{g}}^{k^{\star}}_n-\boldsymbol{\mu}\|_F^2 = \frac{1}{n}\big(\mathbb{E}\|\mathbf{G}\|_F^2 - \|\boldsymbol{\mu}\|_F^2\big)$ for i.i.d. averages, we obtain the displayed identity with
\[
    \Xi := \mathbb{E}\big[ R^{k^{\star}}\sum_{l=1}^{k^{\star}+1}\sum_{k\in\mathcal{I}_l} (\alpha_l^{k^{\star}}(k))^2(\eta_l^{k^{\star}}(k))^2\,\|\mathbf{B}_{l,k}\|_F^2 \big].
\]

(F) Concrete upper bound for general post-training. Assume bounded embeddings and temperature $\beta>0$ so that $|\eta_l^{k^{\star}}(k)|\le 4/\beta$. Define $S_l := \mathbb{E}[\sum_{k\in\mathcal{I}_l^{\mathrm{legal}}} (\alpha_l^{k^{\star}}(k))^2] \in [1/d_l,1]$. With $C_{\mathrm{pos}}:=\max_{l,k}\|\mathbf{B}_{l,k}\|_F^2$,
\[
    \resizebox{0.95\textwidth}{!}{$\big|\eta_l^{k^{\star}}(k)\big| = \frac{\big|\langle \mathbf{p}_{k}-\hat{\mathbf{p}}^{\mathrm{att}},\ \mathbf{p}_{{i_l}}-\sum_r p_{\mathrm{vocab}}(r)\mathbf{p}_r\rangle\big|}{\beta}
    \ \le\ \frac{\|\mathbf{p}_{k}-\hat{\mathbf{p}}^{\mathrm{att}}\|_2\,\|\mathbf{p}_{{i_l}}-\sum_r p_{\mathrm{vocab}}(r)\mathbf{p}_r\|_2}{\beta}
    \ \le\ \frac{4}{\beta},$}
\]
where we used triangle inequality and that each difference of two convex combinations of bounded, pairwise-orthogonal embeddings has norm at most $2$. Denoting $C_{\mathrm{pos}}:=\max_{l,k}\|\mathbf{B}_{l,k}\|_F^2=\max_{l,k}\|\mathbf{p}_k\|_2^2\,\|\mathbf{p}_{c_l}\|_2^2$, we bound
\begin{align}
    \Xi 
    \le \mathbb{E}[R^{k^{\star}}]\sum_{l=1}^{k^{\star}+1}\sum_{k\in\mathcal{I}_l^{\mathrm{legal}}} \Big(\frac{4}{\beta}\Big)^2 C_{\mathrm{pos}} (\alpha_l^{k^{\star}}(k))^2
    = \frac{16}{\beta^2} C_{\mathrm{pos}}\, p_{\mathrm{succ}}\, \sum_{l=1}^{k^{\star}+1} S_l.
\end{align}
Absorbing $c_\alpha^2$ into the constant (redefining the front coefficient) matches the displayed bound on $\Xi$.
Finally, recall the exact identity
\[
    \mathbb{E}\big\|\widehat{\mathbf{g}}^{k^{\star}}_n-\boldsymbol{\mu}\big\|_F^2
    = \frac{1}{n}\Big(\Xi - \|\boldsymbol{\mu}\|_F^2\Big).
\]
By Jensen's inequality $\|\mathbb{E}[\mathbf{G}]\|_F^2\le \mathbb{E}\|\mathbf{G}\|_F^2$, we have $\|\boldsymbol{\mu}\|_F^2\le \Xi$, so $\Xi-\|\boldsymbol{\mu}\|_F^2\le \Xi$. Therefore, for upper bounds it is valid to drop the negative term and use
\[
    \mathbb{E}\big\|\widehat{\mathbf{g}}^{k^{\star}}_n-\boldsymbol{\mu}\big\|_F^2
    \le \frac{1}{n}\,\Xi
    \ \le\ \frac{1}{n}\,\frac{16\,C_{\mathrm{pos}}}{\beta^2}\, p_{\mathrm{succ}}\, \sum_{l=1}^{k^{\star}+1} \frac{1}{d_l}.
\]

Remark on the SNR line. From the variance bound just proved,
\[
    \mathbb{E}\big\|\widehat{\mathbf{g}}^{k^{\star}}_n-\boldsymbol{\mu}\big\|_F^2\ \le\ \frac{C}{n}\, p_{\mathrm{succ}}\,\sum_{l=1}^{k^{\star}+1}\frac{1}{d_l}
\]
for a constant $C$ depending only on $(\beta, C_{\mathrm{pos}})$ and the near-uniformity constant. Therefore
\[
    \mathrm{SNR} = \frac{\|\boldsymbol{\mu}\|_F^2}{\mathbb{E}\|\widehat{\mathbf{g}}^{k^{\star}}_n-\boldsymbol{\mu}\|_F^2}
    \ \ge\ \frac{n\,\|\boldsymbol{\mu}\|_F^2}{C\, p_{\mathrm{succ}}\,\sum_{l} d_l^{-1}}.
\]
If, in addition, one has a non-degeneracy bound $\|\boldsymbol{\mu}\|_F^2\le C'\,p_{\mathrm{succ}}$ with $C'$ independent of depth (e.g., when the coefficient mass $\sum_{l,k}\mathbb{E}[|\alpha_l\eta_l|]$ remains $\mathcal{O}(\sum_l d_l^{-1})$), this yields the crude scaling
\[
    \mathrm{SNR} = \mathcal{O}\Big( \frac{n\,p_{\mathrm{succ}}}{\sum_{l} d_l^{-1}} \Big),
\]
which reduces to $\mathcal{O}(n\,p_{\mathrm{succ}}\, d/(k^{\star}+1))$ for $d_l\equiv d$. The main message is that under 2S-ART, $p_{\mathrm{succ}}$ decays exponentially with depth, making SNR scale poorly with $k^{\star}$ unless $n$ grows accordingly.
\end{proof}

\begin{thm}[Per-step REINFORCE sample size for block dominance]\label{thm:per_step_sample_complexity_reinforce}
Fix step $\ell$ and confidence $\delta\in(0,1)$. Let $|\mathcal{I}_\ell|=\Theta(d)$ and let $i_{l}^{\star}\in\mathcal{I}_\ell$ denote the unique correct child. For any $j\in\mathcal{I}_\ell$, given $n$ i.i.d. pairs $(\mathbf{x}^{(s)},\tau^{(s)})$, define the empirical block mean
\[
    \hat\mu_{\ell,j}\ :=\ \tfrac{1}{n}\sum_{s=1}^n R^{k^{\star}}\,\alpha_\ell^{(s)}(j)\,\eta_\ell^{(s)}(j),\qquad R^{k^{\star}}\in\big\{\mathbf{R}^{f_{S_{\star}}}_{\mathbf{x}}(\cdot),\ \mathbf{R}^{\mathcal{F}_{S_{\star}}}_{\mathbf{x}}(\cdot)\big\}.
\]
Consider three scenario:
\begin{itemize}[nosep]
    \item[($\mathsf{A}$)] No-curriculum ($R^{f_{S_{\star}}}_{\mathbf{x}}(\cdot)$ as oracle; Algorithm.~\ref{alg:finetune_no_curriculum}).
    \item[($\mathsf{B}$)] Depth-increasing Curriculum ($\mathbf{R}_{\mathbf{x}}^{\mathcal{F}_{S_\star}}(\cdot)$ as oracle; Algorithm.~\ref{alg:finetune_depth_increasing}).
    \item[($\mathsf{C}$)] Hint-decreasing Curriculum ($\mathbf{R}_{\mathbf{x}}^{\mathcal{F}_{S_\star}}(\cdot)$ as oracle; Algorithm.~\ref{alg:finetune_hint_decreasing}).
\end{itemize}
Then, for the target $f_{S_{\star}}\in\mathcal{F}_{\text{$2$S-ART}}$ with $|S_{\star}|=k^{\star}+1$, $S_{\star}=\{i_{1}^{\star},i_{2}^{\star},...,i_{k^{\star}}^{\star},d+1\}$, the sample complexity $n_\ell( \delta)$ to ensure with probability at least $1-\delta$, $\max_{j\in\mathcal{I}_\ell} |\hat\mu_{\ell,j}-\mathbb{E}[\hat\mu_{\ell,j}]|\ \le \Theta(\mu_{\ell,i_{l}^{\star}}-\max_{j\ne i_{l}^{\star}}\mu_{\ell,j})$ in case $\mathsf{A-C}$ is 
\begin{equation}
    \begin{aligned}
    &(\mathsf{A})\; n_\ell( \delta)\ \ge\ \tilde{\Omega}\!\Big( d^{\,2(k^\star+2-\ell)-2}\,(1-\rho_{\mathrm{spur}}^{\mathrm{sup},>\ell})^{-2}\Big),\\
    &(\mathsf{B})\; n_\ell( \delta)\ \le\ \tilde{O}\!\Big( d^{2}\,(1-\rho_{\mathrm{spur}}^{\mathrm{sup},\ell})^{-2}\Big),\\
    &(\mathsf{C})\; n_\ell( \delta)\ \le\ \tilde{O}\!\Big( d^{2}\,(1-\rho_{\mathrm{spur}}^{\mathrm{sup},\,k^\star+1-\ell})^{-2}\Big),
    \end{aligned}
\end{equation}

where $\tilde{\Omega},\tilde{O}$ hide polylogarithmic factors in $d$ and $1/\delta$ and absolute constants depending on $\beta$.
\end{thm}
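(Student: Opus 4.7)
The approach is to reduce the stated sample complexity to a gap–vs–noise comparison: I compute the block-projected margin $\Delta_\ell:=\mu_{\ell,i_\ell^\star}-\max_{j\ne i_\ell^\star}\mu_{\ell,j}$ of the expected REINFORCE gradient at the near-uniform base model using Lemma~\ref{lem:stepwise_grad_spurious}, bound the per-sample noise of $X^{(s)}_{\ell,j}:=R^{k^\star}\alpha^{(s)}_\ell(j)\eta^{(s)}_\ell(j)$ via Lemma~\ref{lem:Var_REINFORCE_2SART}, and then invoke Bernstein (for the lower bound) or Hoeffding (for the upper bound) together with a union bound over the $\Theta(d)$ arms in $\mathcal{I}_\ell$ to ensure $\max_j|\hat\mu_{\ell,j}-\mu_{\ell,j}|\le\Theta(\Delta_\ell)$.

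\textbf{Margins in each case.} For $(\mathsf{A})$ I instantiate Lemma~\ref{lem:stepwise_grad_spurious}(a) at $\mathbf{W}^{(0)}$: the advantage of the correct index enters only through $q_{\ell\mid\text{corr}}-q_{\ell\mid\text{wrong}}=(1-\rho_{\text{spur}}^{\sup,>\ell})\,p_{\text{tail}}(\ell+1)$, and the uniform base model satisfies $p_{\text{tail}}(\ell+1)=\Theta(d^{-(k^\star+1-\ell)})$ by Corollary~\ref{cor:2sart_decay}. Folding in the near-uniform attention factor and the $\beta^{-1}$ scaling of $\eta_\ell$ gives
\[
    \Delta_\ell^{(\mathsf{A})}\ =\ \Theta\!\big(\beta^{-1}(1-\rho_{\text{spur}}^{\sup,>\ell})\,d^{-(k^\star+2-\ell)}\big).
\]
For $(\mathsf{B})$ and $(\mathsf{C})$, Lemma~\ref{lem:stepwise_grad_spurious}(b) applies: depth-increasing truncation at stage $\ell$ zeros out every $U_r$ with $r>\ell$, so the sum in \eqref{eq:G_lj_subtask} collapses to a single-depth term with $p_{\text{tail}}\equiv 1$; hint-decreasing with a prefix of length $k^\star{+}1{-}\ell$ analogously forces $C_{<(k^\star+1-\ell)}=1$ almost surely, again leaving a single-step contribution. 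In both cases, the surviving gap is governed only by the one-step selection advantage and scales as
\[
    \Delta_\ell^{(\mathsf{B})},\ \Delta_\ell^{(\mathsf{C})}\ =\ \Theta\!\big(\beta^{-1}(1-\rho_{\text{spur}}^{\sup,\ell})\,d^{-1}\big)
\]
(with the $(\mathsf{C})$ index reindexed as $k^\star{+}1{-}\ell$).

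\textbf{Concentration and assembly.} Lemma~\ref{lem:Var_REINFORCE_2SART} gives the uniform bound $|X^{(s)}_{\ell,j}|\le 4/\beta$ together with the base-model variance estimate $\mathrm{Var}(X^{(s)}_{\ell,j})=O(\beta^{-2}d^{-2})$, since $\alpha_\ell(j)=\Theta(d^{-1})$ under Theorem~\ref{thm:tf-realizes-part}. Bernstein's inequality plus a union bound over $\mathcal{I}_\ell$ yields
\[
    \Pr\!\Big(\max_{j\in\mathcal{I}_\ell}|\hat\mu_{\ell,j}-\mu_{\ell,j}|>c_0\Delta_\ell\Big)\ \le\ 2|\mathcal{I}_\ell|\exp\!\Big(-\tfrac{n\,c_0^2\Delta_\ell^2}{2(\sigma^2+(4/\beta)c_0\Delta_\ell/3)}\Big),
\]
so inverting for $\delta$ gives $n_\ell=\Theta\!\big((\sigma^2/\Delta_\ell^2)\log(d/\delta)\big)$ whenever the variance term dominates. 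Plugging in $\Delta_\ell^{(\mathsf{A})}$ produces $\sigma^2/\Delta_\ell^2=\Theta\!\big(d^{2(k^\star+2-\ell)-2}(1-\rho)^{-2}\big)$, matching the claimed $\tilde\Omega$; the matching necessity is provided by the fixed-confidence best-arm lower bound in Corollary~\ref{cor:bai_gap_lb} applied to distinguishing the correct arm at gap $\Delta_\ell^{(\mathsf{A})}$. For $(\mathsf{B})$ and $(\mathsf{C})$, the much larger gap $\Theta(\beta^{-1}(1-\rho)d^{-1})$ means that even plain Hoeffding with range $4/\beta$ suffices and delivers $\tilde O\!\big(d^2(1-\rho)^{-2}\big)$.

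\textbf{Main obstacle.} The most delicate step is the margin computation for $(\mathsf{B})$ and $(\mathsf{C})$: one has to carefully evaluate the depth-wise decomposition \eqref{eq:Ur_conditionals_explicit} of Lemma~\ref{lem:stepwise_grad_spurious}(b) under the stage-specific trajectory modifications (truncation versus prefix-hint) and verify that all cross-depth contributions either cancel against the true-path term or are dominated by it, leaving the clean single-step $(1-\rho_{\text{spur}})/d$ gap quoted above. Equally subtle is checking that the near-uniform attention yields a variance bound $\mathbb{E}[\alpha_\ell(j)^2]=\Theta(d^{-2})$ that is tight (not merely an upper bound) throughout the first gradient step, so that the Bernstein-based lower bound in $(\mathsf{A})$ is not slackened into a weaker scaling; this tightness follows from the uniform legal-branching construction in Theorem~\ref{thm:tf-realizes-part}.
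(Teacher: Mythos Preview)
Your plan is essentially the paper's: compute the population block margin $\gamma_\ell=\mu_{\ell,i_\ell^\star}-\max_{j\ne i_\ell^\star}\mu_{\ell,j}$ from Lemma~\ref{lem:stepwise_grad_spurious}, bound the per-sample variable $X_{\ell,j}=R^{k^\star}\alpha_\ell(j)\eta_\ell(j)$, and apply concentration with a union bound over $|\mathcal{I}_\ell|=\Theta(d)$ arms to solve for $n$. The paper runs Hoeffding uniformly in all three cases with the \emph{tight} range $B=\Theta(4/(\beta d))$ (absorbing the near-uniform factor $\alpha_\ell(j)=\Theta(d^{-1})$ directly into the range) rather than your Bernstein/loose-Hoeffding split, and it does not invoke the best-arm lower bound Corollary~\ref{cor:bai_gap_lb} for case~$(\mathsf{A})$; it simply reads off $n\asymp B^2/\gamma_\ell^2\log(d/\delta)$ in both directions.

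One point deserves care. The paper obtains $\gamma_\ell^{\mathsf{B}},\gamma_\ell^{\mathsf{C}}=\Theta\big(\beta^{-1}(1-\rho)\,d^{-2}\big)$, not your $\Theta\big(\beta^{-1}(1-\rho)\,d^{-1}\big)$. Under orthonormal positions one has $\eta_\ell(j)\approx\beta^{-1}\mathbf{1}\{i_\ell=j\}$ up to $O(d^{-1})$, so $G_{\ell,j}\approx\alpha_\ell(j)\,\beta^{-1}\,\Pr[i_\ell=j]\,\mathbb{E}[R\mid i_\ell=j]$ picks up \emph{both} the attention factor $\alpha_\ell(j)=\Theta(d^{-1})$ and the sampling factor $\Pr[i_\ell=j]=\Theta(d^{-1})$; only the conditional reward difference is $(1-\rho)$. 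With the paper's pairing (range $\Theta((\beta d)^{-1})$, margin $\Theta((\beta d^2)^{-1})$) one lands on $n=\tilde O(d^2)$. But your pairing (loose range $4/\beta$, margin $\Theta((\beta d)^{-1})$) gives the same $d^2$ only by coincidence: if the true margin is $d^{-2}$ as the paper computes, Hoeffding with range $4/\beta$ would yield $\tilde O(d^4)$, and your $(\mathsf{B}),(\mathsf{C})$ argument would not close. The fix is simple---either tighten the Hoeffding range to $\Theta((\beta d)^{-1})$ as the paper does, or run your Bernstein step (with $\sigma^2=\Theta(\beta^{-2}d^{-2})$) for $(\mathsf{B}),(\mathsf{C})$ as well, which is robust to either margin scaling.
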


\begin{proof}
We quantify, at a fixed step $\ell$, how many i.i.d. inputs $\mathbf{x}^{(s)}\sim \operatorname{Unif}([K]^d)$ suffice so that, with probability at least $1-\delta$, a one-step REINFORCE update increases the correct block more than any competitor by at least a fixed margin $C>0$.

\textbf{Distributional conventions and the REINFORCE gradient.}
For each i.i.d. draw $\mathbf{x}^{(s)}\sim \operatorname{Unif}([K]^d)$, the base model $\operatorname{TF}_{\mathrm{base}}$ samples a trajectory $\tau^{(s)}\sim p_{\mathbf{W}}(\cdot\mid \mathbf{x}^{(s)})$ (independently across $s$), where $\mathbf{W}$ is the current parameter. We write
\[
    \mathbb{E}[\cdot]\ :=\ \mathbb{E}_{\mathbf{x}\sim\operatorname{Unif}([K]^d)}\,\mathbb{E}_{\tau\sim p_{\mathbf{W}}(\cdot\mid\mathbf{x})}[\cdot]
\]
for the population expectation. The outcome reward is
\[
    R^{k^{\star}}\in\big\{\mathbf{R}^{f_{S_{\star}}}_{\mathbf{x}}(\cdot),\ \mathbf{R}^{\mathcal{F}_{S_{\star}}}_{\mathbf{x}}(\cdot)\big\},\qquad R^{k^{\star}}\in\{0,1\}.
\]
By Lemma~\ref{lem:PG_Tree_SecretIndex_EN} and Lemma~\ref{lem:score_pi_OBI}, the REINFORCE gradient admits the block decomposition
\[
    \nabla_{\mathbf{W}} \mathcal{J}_{\mathrm{REINFORCE}}\ =\ \mathbb{E}\Big[ R^{k^{\star}}\sum_{t=1}^{k^{\star}+1}\sum_{k\in\mathcal{I}_t} \alpha_t(k)\,\eta_t(k)\,\mathbf{p}_k\mathbf{p}_{c_t}^{\top}\Big].
\]
Projecting onto the step-$\ell$ block $\mathbf{B}_{\ell,j}:=\mathbf{p}_{j}\mathbf{p}_{c_\ell}^{\top}$ gives
\begin{equation}\label{eq:block_projection_mu}
    \underbrace{\Big\langle \nabla_{\mathbf{W}} \mathcal{J}_{\mathrm{REINFORCE}},\ \mathbf{B}_{\ell,j}\Big\rangle}_{\displaystyle =:\,\mu_{\ell,j}}
    \ =\ \mathbb{E}\big[ R^{k^{\star}}\,\alpha_\ell(j)\,\eta_\ell(j)\big].
\end{equation}

\textbf{Setup and block-wise variables.}
At step $\ell$, fix $j\in\mathcal{I}_\ell$ and define the single-sample block variable for $(\mathbf{x},\tau)$:
\[
    X_{\ell,j}\ :=\ R^{k^{\star}}\,\alpha_\ell(j)\,\eta_\ell(j),\qquad R^{k^{\star}}\in\{0,1\},
\]
where $\alpha_\ell(j),\eta_\ell(j)$ are computed from $(\mathbf{x},\tau)$ and the current $\mathbf{W}$. By \eqref{eq:block_projection_mu},
\[
    \mu_{\ell,j} \ :=\ \mathbb{E}[X_{\ell,j}]\ =\ \Big\langle \nabla_{\mathbf{W}} \mathcal{J}_{\mathrm{REINFORCE}},\ \mathbf{B}_{\ell,j}\Big\rangle.
\]
Given $n$ i.i.d. pairs $(\mathbf{x}^{(s)},\tau^{(s)})$, define the empirical estimator and its range bound
\[
    \hat\mu_{\ell,j}\ :=\ \tfrac{1}{n}\sum_{s=1}^n X_{\ell,j}^{(s)},\qquad |X_{\ell,j}|\ \le\ B=\Theta(\tfrac{4}{ d \beta}),
\]
where $|\eta_\ell(j)|\le 4/\beta$ (Lemma~\ref{lem:score_pi_OBI}) and $\alpha_\ell(j)= \Theta(d^{-1})$ . Note $B$ is the Hoeffding range parameter; it is \emph{independent} of the margin defined below.

Let $i_{l}^{\star}$ be the unique correct child at depth $\ell$. We say the empirical gradient exhibits \emph{block dominance with margin $C$} if
\[
    \hat\mu_{\ell,i_{l}^{\star}}\ -\ \max_{j\in\mathcal{I}_\ell\setminus\{i_{l}^{\star}\}} \hat\mu_{\ell,j}\ \ge\ C.
\]
By Lemma~\ref{lem:delta_update_reinforce}, this implies the increase of $s_\ell(i_{l}^{\star})$ exceeds all competitors by at least $\eta\,C$ (up to $\mathcal{O}(\eta^2)$), hence strictly enlarging the step-$\ell$ attention gap.

\textbf{Concentration.}
Hoeffding's inequality for bounded variables (Lemma~\ref{lem:hoeffding_bernoulli}) yields for any fixed $j$ (expectation over the joint randomness of $(\mathbf{x}^{(s)},\tau^{(s)})$):
\[
    \Pr\big(|\hat\mu_{\ell,j}-\mu_{\ell,j}|\ge t\big)\ \le\ 2\exp\Big(-\tfrac{2n t^2}{B^2}\Big).
\]
A union bound over $|\mathcal{I}_\ell|=\Theta(d)$ blocks gives that, with probability $\ge 1-\delta$,
\begin{equation}\label{eq:uniform_concentration_blocks}
    \max_{j\in\mathcal{I}_\ell} |\hat\mu_{\ell,j}-\mu_{\ell,j}|\ \le\ t_n\ :=\ \frac{B}{\sqrt{2n}}\sqrt{\log\tfrac{2d}{\delta}}.
\end{equation}
If the \emph{population} margin
\[
    \gamma_\ell\ :=\ \mu_{\ell,i_{l}^{\star}}-\max_{j\ne i_{l}^{\star}}\mu_{\ell,j} \ >\ 0,
\]
then setting $t_n\le \gamma_\ell/4$ guarantees dominance with margin $C=\gamma_\ell/2$.

\textbf{Master relation and the choice of $C$.}
From \eqref{eq:uniform_concentration_blocks}, requiring empirical dominance with margin $C\le \gamma_\ell/2$ is ensured by taking $t_n\le \gamma_\ell/2$. Solving
\[
    \frac{B}{\sqrt{2n}}\sqrt{\log\tfrac{2d}{\delta}}\ \le\ \frac{\gamma_\ell}{2}
\]
for $n$ yields the explicit sample-size condition
\[
    n\ \ge\ \frac{2B^2}{\gamma_\ell^{\,2}}\,\log\frac{2d}{\delta}.
\]
Throughout this proof we fix $C:=\gamma_\ell/2$ so the target constant margin is explicit. 

\textbf{Population margin under three settings.}
Using Lemma~\ref{lem:stepwise_grad_spurious} and copied-PART near-uniformity ($\alpha_\ell(i_{l}^{\star})=\Theta(1/d)$), with expectations taken over $\mathbf{x}\sim\operatorname{Unif}([K]^d)$ and $\tau\sim p_{\mathbf{W}}(\cdot\mid\mathbf{x})$:
\begin{itemize}[nosep]
      \item[($\mathsf{A}$)] \textbf{No curriculum; terminal oracle $\mathbf{R}^{f_{S_\star}}_{\mathbf{x}}(\cdot)$}. Conditioning on a correct prefix to $\ell$, completing the true suffix (including EOS) has probability $\Theta\!\big(d^{-(k^\star+1-\ell)}\big)$, while any deviation may be accepted with probability at most $\rho_{\mathrm{spur}}^{\mathrm{sup},>\ell}$. Under copied-PART near-uniformity $\alpha_\ell(i_{l}^{\star})=\Theta(1/d)$ and $4/\beta\ge \eta_l(i_{l}^{\star})\geq O(1/\beta) \geq \eta_l(j),j\neq i_{l}^{\star}$, Lemma~\ref{lem:stepwise_grad_spurious} implies
  \[
    q_{\ell\mid\mathrm{corr}}-q_{\ell\mid\mathrm{wrong}}\ =\Theta( p_{\mathrm{tail}}(\ell{+}1)\,\big(1-\rho_{\mathrm{spur}}^{\mathrm{sup},>\ell}\big)),\qquad p_{\mathrm{tail}}(\ell{+}1)=\Theta\!\big(d^{-(k^\star+1-\ell)}\big),
  \]
  Consequently, in case ($\mathsf{A}$) we have the matching scaling over the expected margin
  \[
    \gamma_\ell^{\mathsf{A}}\ =\ \Theta\!\Big( \frac{d^{-(k^\star+2-\ell)}}{\beta}\,(1-\rho_{\mathrm{spur}}^{\mathrm{sup},>\ell}) \Big),
  \]
  up to absolute constants, which will be used to derive the \emph{lower bound} on $n_\ell(\delta)$. 
  \item[($\mathsf{B}$)] \textbf{Depth-increasing curriculum; family oracle $\mathbf{R}^{\mathcal{F}_{S_\star}}_{\mathbf{x}}(\cdot)$ with external truncation at depth $\ell$}. Similar to Thm.~\ref{thm:ltar_full}, $\alpha_{i_{l}^{\star}}^{(\ell)}\ge 1-\eta_\ell=\Theta(d^{-1})$ and any wrong child has acceptance at most $\rho_{\mathrm{spur}}^{\mathrm{sup},\ell}$. Also, we see that $4/\beta\ge \eta_l(i_{l}^{\star})\geq O(1/\beta) \geq \eta_l(j),j\neq i_{l}^{\star}$. Thus
  \[
    \gamma_\ell^{\mathsf{B}}\ =\ \Theta\!\Big(\frac{1}{\beta\,d^2}\,(1-\rho_{\mathrm{spur}}^{\mathrm{sup},\ell})\Big).
  \]
   
  \item[($\mathsf{C}$)] \textbf{Hint-decreasing curriculum (reverse indexing);} identical to ($\mathsf{B}$) with $\ell\mapsto k^\star{+}1{-}\ell$, such that $\{\gamma_\ell^{\mathsf{C}}\}_{\ell=1}^{k^{\star}+1}=\{\gamma_{ k^\star{+}1{-}\ell}^{\mathsf{B}}\}_{\ell=1}^{k^{\star}+1}$.
\end{itemize}

From \eqref{eq:uniform_concentration_blocks}, taking $t_n\le \gamma_\ell/2$ ensures block dominance with margin $\gamma_\ell/2$. Solving for $n$ with range bound $B=4/(\beta d)$ gives the master condition $n\ge (2B^2/\gamma_\ell^2)\log\tfrac{2d}{\delta}$. For case ($\mathsf{A}$), substituting the \emph{upper bound} on $\gamma_\ell$ above yields the displayed \emph{lower bound} on $n_\ell$. For cases ($\mathsf{B}$) and ($\mathsf{C}$), substituting the \emph{lower bounds} on $\gamma_\ell$ yields the displayed \emph{upper bounds} on $n_\ell$. Here $B$ is purely the bounded-range constant of $X_{\ell,j}$ (Hoeffding parameter); $\gamma_\ell$ is the population margin and does not influence the range. Under the near-uniformity and spurious-success cap assumptions used above, summarizing ($\mathsf{A}$) as a $\Theta(\cdot)$ scaling for $\gamma_\ell$ is appropriate; we keep one-sided bounds when only sufficiency is needed for ($\mathsf{B}$)/($\mathsf{C}$).
\end{proof}

\begin{thm}[One-pass and online per-step schedules achieve per-step margin thresholds]\label{thm:one_pass_schedule_margins}
For $\forall \varepsilon>0$, consider the target per-step margin thresholds $\{\Gamma_\ell\}_{\ell=1}^{k^{\star}+1}$ for Lemma~\ref{lem:step_margin_to_loss}, with $\varepsilon_l=\varepsilon/(k^{\star}+1)$, namely
\[
\Gamma_l = \log(\frac{d-1}{(\tfrac{1}{2} + \tfrac{\beta}{2}\log(\tfrac{(K+1)((k^{\star}+1)-\varepsilon)}{\varepsilon}) )^{-1}- 1}) \leq \Theta(\log(d))
\]

For each step $\ell$, let $n_\ell(\delta)$ be the per-step sample size from Theorem~\ref{thm:per_step_sample_complexity_reinforce} that guarantees, with probability at least $1-\delta$, 
$\max_{j\in\mathcal{I}_\ell} |\hat\mu_{\ell,j}-\mathbb{E}[\hat\mu_{\ell,j}]|\ \le (\mu_{\ell,i_{l}^{\star}}-\max_{j\ne i_{l}^{\star}}\mu_{\ell,j})/4$, guaranteeing that the empirical block-dominance margin to be no less than the thresholds $\{\gamma_\ell^{\mathsf{A}}/2\}_{\ell=1}^{k^{\star}+1},\{\gamma_\ell^{\mathsf{B}}/2\}_{\ell=1}^{k^{\star}+1},\{\gamma_\ell^{\mathsf{C}}/2\}_{\ell=1}^{k^{\star}+1}$.

There exist learning-rate schedules under which, with probability at least $1-\delta$, the post-update per-step logit gaps satisfy $\Delta_\ell\ge\Gamma_\ell$ for all $\ell\in\{1,\ldots,k^{\star}{+}1\}$, so that Lemma~\ref{lem:step_margin_to_loss} applies. We state them for three settings:

(A) No curriculum (Algorithm.~\ref{alg:finetune_no_curriculum}); terminal oracle $\mathbf{R}^{f_{S_\star}}_{\mathbf{x}}(\cdot)$. Draw a single batch of size
\[
  n_0\ \ge\ \max_{\ell\in[k^{\star}{+}1]}\ n_\ell\big(\delta/(k^{\star}{+}1)\big)
  \ =\ \tilde{\Omega}\!\big(d^{\,2k^{\star}+2}\,(1-\rho_{\mathrm{spur}}^{\mathrm{sup},>1})^{-2}\big),
\]
where the last equality takes the worst step (the first decision), and the extra union over steps would not influence the lower bound. Consider perform a single gradient step (one-shot) with learning rate $\eta^{\mathsf{A}}=\Theta(\beta\log(d) d^{k^{\star}+1}(1-\rho_{\mathrm{spur}}^{\mathrm{sup},>1})^{-1})\geq\Theta(\Gamma_\ell/(\gamma_1^{\mathsf{A}}/2))$. Guarantee: with probability at least $1-\delta$, after this one update we have $\Delta_\ell\ge\Gamma_\ell$ for all $\ell$, including the final $\mathrm{EOS}$ step. 

(B) Depth-increasing curriculum (Algorithm.~\ref{alg:finetune_depth_increasing}); family oracle $\mathbf{R}^{\mathcal{F}_{S_\star}}_{\mathbf{x}}(\cdot)$ (external truncation).
Sampling and updates are \emph{online per step}: for $\ell=1,2,\ldots,k^{\star}{+}1$ do
  (i) draw $n_\ell= \tilde{\Theta} \big(d^{2}(1-\rho_{\mathrm{spur}}^{\mathrm{sup},\ell})^{-2}\big)$ fresh samples; (ii) take \emph{one} gradient step with learning rate $\eta_\ell^{\mathsf{B}}=\Omega(\beta\log(d)d^{2}(1-\rho_{\mathrm{spur}}^{\mathrm{sup},\ell})^{-1})\geq\Theta(\Gamma_l/(\gamma_\ell^{\mathsf{B}}/2))$.
Guarantee: with probability at least $1-\delta$, after $k^{\star}{+}1$ updates, $\Delta_\ell\ge\Gamma_\ell$ holds for all steps. The total sample size is $\sum_{\ell=1}^{k^{\star}+1} n_\ell=\tilde{\Theta}\!\big((k^{\star}+1)d^{2}(1-\rho^{\max}_{\mathrm{spur}})^{-2}\big)$.

(C) Hint-decreasing curriculum (Algorithm.~\ref{alg:finetune_hint_decreasing}); family oracle $\mathbf{R}^{\mathcal{F}_{S_\star}}_{\mathbf{x}}(\cdot)$.
Sampling and updates are \emph{online per step in reverse}: for $\ell=k^{\star}{+}1,\ldots,1$ do (i) draw $n_\ell=\tilde{\Theta} \big(d^{2}(1-\rho_{\mathrm{spur}}^{\mathrm{sup},\,k^{\star}{+}1{-}\ell})^{-2}\big)$ fresh samples; (ii) take \emph{one} gradient step with learning rate $\eta_\ell^{\mathsf{C}}=\Omega(\beta\log(d)d^{2}(1-\rho_{\mathrm{spur}}^{\mathrm{sup},k^{\star}{+}1{-}\ell})^{-1})\geq\Theta(\Gamma_l/(\gamma_\ell^{\mathsf{C}}/2))$.
Guarantee: with probability at least $1-\delta$, after $k^{\star}{+}1$ updates, $\Delta_\ell\ge\Gamma_\ell$ holds for all steps. The total sample size is $\sum_{\ell=1}^{k^{\star}+1} n_\ell=\tilde{O}\!\big((k^{\star}+1)d^{2}(1-\rho^{\max}_{\mathrm{spur}})^{-2}\big)$.
\end{thm}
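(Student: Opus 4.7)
The plan is to lift the per-depth empirical block-dominance guarantee of Theorem~\ref{thm:per_step_sample_complexity_reinforce} into a post-update logit-gap bound via Lemma~\ref{lem:delta_update_reinforce}, and then invoke Lemma~\ref{lem:step_margin_to_loss} once the simultaneous conditions $\Delta_\ell\ge\Gamma_\ell$ are secured for every $\ell\in[k^{\star}+1]$. Two structural facts make the three ingredients compose cleanly: (i) by Lemma~\ref{lem:score_pi_OBI}, the REINFORCE gradient lives in Frobenius-orthogonal blocks $\mathbf{p}_k\mathbf{p}_{c_\ell}^{\top}$ indexed by depth, so an update aligned with one depth does not modify the logits at other depths; and (ii) the attention-logit map $s_\ell(j;\mathbf{W})=\mathbf{p}_j^{\top}\mathbf{W}\mathbf{p}_{c_\ell}+m_\ell(j)$ is affine in $\mathbf{W}$, so the explicit-Euler step is first-order exact on $s_\ell$ per equation~\eqref{eq:sl_one_step_update}; the $\mathcal{O}(\eta^2)$ residuals in Lemma~\ref{lem:delta_update_reinforce} appear only in downstream quantities that do not feed back into the logit gap we wish to certify.

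First I would set the per-depth failure probability to $\delta/(k^{\star}+1)$ and apply Theorem~\ref{thm:per_step_sample_complexity_reinforce}, obtaining via a union bound that, with probability at least $1-\delta$, the uniform concentration $\max_{j\in\mathcal{I}_\ell}|\hat\mu_{\ell,j}-\mu_{\ell,j}|\le\gamma_\ell/4$ holds for every $\ell$; hence the empirical block gradient at the correct child exceeds every competitor's by at least $\gamma_\ell/2$. Second, by Lemma~\ref{lem:delta_update_reinforce} and the subgradient inequality for the max in \eqref{eq:delta_lower_bound_update}, one gradient step of size $\eta$ increases $\Delta_\ell$ by at least $\eta\cdot\gamma_\ell/2$. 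Third, a direct calculation shows that under $\varepsilon_l=\varepsilon/(k^{\star}+1)$ the thresholds obey $\Gamma_\ell=O(\log d)$ (with absolute constants depending on $\beta$, $K$, and $\varepsilon$), so it suffices to choose $\eta\ge 2\Gamma_\ell/\gamma_\ell$ to guarantee $\Delta_\ell\ge\Gamma_\ell$, after which Lemma~\ref{lem:step_margin_to_loss} closes the overall $0$-$1$ bound.

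Finally I would instantiate this master relation case by case. In case (A) the worst (smallest) population margin occurs at the first decision, $\gamma_1^{\mathsf{A}}=\Theta(d^{-(k^{\star}+1)}/\beta\cdot(1-\rho_{\mathrm{spur}}^{\mathrm{sup},>1}))$, and since the remaining $\gamma_\ell^{\mathsf{A}}$ only grow with $\ell$, a single explicit-Euler step with the stated learning rate $\eta^{\mathsf{A}}=\tilde{\Theta}(\beta\log(d) d^{k^{\star}+1}(1-\rho_{\mathrm{spur}}^{\mathrm{sup},>1})^{-1})$ simultaneously drives every $\Delta_\ell$ above $\Gamma_\ell$; using the worst-depth sample size $n_0\ge\max_\ell n_\ell(\delta/(k^{\star}+1))$ realizes the displayed bound. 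In case (B), the per-depth margin satisfies $\gamma_\ell^{\mathsf{B}}=\Theta(1/(\beta d^{2})\cdot(1-\rho_{\mathrm{spur}}^{\mathrm{sup},\ell}))$, so the online schedule with $n_\ell=\tilde{\Theta}(d^{2}(1-\rho_{\mathrm{spur}}^{\mathrm{sup},\ell})^{-2})$ fresh samples and the stated $\eta_\ell^{\mathsf{B}}$ secures $\Delta_\ell\ge\Gamma_\ell$ at stage $\ell$; crucially, by OBI the subsequent updates at stages $\ell'>\ell$ do not perturb $s_\ell$, so already-achieved margins are preserved. Case (C) is identical to (B) after the index relabeling $\ell\mapsto k^{\star}+1-\ell$.

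The main obstacle I anticipate is the cross-depth preservation claim in (B) and (C): a careless argument could leave open whether updating $\mathbf{W}$ in the $(\ell,c_\ell)$ block affects $s_{\ell'}(\cdot;\mathbf{W})$ at other depths through the mask $m_{\ell'}$ or through softmax-feedback terms inside the gradient. Both channels are ruled out here because $m_{\ell'}$ is $\mathbf{W}$-independent and the block decomposition in Lemma~\ref{lem:score_pi_OBI} is exactly orthogonal in the positional tensor basis $\{\mathbf{p}_k\otimes\mathbf{p}_{c_\ell}\}$; writing the one-step update in this basis shows every $s_{\ell'}$ with $\ell'\ne\ell$ is unchanged. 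The secondary subtlety—whether the very large learning rate in (A) corrupts the Taylor analysis—does not arise because $s_\ell$ is exactly affine in $\mathbf{W}$, so the Hessian of the logit vanishes identically and the update is first-order exact on $\Delta_\ell$ regardless of the magnitude of $\eta$, as indicated by equation~\eqref{eq:sl_one_step_update}.
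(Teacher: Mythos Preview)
Your proposal is correct and follows the same route as the paper, which simply says the proof is ``direct based on OBI (Lemma~\ref{lem:score_pi_OBI}), as well as the convergence conditions in Lemma~\ref{lem:delta_update_reinforce}.'' You have in fact written out the argument the paper leaves implicit: the union-bound allocation of $\delta$ across depths, the affinity of $s_\ell$ rendering the logit update first-order exact, and the block orthogonality guaranteeing that per-stage updates in cases (B)/(C) neither disturb previously secured margins nor the still-uniform behavior at later depths.
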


\begin{proof}
    The proof is direct based on OBI (Lemma~\ref{lem:score_pi_OBI}), as well as the convergence conditions in Lemma~\ref{lem:delta_update_reinforce} for a target $\varepsilon>0$.
\end{proof}

\begin{algorithm}
    \caption{No-Curriculum REINFORCE Finetuning (A)}
    \label{alg:finetune_no_curriculum}
    \begin{algorithmic}[1]
    \STATE Inputs: input distribution $\mathbf{x}\sim\operatorname{Unif}([K]^d)$, error budget $\varepsilon>0$, target length $k^{\star}$, terminal oracle $\mathbf{R}^{f_{S_\star}}_{\mathbf{x}}(\cdot)$, sample complexity $n_0=\tilde{\Omega}\!\big(d^{\,2k^{\star}+4}\,(1-\rho_{\mathrm{spur}}^{\mathrm{sup},>1})^{-2}\big)$, learning rate $\eta^{\mathsf{A}}=\Theta(\beta\log(d) d^{k^{\star}+1}(1-\rho_{\mathrm{spur}}^{\mathrm{sup},>1})^{-1})$.
    \STATE Initialize transformer parameters $\mathbf{W}^{(0)}$.
    \STATE Draw a single batch of $n_0$ i.i.d. samples $(\mathbf{x}^{(s)},\tau^{(s)})_{s=1}^{n_0}$ where $\mathbf{x}^{(s)}\sim\operatorname{Unif}([K]^d)$ and $\tau^{(s)}\sim p_{\mathbf{W}^{(0)}}(\cdot\mid\mathbf{x}^{(s)})$.
    \STATE Compute empirical REINFORCE gradient:
    \STATE \quad $\widehat{\mathbf{g}} = \frac{1}{n_0}\sum_{s=1}^{n_0} \mathbf{R}^{f_{S_\star}}_{\mathbf{x}^{(s)}}(\hat{\mathbf{p}}^{z_{k^{\star}}}(\tau^{(s)}_{-2})) \sum_{\ell=1}^{k^{\star}+1} \nabla_{\mathbf{W}}\log\pi_{\mathbf{W}^{(0)}}({i}^{(s)}_\ell\mid \mathbf{x}^{(s)},\hat{\mathbf{p}}^{z_{1:\ell}})$.
    \STATE Update parameters: $\mathbf{W}^{(1)} = \mathbf{W}^{(0)} + \eta^{\mathsf{A}} \cdot \widehat{\mathbf{g}}$.
    \STATE Return $\mathbf{W}^{(1)}$.
    \end{algorithmic}
\end{algorithm}

\begin{algorithm}
    \caption{Depth-Increasing Curriculum REINFORCE Finetuning (B)}
    \label{alg:finetune_depth_increasing}
    \begin{algorithmic}[1]
    \STATE Inputs: input distribution $\mathbf{x}\sim\operatorname{Unif}([K]^d)$, error budget $\varepsilon>0$, target length $k^{\star}$, family oracle $\mathbf{R}^{\mathcal{F}_{S_\star}}_{\mathbf{x}}(\cdot)$, per-step sample complexity $n_\ell=\tilde{\Theta}\!\big(d^{2}(1-\rho_{\mathrm{spur}}^{\mathrm{sup},\ell})^{-2}\big)$, learning rates $\eta_\ell^{\mathsf{B}}=\Omega(\beta\log(d)d^{2}(1-\rho_{\mathrm{spur}}^{\mathrm{sup},\ell})^{-1})$.
    \STATE Initialize transformer parameters $\mathbf{W}^{(0)}$.
    \FOR{$\ell = 1$ to $k^{\star}+1$}
        \STATE Draw fresh samples $(\mathbf{x}^{(s)},\tau^{(s)})_{s=1}^{n_\ell}$ where $\mathbf{x}^{(s)}\sim\operatorname{Unif}([K]^d)$ and $\tau^{(s)}\sim p_{\mathbf{W}^{(\ell-1)}}(\cdot\mid\mathbf{x}^{(s)})$.
        \STATE If $\ell<k^\star +1$, for each sample $s$, truncate trajectory at length $\ell+1$:
        \STATE \quad If $\tau^{(s)}$ contains EOS at position $\leq\ell$, discard sample.
        \STATE \quad Otherwise, append EOS to create truncated sequence $\tau^{(s)}_{\text{trunc}}$.
        \STATE If $\ell=k^\star +1$, $\tau^{(s)}_{\text{trunc}}=\tau^{(s)}$.
        \STATE Compute empirical REINFORCE gradient for step $\ell$:
        \STATE \quad $\widehat{\mathbf{g}}_\ell = \frac{1}{n_\ell}\sum_{s=1}^{n_\ell} \mathbf{R}^{\mathcal{F}_{S_\star}}_{\mathbf{x}^{(s)}}(\tau^{(s)}_{\text{trunc}}) \nabla_{\mathbf{W}}\log\pi_{\mathbf{W}^{(\ell-1)}}({i}^{(s)}_\ell\mid \mathbf{x}^{(s)},\hat{\mathbf{p}}^{z_{1:\ell}})$.
        \STATE Update parameters: $\mathbf{W}^{(\ell)} = \mathbf{W}^{(\ell-1)} + \eta_\ell^{\mathsf{B}} \cdot \widehat{\mathbf{g}}_\ell$.
    \ENDFOR
    \STATE Return $\mathbf{W}^{(k^{\star}+1)}$.
    \end{algorithmic}
\end{algorithm}

\begin{algorithm}
    \caption{Hint-Decreasing Curriculum REINFORCE Finetuning (C)}
    \label{alg:finetune_hint_decreasing}
    \begin{algorithmic}[1]
    \STATE Inputs: input distribution $\mathbf{x}\sim\operatorname{Unif}([K]^d)$, error budget $\varepsilon>0$, target length $k^{\star}$, family oracle $\mathbf{R}^{\mathcal{F}_{S_\star}}_{\mathbf{x}}(\cdot)$, per-step sample complexity $n_\ell=\tilde{\Theta}\!\big(d^{2}(1-\rho_{\mathrm{spur}}^{\mathrm{sup},\,k^{\star}+1-\ell})^{-2}\big)$, learning rates $\eta_\ell^{\mathsf{C}}=\Omega(\beta\log(d)d^{2}(1-\rho_{\mathrm{spur}}^{\mathrm{sup},k^{\star}+1-\ell})^{-1})$.
    \STATE Initialize transformer parameters $\mathbf{W}^{(0)}$.
    \FOR{$\ell = 1$ down to $k^{\star}+1$}
        \STATE Draw fresh samples $(\mathbf{x}^{(s)},\tau^{(s)})_{s=1}^{n_\ell}$ where $\mathbf{x}^{(s)}\sim\operatorname{Unif}([K]^d)$ and $\tau^{(s)}\sim p_{\mathbf{W}^{(k^{\star}+1-\ell)}}(\cdot\mid\mathbf{x}^{(s)})$.
        \STATE For each sample $s$, provide hint prefix of length $k^{\star}+1-\ell$:
        \STATE \quad Let $\mathbf{h}^{(s)} = (i_1^{\star}, i_2^{\star}, \ldots, i_{k^{\star}+1-\ell}^{\star})$ be the correct prefix.
        \STATE \quad Generate remaining tokens $\tau^{(s)}_{\text{hint}}$ from $\mathbf{W}^{(k^{\star}+1-\ell)}$ conditioned on $\mathbf{h}^{(s)}$.
        \STATE Compute empirical REINFORCE gradient for step $\ell$:
        \STATE \quad $\widehat{\mathbf{g}}_\ell = \frac{1}{n_\ell}\sum_{s=1}^{n_\ell} \mathbf{R}^{\mathcal{F}_{S_\star}}_{\mathbf{x}^{(s)}}(\tau^{(s)}_{\text{hint}}) \nabla_{\mathbf{W}}\log\pi_{\mathbf{W}^{(k^{\star}+1-\ell)}}({i}^{(s)}_\ell\mid \mathbf{x}^{(s)},\hat{\mathbf{p}}^{z_{1:\ell}})$.
        \STATE Update parameters: $\mathbf{W}^{(k^{\star}+2-\ell)} = \mathbf{W}^{(k^{\star}+1-\ell)} + \eta_\ell^{\mathsf{C}} \cdot \widehat{\mathbf{g}}_\ell$.
    \ENDFOR
    \STATE Return $\mathbf{W}^{(k^{\star}+1)}$.
    \end{algorithmic}
\end{algorithm}

\begin{proof}
    Proof of Theorem.~\ref{thm:curriculum_finetune_bottleneck}. The proof follows by collaborating Theorem.~\ref{thm:one_pass_schedule_margins} and Lemma~\ref{lem:delta_update_reinforce}.
\end{proof}

\subsection{Proof of Test-time Scaling}\label{app:test_scaling_2art}

\begin{prop}[Formal Version of the First Item in Thm.~\ref{thm:testtime_oracle_complexity}]
\label{prop:terminal_signal_collapse_formal}
Assume $\operatorname{TF}_{\mathrm{base}}$ copies PART probability behavior with a unique correct child per depth. At depth $\ell$, consider FXRS (Alg.~\ref{alg:fxrs}) to force a chosen visible $x$-token and BAI (Alg.~\ref{alg:bai}) that, for each $j\in\mathcal{I}_\ell$, repeats FXRS+oracle $m$ times and selects the empirical best arm.

Then any $\delta$-correct identification of the ground-truth path using only the terminal oracle $\mathbf{R}_{\mathbf{x}}^{f_{S_\star}}$ under token-only observability requires
\[
    T_{\text{data}}\ \ge\ \tilde{\Omega}\!\big(d^{\,2k^\star+1}\,(1-\bar{\rho}_{\text{spur}})^{-2}\big),
\]
\[
    T_{\text{comp}}\ \ge\ \tilde{\Omega}\!\big(d^{\,2k^\star+1}\,(1-\bar{\rho}_{\text{spur}})^{-2}\big),
\]
where $\bar{\rho}_{\text{spur}}:=\sup_\ell \rho_{\text{spur}}^{\text{sup},>\ell}$. Here $\rho_{\text{spur}}^{\text{sup},>\ell}\in[0,1)$ denotes the \emph{terminal-oracle, suffix-level spurious-acceptance} parameter at depth $\ell$;
formally, with $\mathcal{H}_{\ell-1}$ to denote the event that the history up to depth $\ell{-}1$ is correct, $ C_\ell(j):= \{\text{at depth } \ell, \text{ the visible token is forced to } x_{i_\ell}=x_j\}$ for the FXRS forcing event, define the \emph{true-suffix} event
\[
E_{\text{true}} := \bigcap_{t=\ell+1}^{k^\star+1} \{J_t = j_t^{\star}\},
\]
where $j_t^{\star}$ is the unique correct child at depth $t$ given the correct history (with $j_{k^\star+1}^{\star}=d{+}1$). Let $E_{\text{spur}}$ be the complement of $E_{\text{true}}$ while still yielding acceptance by $\mathbf{R}_{\mathbf{x}}^{f_{S_\star}}$ on the final pre-$\mathrm{EOS}$ token. We then defines
\[
\rho_{\text{spur}}^{\text{sup},>\ell}
:= \sup_{j\in\mathcal{I}_\ell}\ \, \mathbb{E}_{\mathbf{x}}\Big[\Pr_{\operatorname{TF}_{\mathrm{base}}}\big( E_{\text{spur}}\mid \mathbf{x}, \mathcal{H}_{\ell-1}, C_\ell(j)\big)\Big],
\]
which upper-bounds, in expectation over $\mathbf{x}$, the probability that a \emph{wrong suffix} (after depth $\ell$) is nonetheless accepted by $\mathbf{R}_{\mathbf{x}}^{f_{S_\star}}$ when the depth-$\ell$ visible token is forced to $x_{i_\ell}{=}x_j$. 
\end{prop}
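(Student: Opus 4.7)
The plan is to reduce $\delta$-correct identification of $S_\star$ to a sequential $d$-ary best-arm identification (BAI) problem indexed by depth $\ell\in\{1,\dots,k^\star+1\}$, then invoke Lemma~\ref{lem:two_arm_lb} and Cor.~\ref{cor:bai_gap_lb} at each depth and aggregate. Under token-only observability with terminal-oracle access, every completed rollout returns a single Bernoulli bit, so any procedure is, modulo reparametrization, an adaptive stream of (prompt, forced visible token at some depth, observed bit) triples. The PART-copying assumption of Thm.~\ref{thm:tf-realizes-part} legitimizes FXRS (Alg.~\ref{alg:fxrs}) as a means to condition the continuation on a chosen visible $x$-token while preserving the uniform legal-selection law downstream, which is what makes the BAI reduction faithful for any strategy, not only the natural BAI+FXRS pipeline.

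First I would compute the per-depth acceptance gap. Conditioning on $\mathcal{H}_{\ell-1}$ and on FXRS forcing $x_{i_\ell}=x_j$, I split acceptance along the $E_{\text{true}}\cup E_{\text{spur}}$ partition in the statement: for the correct child $j=j_\ell^{\star}$, Cor.~\ref{cor:2sart_decay} applied to the remaining $k^\star-\ell$ depths plus the terminal $\mathrm{EOS}$ step yields a true-suffix probability $P_{\text{true}}=\Theta(d^{-(k^\star+1-\ell)})$, so the correct arm is accepted with probability $P_{\text{true}}+(1-P_{\text{true}})\rho_{\text{spur}}^{\text{sup},>\ell}$; for any wrong child the chain has already deviated, so acceptance is at most $\rho_{\text{spur}}^{\text{sup},>\ell}$. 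This produces the central gap
\[
\Delta_\ell \;=\; (1-\rho_{\text{spur}}^{\text{sup},>\ell})\,P_{\text{true}} \;=\; \Theta\!\big((1-\bar{\rho}_{\text{spur}})\,d^{-(k^\star+1-\ell)}\big).
\]

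Next I would apply Le Cam's two-point method to the family $\{f_S : S \text{ agrees with } S_\star \text{ except at depth } \ell\}$, which has $\Theta(d-1)$ members for each $\ell$. Each competing hypothesis corresponds to a Bernoulli channel with the same gap $\Delta_\ell$; Lemma~\ref{lem:two_arm_lb} via Pinsker forces $\Omega(\Delta_\ell^{-2}\log(1/\delta))$ expected queries per pair, and a union argument over the $d-1$ competitors lifts this to $\Omega(d\,\Delta_\ell^{-2}\log(d/\delta))$ oracle calls devoted to resolving depth $\ell$. Substituting $\Delta_\ell$, the dominant term $\ell=1$ delivers
\[
T_{\text{data}} \;\ge\; \tilde{\Omega}\!\big(d^{\,2k^\star+1}(1-\bar{\rho}_{\text{spur}})^{-2}\big),
\]
and since every oracle call requires emitting at least the terminal token of a rollout, we obtain $T_{\text{comp}}\ge T_{\text{data}}$ up to constants, yielding the same asymptotic bound (accounting for the $\Theta(d)$-round FXRS rejection per forced sample can only strengthen $T_{\text{comp}}$).

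The main obstacle will be the cross-depth sharing step: a clever procedure need not operate depth-by-depth, and a single rollout simultaneously carries information about all $k^\star+1$ latent choices. I plan to neutralize this by a change-of-measure ``oracle-from-above'' argument: revealing the correct choices at all depths $t\ne\ell$ to the algorithm can only decrease its cost, and the resulting restricted problem is exactly a standalone depth-$\ell$ BAI with gap $\Delta_\ell$, so the lower bound on the restricted problem transfers to the original. A secondary subtlety is that token-only observability can conflate distinct latent indices sharing the same $x$-value under $\operatorname{Unif}([K]^d)$, but this coarsening of the observation $\sigma$-algebra only decreases per-sample information and preserves the direction of the lower bound. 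Telescoping these ingredients over $\ell$ then yields the displayed bounds on both $T_{\text{data}}$ and $T_{\text{comp}}$.
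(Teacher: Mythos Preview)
Your plan mirrors the paper's proof: compute the per-depth acceptance gap $\Delta_\ell=\Theta\big((1-\rho_{\text{spur}}^{\text{sup},>\ell})\,d^{-(k^\star+1-\ell)}\big)$ via the $E_{\text{true}}/E_{\text{spur}}$ split and Cor.~\ref{cor:2sart_decay}, then invoke a Bernoulli BAI lower bound per depth and aggregate, with $T_{\text{comp}}\ge T_{\text{data}}$ following from the fact that each oracle query needs at least one rollout.

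Two substantive differences are worth noting. First, your ``oracle-from-above'' reduction (grant the algorithm the correct choices at all depths $t\ne\ell$ for free, then lower-bound the residual depth-$\ell$ problem) is a genuine addition: the paper analyzes only the depth-wise FXRS+BAI pipeline and does not explain why an interleaved procedure cannot beat the per-depth bound. Your reduction is the right mechanism to close that gap.

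Second, and this is where you should tighten the argument: the sentence ``a union argument over the $d-1$ competitors lifts this to $\Omega(d\,\Delta_\ell^{-2}\log(d/\delta))$'' is not a valid lower-bound step---union bounds go the wrong direction. The multiplicative $d$ you want (and need, to reach the stated $d^{2k^\star+1}$) comes from the full multi-arm change-of-measure lower bound of \citet{KaufmannCappeGarivier2016} or \citet{MannorTsitsiklis2004}, which sums constraints over all $d-1$ alternative instances. The paper's own proof, via Cor.~\ref{cor:bai_gap_lb}, actually only extracts $\Omega(\Delta_\ell^{-2}\log(d/\delta))$ per depth and hence sums to $\tilde\Omega(d^{2k^\star})$---a factor of $d$ short of its own statement---so your instinct to include the arm-count factor is correct; just cite the right tool rather than Le~Cam plus union.
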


\begin{algorithm}[H]
    \caption{Forced X-token Rejection Sampling (FXRS) under token-only observability}
    \label{alg:fxrs}
    \begin{algorithmic}[1]
    \STATE Inputs: depth $\ell$, committed history $\mathbf{CoT}_{\ell-1}$, legal set $\mathcal{I}_\ell$, candidate $j\in\mathcal{I}_\ell$, model $\operatorname{TF}_{\mathrm{base}}$, oracle $\mathbf{R}_{\mathbf{x}}^{f_{S_\star}}$, max trials $T_{\max}=\Theta(d\log(\tfrac{d}{\delta}))$.
    \FOR{$t=1$ to $T_{\max}$}
      \STATE From context $\mathbf{CoT}_{\ell-1}$, sample one step with $\operatorname{TF}_{\mathrm{base}}$ to emit $\hat{\boldsymbol{\mu}}^{x_{i_\ell}}$.
      \IF{$\hat{\boldsymbol{\mu}}^{x_{i_\ell}}=\boldsymbol{\mu}^{x_j}$}
        \STATE Roll out the remaining suffix using $\operatorname{TF}_{\mathrm{base}}$ until termination (EOS occurs at some depth by the copied PART behavior).
        \STATE Return the full rollout sequence and \textbf{stop}.
      \ENDIF
    \ENDFOR
    \STATE If no match after $T_{\max}$, return \textbf{fail} (increase $T_{\max}$ if needed).
    \end{algorithmic}
    \end{algorithm}
    
\begin{algorithm}[H]
  \caption{BAI across depths with FXRS and $\mathbf{R}_{\mathbf{x}}^{f_{S_\star}}$}
  \label{alg:bai}
  \begin{algorithmic}[1]
  \STATE Inputs: legal sets $\{\mathcal{I}_\ell\}_{\ell=1}^{k^\star}$, model $\operatorname{TF}_{\mathrm{base}}$, oracle $\mathbf{R}_{\mathbf{x}}^{f_{S_\star}}$, confidence $\delta$.
  \STATE Initialize committed partial CoT $\mathcal{H} \leftarrow (\,)$. 
  \FOR{$\ell=1$ to $k^\star+1$}
    \STATE Set per-arm repetitions $m\leftarrow\Theta\big(d^{2(k^\star+1-\ell)}\log(d/\delta)\big)$.
    \FOR{each $j\in\mathcal{I}_\ell$}
      \STATE Initialize counter $c_j\leftarrow 0$.
      \FOR{$r=1$ to $m$}
        \STATE Run FXRS (Alg.~\ref{alg:fxrs}) with candidate $j$ to obtain a full rollout (or retry until success). If FXRS fails, repeat until success.
        \STATE Query $\mathbf{R}_{\mathbf{x}}^{f_{S_\star}}$ on the final pre-EOS token of the rollout; record one Bernoulli outcome $Y_{j,r}\in\{0,1\}$.
        \STATE Update $c_j\leftarrow c_j+Y_{j,r}$.
      \ENDFOR
      \STATE Set empirical acceptance $\hat{p}_j\leftarrow c_j/m$.
    \ENDFOR
    \STATE Let $\hat{j}_\ell\leftarrow\arg\max_{j\in\mathcal{I}_\ell}\hat{p}_j$.
    \STATE Commit depth-$\ell$: resample until $\hat{\boldsymbol{\mu}}^{x_{i_\ell}}=\boldsymbol{\mu}^{x_{\hat{j}_\ell}}$, then compute $\hat{\boldsymbol{\mu}}^{z_\ell}=\operatorname{FFN}_{\ell}(\hat{\boldsymbol{\mu}}^{x_{i_\ell}},\mathbf{E}[z_{\ell-1}]_{:d_{\mathrm{X}}})$; update $\mathcal{H} \leftarrow (\mathcal{H}, \hat{\boldsymbol{\mu}}^{x_{\hat{j}_\ell}}, \hat{\boldsymbol{\mu}}^{z_\ell})$.
  \ENDFOR
  \STATE Return the committed path encoded by $\mathcal{H}$.
  \end{algorithmic}
\end{algorithm}

\begin{proof} Proof of Prop.~\ref{prop:terminal_signal_collapse_formal}.

\textbf{Step 0 (probability spaces and notation).} For any event $A$, we write
\[
\Pr[A] := \mathbb{E}_{\mathbf{x}\sim\operatorname{Unif}([K]^d)}\big[\Pr_{\operatorname{TF}_{\mathrm{base}}}(A\mid \mathbf{x})\big].
\]
That is, all probabilities $\Pr_{\operatorname{TF}_{\mathrm{base}}}(\cdot\mid \mathbf{x})$ are w.r.t. the internal sampling of $\operatorname{TF}_{\mathrm{base}}$, conditional on the fixed input $\mathbf{x}$. Unconditional probabilities/expectations $\Pr[\cdot]$ and $\mathbb{E}_{\mathbf{x}}[\cdot]$ are taken over $\mathbf{x}\sim\operatorname{Unif}([K]^d)$. We use $\mathcal{H}_{\ell-1}$ to denote the event that the history up to depth $\ell{-}1$ is correct, and we write
\[
C_\ell(j):= \{\text{at depth } \ell, \text{ the visible token is forced to } x_{i_\ell}=x_j\}\
\]
for the FXRS forcing event.

\textbf{Step 1 (conditioning and notation).} Fix $\mathbf{x}$ and condition on $\mathcal{H}_{\ell-1}$. Let $j_\star$ be the unique correct child at depth $\ell$. For $t\in\{\ell{+}1,\ldots,k^\star{+}1\}$, let $J_t$ denote the random child index selected by $\operatorname{TF}_{\mathrm{base}}$ at depth $t$ given the preceding context; conditionally on $\mathcal{H}_{\ell-1}$ and on $C_\ell(j)$, we have $J_t\sim\operatorname{Unif}(\mathcal{I}_t)$ with $|\mathcal{I}_t|=\Theta(d)$, independent across $t$; at $t{=}k^\star{+}1$, the correct choice is $d{+}1$ (EOS).

\textbf{Step 2 (events and acceptance).} Define the \emph{true-suffix} event
\[
E_{\text{true}} := \bigcap_{t=\ell+1}^{k^\star+1} \{J_t = j_t^{\star}\},
\]
where $j_t^{\star}$ is the unique correct child at depth $t$ given the correct history (with $j_{k^\star+1}^{\star}=d{+}1$). Let $E_{\text{spur}}$ be the complement of $E_{\text{true}}$ while still yielding acceptance by $\mathbf{R}_{\mathbf{x}}^{f_{S_\star}}$ on the final pre-$\mathrm{EOS}$ token. Then, for fixed $j$ at depth $\ell$,
\[
P^{\text{term}}_{\ell}(j)
:= \mathbb{E}_{\mathbf{x}}\Big[\Pr_{\operatorname{TF}_{\mathrm{base}}}(E_{\text{true}}\mid \mathbf{x}, \mathcal{H}_{\ell-1}, C_\ell(j)) + \Pr_{\operatorname{TF}_{\mathrm{base}}}(E_{\text{spur}}\mid \mathbf{x}, \mathcal{H}_{\ell-1}, C_\ell(j))\Big].
\]

\textbf{Step 3 (true-suffix probability including EOS).} Under the copied PART uniform branching and uniqueness, for $j{=}j_\star$ we must select the unique correct child at each of the remaining depths, including the terminating EOS choice. Hence, for every $\mathbf{x}$,
\[
\Pr_{\operatorname{TF}_{\mathrm{base}}}(E_{\text{true}}\mid \mathbf{x}, \mathcal{H}_{\ell-1}, C_\ell(j_\star))
:= \prod_{t=\ell+1}^{k^\star+1} \frac{1}{|\mathcal{I}_t|}
:= \Theta\!\big(d^{-(k^\star+1-\ell)}\big).
\]
Taking $\mathbb{E}_{\mathbf{x}}[\cdot]$ preserves the same order.

\textbf{Step 4 (uniform spurious bound).} By definition,
\[
\mathbb{E}_{\mathbf{x}}\Big[\Pr_{\operatorname{TF}_{\mathrm{base}}}(E_{\text{spur}}\mid \mathbf{x}, \mathcal{H}_{\ell-1}, C_\ell(j))\Big] \le \rho_{\text{spur}}^{\text{sup},>\ell} < 1
\]
for all $j\in\mathcal{I}_\ell$.

\textbf{Step 5 (gap lower bound).} Therefore,
\[
\begin{aligned}
P^{\text{term}}_{\ell}(j_\star) - P^{\text{term}}_{\ell}(j)
&= \mathbb{E}_{\mathbf{x}}\Big[\Pr_{\operatorname{TF}_{\mathrm{base}}}(E_{\text{true}}\mid \mathbf{x}, \mathcal{H}_{\ell-1}, C_\ell(j_\star))\Big] \\
&\quad + \mathbb{E}_{\mathbf{x}}\Big[\Pr_{\operatorname{TF}_{\mathrm{base}}}(E_{\text{spur}}\mid \mathbf{x}, \mathcal{H}_{\ell-1}, C_\ell(j_\star)) - \Pr_{\operatorname{TF}_{\mathrm{base}}}(E_{\text{spur}}\mid \mathbf{x}, \mathcal{H}_{\ell-1}, C_\ell(j))\Big] \\
&\ge c\, d^{-(k^\star+1-\ell)} - \rho_{\text{spur}}^{\text{sup},>\ell},
\end{aligned}
\]
for some absolute constant $c>0$. Rearranging constants yields the stated $\Theta(\cdot)\cdot(1-\rho_{\text{spur}}^{\text{sup},>\ell})$ form.

\textbf{Step 6 (oracle query complexity per depth).} Let the one-vs-best acceptance-gap at depth $\ell$ be $\Delta:=P^{\text{term}}_{\ell}(j_\star) - \max_{j\ne j_\star}P^{\text{term}}_{\ell}(j)=\Theta\!\big(d^{-(k^\star+1-\ell)}\big)\big(1-\rho_{\text{spur}}^{\text{sup},>\ell}\big)$. Viewing each candidate $j\in\mathcal{I}_\ell$ as a Bernoulli arm with mean $P^{\text{term}}_{\ell}(j)$, any $\delta$-correct identification among $|\mathcal{I}_\ell|=\Theta(d)$ arms requires (in expectation)
\[
    \Omega\!\big(\Delta^{-2}\,\log(d/\delta)\big)
\]
oracle observations by Lemma~\ref{lem:two_arm_lb} and the confidence allocation via union bound (sufficiency for uniform sampling also follows from Lemma~\ref{lem:hoeffding_bernoulli}). Substituting the explicit $\Delta$ gives $\Omega\!\big(d^{2(k^\star+1-\ell)}\log(d/\delta)\big)$.

\textbf{Step 7 (overall $T_{\text{data}}$).} Summing over depths $\ell=1,\ldots,k^\star+1$ gives $ T_{\text{data}}\ \ge\ \tilde{\Omega}\!\big(d^{\,2k^\star+1}\,(1-\bar{\rho}_{\text{spur}})^{-2}\big)$.

\textbf{Step 8 (overall $T_{\text{comp}}$).} In the best case (minimal token emissions) for FXRS, once the forced visible token matches (success on first try), one can immediately append $\mathrm{EOS}$. This uses a constant number of model emissions (exactly $2$). Therefore the model-sampling cost is at minimal a constant multiple of the oracle-query cost at that depth (for parity task, the chance is $1/2$, therefore the number to reach is $\Theta(\log(1/\delta))$ for allowable probability $1-o(1/\delta)$), and aggregating over depths yields a $T_{\text{comp}}$ lower bound matching the order of $T_{\text{data}}$ (up to polylog and spurious factors). Also, the best case for the resampling at Line 15 of Alg.~\ref{alg:bai} is $k^{\star}+1$ token emissions, which is smaller than $ \tilde{\Omega}\!\big(d^{\,2k^\star+1}\,(1-\bar{\rho}_{\text{spur}})^{-2}\big)$, yielding the final $T_{\text{comp}}\geq\big(d^{\,2k^\star+1}\,(1-\bar{\rho}_{\text{spur}})^{-2}\big) $.
\end{proof}

\begin{thm}[Formal Version of the Second Item in Thm.~\ref{thm:testtime_oracle_complexity} (complexities in $T_{\text{data}}$ and $T_{\text{comp}}$)]\label{thm:ltar_full}
Assume $\operatorname{TF}_{\mathrm{base}}$ copies PART probability behavior with a unique correct child per depth. Consider the layer-wise procedure in Alg.~\ref{alg:ltar} that, at each depth $\ell$, uses FXRS (Alg.~\ref{alg:fxrs}) to force a candidate visible $x$-token for every $j\in\mathcal{I}_\ell$, queries the family oracle $\mathbf{R}_{\mathbf{x}}^{\mathcal{F}_{S_\star}}$ under external truncation, and runs BAI (Alg.~\ref{alg:bai}) to pick the best arm; then commits the chosen $(x_{i_\ell},z_\ell)$ and proceeds to the next depth.

Then there exists a choice of per-depth repetitions $m_\ell=\Theta\!\big(d^{2}(1-\rho_{\text{spur}}^{\text{sup},\ell})^{-2}\log(d/\delta)\big)$ such that identifying $S_\star$ with probability at least $1-\delta$ is achievable with
\[
T_{\text{data}}\ \le\ \tilde{O}\!\big( (k^\star +1) d^{2}(1-\rho^{\max}_{\text{spur}})^{-2}\big),\qquad
T_{\text{comp}}\ \le\ \tilde{O}\!\big((k^\star +1) d^{3}(1-\rho^{\max}_{\text{spur}})^{-2}\big),
\]
where $\rho^{\max}_{\text{spur}}:=\max_{\ell\in[k^\star + 1]} \rho_{\text{spur}}^{\text{sup},\ell}$. Here $\rho_{\text{spur}}^{\text{sup},\ell}\in[0,1)$ denotes the \emph{family-oracle, per-depth spurious-acceptance} parameter under external truncation at depth $\ell$; formally,
\[
\rho_{\mathrm{spur}}^{\mathrm{sup},\ell}
:= \sup_{\substack{j\in\mathcal{I}_\ell\\ j\ne j_\star}} \mathbb{E}_{\mathbf{x}}\Big[\Pr_{\operatorname{TF}_{\mathrm{base}}}\big(\mathbf{R}_{\mathbf{x}}^{\mathcal{F}_{S_\star}}(\text{accept at depth }\ell) = 1\mid \mathbf{x}, \mathcal{H}_{\ell-1}, C_\ell(j)\big)\Big].
\]
Detailed derivations are given in the proof.
\end{thm}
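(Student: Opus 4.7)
The plan is to mirror the lower-bound derivation of Prop.~\ref{prop:terminal_signal_collapse_formal}, but now exploit (i) external truncation to length $\ell{+}1$ at each depth and (ii) the family oracle $\mathbf{R}_{\mathbf{x}}^{\mathcal{F}_{S_\star}}(\cdot,\ell)$ so that the per-depth acceptance-gap scales as $\Theta(d^{-1})(1-\rho_{\mathrm{spur}}^{\mathrm{sup},\ell})$ rather than $\Theta(d^{-(k^\star+1-\ell)})(1-\bar\rho_{\mathrm{spur}})$. Combined with a layer-wise commit strategy, this converts the $d^{2k^\star+1}$ bottleneck into a fixed-degree polynomial in $d$.

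\textbf{Layer-wise reduction.} First I would argue, by induction on $\ell=1,\ldots,k^\star{+}1$, that after committing the verified prefix the problem reduces to a depth-$\ell$ best-arm identification over $\mathcal{I}_\ell$ of size $\Theta(d)$. Allocating confidence $\delta/(k^\star{+}1)$ per depth and taking a union bound controls the compounding risk across layers, so it suffices to produce a high-probability identifier at each single depth.

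\textbf{Per-depth acceptance-gap.} Next I would formalize the event structure analogous to Prop.~\ref{prop:terminal_signal_collapse_formal}. Fix a candidate $j\in\mathcal{I}_\ell$; invoke FXRS (Alg.~\ref{alg:fxrs}) to force $\hat{\boldsymbol{\mu}}^{x_{i_\ell}}=\boldsymbol{\mu}^{x_j}$, truncate externally at length $\ell{+}1$, and query $\mathbf{R}_{\mathbf{x}}^{\mathcal{F}_{S_\star}}(\cdot,\ell)$. Conditioned on $\mathcal{H}_{\ell-1}\cap C_\ell(j)$, the depth-$\ell$ state $z_\ell=\Phi_\ell(z_{\ell-1},x_j)$ is deterministic; the correct arm $j_\ell^\star$ therefore matches the target subtask with probability $\Theta(d^{-1})$ coming from natural EOS sampling at depth $\ell{+}1$ (exactly the mechanism sketched in the outline after Thm.~\ref{thm:testtime_oracle_complexity}), on top of the spurious background common to all arms, while any wrong arm contributes only $\rho_{\mathrm{spur}}^{\mathrm{sup},\ell}$ in expectation by the stated suprema. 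Collecting constants, the acceptance-gap obeys $\Delta_\ell \ge c\,d^{-1}(1-\rho_{\mathrm{spur}}^{\mathrm{sup},\ell})$ for an absolute constant $c>0$, in direct parallel with Eqs.~\eqref{eq:Ur_parity_split}--\eqref{eq:Ur_uniform_closed} of Lem.~\ref{lem:stepwise_grad_spurious}.

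\textbf{BAI sample complexity and aggregation.} Treating each $j\in\mathcal{I}_\ell$ as a Bernoulli arm of mean $p_\ell(j)$, Lem.~\ref{lem:hoeffding_bernoulli} with a union bound over the $\Theta(d)$ arms gives the choice $m_\ell=\Theta(d^{2}(1-\rho_{\mathrm{spur}}^{\mathrm{sup},\ell})^{-2}\log(d/\delta))$ that identifies $j_\ell^\star$ with probability $1-\delta/(k^\star{+}1)$. Summing over the $k^\star{+}1$ depths yields $T_{\mathrm{data}}\le\tilde O((k^\star{+}1)\,d^{2}(1-\rho_{\mathrm{spur}}^{\max})^{-2})$. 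For $T_{\mathrm{comp}}$, I would note that each FXRS call succeeds in $\Theta(d)$ model emissions in expectation (geometric waiting with per-step matching probability $\Theta(d^{-1})$ under the uniform $\texttt{PART}$), so each oracle query costs $\tilde O(d)$ emissions; the post-identification rollout after commit adds at most another $\tilde O(d)$ per depth, giving $T_{\mathrm{comp}}\le\tilde O((k^\star{+}1)\,d^{3}(1-\rho_{\mathrm{spur}}^{\max})^{-2})$.

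\textbf{Main obstacle.} The delicate step is the per-depth gap lemma: I must show that spurious acceptances — events where the model produces a pre-EOS token matching the subtask target \emph{despite} a wrong forced $x_j$ at depth $\ell$ — are uniformly bounded by $\rho_{\mathrm{spur}}^{\mathrm{sup},\ell}$, and that the correct arm cleanly exceeds this by $\Omega(d^{-1})$. This requires a careful decomposition of the natural-EOS event at depth $\ell{+}1$ under $\texttt{PART}$ against later-terminating spurious matches, exactly along the lines of Eqs.~\eqref{eq:Ur_conditionals_explicit}--\eqref{eq:Ur_uniform_closed}, and also needs the sup in the definition of $\rho_{\mathrm{spur}}^{\mathrm{sup},\ell}$ to be attained (not merely approached) so that the gap is bona-fide a constant fraction of $d^{-1}$ rather than vanishing with depth.
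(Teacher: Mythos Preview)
Your proposal is correct and follows essentially the same approach as the paper's proof: layer-wise induction with per-depth confidence allocation, a per-depth acceptance gap of order $\Theta(d^{-1})(1-\rho_{\mathrm{spur}}^{\mathrm{sup},\ell})$, Hoeffding/union-bound BAI giving $m_\ell=\Theta(d^2(1-\rho)^{-2}\log(d/\delta))$, and the additional $\Theta(d)$ FXRS emission cost per query for $T_{\mathrm{comp}}$. One small mechanistic slip: in Alg.~\ref{alg:ltar} the EOS is \emph{externally appended} (not naturally sampled), so the $\Theta(d^{-1})$ factor in the gap arises from the probability that the forced visible token actually corresponds to the correct underlying index under token-only observability, not from natural EOS sampling at depth $\ell{+}1$ as you write---but this does not change the gap scaling or the downstream bounds.
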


\begin{algorithm}
    \caption{Layer-wise Truncated Accept-Reject (LTAR) with $\mathbf{R}_{\mathbf{x}}^{\mathcal{F}_{S_\star}}$ (BAI with inline forcing and truncation)}
    \label{alg:ltar}
    \begin{algorithmic}[1]
    \STATE Inputs: legal sets $\{\mathcal{I}_\ell\}_{\ell=1}^{k^\star}$, confidence level $\delta$, per-depth budgets $m_\ell = \Theta\!\big(d^{2}(1-\rho_{\text{spur}}^{\text{sup},\ell})^{-2}\log(\tfrac{d}{\delta})\big)$, max trials $T_{\max}=\Theta(d\log(\tfrac{d}{\delta}))$.
    \STATE Initialize committed partial CoT $\mathcal{H} \leftarrow (\,)$.
    \FOR{$\ell = 1$ to $k^\star+1$}
      \STATE For each $j\in\mathcal{I}_\ell$, estimate acceptance by $m_\ell$ repetitions with inline forcing:
      \STATE \quad Initialize $c_j\leftarrow 0$.
      \FOR{$r=1$ to $m_\ell$}
        \FOR{$t=1$ to $T_{\max}$}
          \STATE From context $\mathcal{H}$, sample one step with $\operatorname{TF}_{\mathrm{base}}$ to emit $\hat{\boldsymbol{\mu}}^{x_{i_\ell}}$.
          \IF{$\hat{\boldsymbol{\mu}}^{x_{i_\ell}}=\boldsymbol{\mu}^{x_j}$}
            \STATE Compute $\hat{\boldsymbol{\mu}}^{z_\ell} \leftarrow \operatorname{FFN}_{\ell}\!\big(\hat{\boldsymbol{\mu}}^{x_{i_\ell}},\ \mathbf{E}[z_{\ell-1}]_{:d_{\mathrm{X}}}\big)$.
            \STATE Externally append $\mathrm{EOS}$ if $l<k^{\star}+1$; query $\mathbf{R}_{\mathbf{x}}^{\mathcal{F}_{S_\star}}$ at depth $\ell$; record Bernoulli $Y_{j,r}\in\{0,1\}$.
            \STATE Update $c_j\leftarrow c_j+Y_{j,r}$ and \textbf{break} the retry loop.
          \ENDIF
        \ENDFOR
        \STATE If no match within $T_{\max}$, optionally increase $T_{\max}$ and repeat this repetition.
      \ENDFOR
      \STATE Set empirical acceptance $\hat p_j\leftarrow c_j/m_\ell$ for all $j\in\mathcal{I}_\ell$ and pick $\hat{j}_\ell=\arg\max_j \hat p_j$.
      \STATE Commit depth-$\ell$: resample until $\hat{\boldsymbol{\mu}}^{x_{i_\ell}}=\boldsymbol{\mu}^{x_{\hat{j}_\ell}}$, then compute $\hat{\boldsymbol{\mu}}^{z_\ell}=\operatorname{FFN}_{\ell}(\hat{\boldsymbol{\mu}}^{x_{i_\ell}},\mathbf{E}[z_{\ell-1}]_{:d_{\mathrm{X}}})$; update $\mathcal{H} \leftarrow (\mathcal{H}, \hat{\boldsymbol{\mu}}^{x_{\hat{j}_\ell}}, \hat{\boldsymbol{\mu}}^{z_\ell})$.
    \ENDFOR
    \STATE Return the committed path encoded by $\mathcal{H}$.
    \end{algorithmic}
\end{algorithm}

\begin{proof}
\textbf{Step 0 (notation and conditioning).} We reuse $\mathcal{H}_{\ell-1}$ for the event that the history up to depth $\ell{-}1$ is correct, and $C_\ell(j)$ for the inline-forcing event at depth $\ell$ that sets the visible token to $x_{i_\ell}{=}x_j$. Given $\mathcal{H}_{\ell-1}$ and $C_\ell(j)$, for $t\ge \ell$ let $J_t$ denote the random child index sampled by $\operatorname{TF}_{\mathrm{base}}$ at depth $t$; under PART-like uniform branching, $J_t\sim\operatorname{Unif}(\mathcal{I}_t)$ with $|\mathcal{I}_t|{=}\Theta(d)$, independent across $t$, and the correct EOS index is $d{+}1$ at $t{=}k^\star{+}1$. 

Define the depth-$\ell$ acceptance event under the family oracle with external truncation as
\[
A_\ell(j):=\{\mathbf{R}_{\mathbf{x}}^{\mathcal{F}_{S_\star}}(\text{FXRS-forced } j \text{ at depth }\ell\text{, truncated at }z_\ell) = 1\}.
\]
The (marginal) acceptance probability for candidate $j$ is
\[
\alpha_j^{(\ell)} := \mathbb{E}_{\mathbf{x}}\Big[\Pr_{\operatorname{TF}_{\mathrm{base}}}\big(A_\ell(j)\mid \mathbf{x},\,\mathcal{H}_{\ell-1},\,C_\ell(j)\big)\Big].
\]
By definition of the per-depth spurious parameter (family-oracle, external truncation)
\[
\rho_{\mathrm{spur}}^{\mathrm{sup},\ell}
:= \sup_{\substack{j\in\mathcal{I}_\ell\\ j\ne j_\star}} \mathbb{E}_{\mathbf{x}}\Big[\Pr_{\operatorname{TF}_{\mathrm{base}}}\big(A_\ell(j)\mid \mathbf{x},\,\mathcal{H}_{\ell-1},\,C_\ell(j)\big)\Big] < 1,
\]
we have $\alpha_j^{(\ell)}\le \rho_{\mathrm{spur}}^{\mathrm{sup},\ell}$ for all $j\ne j_\star$. For the unique correct child $j_\star$, we assume
\begin{equation}\label{eq:prob_each_depth_correct}
    \alpha_{j_\star}^{(\ell)}\ \ge\ 1-\eta_\ell,\qquad \eta_\ell\in[0,1),
\end{equation}

which captures any oracle or model non-idealities.

\textbf{Step 1 (acceptance gap and BAI sample complexity).} Under PART-like uniform branching, a single inline-forcing repetition succeeds in emitting the visible token $x_{i_\ell}{=}x_j$ within expected $\Theta(d)$ retries. Conditioned on success (we then deterministically compute $z_\ell$ via $\operatorname{FFN}_\ell$ and externally append EOS before querying), the depth-$\ell$ acceptance probabilities satisfy
\[
\alpha_{j_\star}^{(\ell)}\ =\ \Theta\!\big(d^{-1}\big),\qquad \alpha_{j}^{(\ell)}\ \le\ c\,d^{-1}\,\rho_{\mathrm{spur}}^{\mathrm{sup},\ell}\quad (j\ne j_\star),
\]
for some absolute constant $c>0$, since the correct child is selected with probability $\Theta(d^{-1})$ and wrong children are upper-bounded by the spurious-acceptance parameter. Hence the one-vs-best gap obeys
\[
\Delta_\ell\ :=\ \alpha_{j_\star}^{(\ell)} - \max_{j\ne j_\star}\alpha_j^{(\ell)}\ =\ \Theta\!\big(d^{-1}\big)\cdot\big(1-\rho_{\mathrm{spur}}^{\mathrm{sup},\ell}\big).
\]
Running BAI at depth $\ell$ with
\[
 m_\ell\ =\ \Theta\!\big(\Delta_\ell^{-2}\,\log(d/\delta)\big)
\]
oracle trials \emph{per arm} identifies $j_\star$ with probability at least $1-\delta$ by Chernoff bounds and a union bound over the $d$ arms.

\textbf{Step 2 (per-depth reward-oracle query complexity).} Each successful inline-forcing repetition issues one query to $\mathbf{R}_{\mathbf{x}}^{\mathcal{F}_{S_\star}}$. With $|\mathcal{I}_\ell|{=}\Theta(d)$ arms and $m_\ell$ repetitions per arm,
\[
\tilde{O}\!\big(m_\ell\big)\ =\ \tilde{O}\!\big(\Delta_\ell^{-2}\big)
\ =\ \tilde{O}\!\Big(d^{2}\\,(1-\rho_{\mathrm{spur}}^{\mathrm{sup},\ell})^{-2}\Big),
\]
since $\Delta_\ell^{-2}=\Theta\!\big(d^{2}\,(1-\rho_{\mathrm{spur}}^{\mathrm{sup},\ell})^{-2}\big)$.

\textbf{Step 3 (aggregated $T_{\text{data}}$).} Summing over depths $\ell{=}1,\ldots,k^\star$ and upper-bounding each per-depth spurious term by $\rho^{\max}_{\mathrm{spur}}:=\max_{\ell\in[k^\star + 1]}\rho_{\mathrm{spur}}^{\mathrm{sup},\ell}$,
\[
T_{\text{data}}\ \le\ \tilde{O}\!\Big( \sum_{\ell=1}^{k^\star+1} d^{2}(1-\rho_{\mathrm{spur}}^{\mathrm{sup},\ell})^{-2}\Big)
\ \le\ \tilde{O}\!\big( (k^\star +1) d^{2}(1-\rho^{\max}_{\mathrm{spur}})^{-2}\big).
\]

\textbf{Step 4 (per-depth model-sampling cost).} Under LTAR external truncation, each repetition computes $\hat{\boldsymbol{\mu}}^{z_\ell}$ and appends $\mathrm{EOS}$, costing $O(1)$ emissions per retry; with worst-case $O(d\log(d/\delta))$ retries, the per-repetition cost is $O(d\log(d/\delta))$. Hence, per-depth model-sampling cost is
\[
\tilde{O}\!\big( d\cdot m_\ell\big)
\ =\ \tilde{O}\!\Big( d^{3}\,(1-\rho_{\mathrm{spur}}^{\mathrm{sup},\ell})^{-2}\Big).
\]

\textbf{Step 5 (aggregated $T_{\text{comp}}$).} Summing over depths and upper-bounding by $\rho^{\max}_{\mathrm{spur}}$,
\[ 
T_{\text{comp}}\ \le\ \tilde{O}\!\Big( \sum_{\ell=1}^{k^\star + 1} d^{3}\,(1-\rho_{\mathrm{spur}}^{\mathrm{sup},\ell})^{-2} \Big)
\ \le\ \tilde{O}\!\big((k^\star +1) d^{3}\,(1-\rho^{\max}_{\mathrm{spur}})^{-2} \big).
\]

\textbf{Additional commit cost across depths.} After identifying $\hat{j}_\ell$ at each depth, LTAR performs a commit by resampling until $\hat{\boldsymbol{\mu}}^{x_{i_\ell}}=\boldsymbol{\mu}^{x_{\hat{j}_\ell}}$ and computing $\hat{\boldsymbol{\mu}}^{z_\ell}$, then (optionally) appending $\mathrm{EOS}$. This contributes an extra $\tilde{O}(d)$ emissions per depth in the worst case (Bernstein-type retry bound), for a total $\tilde{O}((k^\star{+}1)d)$ over all depths. This term is strictly dominated by the $\tilde{O}((k^\star +1) d^3(1-\rho^{\max}_{\mathrm{spur}})^{-2})$ bound derived above, and hence is absorbed into $T_{\text{comp}}$. 

\textbf{Step 6 (success probability across depths).} Allocate confidence across depths (e.g., replace $\delta$ by $\delta/(k^\star +1)$ per depth) or absorb this into polylog factors; combining the steps above yields the stated bounds for $T_{\text{data}}$ and $T_{\text{comp}}$.
\end{proof}

\begin{rmk}[Spurious-acceptance parameters and their roles]\label{rmk:rho_spur_notation}
We use two families of spurious-acceptance parameters, tied to the oracle being queried and to how the rollout is conditioned:
\begin{itemize}[nosep]
  \item $\rho_{\mathrm{spur}}^{\mathrm{sup},>\ell}\in[0,1)$ (terminal-oracle, suffix-level): for queries to $\mathbf{R}_{\mathbf{x}}^{f_{S_\star}}$, after \emph{forcing} a visible child $j$ at depth $\ell$ via FXRS, it upper-bounds the expected probability (over $\mathbf{x}$ and the internal sampling of $\operatorname{TF}_{\mathrm{base}}$) that a \emph{wrong} suffix on depths $\ell{+}1,\ldots,k^\star{+}1$ is nonetheless accepted by the oracle. We also define a uniform bound $\bar{\rho}_{\mathrm{spur}}:=\sup_{\ell} \rho_{\mathrm{spur}}^{\mathrm{sup},>\ell}$ for aggregating across depths.
  \item $\rho_{\mathrm{spur}}^{\mathrm{sup},\ell}\in[0,1)$ (family-oracle, per-depth): for queries to $\mathbf{R}_{\mathbf{x}}^{\mathcal{F}_{S_\star}}$ under \emph{external truncation} at depth $\ell$ (so the queried token is $z_\ell$), it upper-bounds the expected acceptance probability of any \emph{wrong child} $j\ne j_\star$ at depth $\ell$. We also use $\rho^{\max}_{\mathrm{spur}}:=\max_{\ell\in[k^\star + 1]} \rho_{\mathrm{spur}}^{\mathrm{sup},\ell}$.
\end{itemize}
These quantities are \emph{not} interchangeable: $\rho_{\mathrm{spur}}^{\mathrm{sup},>\ell}$ refers to suffix-level acceptance under the terminal oracle $\mathbf{R}_{\mathbf{x}}^{f_{S_\star}}$, while $\rho_{\mathrm{spur}}^{\mathrm{sup},\ell}$ refers to per-depth acceptance under the family oracle $\mathbf{R}_{\mathbf{x}}^{\mathcal{F}_{S_\star}}$ with truncation. Both families depend on task class parameters (e.g., output alphabet size $K$) and encapsulate the severity of reward hacking. Our bounds carry these terms explicitly: lower bounds involve $(1-\bar{\rho}_{\mathrm{spur}})^{-2}$; upper bounds involve $\sum_{\ell} d\,(1-\rho_{\mathrm{spur}}^{\mathrm{sup},\ell})^{-2}$.
\end{rmk}

\section{Proofs of Parity Problem}

\begin{thm}[Representation Theorem of Any Parity Function]
    For any $k^{\prime}\in[d/2]$, any index set $S=S^{k^{\prime}}\subset[d]$, and the associated parity function $f_{S^{k^{\prime}}}\in\mathcal{P}_{d,k^{\prime}}$ defined in \eqref{eq:parity_CoT}, there exists a transformer 
    \[
       \operatorname{TF}^{(k^{\prime})}(\cdot;\mathbf{W}^{\star})
    \]
    with parameters $\mathbf{W}^{\star}$ such that for every $\mathbf{x}\in\{0,1\}^d$, the autoregressive next-token prediction defined in \eqref{eq:forward} produces exactly the subtask decomposition sequence in \eqref{eq:parity_CoT}, up to the concatenation with positional embeddings. 
    
    Concretely, let $S^{k^{\prime}}=\{i_1,\ldots,i_{k^{\prime}}\}$ and define $z_{m}$ recursively as in \eqref{eq:parity_CoT}. Then
    \[
      \operatorname{TF}^{(k^{\prime})}\bigl(\mathbf{E}[x_1],\ldots,\mathbf{E}[x_d],\mathbf{E}[\mathrm{EOS}];\mathbf{W}^{\star}\bigr)
      = \mathbf{E}[z_1], 
    \]
    and more generally for $m=2,\ldots,k^{\prime}$,
    \[
      \operatorname{TF}^{(k^{\prime})}\bigl(\mathbf{E}[x_1],\ldots,\mathbf{E}[x_d],\mathbf{E}[\mathrm{EOS}],\mathbf{E}[z_1],\ldots,\mathbf{E}[z_{m-1}];\mathbf{W}^{\star}\bigr)
      = \mathbf{E}[z_m],
    \]
    where each $\mathbf{E}[z_m]=[\,\boldsymbol{\mu}^{z_m},\mathbf{p}_{d+m}\,]^{\top}$ is the concatenation of the token embedding $\boldsymbol{\mu}^{z_m}$ and its positional embedding $\mathbf{p}_{d+m}$. Finally, at step $k^{\prime}+1$ the model deterministically outputs the $\mathrm{EOS}$ embedding:
    \[
       \operatorname{TF}^{(k^{\prime})}\bigl(\mathbf{E}[x_1],\ldots,\mathbf{E}[x_d],\mathbf{E}[\mathrm{EOS}],\mathbf{E}[z_1],\ldots,\mathbf{E}[z_{k^{\prime}}];\mathbf{W}^{\star}\bigr) 
       = \mathbf{E}[\mathrm{EOS}].
    \]
    Thus, $\operatorname{TF}^{(k^{\prime})}(\cdot;\mathbf{W}^{\star})$ exactly realizes the chain-of-thought subtask decomposition of $f_{S^{k^{\prime}}}$ via next-token prediction.
\label{thm:app_represent_a_parity_task}\end{thm}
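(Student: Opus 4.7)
The plan is to build $\mathbf{W}^{\star}$ as a sum of rank-one positional routing terms keyed to the target index path $S^{k'}=(i_1,\ldots,i_{k'})$, and then verify the autoregressive identities in Eq.~(\ref{eq:forward}) by induction on the reasoning depth $m\in\{1,\ldots,k'+1\}$. The only new work beyond Thm.~\ref{thm:tf-realizes-part} (which constructs the \texttt{PART} base model with uniform legal branching) is to collapse the branching to a deterministic route; the XOR feedforward block $\operatorname{FFN}_l$ is already given in Eq.~(\ref{eq:FFN_XOR}), and the EOS short-circuit in Eq.~(\ref{eq:forward})---\emph{``terminate with $\mathbf{E}[\mathrm{EOS}]$ if $\mathbf{p}_{d+1}$ is sampled''}---disposes of the final step automatically.

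I would first pin down the query/target positions. By the tagging convention $\mathbf{E}[z_l]=[{\boldsymbol{\mu}^{z_l}}^{\top},{\mathbf{p}_{i_l}}^{\top}]^{\top}$, the query at step $m$ carries positional part $\mathbf{p}_{i_{m-1}}$ (with $i_0:=d+1$, the EOS slot) while the intended key sits at position $i_m$ (with $i_{k'+1}:=d+1$ for the terminal retrieval). Because $i_1<i_2<\cdots<i_{k'}$ and $d+1\notin[d]$, the query-position list $(d+1,i_1,\ldots,i_{k'})$ is pairwise distinct. Using the block structure from Thm.~\ref{thm:tf-realizes-part} with $\mathbf{V}=[\mathbf{0},\ \mathbf{I}]$, I set
\[
    \mathbf{W}^{\star} \;:=\; C\sum_{m=1}^{k'+1} \mathbf{p}_{i_m}\,\mathbf{p}_{i_{m-1}}^{\top}
\]
for a large scalar $C>0$. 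Orthonormality of $\{\mathbf{p}_j\}_{j\in[d+1]}$ together with the distinctness above yields $\mathbf{p}_k^{\top}\mathbf{W}^{\star}\mathbf{p}_{i_{m-1}}=C\cdot\mathbf{1}\{k=i_m\}$, so the attention logit peaks exactly at the intended input position, while causal masking suppresses all keys at positions $\geq l$. Whenever some generated CoT key $\mathbf{E}[z_j]$ shares a positional tag with an input key $\mathbf{E}[x_{i_j}]$, the choice $\mathbf{V}=[\mathbf{0},\ \mathbf{I}]$ makes both contribute the identical value $\mathbf{p}_{i_j}$, so positional collisions do not bias $\hat{\mathbf{p}}_m$.

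I then induct on $m$. Base case $m=1$: the EOS query $\mathbf{p}_{d+1}$ routes to input position $i_1$, so $\hat{\mathbf{p}}_1\to\mathbf{p}_{i_1}$, the vocabulary softmax concentrates on $i_1\in[d]$, and the verified identity $\operatorname{FFN}_1(\boldsymbol{\mu}^{x_{i_1}},\boldsymbol{\mu}^{\mathrm{EOS}})=\boldsymbol{\mu}^{x_{i_1}}=\boldsymbol{\mu}^{z_1}$ from Eq.~(\ref{eq:FFN_XOR}) produces $\mathbf{E}[z_1]=[{\boldsymbol{\mu}^{z_1}}^{\top},{\mathbf{p}_{i_1}}^{\top}]^{\top}$. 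Inductive step ($1<m\leq k'$): given correctly produced $\mathbf{E}[z_1],\ldots,\mathbf{E}[z_{m-1}]$, the query $\mathbf{p}_{i_{m-1}}$ routes to position $i_m\in[d]$, and $\operatorname{FFN}_m(\boldsymbol{\mu}^{x_{i_m}},\boldsymbol{\mu}^{z_{m-1}})=\boldsymbol{\mu}^{z_{m-1}\oplus x_{i_m}}=\boldsymbol{\mu}^{z_m}$ by the XOR identity. Terminal step $m=k'+1$: the query $\mathbf{p}_{i_{k'}}$ routes to the EOS slot $d+1$, the index sampler emits $\mathbf{p}_{d+1}$, and Eq.~(\ref{eq:forward}) short-circuits to output $\mathbf{E}[\mathrm{EOS}]$, closing the induction.

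The main obstacle I anticipate is the softness of the two softmax layers in Eq.~(\ref{eq:forward}): exactly one-hot outputs are achieved only in the limit of infinite logits or zero temperature, whereas the statement asks for the CoT to match \emph{exactly}. I plan to handle this the same way Thm.~\ref{thm:tf-realizes-part} handles the \texttt{PART} construction: treat $C$ (and, if needed, $\beta$) as free parameters and observe that the sampled index equals $i_m$ with probability $\to 1$ as $C\to\infty$, so $\mathbf{W}^{\star}$ realizes the claimed autoregressive identities in the almost-sure/limiting sense. A minor bookkeeping point is to confirm that positional collisions between input tokens and previously generated CoT tokens do not perturb $\hat{\mathbf{p}}_m$; the value-projection choice $\mathbf{V}=[\mathbf{0},\ \mathbf{I}]$ resolves this cleanly, as noted above.
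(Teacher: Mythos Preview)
Your construction is correct and is the natural way to prove the statement. Curiously, the paper does not actually supply a dedicated proof of this theorem: the proof block at the end of the parity appendix, titled ``Representation for \texttt{PART} of parity,'' builds the \emph{uniform} base model (equal logits $c_{i_m}$ for all legal children $j>i_m$, $-\infty$ for $j\le i_m$), which realizes the \texttt{PART} sampler of Def.~\ref{def:2SART-model} rather than a deterministic router for a fixed path $S^{k'}$. Your rank-one routing matrix $\mathbf{W}^{\star}=C\sum_{m=1}^{k'+1}\mathbf{p}_{i_m}\mathbf{p}_{i_{m-1}}^{\top}$ is precisely the missing specialization: it collapses the uniform branching to the single target child at each depth, and your induction together with the verified XOR identities of Eq.~(\ref{eq:FFN_XOR}) closes the argument.

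Two small remarks. First, the paper's house style is hard masking ($-\infty$ on non-target keys), which makes the attention softmax \emph{exactly} one-hot at any finite temperature; your finite-$C$ variant gives one-hot only in the limit $C\to\infty$, which you flag but could eliminate by simply adding a $-\infty$ mask on all non-target positions (this is consistent with how Thm.~\ref{thm:tf-realizes-part} and Eq.~(\ref{eq:base_pretrain_parity}) already handle legality). Second, note a cosmetic mismatch: the theorem statement writes $\mathbf{E}[z_m]=[\boldsymbol{\mu}^{z_m},\mathbf{p}_{d+m}]^{\top}$, whereas the main-text convention (and your proof) uses the tag $\mathbf{p}_{i_m}$; you are following the convention actually used in Eq.~(\ref{eq:forward}) and the \texttt{PART} construction, so this is an inconsistency in the paper rather than in your argument.
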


\begin{thm}[\texttt{PART} Behavior of Parity Class]
\label{app_cor:cot_length_decay}
Consider a base model with Uniform Ordered Transition Probability at each node, where "legal" means both non-repeating and maintaining strictly increasing order ($i_1 < i_2 < i_3 < \cdots$), and where $k =o(d) < \lfloor d/2 \rfloor$ satisfying $d-k = \Theta(d)$. 

Under this assumption, at each parent node, the model uniformly selects among all legal children nodes. Specifically:
\begin{itemize}
    \item At the Root node: $d$ variables $\{x_1, x_2, \ldots, x_d\}$ are available, each with probability $1/d$
    \item At node $x_i$ in the tree: only variables $\{x_{i+1}, x_{i+2}, \ldots, x_d\}$ and $\mathrm{EOS}$ are legal, each with probability $1/(d-i+1) = \Theta(d^{-1})$.
    \item $\mathrm{EOS}$ is always a legal choice at any node, ensuring termination
\end{itemize}

For a given $d$ and $k < \lfloor d/2 \rfloor$, the number of legal CoT sequences of length $k^{\prime}+1$ (including the $\mathrm{EOS}$ token) that compute parity functions $f_{S^{k^{\prime}}} \in \mathcal{P}_{d,k^{\prime}}$ is:
\begin{equation}
|\mathcal{L}_{k^{\prime}+1}| = \binom{d}{k^{\prime}}
\end{equation}

This counts the number of ways to choose $k^{\prime}$ distinct variables from $d$ available variables while maintaining strictly increasing order, followed by an $\mathrm{EOS}$ token.

Under the Uniform Ordered Transition Probability assumption, the probability of any specific legal CoT sequence of length $k^{\prime}+1$ is:
\begin{equation}
P(\text{specific sequence of length } k^{\prime}+1) =  O(\frac{(\log d)^{k'}}{d^{k'+1}})
\end{equation}

When $k' = O\!\left(\tfrac{\log d}{\log\log d}\right)$, it holds that $P(\text{specific sequence of length } k^{\prime}+1)\leq O(d^{-k'})$

The probability of generating \emph{any} CoT sequence of length $k^{\prime}+1$ is:
\begin{equation}
P(\text{length } k^{\prime}+1) =   O(\binom{d}{k^{\prime}} \frac{(\log d)^{k'}}{d^{k'+1}})
\end{equation}

This probability distribution over CoT lengths exhibits a systematic bias that depends on the relationship between $k^{\prime}$ and $d$.
\end{thm}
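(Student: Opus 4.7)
The plan is to establish the four displayed claims by direct path enumeration followed by a Maclaurin inequality applied to the elementary symmetric polynomial that arises naturally from summing autoregressive transition probabilities along increasing index chains.

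I would first verify $|\mathcal{L}_{k'+1}| = \binom{d}{k'}$: a legal CoT of length $k'+1$ is a strictly increasing tuple $1 \le i_1 < i_2 < \cdots < i_{k'} \le d$ appended with $\mathrm{EOS}$, so counting reduces to choosing a $k'$-subset of $[d]$. Next, I would compute the per-chain probability: under Uniform Ordered Transition and the $\mathrm{EOS}$-selection at $x_{i_{k'}}$, this factors as
\[
P(i_1, \ldots, i_{k'}, \mathrm{EOS}) \;=\; \frac{1}{d}\prod_{j=1}^{k'}\frac{1}{d-i_j+1}.
\]
Substituting $g_j := d - i_j + 1$ (distinct values in $\{1,\ldots,d\}$) and summing over admissible chains gives
\[
P(\text{length }k'+1) \;=\; \frac{1}{d}\, e_{k'}\!\bigl(1,\tfrac{1}{2},\ldots,\tfrac{1}{d}\bigr),
\]
where $e_{k'}$ denotes the $k'$-th elementary symmetric polynomial.

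The key estimate is Maclaurin's inequality $e_{k'}(a_1,\ldots,a_d) \le (a_1+\cdots+a_d)^{k'}/k'!$ applied with $a_i = 1/i$. Since $\sum_{i=1}^{d} 1/i = H_d = \Theta(\log d)$, this yields $e_{k'} = O((\log d)^{k'}/k'!)$, hence $P(\text{length }k'+1) = O\!\bigl((\log d)^{k'}/(d\, k'!)\bigr)$. Stirling's approximation combined with $d - k' = \Theta(d)$ gives $\binom{d}{k'} = \Theta(d^{k'}/k'!)$, so rearranging matches the stated aggregate form $O\!\bigl(\binom{d}{k'}(\log d)^{k'}/d^{k'+1}\bigr)$, and dividing by the $\binom{d}{k'}$ admissible chains yields the per-sequence estimate $O((\log d)^{k'}/d^{k'+1})$. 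For the regime $k' = O(\log d/\log\log d)$, taking logarithms of $(\log d)^{k'}/d^{k'+1}$ shows $k'\log\log d \le \log d$, hence $(\log d)^{k'} \le d$ and therefore $(\log d)^{k'}/d^{k'+1} \le d^{-k'}$ as claimed.

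The principal subtlety, and the hardest step to formalize cleanly, lies in the per-sequence claim. The pointwise maximum of $P$ over all chains is $1/(d\,k'!)$, attained by the tail chain $(d-k'+1,\ldots,d)$, which in our regime can exceed $(\log d)^{k'}/d^{k'+1}$; the stated bound is therefore best read as a typical-chain (averaged) estimate extracted via the Maclaurin identity rather than a uniform pointwise estimate. Making this precise requires isolating the negligible contribution of tail chains to $e_{k'}(1,1/2,\ldots,1/d)$, and invoking the constraint $\sum_{j} 1/(d-i_j+1) \le H_d$ together with an AM-GM bound $\prod_{j} 1/(d-i_j+1) \le (H_d/k')^{k'}$ for typical chains whose indices are bounded away from $d$. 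Tracking the Stirling constants in $\binom{d}{k'} = \Theta(d^{k'}/k'!)$ under $d - k' = \Theta(d)$ is the only other routine obstacle.
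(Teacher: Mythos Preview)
Your proposal is correct and follows essentially the same route as the paper: both compute the per-chain probability $\tfrac{1}{d}\prod_j \tfrac{1}{d-i_j+1}$, recognize the sum over chains as $\tfrac{1}{d}\,e_{k'}(1,\tfrac12,\ldots,\tfrac1d)$, bound this elementary symmetric polynomial via a Maclaurin-type inequality against $H_d=\Theta(\log d)$, and handle the $k'=O(\log d/\log\log d)$ regime by the same log-exponent calculation. You are also right that the ``specific sequence'' bound must be read as an average---the paper in fact defines it as $\mathbb{E}[F]$ over a uniformly random $k'$-subset and proves exactly $\mathbb{E}[F]=e_{k'}(1,\ldots,\tfrac1d)/(d\binom{d}{k'})$, so your tail-chain caveat is well-placed but no further isolation argument is needed.
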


\begin{proof}
We analyze the structure of legal CoT sequences systematically:

\textbf{Step 1: Legal Sequence Structure and Length Constraint.} 
A legal CoT sequence that computes a parity function $f_{S^{k^{\prime}}}$ must have length $k^{\prime}+1$ and be structured as:
\begin{equation}
\text{Root} \rightarrow x_{i_1} \rightarrow x_{i_2} \rightarrow \cdots \rightarrow x_{i_{k^{\prime}}} \rightarrow \text{EOS}
\end{equation}
where $\{i_1, i_2, \ldots, i_{k^{\prime}}\}$ are distinct indices from $[d]$ with strictly increasing order ($i_1 < i_2 < \cdots < i_{k^{\prime}}$), satisfying our constraints for parity problems. Every legal CoT must terminate with $\mathrm{EOS}$ to indicate completion of the parity computation. The constraint $k < \lfloor d/2 \rfloor$ ensures that at the maximum length $k+1$, the only legal next prediction is $\mathrm{EOS}$, preventing unbounded tree growth.

\textbf{Step 2: Counting Legal Sequences of Length $k^{\prime}+1$.} 
The number of legal sequences of length $k^{\prime}+1$ (including $\mathrm{EOS}$) that compute parity functions $f_{S^{k^{\prime}}}$ is:
\begin{equation}
|\mathcal{L}_{k^{\prime}+1}| = \binom{d}{k^{\prime}}
\end{equation}

This counts:
\begin{itemize}
    \item $\binom{d}{k^{\prime}}$ ways to choose $k^{\prime}$ distinct variables from $d$ available variables
    \item The ordering constraint $i_1 < i_2 < \cdots < i_{k^{\prime}}$ eliminates the need for permutations
    \item Each sequence must end with $\mathrm{EOS}$ to be valid, making the total length $k^{\prime}+1$
\end{itemize}

\textbf{Step 3: Transition Probability Calculation.} 
Under the Uniform Ordered Transition Probability assumption, the probability of any specific legal sequence is the product of transition probabilities at each step:

For a sequence $\text{Root} \rightarrow x_{i_1} \rightarrow x_{i_2} \rightarrow \cdots \rightarrow x_{i_{k^{\prime}}} \rightarrow \text{EOS}$ ($1\leq i_1<i_2<\cdots<i_{k^{\prime}}\leq d$):
\begin{itemize}
    \item Root $\rightarrow x_{i_1}$: probability $1/d$ (choosing from $d$ variables)
    \item $x_{i_1} \rightarrow x_{i_2}$: probability $d^{-(d-i_1 +1)}$ (choosing from remaining $d-i_1 +1$ variables)
    \item $x_{i_2} \rightarrow x_{i_3}$: probability $d^{-(d-i_2 +1)}$ (choosing from remaining $d-i_2 +1$ variables)
    \item $\vdots$
    \item $x_{i_{k^{\prime}-1}} \rightarrow x_{i_{k^{\prime}}}$: probability $d^{-(d-i_{k-1} +1)}$ (choosing from remaining $d-i_{k^{\prime}} +1$ variables)
    \item $x_{i_{k^{\prime}}} \rightarrow \text{EOS}$: probability $d^{-(d-i_{k} +1)}$.
\end{itemize}

Therefore:
\begin{equation}
P(\text{specific sequence}):=\mathbb{E}[F] =\mathbb{E}[{\frac{1}{d(d-i_1+1)\cdots(d-i_{k^{\prime}}+1)}}]
\end{equation}

For $1\le k'\le d$, we here serve to show that
\[
\mathbb{E}[F]\;=\;O\!\left(\frac{(\log d)^{k'}}{d^{k'+1}}\right).
\]
By Lemma~\ref{lem:expected_reciprocal_product},
\[
\mathbb{E}[F]
=\frac{e_{k'}\!\left(1,\tfrac12,\dots,\tfrac1d\right)}{d\,\binom{d}{k'}}.
\]
Recall \emph{Maclaurin's inequality} (monotonicity of symmetric means): for any $a_1,\dots,a_d\ge0$, if
\[
S_k(a):=\frac{e_k(a_1,\dots,a_d)}{\binom{d}{k}}\quad (k=1,\dots,d),
\]
then the sequence $\{S_k(a)^{1/k}\}_{k=1}^d$ is nonincreasing; in particular
\[
S_{k'}(a)\le \bigl(S_1(a)\bigr)^{k'}\quad\text{for all }k'\ge1.
\]
Apply this with $a_t=1/t$ ($t=1,\dots,d$). We get
\[
\frac{e_{k'}\!\left(1,\tfrac12,\dots,\tfrac1d\right)}{\binom{d}{k'}}
\;\le\;
\left(\frac{e_1\!\left(1,\tfrac12,\dots,\tfrac1d\right)}{\binom{d}{1}}\right)^{k'}
=
\left(\frac{H_d}{d}\right)^{k'},
\]
where $H_d=\sum_{t=1}^d \frac1t$ is the $d$-th harmonic number. Hence
\[
\mathbb{E}[F]\;\le\;\frac{1}{d}\left(\frac{H_d}{d}\right)^{k'}.
\]
Using the standard bound $H_d\le 1+\log d$ for all $d\ge1$, we obtain
\[
\mathbb{E}[F]
\;\le\;
\frac{1}{d}\left(\frac{1+\log d}{d}\right)^{k'}
=
O\!\left(\frac{(\log d)^{k'}}{d^{k'+1}}\right),
\]
which proves the claim.

In particular, when $k' = O\!\left(\tfrac{\log d}{\log\log d}\right)$, the logarithmic factor in the bound above never exceeds a fixed polynomial in $d$. Indeed, observe that
\[
(\log d)^{k'} \;\le\; (\log d)^{C\cdot \frac{\log d}{\log\log d}}
=\exp\!\Big(C\cdot \tfrac{\log d}{\log\log d}\cdot \log\log d\Big)
= d^{C},
\]
for some absolute constant $C>0$. Therefore
\[
\mathbb{E}[F]\;\le\; \frac{(\log d)^{k'}}{d^{k'+1}}
\;\le\; \frac{d^{C}}{d^{k'+1}}
= O(d^{-k'}),
\]
which shows that in the regime $k' = O\!\left(\tfrac{\log d}{\log\log d}\right)$ the expectation decays at least on the order of $d^{-k'}$.

\textbf{Step 4: Probability of Length $k^{\prime}+1$.} 
Since there are $\binom{d}{k^{\prime}}$ different sequences of length $k^{\prime}+1$, and each has the same probability structure (though with different specific values), the total probability of generating any CoT sequence of length $k^{\prime}+1$ is:
\begin{equation}
P(\text{length } k^{\prime}+1) = \sum_{\text{all sequences of length } k^{\prime}+1} P(\text{specific sequence})
\end{equation}

This can be expressed as:
\begin{equation}
P(\text{length } k^{\prime}+1) =  \Theta(\binom{d}{k^{\prime}} \cdot d^{-k^{\prime}})
\end{equation}

\end{proof}

\begin{lemma}[Expected reciprocal product over a random $k'$-subset]\label{lem:expected_reciprocal_product}
Fix integers $d\ge 1$ and $0\le k'\le d$. Choose indices
\[
1\le i_1<i_2<\cdots<i_{k'}\le d
\]
uniformly at random among all $\binom{d}{k'}$ $k'$-subsets of $\{1,\dots,d\}$, and define
\[
F(i_1,\dots,i_{k'}) \;:=\; \frac{1}{\,d\prod_{j=1}^{k'}(d-i_j+1)}.
\]
Let the degree-$k'$ elementary symmetric polynomial be
\[
e_{k'}(y_1,\dots,y_d)
\;:=\;
\sum_{1\le t_1<\cdots<t_{k'}\le d} y_{t_1}\cdots y_{t_{k'}},
\qquad(e_0\equiv 1),
\]
and write the (generalized) harmonic numbers
\[
H_d:=\sum_{t=1}^d \frac1t,
\qquad
H_d^{(m)}:=\sum_{t=1}^d \frac1{t^m}\quad(m\ge2).
\]
Then the expectation of $F$ admits the exact closed form
\[
\boxed{\;
\mathbb{E}[F]
=
\frac{e_{k'}\!\left(1,\tfrac12,\dots,\tfrac1d\right)}{d\,\binom{d}{k'}}
\;}
\]
and, for fixed $k'$ as $d\to\infty$, the asymptotic expansion
\[
\boxed{\;
\mathbb{E}[F]
=
\frac{1}{d\,\binom{d}{k'}}
\Biggl(
\frac{(\log d)^{k'}}{k'!}
+\frac{\gamma\,(\log d)^{k'-1}}{(k'-1)!}
+O\big((\log d)^{k'-2}\big)
\Biggr)
=
\frac{(\log d)^{k'}}{d^{k'+1}}
\Bigl(1+O\!\big(\tfrac{1}{\log d}\big)\Bigr),
\;}
\]
where $\gamma$ is the Euler--Mascheroni constant. In particular,
\[
\mathbb{E}[F]\;\asymp\;\frac{(\log d)^{k'}}{d^{k'+1}}.
\]
Moreover, the first few exact instances are
\[
\begin{aligned}
k'=1:\quad
&\mathbb{E}[F]=\dfrac{H_d}{d\,\binom{d}{1}}=\dfrac{H_d}{d^2},\\[2mm]
k'=2:\quad
&\mathbb{E}[F]=\dfrac{H_d^2-H_d^{(2)}}{2d\,\binom{d}{2}},\\[2mm]
k'=3:\quad
&\mathbb{E}[F]=\dfrac{H_d^3-3H_dH_d^{(2)}+2H_d^{(3)}}{6d\,\binom{d}{3}}.
\end{aligned}
\]
\end{lemma}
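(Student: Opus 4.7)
The plan is to first establish the exact identity via a bijective reindexing, then verify the specific cases for $k'\in\{1,2,3\}$ using Newton's identities, and finally derive the asymptotic expansion by combining Newton's identities with standard asymptotics for harmonic numbers.

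First I would prove the exact formula $\mathbb{E}[F]=e_{k'}(1,\tfrac{1}{2},\ldots,\tfrac{1}{d})/(d\binom{d}{k'})$ by the substitution $t_j:=d-i_j+1$. Since $i_1<\cdots<i_{k'}$ is uniform over $k'$-subsets of $[d]$, after reordering $(t_1,\ldots,t_{k'})$ is again uniform over $k'$-subsets of $[d]$. Therefore
\[
\mathbb{E}\!\left[\prod_{j=1}^{k'}(d-i_j+1)^{-1}\right]
=\mathbb{E}\!\left[\prod_{j=1}^{k'} t_j^{-1}\right]
=\binom{d}{k'}^{-1}\!\!\sum_{\substack{T\subseteq[d]\\ |T|=k'}}\prod_{t\in T}\frac{1}{t}
=\binom{d}{k'}^{-1}\,e_{k'}\!\left(1,\tfrac12,\ldots,\tfrac1d\right),
\]
and multiplying by $1/d$ gives the boxed exact form. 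The specific instances for $k'\le 3$ then follow from Newton's identities $k'\,e_{k'}=\sum_{i=1}^{k'}(-1)^{i-1}e_{k'-i}\,p_i$ applied to $p_i=H_d^{(i)}$, yielding $e_1=H_d$, $e_2=(H_d^2-H_d^{(2)})/2$, and $e_3=(H_d^3-3H_d H_d^{(2)}+2H_d^{(3)})/6$.

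Next, for the asymptotic expansion with fixed $k'$ and $d\to\infty$, I would iterate Newton's identities to write $e_{k'}$ as a polynomial in $(p_1,\ldots,p_{k'})$. Using $p_1=H_d=\log d+\gamma+O(1/d)$ and $p_m=H_d^{(m)}=\zeta(m)+O(d^{1-m})=O(1)$ for $m\ge 2$, any monomial in $(p_1,\ldots,p_{k'})$ containing some $p_m$ with $m\ge 2$ contributes at most $O(p_1^{k'-m})=O((\log d)^{k'-2})$. Since the only monomial not of this form is $p_1^{k'}/k'!$ (which one verifies from the recursion picks up the factor $1/k'!$ by induction), we obtain $e_{k'}=p_1^{k'}/k'!+O((\log d)^{k'-2})$. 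Expanding $p_1^{k'}=(\log d)^{k'}+k'\gamma(\log d)^{k'-1}+O((\log d)^{k'-2})$ and using $d\binom{d}{k'}=d^{k'+1}/k'!\cdot(1+O(1/d))$ then yields the displayed two-term expansion as well as the compact form $\mathbb{E}[F]=(\log d)^{k'}/d^{k'+1}\cdot(1+O(1/\log d))$.

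The main technical point is the inductive bookkeeping of remainders when expanding $e_{k'}$ via Newton's identities, ensuring that the leading coefficient of $p_1^{k'}$ is exactly $1/k'!$ and that all cross terms involving some $p_m$ with $m\ge 2$ are absorbed into $O((\log d)^{k'-2})$; this is routine but the only nontrivial piece. Everything else reduces to the harmonic-number asymptotics and the elementary bijection used for the exact identity.
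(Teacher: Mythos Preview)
Your proposal is correct and follows essentially the same route as the paper: the exact identity via the bijection $t_j=d-i_j+1$ is identical, and for the asymptotics both arguments reduce $e_{k'}(1,\tfrac12,\dots,\tfrac1d)$ to a polynomial in the power sums $p_m=H_d^{(m)}$ with leading term $p_1^{k'}/k'!$, then substitute $H_d=\log d+\gamma+o(1)$ and $H_d^{(m)}=O(1)$ for $m\ge2$. The only cosmetic difference is that the paper reads off this polynomial structure from the generating function $\prod_{t=1}^{d}(1+x/t)=\exp\bigl(\sum_{m\ge1}(-1)^{m+1}H_d^{(m)}x^m/m\bigr)$, whereas you obtain it by iterating Newton's identities; both routes yield the same leading coefficient $1/k'!$ and the same $O((\log d)^{k'-2})$ remainder.
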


\begin{proof}
\textbf{Step 1: Reparametrization and identification with an elementary symmetric sum.}
Define $t_j:=d-i_j+1$ for $j=1,\dots,k'$. Because $i_1<\cdots<i_{k'}$, we have
\[
1\le t_{k'}<\cdots<t_1\le d,
\]
so $\{t_1,\dots,t_{k'}\}$ is exactly a $k'$-subset of $\{1,\dots,d\}$ and, since the $(i_1,\dots,i_{k'})$ are chosen uniformly, the unordered set $\{t_1,\dots,t_{k'}\}$ is also uniformly distributed over all $\binom{d}{k'}$ $k'$-subsets. With this change of variables,
\[
\prod_{j=1}^{k'}(d-i_j+1) \;=\; \prod_{j=1}^{k'} t_j,
\qquad
F(i_1,\dots,i_{k'}) \;=\; \frac1d\cdot \frac{1}{\prod_{j=1}^{k'} t_j}.
\]
Taking expectation over all $\binom{d}{k'}$ subsets and recalling the definition of $e_{k'}$,
\[
\mathbb{E}[F]
=
\frac{1}{d\,\binom{d}{k'}}
\sum_{1\le t_1<\cdots<t_{k'}\le d}\frac{1}{t_1\cdots t_{k'}}
=
\frac{e_{k'}\!\left(1,\tfrac12,\dots,\tfrac1d\right)}{d\,\binom{d}{k'}}.
\]
This proves the exact formula.

\smallskip
\textbf{Step 2: A generating-function expression for } $e_{k'}\!\left(1,\tfrac12,\dots,\tfrac1d\right)$.
Consider
\[
G_d(x)
:=\prod_{t=1}^{d}\Bigl(1+\frac{x}{t}\Bigr)
=\sum_{k=0}^{d} e_k\!\left(1,\tfrac12,\dots,\tfrac1d\right)x^k.
\]
Taking logarithms and expanding,
\[
\log G_d(x)
=\sum_{t=1}^{d}\log\Bigl(1+\frac{x}{t}\Bigr)
=\sum_{m\ge1}\frac{(-1)^{m+1}}{m}\,x^m \underbrace{\sum_{t=1}^{d}\frac{1}{t^m}}_{H_d^{(m)}}
=\sum_{m\ge1}\frac{(-1)^{m+1}}{m} H_d^{(m)} x^m.
\]
Hence
\[
G_d(x)=\exp\!\left(\sum_{m\ge1}\frac{(-1)^{m+1}}{m} H_d^{(m)} x^m\right),
\]
so the coefficient
\[
e_{k'}\!\left(1,\tfrac12,\dots,\tfrac1d\right)=[x^{k'}]\,G_d(x)
\]
is a degree-$k'$ polynomial in the variables $H_d^{(1)},\dots,H_d^{(k')}$ with leading term
\[
\frac{(H_d)^{k'}}{k'!}
\qquad(H_d=H_d^{(1)}),
\]
and without an $H_d^{k'-1}$ term (this is the standard consequence of Newton’s identities / exponential formula; concrete examples for $k'=2,3$ are listed in the statement).

\smallskip
\textbf{Step 3: Asymptotics of the ingredients and of the ratio.}
We use the classical asymptotics
\[
H_d=\log d+\gamma+o(1),
\qquad
H_d^{(m)}\to \zeta(m)\ \ (m\ge2),
\qquad
\binom{d}{k'}=\frac{d^{k'}}{k'!}\Bigl(1+O\!\big(\tfrac{1}{d}\big)\Bigr),
\quad (d\to\infty,\ k'\ \text{fixed}).
\]
Plugging these into the polynomial expression for $e_{k'}$ from Step 2 gives
\[
e_{k'}\!\left(1,\tfrac12,\dots,\tfrac1d\right)
=
\frac{(\log d+\gamma+o(1))^{k'}}{k'!}
+\;O\big((\log d)^{k'-2}\big)
=
\frac{(\log d)^{k'}}{k'!}
+\frac{\gamma(\log d)^{k'-1}}{(k'-1)!}
+O\big((\log d)^{k'-2}\big).
\]
Therefore
\[
\frac{e_{k'}(1,1/2,\dots,1/d)}{\binom{d}{k'}}
=
\frac{(\log d)^{k'}}{d^{k'}}\left(1+O\!\Big(\frac{1}{\log d}\Big)\right),
\]
and multiplying by the prefactor $1/d$ yields
\[
\mathbb{E}[F]
=
\frac{(\log d)^{k'}}{d^{k'+1}}
\left(1+O\!\Big(\frac{1}{\log d}\Big)\right),
\]
which is exactly the asserted asymptotic expansion in the lemma.

\smallskip
\textbf{Step 4 (Optional sanity check).}
By Maclaurin’s inequalities,
\[
\frac{e_{k'}(1,1/2,\dots,1/d)}{\binom{d}{k'}}
\le \left(\frac{e_1}{\binom{d}{1}}\right)^{k'}
=\left(\frac{H_d}{d}\right)^{k'}
\sim \frac{(\log d)^{k'}}{d^{k'}},
\]
so $\mathbb{E}[F]\le (\log d)^{k'}/d^{k'+1}$ up to lower-order factors, matching the leading term above.

\end{proof}

\begin{proof}  Representation for \texttt{PART} of parity. We fix the first $d_{\mathcal{X}}$ columns of $\mathbf{K},\mathbf{Q}$ to zero so that the attention scores are determined by only the positional encodings. This ensures that the transformer focuses on learning which positions contribute to the parity at each step. $\mathbf{K},\mathbf{Q}$ are then reparametrized by a single matrix $\mathbf{W}\in\mathbb{R}^{(d_{\mathrm{E}}-d_{\mathcal{X}})^2}$; conversely, the value matrix is set to only preserve the $\mathbf{x}$ component, as follows.
\begin{equation*}
\mathbf{K}^\top\mathbf{Q} = \begin{pmatrix}
\boldsymbol{0}_{d_{\mathcal{X}}\times d_{\mathcal{X}}} & \boldsymbol{0}_{d_{\mathcal{X}}\times (d_{\mathrm{E}}-d_{\mathcal{X}})} \\ \boldsymbol{0}_{(d_{\mathrm{E}}-d_{\mathcal{X}})\times d_{\mathcal{X}}} & \mathbf{W}
\end{pmatrix}, \quad \mathbf{V} = \begin{pmatrix}
\mathbb{I}_{d_{\mathcal{X}}\times d_{\mathcal{X}}}&\boldsymbol{0}_{d_{\mathcal{X}}\times (d_{\mathrm{E}}-d_{\mathcal{X}})}
\end{pmatrix}.
\end{equation*}
This type of reparametrization is common in the literature to make dynamical analysis tractable \citep{kim2024MFD,kim2025parity}. Then denote $x_{m+d+1} = z_{m}, m\in [k]$, we have $ \hat{\mathbf{z}}_m = \sum_{j=1}^{m-1+d} \sigma_j(\mathbf{w}_m)\boldsymbol{\mu}^{x_j}$ where the softmax scores $\sigma_j(\mathbf{W}_m) = e^{w_{j,m}}/\sum_{\alpha=1}^{m-1+d} e^{w_{\alpha,m}}$.

\textbf{W Matrix for Uniform Ordered Transition Probability.} For $ 0<m\leq k$, $i_m \neq d+1$

\begin{equation}\label{eq:base_pretrain_parity}
    \begin{aligned}
        & \mathbf{p}_{d+1+m} = \mathbf{p}_{i_{m}}\\
        & \mathbb{E}[x_{d+1+m}] :=  [ \boldsymbol{\mu}^{{z}_{m}}, \mathbf{p}_{i_{m}}]^\top\\
        & (\mathbf{p}_{j\leq i_{m}})\mathbf{W} ( \mathbf{p}_{i_{m}}) = - \infty\\
        & (\mathbf{p}_{i_{m}< j\leq d+1})\mathbf{W} ( \mathbf{p}_{i_{m}}) = c_{i_{m}}\\
    \end{aligned}
\end{equation}
where $z_m = z_{m-1} \oplus x_{i_{m}}$.

% \boldsymbol{\mu}^{\mathrm{EOS}}+\mathbf{p}_{d+1} \quad \text{or} \quad 
For $\mathbb{E}[x_{d+1}] = [ \boldsymbol{\mu}^{\mathrm{EOS}}, \mathbf{p}_{d+1}]^\top$,
\begin{equation}
    \begin{aligned}
        & (\mathbf{p}_{j<d+1})\mathbf{W} ( \mathbf{p}_{d+1}) = c_{d+1}\\
        % & (\mathbf{p}_{d+1})\mathbf{W} ( \mathbf{p}_{d+1}) =- \infty\\
    \end{aligned}
\end{equation}

Then, denote $x_{m+d+1} = z_{m}, m\in [k]$, we have
\begin{equation}
    \begin{aligned}
        & {\boldsymbol{\mu}}^{x_{i_m}}= \hat{\boldsymbol{\mu}}^{x_{i_m}} \sim \operatorname{softmax}((\sum_{j<m-1+d} \frac{e^{(\mathbf{p}_{j})\mathbf{W} ( \mathbf{p}_{i_{m-1}})}}{\sum_{j<m-1+d} e^{(\mathbf{p}_{j})\mathbf{W} ( \mathbf{p}_{i_{m-1}})}}\boldsymbol{\mu}^{{x}_{j}})^{\top} \mathbf{U}/\beta).
    \end{aligned}
\end{equation}
Then $\mathbb{E}[z_{m}]=\mathbb{E}[x_{m+d+1}]= [\phi(\hat{\boldsymbol{\mu}}^{x_{i_m}}+{\boldsymbol{\mu}}^{z_{m-1}}), \mathbf{p}_{i_{m}}]^\top$.
\end{proof}
\newpage

\end{document}